\newcommand{\tht}{\btheta}
\newcommand{\ths}{\btheta^*}
\newcommand{\hth}{\hat{\btheta}}
\newcommand{\re}[1]{{\color{black}#1}}
\begin{document}




  \title{\huge Contextual Online Uncertainty-Aware Preference Learning for Human Feedback}

  \author{Nan Lu\thanks{
    Department of Biostatistics, Harvard T.H. Chan School of Public Health,  \textit{nanlu@hsph.harvard.edu}}\hspace{.2cm}~~~~
    Ethan Lee\thanks{
    Department of Biostatistics, Harvard T.H. Chan School of Public Health,  \textit{ethanlee@g.harvard.edu}}\hspace{.2cm}~~~~
    Ethan X. Fang \thanks{Department of Biostatistics \& Bioinformatics, Duke University, \textit{ethan.fang@duke.edu}}\hspace{.2cm}~~~~
    Junwei Lu\thanks{
    Department of Biostatistics, Harvard T.H. Chan School of Public Health,  \textit{junweilu@hsph.harvard.edu}}\hspace{.2cm}~~~~}
\date{}
  \maketitle


\bigskip
\begin{abstract}
Reinforcement Learning from Human Feedback (RLHF) has become a pivotal paradigm in artificial intelligence to align large models with human preferences. In this paper, we propose a novel statistical framework to simultaneously conduct the online decision-making and statistical inference on the optimal model using human preference data based on dynamic contextual information. Our approach introduces an efficient decision strategy that achieves both the optimal regret bound and the asymptotic distribution of the estimators. A key challenge in RLHF is handling the dependent online human preference outcomes with dynamic contexts. To address this, in the methodological aspect, we propose a two-stage algorithm starting with $\epsilon$-greedy followed by exploitations; in the theoretical aspect, 
we tailor anti-concentration inequalities and matrix martingale concentration techniques to derive the uniform estimation rate and asymptotic normality of the estimators using dependent samples from both stages. Extensive simulation results demonstrate that our method outperforms state-of-the-art strategies. We apply the proposed framework to analyze the human preference data for ranking large language models on the Massive Multitask Language Understanding dataset, yielding insightful results on the performance of different large language models for medical anatomy knowledge.
\end{abstract}

\noindent%
{\it Keywords:} ranking, alignment, large language model, uncertainty assessment.



\section{Introduction}
Recent advances in large language models (LLMs), model alignment, and decision-making processes have revolutionized the field of natural language processing, enabling significant progress in solving complex tasks. Central to these advancements is model alignment with human preferences, which serves as a cornerstone for enhancing the performance and usability of LLMs. In particular, reinforcement Learning from Human Feedback (RLHF) emerges as a transformative paradigm, allowing machine learning models to harness human evaluators' feedback for optimization \citep{christiano2023deepreinforcementlearninghuman, pporl}, and the RLHF  finds tremendous successes in application systems such as ChatGPT \citep{openai2024gpt4technicalreport}, Claude \citep{claude2024}, and LLaMA2 \citep{touvron2023llama}. Unlike conventional approaches that rely on preset reward functions, RLHF directly optimizes rewards grounded in human preferences. A pivotal technique for modeling human preferences within RLHF is the celebrated Bradley-Terry-Luce (BTL) model for ranking~\citep{pporl, touvron2023llama, dpo}. Specifically, when a human labeler is presented with two items, say the answers from models $i$ and $j$, the preference probability under the BTL model is
\begin{align}\label{BTsimple}
\PP(\text{item }j\text{ is preferred over }i)=
\frac{e^{s_j}}{e^{s_i}+e^{s_j}},
\end{align}
where $s_i$ and $s_j$ represent the latent scores of items $i$ and $j$, respectively.

Besides the applications in LLMs, the BTL model is widely applied to product recommendation systems \citep{he2018adversarial,LiMS}, information retrieval \citep{dwork2001rank,cossock2006subset,geyik2019fairness} and other areas \citep{elo1967proposed,Aouad} where user interactions naturally manifest as preference data. In many such applications, we collect data in online fashions, which requires the decision-makers to make a series of adaptive decisions based on progressively accumulated information. In such online decision-making problems, we select one of the available actions in each new time period, and observe a random reward. We consider the setting where we observe contextual information $\bX_t$  at time $t$. Based on this context, consider the problem of comparing items $i$ and~$j$. Following the BTL model~\eqref{BTsimple}, we have
\begin{align}\label{cBT}
\PP(\text{model }j\text{ is preferred over }i)=
\frac{e^{\bX_{t}^\top\btheta_j^{*}}}{e^{\bX_{t}^\top\btheta_i^{*}}+{e^{\bX_{t}^\top\btheta_j^{*}}}},
\end{align}
where $\ths_i$ and $\ths_j$ denote the intrinsic attributes of items $i$ and $j$, respectively. Here we model the latent scores as a linear model of the contextual information $\bX_t$. For many AI models using the architectures including multi-layer perceptrons and transformers \citep{alpaca,kim2023preference}, the last layer of these models is typically a fully connected network, and the input of this layer can be used as the contextual information $\bX_t$. Learning the intrinsic attributes $\btheta_i$'s from \eqref{cBT} is to fine-tune the last fully connected layer aligning with human preferences.
We note that this contextual setting is 
different from most existing approaches for ranking problems in the statistical literature \citep{Negahban2017,chen2019spectral, Gao2021}. In these papers, they assume that the models or items have some fixed latent scores. 
 While effective in certain applications, this assumption limits practical applicability for dynamic situations. \citet{fan2024uncertaintyquantificationmleentity} consider item-dependent covariates, but their covariate effect is modeled by a common parameter shared across different items which cannot capture the model-specific characteristics.  In particular, in RLHF scenarios, human preference outcomes often vary across different models and contexts. This motivates us to consider the contextual framework \eqref{cBT}, where item scores are determined by the interaction between contextual factors and item-specific attributes. 

In our setting, we consider a finite time horizon of length $T$ that at each time $t$, the decision-maker performs actions $(i_t,j_t)$, i.e., comparing the selected item $i_t$ with another item $j_t$, over $T$ rounds, and we let the reward of item $i$ with context $\bX_t$ be the latent score $\bX_t^{\top} \btheta_i^*$. To evaluate the performance of our algorithm, we use the metric of average regret, which quantifies the shortfall in reward achieved by the policy compared with an oracle with perfect knowledge of the true item attributes. We define the accumulated regret that
\begin{align}\label{eq:regret}
R(T)=\frac{1}{T}\sum_{t=1}^{T}\Big(\bX_{t}^\top\btheta_{i_t^*}^{*}-\bX_{t}^\top\btheta_{i_t}^{*}\Big),
\end{align}
where $i_t^*$ denotes the optimal item such that $i_t^*=\argmax_{i\in[n]}\bX_{t}^\top\btheta_i^{*}$. 
There are two major objectives for our online uncertainty-aware preference learning of the model \eqref{cBT}: (1) to select the optimal item minimizing the regret, and (2) to conduct inference on the estimators of item attributes $\btheta_i^{*}$'s. These two objectives leverage the goals from both AI model alignment to identify the optimal model and the statistical inference to quantify the uncertainty of the model evaluation, thus empowering the RLHF procedure with reproducibility and reliability. 
For the first objective of minimizing regret, it is crucial to balance exploration and exploitation. 
On one hand, exploiting existing knowledge allows us to select the item believed to yield the highest reward, but it risks missing opportunities to gather information that could lead to better decisions in the future. On the other hand, exploring actions with uncertain rewards enables more exploration but may result in suboptimal immediate outcomes. To balance, various strategies have been developed, including randomized and interval-based approaches. Randomized strategies, such as $\epsilon$-greedy, exploit the current best-known action by default, but occasionally explore alternative actions with a small probability,  enabling the algorithm to learn about uncertain options \citep{JMLR:v17:13-210,AOS1534,contextbandit21}. Interval-based strategies, such as the Upper Confidence Bound (UCB), systematically balance exploration and exploitation by selecting the action with the highest upper confidence bound on the expected reward, prioritizing promising yet under-explored options \citep{ucb1,pmlr-v22-kaufmann12}. Despite the aforementioned vast literature on these two strategies for standard bandit or reinforcement learning problems, these two strategies cannot be directly applied to preference learning using human feedback. These standard problems can observe the rewards directly assigned to the actions or items. In comparison, in preference learning, we can only observe the preference outcome from the model \eqref{cBT} instead of the latent scores as the rewards. Intuitively, comparing a pair of items only provides partial information about their relative ranks, but it offers limited insight into the absolute rank of an individual item among all the candidate items. This interconnected nature of preference learning introduces a unique challenge for effective decision-making. We introduce a novel two-stage algorithm to tackle this challenge: it starts with an $\epsilon$-greedy stage mixing both exploration and exploitation in order to achieve a good initialization for the global ranking information of all items, and then it transits to the exploitation stage to optimize the regret. We show that this two-stage algorithm can simultaneously achieve the second objective of assessing the uncertainty of attribute estimators by integrating the dependent observations from both two stages to maximize efficiency.


\subsection{Major Contributions} 
We propose a novel algorithm with theoretical guarantees to address the challenges in the RLHF problem. Our major contributions are  four-fold.
\begin{itemize}
\item {\bf A novel two-stage algorithm for pairwise arms and preference outcomes.} Compared with standard bandit or reinforcement learning problems with single arms and single rewards \citep{ JMLR:v17:13-210, MSbandit,contextbandit21}, the RLHF involves pairwise comparison arms without observing the explicit rewards. That makes the estimation of the latent score of one item involve multiple pairwise preference outcomes. To address this problem, we propose a new two-stage algorithm that chooses the pairs to compare achieving both the optimal regret bounds and rates of convergence for estimation.


\item {\bf Estimation and inference using dependent decision-making observations.} Different from existing BTL literature \citep{chen2019spectral,lagran23,fan2024uncertaintyquantificationmleentity}, we adopt a novel linear contextual BTL model to be more compatible with the RLHF setting. This distincts our work in utilizing dependent samples generated from the RLHF decision-making process. We develop novel theoretical tools to characterize the rates of convergence for our estimators and asymptotic distributions of the latent scores using dependent samples from both exploration and exploitation stages.

\item {\bf A nearly optimal regret bound for online preference learning.} 
We characterize when transitioning from the $\epsilon$-greedy stage to the exploitation stage is optimal for the regret bound. Besides, we also identify the optimal $\epsilon$ for the $\epsilon$-greedy stage for both the dense and sparse comparison graphs. 
Following this, we establish a regret bound of order $O(T^{-1/2})$ matching the nearly optimal regret bound for standard bandit problems \citep{li2019nearly}. 

\item {\bf A new uncertainty-aware RLHF method to rank different LLMs.} Unlike RLHF approaches for tuning a specific LLM with a fixed architecture \citep{zhu2024principledreinforcementlearninghuman,xiong2024iterative}, our approach allows different parameters for different LLMs. Our inferential results also provide a statistical guarantee for the uncertainty assessment of the RLHF procedure. Practical analysis demonstrates that the proposed algorithm outperforms state-of-the-art strategies including the upper confidence bound (UCB) methods.
\end{itemize}

\subsection{Related Research}
The standard RLHF is popularized by \citet{christiano2023deepreinforcementlearninghuman}, drawing the attention of the RL community to preference-based feedback. The most widely adopted RLHF framework is detailed in the InstructGPT paper \citep{pporl}, Claude \citep{bai2022traininghelpfulharmlessassistant}, and LLaMA2 \citep{touvron2023llama}. \citet{pbrlnov} and \citet{drlsaha} investigate tabular online RLHF settings, while \citet{pbrlchen} study online RLHF with general function approximation. \citet{rlhfwang} adapt sample-efficient algorithms from standard reward-based RL to online RLHF, while further algorithmic advancements are introduced by \citet{wu2024making}. A significant branch of RLHF focuses on learning the reward function from human feedback \citep{NEURIPS2020_2f10c157, rlhfsurvey}. The BTL model is widely used to model the pairwise comparison outcomes to align human preferences \citep{pporl, bai2022traininghelpfulharmlessassistant, dpo}. Despite numerous impactful applications of this framework, theoretical analysis remains limited. \citet{song2023reward} introduce prompt-aware utility functions to address reward collapse. \citet{zhu2024principledreinforcementlearninghuman} propose a pessimistic algorithm for offline RLHF, while \citet{zhan2024provable} analyze broader settings with general function approximations. Additionally, \citet{tiapkin2024demonstrationregularized} and \citet{xiong2024iterative} explore a KL-regularized reward optimization and its theoretical properties. Moreover, \citet{xiao2024algorithmic} present preference matching RLHF to mitigate algorithmic bias, and \citet{zhong2024provable} propose multi-party RLHF to align with heterogeneous individual preferences. In contrast to these studies, our work focuses on item-specific attributes and introduces a novel problem setting, which leads to completely different policy strategies and statistical theoretical~formulation.

The bandit problems focus on designing policies that maximize rewards by selecting arms based on learning from observed rewards. An extension is the contextual bandit problem, introduced by \citet{ucbauer2002}. Contextual bandit techniques have been applied in a range of personalization applications, such as healthcare \citep{Tewari2017,pmlr-v84-kallus18a,mintz} and recommendation systems \citep{li, Agrawal}.
Several approaches have been proposed in the contextual bandit literature.  \citet{pmlr-v28-agrawal13} introduce a randomized algorithm based on a Bayesian approach. \citet{JMLR:v17:13-210} provide an analysis of a randomized strategy based on kernel estimation, while \citet{pmlr-v80-foster18a} develop confidence-based methods utilizing regression oracles. \citet{bastaniOR} investigate the linear contextual bandit problem in high-dimensional settings, and \citet{MSbandit} propose a greedy-first strategy. Additionally, \citet{AOS1534} examine the asymptotic properties of reward under a general parametric model. \citet{kannanbandit} study scenarios in which an adversary selects the observed contexts, and \citet{contextbandit21} study the $\epsilon$-greedy policy while addressing model misspecification.
\re{\citet{deshpande2018accurate} consider the inference problem for sequentially collected data under standard linear models with history-dependent covariates and propose decorrelated estimators, exemplified through multi-armed bandit applications, while \citet{khamaru2021near} further study the optimality of such inference procedures.}
Different from these works, our work focuses on human preference feedback, where the observed data consists of pairwise comparisons rather than direct rewards for individual arms. This paradigm shift changes the decision-making process: instead of selecting a single item to evaluate, we choose an item pair to compare. In the pairwise comparison setting, estimating different item attributes becomes highly correlated, necessitating distinct action policies and theoretical analysis. 

For the BTL model, \citet{Negahban2017} develop a spectral method. \citet{chen2019spectral} study the uniform bound of item ability estimations. \citet{Gao2021} introduce the leave-two-out technique to derive the asymptotic distributions of the item ability estimators. Meanwhile, \citet{lagran23} propose a Lagrangian debiasing approach to obtain asymptotic distributions for ranking scores and build a novel combinatorial inference framework to infer the ranking properties. Additionally, \citet{FANmultiway} and \citet{fan2024spectralrankinginferencesbased} provide comprehensive studies on the multiway comparison model. However, a crucial assumption underlying all these works is that the item scores are assumed to be fixed, meaning the models do not account for covariates. One exception is \citet{fan2024uncertaintyquantificationmleentity}, where the authors consider item-dependent covariates, with the covariate effect modeled by a common parameter shared across different items. In contrast, our framework addresses a scenario where different items interact with contextual covariates in unique ways, depending on their individual abilities. From an applied perspective, \citet{Chiang} directly employ the BTL model to rank LLMs. By comparison, our approach focuses on selecting LLMs based on online prompts while adhering to budget constraints.  Furthermore, all the aforementioned works assume that the samples are independent, meaning the pairwise comparison results are independent across different item pairs, and the observations for each pair are independently and identically distributed. In contrast, we focus on a more challenging yet practical case, where dynamic environmental information, combined with estimations based on historical data, determines the compared pairs. This introduces a complex dependent relationship between observations. This significant shift offers compelling practical advantages but simultaneously introduces distinct theoretical challenges.
\re{While \citet{oh2019thompson} and \citet{chen2020dynamic} incorporate contextual information with sequentially collected data, they focus on the multinomial logit choice model with observable item-specific contextual features and a common unknown factor. In contrast, we focus on the contextual BTL model with unknown item-specific parameters, which leads to alternative policy approaches and customized statistical theory.}\\

\noindent{\bf Notations.} We write $a_n\lesssim b_n$ or $a_n=O(b_n)$ if there exists a constant $c>0$ such that $a_n\leq cb_n$ for all $n$. We denote $a_n\asymp b_n$ if $b_n\lesssim a_n$ and $a_n\lesssim b_n$. Besides, we write $a_n=o(b_n)$ if $\lim_{n\rightarrow\infty}a_n/b_n=0$. We denote by $[n]=1,\ldots,n$ for any positive integer $n$. Let $\be_i,\, i\in [n]$, be the canonical basis in $\RR^n$. $\Ib_{d}$ represents $d$-dimensional identity matrix, and $\bm{0}$ represents the vector or matrix composed entirely of zeros. We denote by $\otimes$  the Kronecker product. For a vector $\bv$,  $\|\bv\|_2$  denotes the $\ell_2$-norm. We let $\|\Ab\|_2$ be the spectral norm of the matrix $\Ab$, and $\lambda_{\min}(\Ab)$ represent the minimum eigenvalue of square matrix $\Ab$. Throughout our discussion, we use generic constants $c$ and $C$ which may differ from one place to another.\\

\noindent{\bf Paper Organization.} 
The remainder of the paper is structured as follows. Section~\ref{sec:frm} introduces the problem formulation. We then formally detail the proposed method and algorithm in Section~\ref{sec:BT}. Section~\ref{sec:thry} presents the theoretical results, including the estimation error, regret bound, and parameter inference. Section~\ref{sec:simu} showcases simulations and Section~\ref{sec:real} demonstrates an application for ranking large language models. We conclude the paper in Section~\ref{sec:conclu}. Proofs are provided in the supplemental material.

\section{Contextual Bradley-Terry-Luce Model}\label{sec:frm}
We consider the task of making ranking decisions over  $T$ time periods, indexed by $t\in\{1,2,\ldots,T\}$, where we rank $n$ items. At each time  $t\in[T]$, we have access to contextual information represented as a vector $\bX_{t}\in\RR^d$. This contextual vector may include prompt-specific information relevant to the decision-making process. We assume that $\bX_t$'s are independent and identically distributed. 

For each $t\in[T]$,  action $\ba_t$ involves selecting a comparison pair $(i_t,j_t)$, where $i_t,j_t$ are two different items. After taking action $\ba_t$, we observe $M$ pairwise comparison results between the two items, denoted as $\{y_{ij}^{(m)}(t)\}_{m\in[M]}$, where $M$ is a positive integer. We note that our method applies to the case where the number of comparisons may vary over time, and we assume a constant number of comparisons only for the simplicity of presentation. As we discussed in the introduction, given contextual information $\bX_t$, we model the performance of item $i$ by its latent score  $\bX_t^\top\btheta_{i}^{*} $, where $\ths_i\in \RR^d$ is item $i$'s true item-specific attribute. Specifically, for ranking LLMs, $\bX_t$ corresponds to the prompts, and $\btheta_i^{*}$ captures the strengths of  LLM $i$. Next, we adopt the celebrated BTL model to model the comparison that the $m$-th comparison at time $t$ that
%
%
%
%
%
%
%
\begin{align*}
y_{ij}^{(m)}(t) = \begin{cases}1 & \text { with probability } \frac{e^{\bX_{t}^\top\btheta_j^{*}}}{e^{\bX_{t}^\top\btheta_i^{*}}+{e^{\bX_{t}^\top\btheta_j^{*}}}}, \\ 0 & \text { otherwise,}\end{cases}
\end{align*}
where $y_{ij}^{(m)}(t) = 1$ represents that item $j$ is preferred over item $i$ in the corresponding comparison, and  $y_{ij}^{(m)}(t) = 0$ otherwise. 
We refer  this as the contextual BTL model.
Given  action $\ba_t = \{i_t,j_t\}$, we assume that
$y_{i_{t},j_{t}}(t)=\{y_{i_{t},j_{t}}^{(m)}(t), m\in[M] \}$ are $M$ independent observations.

For  contextual information $\bX_t$, recall that we let $i_t^*$ denote the optimal item such that $\bX_{t}^\top\btheta_{i_t^*}^{*}=\max_{i\in[n]}\bX_{t}^\top\btheta_i^{*}$. At the beginning of round $t+1$, we have the $\sigma$-field generated by the historical data that
\begin{align*}
\cH_{t}=\sigma\Big(\bX_1,\ba_1,y_{i_1,j_1}(1),\ldots,\bX_{t},\ba_{t},y_{i_{t},j_{t}}(t)\Big),
\end{align*}
and $\ba_{t+1}$ is determined based on $\cH_{t}$. We assume that  contextual information $\bX_{t+1}$ is independent of history $\cH_{t}$. And the pairwise comparison outcome $y_{i_t,j_t}(t)$ is conditionally independent of $\cH_{t}$ given  action $\ba_t$ and  context $\bX_t$.
For the selected pair $(i_t,j_t)$, we let~$i_t$ be the major item for ease of presentation. As $\bX_{t}^\top\btheta_i^{*}$ represents the score of item $i$ in the context of $\bX_t$, the objective is to pick the major item in each round as close as possible to the optimal one, in the sense that $\bX_{t}^\top\btheta_{i_t^*}^{*}-\bX_{t}^\top\btheta_{i_t}^{*}$ is small.
We aim to design an action policy  to minimize the cumulative regret $R(T)$ defined in \eqref{eq:regret}.


For most cases, any pair of items can be chosen for comparison. 
However, for some applications with limited resources \citep{ahmed2024studying,sebastian2023privacy}, we may only compare certain pairs of items restricted to a certain comparison graph \citep{negahban2012iterative}. To handle such cases, we use a random graph to model the comparison graph. In particular, let $\mathcal{G}=(\mathcal{V}, \mathcal{E})$ represent a graph characterizing the comparison scheme for the $n$ items, where the vertex set $\mathcal{V}=\{1,2, \ldots, n\}$ corresponds to the $n$ items of interest, and an edge $(i, j)$ is included in  edge set $\mathcal{E}$ if and only if items $i$ and $j$ can be compared. 
We assume that $\mathcal{G}$ follows the Erdős-Rényi (ER) graph model $\mathcal{G}_{n, p}$, where an edge between a pair of vertices exists  with probability~$p$ independently. This random graph framework naturally accommodates scenarios where some comparisons are infeasible. In the special case where all pairwise comparisons are feasible, the model simplifies to $p = 1$.
As for the contextual BTL model,  
since $1/(1+e^{\bX_{t}^\top(\ths_i-\ths_j) })$ is invariant with $\{\ths_{i}\}_{i\in[n]}$ replaced by $\{\ths_{i}+\bv\}_{i\in[n]}$ for any $\bv\in\RR^{d}$, we assume $\sum_{i\in[n]}\ths_{i}=\bm{0}$ for the identifiability, and we let $\Theta=\{\btheta\in \RR^{nd}\given\sum_{i\in[n]}\btheta_{i}=\bm{0}\}$ represent the parameter space, where $\btheta=(\btheta_{1}^{\top},\btheta_{2}^{\top},\ldots,\btheta_{n}^{\top})^{\top}$.


\section{Two-Stage Ranking Bandit Method}\label{sec:BT}
We begin by generating an ER graph $\cG=(\mathcal{V}, \mathcal{E})$, and then select comparison pairs from its edges. Let $\hat{\btheta}(0)$ be an initial estimator for $\btheta^*$, which can be an estimator based on  earlier samples or prior knowledge.
For each pair $(i,j)$, define $\cT_{t,ij}$ as the collection of time points at which the pair $(i,j)$ is sampled by the end of time $t$ that $\cT_{t,ij} = \{\ell\in[t]:\ba_\ell=(i,j)\}$, where $\{\ba_\ell=(i,j)\}$  if $(i_\ell,j_\ell)=(i,j)$ or $(i_\ell,j_\ell)=(j,i)$.
To efficiently exploit, we estimate the ability vector $\hat{\btheta}(t)$ using the data collected up to time $t$.
In particular, the negative log-likelihood function at time point $t$ is
\begin{align}\label{lik}
\cL_{t}(\btheta)=&\sum_{(i,j)\in \cE,i>j;}\sum_{\ell\in \cT_{t,ij};}\sum_{m=1}^{M}\{-y_{ji}^{(m)}(\ell)\bX_{\ell}^\top(\btheta_{i}-\btheta_{j}) +\log(1+e^{\bX_{\ell}^\top(\btheta_{i}-\btheta_{j}) })\}.
\end{align}

Our algorithm consists of two stages: the $\epsilon$-greedy stage and the exploitation stage. The $\epsilon$-greedy stage is from the beginning to some pre-specified time $T_0$, where we mix with both exploration and exploitation. Then we transit to the pure exploitation stage after $T_0$. A sufficiently large $T_0$ guarantees a good initial value for full exploitation. We provide the choice of $T_0$ in Theorem~\ref{thm:reg}, and discuss how to choose $T_0$ in Remark~\ref{rm:RT}. 

Specifically, at each time point $t\in[T_0]$, letting $\alpha>0$ be a constant, we explore with probability $1/t^\alpha$ that we randomly select a pair $(i,j)\in \cE$ for comparison. Meanwhile, with probability $1-1/t^\alpha$, we exploit the historical information $\cH_{t-1}$ that we choose
\begin{align}\label{it}
i_t=\argmax_{i\in[n]} \bX_{t}^\top\hat{\btheta}_{i}(t-1).
\end{align}
Then, among the items connected to $i_t$, we randomly select  item $j_t$ for comparison. After observing $y_{i_t,j_t}(t)$, we update our estimator $\hat{\btheta}(t)$ by minimizing the negative log-likelihood function that 
\begin{align}\label{opt1}
\hat{\btheta}(t)=\argmin_{\btheta\in\tilde{\Theta}}\cL_{t}(\btheta),
\end{align}
where $\tilde{\Theta}=\{\btheta\in\Theta:\|\btheta_i\|_2\leq \gamma\}$, and $\gamma$ is a constant. 

In the second stage, from $T_0+1$ to the end of horizon $T$, we  exploit at every step. In particular, at each time point $t$, we select $i_t$ by (\ref{it}), and let $j_t$ be a random item connected to $i_t$ in $\cG$. Performing action $(i_t,j_t)$, we observe the result $y_{i_t,j_t}(t)$. We then update the estimator using the regularized maximum likelihood estimator $\hat{\btheta}$ that
\begin{align}\label{opt}
\hat{\btheta}(t)=\argmin_{\btheta}\cL_{\lambda,t}(\btheta) =\argmin_{\btheta} \cL_{t}(\btheta)+\lambda_t/2\|\btheta\|^2_2.
\end{align}
 We point out that the estimators in the two stages are slightly different for the ease of presentation of the technical proofs. In practice, we observe that using either estimator in the two stages gives similar empirical performances.
We summarize the methods in these two stages in the algorithm below. We refer to our method as the Ranking Bandit (RB).

\begin{algorithm}[H]
\caption{Two-Stage Ranking Bandit Method}
\begin{algorithmic}\label{alg}
\STATE \textbf{Result:} Actions $\ba_1, \ba_2, \ldots, \ba_T$.
\STATE \textbf{Input:} Number of items $n$, graph parameter $p$, number of comparisons per step $M$, exploration parameter $\alpha$ and optimization parameter $\gamma$.
\STATE \textbf{Initialization:} Initialize $\hat{\btheta}(0)$ randomly. Generate an ER graph $\cG_{n,p}$.
\FOR{$t$ from $1$ to $T_0$}
\STATE Observe covariate $\bX_{t}$. With probability $1/t^\alpha$, randomly select an edge $(i_t,j_t)$ from $\cG$. Otherwise, set $i_t=\argmax_{i\in[n]} \bX_{t}^\top\hat{\btheta}_{i}(t-1) $, and uniformly sample $j_t$ from the items connected to $i_t$ in $\cG$. 
\STATE Perform action $\ba_t=(i_t,j_t)$ and observe pairwise comparison results $y_{i_t,j_t}(t)$. 
\STATE Update the estimate as $\hat{\btheta}(t)=\argmin_{\btheta\in\tilde{\Theta}}\cL_{t}(\btheta),$ where $\tilde{\Theta}=\{\btheta\in\Theta:\|\btheta_i\|_2\leq \gamma\}$.
\ENDFOR
\FOR{$t$ from $T_0+1$ to $T$}
\STATE Observe covariate $\bX_{t}$. Set $i_t=\argmax_{i\in[n]} \bX_{t}^\top\hat{\btheta}_{i}(t-1) $, and uniformly sample $j_t$ from the items connected to $i_t$ in $\cG$.
\STATE Perform action $\ba_t=(i_t,j_t)$ and observe pairwise comparison results $y_{i_t,j_t}(t)$. 
\STATE Update the estimate as $\hat{\btheta}(t)=\argmin_{\btheta}\cL_{\lambda,t}(\btheta)$.
\ENDFOR
\end{algorithmic}
\end{algorithm}


\section{Theoretical Results}\label{sec:thry}
We provide theoretical guarantees of the proposed method.  In Section~\ref{sec:rate}, we establish the rates of convergence of our estimators in the two stages. Section~\ref{sec:reg} presents the regret analysis. We provide the statistical inference results for the estimators in Section~\ref{sec:inf}.

\subsection{Statistical Rates}\label{sec:rate}
We drive the rates of convergence of the estimators in the two stages, which is fundamental to regret analysis. Before going further, we impose some mild assumptions to facilitate our discussions. 
\begin{assumption}\label{ass:Sigx}
There exist positive constants $c$ and $C$, such that the minimum eigenvalue of $\EE(\bX\bX^\top)$ satisfies $\lambda_{\min}(\EE(\bX\bX^\top))>c$, and $\|\bX\|_2\leq C$.
\end{assumption}
\begin{assumption}\label{ass:Hesx}
There exists a constant $c>0$ such that $\max_{i\in[n]}\tilde{\bSigma}_{i}/\min_{i\in[n]}\lambda_{\min}(\bSigma_{i}^*)\leq c$, where $\bSigma_{i}^*=\EE[\bm{1}(\argmax_j(\bX^\top\btheta_j^{*})=i)\bX\bX^\top]$ and $\tilde{\bSigma}_{i}=\EE[\bm{1}(\argmax_j(\bX^\top\btheta_j^{*})=i)\bX^\top\bX]$.
\end{assumption}
\begin{assumption}\label{assmp:kappa}
There exists a constant $C>0$ such that $\max_{1\leq i< j\leq n}\|\btheta^{*}_{i}-\btheta^{*}_{j}\|_{2}\leq C$.
\end{assumption}
\begin{assumption}\label{ass:dif}
There exists a constant $c>0$ such that  $\max_{a=\pm\delta/2}\PP(|\max_{j\in[n]\setminus\{i\}}\bX^{\top}(\ths_{j}-\ths_{i})-a|\leq \delta)\leq c\delta$ holds for all $\delta>0$ and  $i\in[n]$.
\end{assumption}

Assumption~\ref{ass:Sigx} ensures the well-behaved nature of the Hessian matrix derived from the contextual variable $\bX$. The condition $\lambda_{\min}(\EE(\bX\bX^\top))>c$ assumes that the minimum eigenvalue of the covariance of the contextual vector $\bX$ is bounded away from zero, which is a common assumption for covariates \citep{han2020sequential,chen21,wang2023efficient}.
The condition $\|\bX\|_2\leq C$ bounds the magnitude of the contextual variable. We note that this condition can be relaxed to hold with high probability under some distributional assumptions. For example, it holds when $\bX$ follows a sub-Gaussian distribution. Assumption~\ref{ass:Hesx} guarantees that no item dominates disproportionately or contributes negligibly, guaranteeing well-behaved Hessian matrices.
\re{It imposes covariate diversity and is standard in contextual bandit problems; see, for example, Assumption EC.1 in \citet{bastaniOR} and Assumptions 3 and 4 in \citet{MSbandit}. This assumption holds if the probability density of $\bX$ is bounded below by a positive constant in an open neighborhood around the origin, covering common distributions such as uniform or truncated Gaussian. For a detailed illustration, we refer to \citet{MSbandit}.}
This assumption is necessary since the covariate depends on the selected arms. In the BTL literature \citep{Negahban2017,chen2019spectral, Gao2021}, where covariates are absent, and even in studies incorporating covariate information, such as \citet{fan2024uncertaintyquantificationmleentity}, where the relationship between covariates and items is known in advance, we do not need this assumption.
We also note that while prior literature requires that the sampling frequencies for all item pairs are of the same order, we do not make this assumption. Instead, we emphasize that Assumption~\ref{ass:Hesx} is a more essential condition for BTL models in the online-learning setting.
Assumption~\ref{assmp:kappa} assumes that the abilities of different entities lie within a bounded range, which is a common assumption in the BTL model \citep{chen2019spectral,Gao2021,FANmultiway,fan2024spectralrankinginferencesbased}. 
\re{Assumption~\ref{ass:dif} is a common margin condition in the contextual bandit literature, for example, Assumption~2 in \citet{hu2022fast} and Assumption~4 in \citet{hu2020smooth}.}  This assumption helps control the divergence between the observed covariate matrix with the expected one, as detailed in Lemma~C.1 in the Appendix. Assumption 4.4 is equivalent to $\PP(0< \bX^\top\btheta^{*}_{i}-\max_{j\in[n]\setminus\{i\}}\bX^\top\btheta^{*}_j\leq \delta)\leq c\delta$ and $\PP(0<\max_{j\in[n]\setminus\{i\}}\bX^\top\btheta^{*}_j-\bX^\top\btheta^{*}_i\leq \delta)\leq c\delta$ holding for $\delta>0$ and $i\in[n]$. It regulates the score gap between the optimal and sub-optimal items. By leveraging the anti-concentration inequality 
\citep{chernozhukov2015comparison}, a stronger result, $\sup_{a\in\RR}\PP(|\max_{j\in[n]\setminus\{i\}}\bX^{\top} (\ths_{j}-\ths_{i})-a|\leq \delta)\leq c\delta$, can be derived. Specifically, we deduce the explicit value of $c$ for the Gaussian distribution as follows. If $\bX\sim N(0,\bSigma)$, let $\sigma_j=(\ths_{j}-\ths_{i})^\top\bSigma(\ths_{j}-\ths_{i})>0$ for $j\in[n]\setminus\{i\}$, $\underline{\sigma}=\min \sigma_j$ and $\overline{\sigma}=\max\sigma_j$.
Then, for any $\delta>0$, we have 
\begin{align*}
\sup_{a\in\RR}\PP\Big(\Big|\max_{j\in[n]\setminus\{i\}}\bX^{\top}(\ths_{j}-\ths_{i})-a\Big|\leq \delta\Big)\leq c\delta\big(a_{n}+\sqrt{1\vee \log(\underline{\sigma}/\delta)}\big)
\end{align*}
using anti-concentration inequality, where $a_{n}=\EE[\max(\bX^{\top}(\ths_{j}-\ths_{i})/\sigma_j)]\leq \sqrt{2\log n}$ in the worst case, and $c$ depends only on $\underline{\sigma}$ and $\overline{\sigma}$.

The next theorem provides the convergence rate of our estimator in the exploration stage.

\begin{theorem}[$\epsilon$-Greedy stage rate]\label{thm:2rate}
Suppose that Assumption~\ref{ass:Sigx} holds. For the estimator $\hat{\btheta}(t)$ defined in (\ref{opt1}), if $\gamma$ is large enough such that $\ths\in\tilde{\Theta}$, there exists a constant $c > 0$ such that for any $t\in[(cn^2p\log T)^{1/(1-\alpha)},T_0]$ and $np>c\log n$, we have
\begin{align*}
\|\hat{\btheta}(t)-\btheta^*\|_2\lesssim  \sqrt{\frac{\log T}{M}}nt^{\alpha-1/2}
\end{align*}
holds with probability $1-O(\max\{T^{-3},n^{-10}\})$.
\end{theorem}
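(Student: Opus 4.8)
The plan is to analyze the MLE $\hat\btheta(t)$ by the standard route of combining (i) strong convexity of the log-likelihood $\cL_t$ near $\ths$, governed by a lower bound on the Hessian $\nabla^2\cL_t(\ths)$, with (ii) an upper bound on the gradient $\nabla\cL_t(\ths)$ via concentration. Since $\hat\btheta(t)$ minimizes $\cL_t$ over the convex set $\tilde\Theta$ containing $\ths$, the basic optimality inequality gives $\langle \nabla\cL_t(\ths), \hat\btheta(t)-\ths\rangle \le \cL_t(\ths) - \cL_t(\hat\btheta(t)) \le -\frac{\mu}{2}\|\hat\btheta(t)-\ths\|_2^2$ for a local strong-convexity constant $\mu$, whence $\|\hat\btheta(t)-\ths\|_2 \lesssim \|\nabla\cL_t(\ths)\|_2/\mu$ (restricted to the identifiable subspace $\Theta$, using that gradients and increments both lie in the sum-zero subspace). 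So everything reduces to lower-bounding $\mu$ and upper-bounding $\|\nabla\cL_t(\ths)\|_{2}$, both of which are sums over the sampled time points $\cT_{t,ij}$ and the $M$ comparisons.

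First I would set up the Hessian: $\nabla^2\cL_t(\ths)$ is a block matrix of the form $\sum_{\ell\le t} M\, \psi'(\cdot)\, (\be_{i_\ell}-\be_{j_\ell})(\be_{i_\ell}-\be_{j_\ell})^\top \otimes \bX_\ell\bX_\ell^\top$ with $\psi'$ the BTL variance term, bounded below by a constant $c_0$ because $\|\ths_i-\ths_j\|_2$ and $\|\bX_\ell\|_2$ are bounded (Assumptions~\ref{assmp:kappa} and \ref{ass:Sigx}). The Kronecker-product factor $(\be_i-\be_j)(\be_i-\be_j)^\top$ summed over sampled edges is essentially the (weighted) Laplacian of the comparison multigraph realized by the algorithm up to time $t$. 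In the $\epsilon$-greedy stage, at each step with probability $1/t^\alpha$ a uniformly random edge of $\cG_{n,p}$ is drawn, and with the remaining probability an edge incident to the exploit-chosen $i_t$ is drawn; either way every edge of $\cG$ gets sampled with probability $\gtrsim 1/(t^\alpha n)$ at each step (the explore term alone suffices, since $|\cE|\asymp n^2p$ and a uniform edge has probability $\asymp 1/(n^2p)$... I'd actually lean on the exploration draws, giving each edge probability $\gtrsim \frac{1}{t^\alpha n^2 p}$ per step, hence expected count $\gtrsim \frac{t^{1-\alpha}}{n^2 p}$ by time $t$). A Chernoff/matrix-Bernstein bound then shows the realized edge counts concentrate around these means uniformly over the $\lesssim n^2$ edges with probability $1-O(n^{-10})$, provided $t^{1-\alpha}\gtrsim n^2 p\log T$ — exactly the stated lower bound on $t$. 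Feeding this into the Laplacian spectral-gap bound for $\cG_{n,p}$ (algebraic connectivity $\asymp np$ when $np\gtrsim \log n$) and combining with the per-step $c_0$ and the $\bX\bX^\top$ lower eigenvalue from Assumption~\ref{ass:Sigx}, I get $\lambda_{\min}(\nabla^2\cL_t(\ths))\gtrsim M\cdot \frac{t^{1-\alpha}}{n^2p}\cdot np = M t^{1-\alpha}/n$ on the sum-zero subspace.

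Next I would bound the gradient. $\nabla\cL_t(\ths) = -\sum_{\ell,m}(y^{(m)}_{\cdot}(\ell) - p_\ell)(\be_{i_\ell}-\be_{j_\ell})\otimes\bX_\ell$ is a martingale-difference sum with respect to the filtration $\cH_\ell$ (the centered Bernoulli responses), with at most $O(Mt)$ bounded terms each of norm $O(1)$ and predictable quadratic variation controlled by the same realized Laplacian/Hessian structure — so a vector/matrix Freedman inequality gives $\|\nabla\cL_t(\ths)\|_2 \lesssim \sqrt{M t \log T}$ with probability $1-O(T^{-3})$ (the $\log T$ from a union bound over the horizon; the factor $nd$ in the dimension contributes only logarithmically or is absorbed). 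Dividing, $\|\hat\btheta(t)-\ths\|_2 \lesssim \sqrt{Mt\log T}\big/(Mt^{1-\alpha}/n) = n\sqrt{\log T/M}\, t^{\alpha-1/2}$, which is the claimed rate. The main obstacle is the Hessian lower bound: unlike the classical i.i.d. BTL analyses, the sampling of edges is adaptive (via the exploit choice $i_t=\argmax_i \bX_t^\top\hat\btheta_i(t-1)$), so the realized comparison multigraph and the weights $\psi'$ are data-dependent; I expect to handle this by leaning on the unconditionally-random exploration draws (which are genuinely i.i.d. uniform over $\cE$ given $\cG$) to supply the spectral gap on their own, treating the exploit draws only as a non-negative additive contribution, and by using Assumptions~\ref{ass:Sigx}–\ref{assmp:kappa} to keep $\psi'$ bounded below deterministically. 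A secondary subtlety is making the strong-convexity argument global over $\tilde\Theta$ rather than merely local — either by showing the iterate stays in a neighborhood of $\ths$ (bootstrapping the rate) or by using self-concordance / the boundedness of $\tilde\Theta$ to get a uniform lower curvature constant on the whole box, which is why the constraint $\|\btheta_i\|_2\le\gamma$ is imposed in \eqref{opt1}.
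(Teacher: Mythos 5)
Your proposal follows essentially the same route as the paper's proof: the optimality/Taylor-expansion inequality reducing the error to $\|\nabla\cL_t(\ths)\|_2/\lambda_{\min,\perp}(\nabla^2\cL_t(\cdot))$, a martingale (Freedman) bound of order $\sqrt{tM\log T}$ for the gradient, and a Hessian lower bound of order $Mt^{1-\alpha}/n$ obtained by counting only the exploration draws per edge ($\asymp t^{1-\alpha}/(n^2p)$, requiring $t^{1-\alpha}\gtrsim n^2p\log T$), treating exploitation draws as a nonnegative additive contribution, and invoking the ER-graph Laplacian spectral gap $\asymp np$ together with the $\gamma$-constraint to keep the BTL curvature bounded below on all of $\tilde\Theta$. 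The argument and all the key quantities match the paper's Lemmas~\ref{lem:gr}--\ref{eigmin} and the ensuing computation, so the proposal is correct.
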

\begin{remark}
The lower bound $(cn^2p\log T)^{1/(1-\alpha)}$  ensures that there is at least one observation for each pair in the initial burn-in period, which is a necessary requirement to make all parameters identifiable. Additionally, the condition $np>c \log n$ is crucial to ensure the comparison graph is connected with high probability.
Theorem~\ref{thm:2rate} shows that the order of estimation error in the first stage is $O(t^{\alpha-1/2})$. Notably, if we set $\alpha=0$, i.e., we have i.i.d. samples, and the total number of comparisons $tM=Ln(n-1)p/2$, where $n(n-1)p/2$ represents the expected number of item pairs in an ER graph and $L$ denotes the average number of comparisons per item pair, the statistical rate reduces to $O_P(\sqrt{{1}/({pL}}))$, omitting the logarithm term. This rate matches the results for the simple BTL model without contextual information, as shown in \citet{Negahban2017}, \citet{chen2019spectral} and \citet{Gao2021}.
\end{remark}

Suppose we set  $\alpha\in(0,1/2)$, and 
\begin{align}\label{tT0}
\tilde T_0=\max\Big\{(n^2p)^{1/\alpha+1}(\log T)^{1/(1-\alpha)},
n^{6/(1-2\alpha)}(\log T)^{1/(1-2\alpha)}/(pM)\Big\},
\end{align} for the $\epsilon$-greedy stage. We have the following result for the exploitation stage.
\begin{theorem}[Exploitation stage rate]\label{thm:BT}
Suppose that Assumptions~\ref{ass:Sigx}-\ref{ass:dif} hold, and $T_0\geq C \tilde T_0$, $ np>C \log n$ for some sufficiently large constant $C$. If  $\lambda_t\asymp \sqrt{tM\log T/n}$, for the estimator $\hat{\btheta}(t)$ defined in (\ref{opt}), we have
\begin{align*}
\max_{i\in [n]}\|\hat{\btheta}_{ i}(t)-\btheta^{*}_{i}\|_{2}\lesssim 
\sqrt{\frac{n\log T}{ptM}}
\end{align*} holds with probability $1-O(\max\{T^{-2},n^{-10}\})$ for $t\geq T_0$.
\end{theorem}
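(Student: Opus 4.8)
\textbf{Proof proposal for Theorem~\ref{thm:BT}.}

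The plan is to bound $\max_{i\in[n]}\|\hat\btheta_i(t)-\ths_i\|_2$ by the standard strategy for regularized $M$-estimators: control the gradient $\nabla\cL_{\lambda,t}(\ths)$ at the truth, establish a restricted-strong-convexity (RSC) type lower bound on the Hessian of $\cL_{\lambda,t}$ in a neighborhood of $\ths$, and then combine the two to get $\|\hat\btheta(t)-\ths\|_2\lesssim \|\nabla\cL_t(\ths)\|_2/\lambda_{\min}(\nabla^2\cL_t)$ plus a regularization bias term of order $\lambda_t\|\ths\|_2$. The key difficulty that distinguishes this from the classical BTL analysis is that the design points $\{\bX_\ell\}_{\ell\le t}$ used in $\cL_t$ are \emph{not} i.i.d.\ over the exploitation phase: the pair $(i_\ell,j_\ell)$ is chosen by the greedy rule \eqref{it} using $\hat\btheta(\ell-1)$, so the summands in the Hessian are adapted to the filtration $\cH_{\ell-1}$ and only \emph{conditionally} have the right mean. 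The burn-in requirement $T_0\ge C\tilde T_0$ is exactly what we will use to guarantee that by the start of the exploitation phase $\hat\btheta(T_0)$ is already close enough to $\ths$ (via Theorem~\ref{thm:2rate} with $t=T_0$ and the choice of $\tilde T_0$ in \eqref{tT0}) that the greedy choice $i_\ell=\argmax_i \bX_\ell^\top\hat\btheta_i(\ell-1)$ coincides with the oracle optimal item $i_\ell^*$ on an event of high probability, so that on that event the selection indicator $\bm 1(i_\ell=i)$ can be replaced by $\bm 1(\argmax_j \bX_\ell^\top\ths_j=i)$ up to a set of exponentially small probability controlled by the margin Assumption~\ref{ass:dif}.

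Concretely I would proceed in the following steps. \emph{Step 1 (Hessian lower bound).} Write $\nabla^2\cL_t(\btheta)=\sum_{(i,j)\in\cE}\sum_{\ell\in\cT_{t,ij}}\sum_{m}\psi''(\bX_\ell^\top(\btheta_i-\btheta_j))\,(\be_i-\be_j)(\be_i-\be_j)^\top\otimes \bX_\ell\bX_\ell^\top$, where $\psi$ is the logistic loss; uniformly over $\tilde\Theta$ the scalar factor is bounded below by a constant because of Assumptions~\ref{ass:Sigx}, \ref{assmp:kappa} and $\|\bX_\ell\|_2\le C$. Then I condition on the greedy-consistency event and replace $i_\ell$ by $i_\ell^*$, so the number of times each pair $(i,j)$ is compared is, in expectation, $\asymp tM p^{-1}$ times $\PP(i_\ell^*=i)/n$ — here is where Assumption~\ref{ass:Hesx} enters, guaranteeing every item is selected with non-negligible probability and no item dominates, so that the aggregated Hessian restricted to $\Theta$ (the sum-zero subspace) has $\lambda_{\min}\gtrsim tMp^{-1}n^{-1}\cdot c$ for the relevant "graph Laplacian $\otimes$ covariance" structure. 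This concentration of the adapted sum around its conditional mean is the matrix-martingale step: apply a matrix Freedman/Bernstein inequality to $\sum_\ell (\text{summand}_\ell - \EE[\text{summand}_\ell\mid\cH_{\ell-1}])$ together with a further concentration of the conditional means, using Lemma~\ref{lem:hes} from the Appendix (cited in the text) and the margin bound from Assumption~\ref{ass:dif} to handle the discrepancy between $\bm 1(i_\ell=i)$ and its expectation.

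\emph{Step 2 (gradient at truth).} The score $\nabla\cL_t(\ths)$ is a sum of conditionally mean-zero bounded martingale differences $\sum_\ell\sum_m (\text{\rm Bernoulli centered})\cdot(\be_{i_\ell}-\be_{j_\ell})\otimes\bX_\ell$, each with $\|\cdot\|_2\le 2C$ and conditional second moment $\lesssim 1$; a vector Freedman inequality over the $nd$-dimensional space gives $\|\nabla\cL_t(\ths)\|_2\lesssim \sqrt{tM\log T}$ with probability $1-O(T^{-2})$, and more sharply the block corresponding to item $i$ has norm $\lesssim\sqrt{tMp^{-1}n^{-1}\log T}$ because item $i$ appears in only an $O(p/n)$ fraction of comparisons — I would track this per-item refinement since the final bound is stated entrywise in $i$. \emph{Step 3 (combine).} On the intersection of the events from Steps 1 and 2, convexity of $\cL_{\lambda,t}$ and the basic inequality $\cL_{\lambda,t}(\hat\btheta(t))\le\cL_{\lambda,t}(\ths)$ give $\lambda_{\min}(\nabla^2\cL_t)\|\hat\btheta(t)-\ths\|_2^2\lesssim \|\nabla\cL_t(\ths)\|_2\|\hat\btheta(t)-\ths\|_2+\lambda_t\|\ths\|_2\|\hat\btheta(t)-\ths\|_2$; dividing through and plugging $\lambda_{\min}(\nabla^2\cL_t)\gtrsim tMp^{-1}n^{-1}$, $\|\nabla\cL_t(\ths)\|_2\lesssim\sqrt{tMp^{-1}n^{-1}\cdot n\log T}$ (aggregating the per-item bounds) and $\lambda_t\asymp\sqrt{tM\log T/n}$ yields $\max_i\|\hat\btheta_i(t)-\ths_i\|_2\lesssim\sqrt{n\log T/(ptM)}$ as claimed, after checking that the localization neighborhood used in Step 1 is not exited — a self-consistency/continuation argument that starts from the guaranteed accuracy of $\hat\btheta(T_0)$ and propagates forward in $t$. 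I expect Step 1 — specifically, getting the sharp $n^{-1}p$ scaling of $\lambda_{\min}(\nabla^2\cL_t)$ on the sum-zero subspace for the \emph{adaptively} chosen design, with the error of replacing data-driven selection by oracle selection absorbed via Assumptions~\ref{ass:Hesx} and \ref{ass:dif} — to be the main obstacle; the union-bound probability $O(\max\{T^{-2},n^{-10}\})$ comes from summing the matrix/vector Freedman failure probabilities, the graph-connectivity failure ($np>C\log n$), and the margin-violation events over the $n$ items and $t\le T$ time steps.
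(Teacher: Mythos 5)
Your Steps 1 and 2 are broadly aligned with the paper's treatment of the adaptive design: the paper likewise splits the Hessian analysis into a matrix-martingale concentration of $\sum_{\ell}\bm{1}(i_\ell=i)\bX_\ell\bX_\ell^\top$ about its conditional mean (Lemma~\ref{app:lem:estil}) plus an anti-concentration bound on the drift of that conditional mean caused by $\hat{\btheta}(\ell-1)\neq\ths$ (Lemma~\ref{lem:hes}), and then uses the randomness of $j_\ell$ and the ER graph to pass to the pair-level and full Hessian (Lemmas~\ref{lem:Sigij} and~\ref{Hessmin}). The genuine gap is in Step 3. The basic inequality $\lambda_{\min,\perp}(\nabla^2\cL_t)\|\hat{\btheta}-\ths\|_2^2\lesssim\|\nabla\cL_{\lambda,t}(\ths)\|_2\,\|\hat{\btheta}-\ths\|_2$ controls only the \emph{global} $\ell_2$ error. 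Moreover your claimed scaling $\lambda_{\min,\perp}(\nabla^2\cL_t)\gtrsim tMp^{-1}n^{-1}$ cannot hold: since the indicators $\bm{1}(\argmax_j\bX^\top\ths_j=i)$ partition, $\sum_i\bSigma_i^*=\EE[\bX\bX^\top]$, so $\min_i\lambda_{\min}(\bSigma_i^*)=O(1/n)$, and indeed $\tr(\nabla^2\cL_t)=O(tM)$ over an $nd$-dimensional space forces $\lambda_{\min,\perp}=O(tM/n)$; the paper's Lemma~\ref{Hessmin} gives exactly $\asymp tM/n$. With the correct Hessian scale and $\|\nabla\cL_{\lambda,t}(\ths)\|_2\lesssim\sqrt{tM\log T}$ (Lemma~\ref{gr}), the basic inequality yields only $\|\hat{\btheta}-\ths\|_2\lesssim n\sqrt{\log T/(tM)}$ --- which is the paper's own global bound in Lemma~\ref{induc} and exceeds the per-item target $\sqrt{n\log T/(ptM)}$ by a factor of order $\sqrt{np}$. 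Tracking a per-item gradient bound does not repair this, because the Hessian couples all blocks through the comparison graph and admits no blockwise inversion.

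The paper closes this gap with a leave-one-out gradient-descent argument that is absent from your proposal: it runs the GD sequence $\btheta^s$ for $\cL_{\lambda,t}$ started at $\btheta^0=\ths$, and for each item $q$ an auxiliary sequence $\btheta^{s,(q)}$ for a surrogate loss in which every comparison involving $q$ has its outcome replaced by its conditional expectation. An induction (Lemma~\ref{induc}) simultaneously propagates the global $\ell_2$ error of $\btheta^s$, the error of the $q$-th block of $\btheta^{s,(q)}$ (where the decoupling permits a fresh martingale bound on the $q$-rows of the score, supplying the missing $\sqrt{np}$ gain), the proximity $\max_q\|\btheta^{s,(q)}-\btheta^s\|_2$, and finally $\max_i\|\btheta_i^s-\ths_i\|_2$; Lemma~\ref{thetaT} then transfers the bound from $\btheta^{\tilde T}$ to the actual minimizer $\hat{\btheta}(t)$, and Corollary~\ref{cor} feeds the resulting per-item rate back into the Hessian analysis at later times. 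Without this (or an equivalent $\ell_{2,\infty}$ device), your outline establishes only the weaker global $\ell_2$ rate, not the stated theorem.
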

\begin{proof}[Proof Sketch]
We outline the proof here and provide the detailed proof in Appendix Section~C.

We first have that the log-likelihood function (\ref{lik}) is equivalent to the following form where we explicitly write  $\ba_{\ell}$ that
\begin{align}\label{lika}
\cL_{t}(\btheta)=&\sum_{(i,j)\in \cE,i>j;}\sum_{\ell\in [t];}\sum_{m=1}^{M}\bm{1}(\ba_{\ell}=(i,j))\{-y_{ji}^{(m)}(\ell)\bX_{\ell}^\top(\btheta_{i}-\btheta_{j}) +\log(1+e^{\bX_{\ell}^\top(\btheta_{i}-\btheta_{j}) })\}.
\end{align}
Notice that our loss function utilizes the whole trajectory including both the $\epsilon$-greedy stage and the exploitation stage to boost the estimation accuracy. Our next step is to bound the estimation rate of estimators derived from the loss with the dependent observations.
For all exploitation steps in both stages, we observe the preference after  action~$\ba_{\ell}$, which depends on history $\cH_{\ell-1}$. This dependency introduces correlations among the summations in $\cL_t$, posing challenges for theoretical analysis.
A critical step in the proof is bounding the eigenvalues of the Hessian matrix involving dependent samples. Note that the Hessian matrix is 
\begin{align*}
\nabla^{2} \cL_t(\btheta)=&M\sum_{(i,j)\in \mathcal{E},i>j;}\sum_{\ell\in [t]}\bm{1}(\ba_\ell=(i,j))\frac{e^{\bX_{\ell }^\top \btheta_{i}} e^{\bX_{\ell }^\top \btheta_{j}}}{(e^{\bX_{\ell }^\top \btheta_{i}}+e^{\bX_{\ell }^\top \btheta_{j}})^{2}}(\bc_{\ell i}-\bc_{\ell j})(\bc_{\ell i}-\bc_{\ell j})^\top,
\end{align*}
where $\bc_{\ell i} = \be_{i}\otimes \bX_\ell \in\mathbb{R}^{nd}$; $\ba_\ell = (i_\ell,j_\ell)$, and $i_\ell=\argmax_{i\in[n]} \bX_{\ell}^\top\hat{\btheta}_{i}(\ell-1) $. 
We begin with considering $\sum_{\ell=1}^t\bm{1}(i_\ell=i)\bX_\ell\bX_\ell^\top$ and $\sum_{\ell=1}^t\bm{1}(i_\ell=i)\bX_\ell^\top\bX_\ell$. Specifically, we show that the  event 
{\small
\begin{align}\label{Hesbd}
\Big\{t\min_{i\in[n]}\lambda_{\min}(\bSigma_{i}^*)/2\leq \lambda_{\min}(\sum_{\ell=1}^t\bm{1}(i_\ell=i)\bX_\ell\bX_\ell^\top)\leq \sum_{\ell=1}^t\bm{1}(i_\ell=i)\bX_\ell^\top\bX_\ell\leq 3t\max_{i\in[n]}\tilde{\bSigma}_{i}/2,i\in[n]\Big\}
\end{align}}
holds with high probability. For $\sum_{\ell=1}^t\bm{1}(i_\ell=i)\bX_\ell\bX_\ell^\top$, in Lemma~C.1, we control the term induced by historical estimation error, 
\[
\|\EE_{\bX}[\bm{1}(\argmax_j \bX^\top\hth_j(t-1)=i)\bX\bX^\top]-\bSigma_i^*\|\lesssim\Big(\max_{j\in[n]}\|\hth_j(t-1)-\ths_j\|_2\Big)^{1/2}
\] 
by employing anti-concentration inequalities. Additionally, we show in Lemma~C.2 that the error 
\[
    \Big\|\sum_{\ell=1}^t\bm{1}(i_\ell=i)\bX_\ell\bX_\ell^\top-\EE_{\bX}[\bm{1}(\argmax_j \bX^\top\btheta_j =i)\bX\bX^\top]\Big\|\lesssim\sqrt{t\log T}
\]
is negligible using matrix martingale concentration. By combining these results with sufficiently good enough initialization of the exploitation stage and bounds on historical estimation error, we prove in Lemma~C.3 that the event in (\ref{Hesbd}) holds with high probability.

Then, we bound the eigenvalues of the Hessian matrix conditioning on the above event. Leveraging the randomness of $j_\ell$ and utilizing graph properties, we establish in Lemma~C.5 that
\begin{align}\label{Hes}
tM\min_{i\in[n]}\lambda_{\min}(\bSigma_{i}^*)\lesssim
\lambda_{\min, \perp}\left(\nabla^2 \cL_t(\btheta)\right) \leq \lambda_{\max }\left(\nabla^2 \mathcal{L}_{t}(\btheta)\right)\lesssim tM\max_{i\in[n]}\tilde{\bSigma}_{i}.\end{align} 

Next, we construct a gradient descent sequence $\{\btheta^{s}\}_{s=0,1,\ldots,t}$ and  bound $\|\btheta^{t}-\hat{\btheta}(t)\|_\infty$ and $\max_{i\in [n]}\|\hat{\btheta}_{i}(t)-\btheta_{i}^{*}\|_{2}$. 
Using the convergence properties of the gradient descent sequence, we show in Lemma~C.7 that
\begin{align*}
\|\btheta^{t}-\hat{\btheta}(t)\|_\infty \lesssim \sqrt{\frac{n\log T}{ptM}}.
\end{align*} 
Further, by applying the  matrix martingale concentration to each step of $\btheta^s$ together with the Hessian bounds in (\ref{Hes}), we prove in Lemma~C.8 that
\begin{align*}
\max_{i\in [n]}\|\btheta_{i}^{t}-\btheta_{i}^{*}\|_{2}
\lesssim \sqrt{\frac{n\log T}{ptM}}.
\end{align*}
Combining the two bounds above, we conclude the proof.
\end{proof}
\begin{remark}
Theorem~\ref{thm:BT} establishes the uniform convergence rate for all items' abilities. Let $L$ be the average number of comparisons per item pair, satisfying $tM=Ln(n-1)p/2$ to facilitate intuitive comparison with existing results in the BTL model literature. Our rate simplifies to $\sqrt{\frac{\log T}{np^2L}}$. \citet{fan2024uncertaintyquantificationmleentity} analyze the BTL model with item abilities represented as $\balpha^*_i+\bX_i^\top\bbeta^*$, where $\balpha^*_i\in\RR$ is the individual attribute, $\bX_i\in\RR^d$ is the individual feature, and $\bbeta^*\in\RR^d$ is a common parameter across items. Their work establishes convergence rates based on independent observations, with $\|\hat{\balpha}_i-\balpha_i^*\|_\infty\lesssim\sqrt{\frac{\log n}{npL}}$ and $\|\hat{\bbeta}-\bbeta^*\|_2\lesssim\sqrt{\frac{\log n}{pL}}$. Compared with these results, our rate has the same order in terms of $n$ and $L$, ignoring the logarithmic term. The additional factor of $p$ arises due to varying contextual information across comparison pairs.
The term $T$ appearing in the expression for $T_0$ is introduced for theoretical analysis. However, due to the logarithmic relationship, the order of $T$ can grow as large as $\exp(T_0)$. This means that the appearance of $T$ in $T_0$ has minimal effect on practical usage.
\end{remark}

\subsection{Regret Analysis}\label{sec:reg}
We present a non-asymptotic bound on the regret incurred by our strategy. This result demonstrates that the average cumulative regret diminishes at a rate of $O(T^{-1/2})$. Our analysis applies to the implementation of our algorithm with no historical data, where the initial value $\hth(0)$ is set randomly.
A key step in the regret analysis is bounding the  error $\|\btheta_{i_t^*}^{*}-\btheta_{i_t}^{*}\|_2$, which can be controlled by $\max_{i\in[n]}\|\hth_i(t)-\ths_i\|_2$. When the order of $T$ is larger than $T_0$, we have the following regret bound.
\begin{theorem}\label{thm:reg}
Under the conditions of Theorems~\ref{thm:2rate} and \ref{thm:BT}, for $T\gg T_0$, we have
\begin{align*}
R(T)
\lesssim&\sqrt{\frac{n\log T}{pMT}}+\frac{T_0^{1-\alpha}}{T}
+ \sqrt{\frac{\log T}{M}}\frac{nT_0^{\alpha+1/2}}{T}
+\frac{1}{T}\Big(n^2p\log T\Big)^{1/(1-\alpha)}
\end{align*}
with probability at least $1-O(\max\{T^{-1},n^{-10}\})$.
\end{theorem}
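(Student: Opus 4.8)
The plan is to partition the horizon and bound the regret contributed by each block separately, then combine via a union bound. Write $t_1\asymp(n^2p\log T)^{1/(1-\alpha)}$ for the burn-in length of Theorem~\ref{thm:2rate}, and split $[T]$ into: (i) the burn-in block $t\le t_1$; (ii) the times in $(t_1,T_0]$ at which the $\epsilon$-greedy rule explores; (iii) the times in $(t_1,T_0]$ at which it exploits; and (iv) the second-stage block $t\in(T_0,T]$. On every round, $\bX_t^\top(\btheta_{i_t^*}^*-\btheta_{i_t}^*)\le 2C\max_{i,j}\|\btheta_i^*-\btheta_j^*\|_2=O(1)$ by Assumptions~\ref{ass:Sigx} and~\ref{assmp:kappa}; this crude bound is all that is used on blocks (i) and (ii).

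On blocks (iii) and (iv) the action satisfies $i_t=\argmax_{i\in[n]}\bX_t^\top\hth_i(t-1)$, and the key deterministic step is
\begin{align*}
0\le\bX_t^\top(\btheta_{i_t^*}^*-\btheta_{i_t}^*)
&=\bX_t^\top\big(\btheta_{i_t^*}^*-\hth_{i_t^*}(t-1)\big)+\bX_t^\top\big(\hth_{i_t^*}(t-1)-\hth_{i_t}(t-1)\big)+\bX_t^\top\big(\hth_{i_t}(t-1)-\btheta_{i_t}^*\big)\\
&\le\|\bX_t\|_2\big(\|\hth_{i_t^*}(t-1)-\btheta_{i_t^*}^*\|_2+\|\hth_{i_t}(t-1)-\btheta_{i_t}^*\|_2\big)\le 2C\max_{i\in[n]}\|\hth_i(t-1)-\btheta_i^*\|_2,
\end{align*}
where the middle term on the first line is $\le 0$ because $i_t$ maximizes $\bX_t^\top\hth(t-1)$, the outer two terms are handled by Cauchy--Schwarz together with $\|\bX_t\|_2\le C$. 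Hence the per-round regret on an exploitation step is controlled by the uniform estimation error at time $t-1$: Theorem~\ref{thm:2rate} supplies $\max_i\|\hth_i(t-1)-\btheta_i^*\|_2\lesssim\sqrt{\log T/M}\,n\,(t-1)^{\alpha-1/2}$ on block (iii) (valid once $t-1\ge t_1$), and Theorem~\ref{thm:BT} supplies $\lesssim\sqrt{n\log T/(p(t-1)M)}$ on block (iv) (valid since $T_0\ge C\tilde T_0$); the lone round $t=T_0+1$, at which $\hth(T_0)$ is still the first-stage estimator, is subsumed into the block-(iii) bound since $T_0^{\alpha-1/2}\le T_0^{\alpha+1/2}$.

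It then remains to sum. Block (i) contributes $\lesssim t_1/T=(n^2p\log T)^{1/(1-\alpha)}/T$. For block (ii), the exploration indicators are independent $\mathrm{Bernoulli}(1/t^\alpha)$ variables with total mean $\sum_{t\le T_0}t^{-\alpha}\asymp T_0^{1-\alpha}$; a Bernstein (Freedman) bound gives at most $\lesssim T_0^{1-\alpha}$ exploration steps with probability $1-\exp(-cT_0^{1-\alpha})$, which is $O(T^{-1})$ since $T_0^{1-\alpha}\gtrsim\log T$ by the choice of $T_0$, and each such step costs $O(1)$, for a total $\lesssim T_0^{1-\alpha}/T$. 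For block (iii), $\sum_{t_1<t\le T_0}\sqrt{\log T/M}\,n\,t^{\alpha-1/2}\lesssim\sqrt{\log T/M}\,n\,T_0^{\alpha+1/2}$ (as $\alpha-1/2>-1$), giving $\lesssim\sqrt{\log T/M}\,nT_0^{\alpha+1/2}/T$. For block (iv), $\sum_{T_0<t\le T}\sqrt{n\log T/(ptM)}\lesssim\sqrt{n\log T/(pM)}\cdot\sqrt{T}$, giving $\lesssim\sqrt{n\log T/(pMT)}$. Adding the four contributions yields the stated bound; a union bound over the $O(T)$ per-round events of Theorems~\ref{thm:2rate}--\ref{thm:BT} (each with failure probability $O(\max\{T^{-3},n^{-C}\})$ for $C$ arbitrarily large, or uniformized directly from the underlying matrix-martingale arguments) together with the block-(ii) concentration event gives overall probability $1-O(\max\{T^{-1},n^{-10}\})$.

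The main obstacle is the bookkeeping on block (iii): one must verify that at \emph{every} exploitation time in $(t_1,T_0]$ the first-stage estimator $\hth(t-1)$ has already entered its convergence regime, which forces the lower endpoint of this block to be exactly the identifiability burn-in $t_1$ and is precisely why the term $(n^2p\log T)^{1/(1-\alpha)}/T$ cannot be removed; moreover the explore/exploit split of stage one must be taken round-by-round, since one cannot decide in advance which rounds are exploitation rounds. Keeping the union bound from inflating the failure probability past $O(\max\{T^{-1},n^{-10}\})$ is the other delicate point; the deterministic per-round inequality and the three elementary summations are otherwise routine.
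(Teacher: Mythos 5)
Your proposal is correct and follows essentially the same route as the paper: the same deterministic per-round inequality $\bX_t^\top(\btheta_{i_t^*}^*-\btheta_{i_t}^*)\le 2\|\bX_t\|_2\max_i\|\hth_i(t-1)-\btheta_i^*\|_2$ (exploiting that $i_t$ maximizes $\bX_t^\top\hth(t-1)$), the same four-way split into burn-in, stage-one exploration rounds (counted via Bernstein, $\lesssim T_0^{1-\alpha}$), stage-one exploitation rounds (summing the Theorem~\ref{thm:2rate} rate), and stage-two rounds (summing the Theorem~\ref{thm:BT} rate), followed by a union bound over $T$ steps. Your write-up is only slightly more explicit about the bookkeeping (the $t=T_0+1$ round and the $O(1)$ crude bound from Assumption~\ref{assmp:kappa}) than the paper's.
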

\begin{remark}\label{rm:RT}
Theorem~\ref{thm:reg} guarantees that the average cumulative regret converges to zero at the rate of $O((\log T/T)^{1/2})$ as $T$ increases. The first term results from the second stage and is derived utilizing a uniform convergence rate for the ability estimations. The second and third terms are associated with the first stage, balancing the regret induced by exploration and exploitation steps. The last term arises from the initial phase of the strategy, representing the regret incurred to ensure that all pairs, approximately $n^2p$ pairs induced by the ER graph, are selected at least once. From Theorem~\ref{thm:reg}, $R(T)$ increases as $T_0$ increases. Therefore, we set $T_0$ as the same order of $\tilde{T}_0$, as defined in \eqref{tT0}. Considering the order with respect to $n$, $\alpha=(\sqrt{6}-2)/2$ minimizes the regret when $p$ is constant, while $\alpha=1/8$ if $p\asymp\log n/n$.

\end{remark}

\re{
\begin{remark}\label{re:M}
In classical bandit problems, $M$ is typically 1. In our framework, $M$ can be any positive integer, and we introduce it for generality. The parameter $M$ influences the regret bound by affecting the estimator's convergence rate. Specifically, the convergence rate for both stages scales as $M^{-1/2}$, as shown in Theorems~\ref{thm:2rate} and \ref{thm:BT}, reflecting the effect of $M$ on each exploitation step. This leads to the appearance of the terms $\sqrt{n \log T/(p M T)}$ and $\sqrt{\log T/M} \cdot n T_0^{\alpha + 1/2}/T$ in the regret bound. The parameter $M$ does not affect the terms arising from the exploration steps.
\end{remark}

\begin{remark}\label{re:L}
Recall that $L$ represents the average number of pulls per arm. Given that $TM = Ln(n-1)p/2$, the main term of the regret bound in our setting is of order $\sqrt{\log T/(nL)}$ for constant $p$. This differs from the classical bandit problem, where the optimal regret bound typically scales as $1/\sqrt{L}$ \citep{auer1995gambling,audibert2009minimax,DSL}. The difference arises from the nature of the ranking problem, where the number of compared items helps improve the estimation of a single item's ability \citep{simons1999asymptotics}, highlighting the distinction between ranking bandit problems and traditional bandit settings.
\end{remark}
}

\subsection{Asymptotic Normality}\label{sec:inf}
We derive the asymptotic distribution of the estimator $\hth(t)$, which facilitates inference. Following the Lagrangian debiasing approach developed in \cite{lagran23}, we first construct a debiased estimator $\hat{\btheta}^{d}(t)$ based on $\hat{\btheta}(t)$, and then establish its asymptotic normality.

Define the constraint function $f_t(\btheta)=\big(f_{t1}(\btheta),f_{t2}(\btheta),\ldots,f_{td}(\btheta)\big)^\top\in\RR^d$, where $f_{tk}(\btheta)=tM\sum_{i\in[n]}\btheta_{ik}$, $k\in[d]$. The MLE method solves the problem: $
\min_{\btheta}\cL_{t}(\btheta),\text{ subject to }f_t(\btheta)=0.$
The corresponding Lagrangian dual problem is 
\begin{align}\label{Lagdual}
\max_{\mu_t}\min_{\btheta}\cL_{t}(\btheta)+\mu_t f_t(\btheta),
\end{align}
where $\mu_t\in\RR$ is the Lagrangian multiplier.
Using the estimator $\hth(t)$, we derive the debiased estimator $\hth^{d}(t)$ by solving the following equations for $\btheta$ and $\mu_t$, which represent the first-order approximation of the optimality condition derived from (\ref{Lagdual}) that
\begin{align}\label{deb}
\left(\begin{array}{cc}
\nabla^2 \cL_t(\hth(t)) & \nabla f_t(\hth(t)) \\
\nabla f_t(\hth(t))^{\top} & \bm{0}
\end{array}\right)\binom{\btheta-\hth(t)}{\mu_t}=\binom{-\nabla \cL_t(\hth(t))}{-f_t(\hth(t))}.
\end{align}
Define the matrix
\begin{align*}
\bPsi_t(\btheta)=\left(\begin{array}{cc}
\nabla^2 \cL_t(\btheta) & \nabla f_t(\btheta) \\
\nabla f_t(\btheta)^{\top} & \bm{0}
\end{array}\right),
\end{align*}
and let $\hat{\bPsi}_t=\bPsi_t(\hth(t))$, $\bPsi^*_t=\bPsi_t(\ths)$ for simplicity. We prove that $\bPsi^*_t$ and $\hat{\bPsi}_t$ are invertible with high probability in Lemma~E.1 in the Appendix. Let inverse of $\bPsi_t^{*}$ be
\begin{align*}
\bPhi^*_t=\left(\begin{array}{cc}
\bPhi^*_{t,11} & \bPhi^*_{t,12} \\
\bPhi^*_{t,21} & \bPhi^*_{t,22}
\end{array}\right),
\end{align*}
where $\bPhi^*_{t,11}$ is the $nd\times nd$ block of the $n(d+1)\times n(d+1)$ matrix $\bPhi^*_t$. Similarly, let $\hat{\bPhi}_t$ and $\hat{\bPhi}_{t,11}$ denote the counterparts derived from $\hat{\bPsi}_t$.
The debiased estimator from (\ref{deb}) is then: 
\begin{align*}
\hat{\btheta}^{d}(t)=\hat{\btheta}(t)-\hat{\bPhi}_{t,11}\nabla\cL_t(\hat{\btheta}(t)).
\end{align*}
The following theorem establishes the asymptotic properties of $\hat{\btheta}^{d}(t)$.

\begin{theorem}\label{thm:inf}
Under the conditions of Theorem~\ref{thm:BT}, as $t,n\rightarrow\infty$, we have
\begin{align*}
((\bPhi_{t,11}^*)_{ii})^{-1/2}(\hat{\btheta}_{i}^{d}(t)-\btheta_{i}^{*})\overset{d}\longrightarrow N(0,\Ib_{d})
\end{align*}
for $i\in[n]$, where $(\bPhi_{t,11}^*)_{ii}$ denotes the $i$-th $d\times d$ diagonal block of $\bPhi_{t,11}^*$.
\end{theorem}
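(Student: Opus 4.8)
The plan is to follow the standard one-step debiasing route: write $\hat{\btheta}^{d}(t)-\ths$ as a linear functional of the score $\nabla\cL_t(\ths)$ plus negligible remainder terms, then apply a martingale central limit theorem to the score after normalizing by the (random) curvature. First I would Taylor-expand $\nabla\cL_t(\hth(t))$ around $\ths$, writing $\nabla\cL_t(\hth(t))=\nabla\cL_t(\ths)+\nabla^2\cL_t(\ths)(\hth(t)-\ths)+\br_t$, where $\br_t$ collects the third-order/Hessian-Lipschitz error. Substituting into the definition of $\hat{\btheta}^{d}(t)$ and using the block structure of $\hat{\bPsi}_t$ together with the constraint $f_t(\hth(t))=0$ (which holds because $\hth(t)\in\Theta$), I would show
\begin{align*}
\hat{\btheta}^{d}(t)-\ths = -\bPhi^*_{t,11}\nabla\cL_t(\ths) + (\text{terms controlled by }\|\hth(t)-\ths\|_2^2,\ \|\hat{\bPhi}_{t,11}-\bPhi^*_{t,11}\|,\ \|\hat\bPsi_t-\bPsi^*_t\|).
\end{align*}
The Hessian-perturbation bounds needed here — invertibility of $\bPsi^*_t$ and $\hat\bPsi_t$ with high probability, and $\|\hat\bPsi_t-\bPsi^*_t\|\lesssim tM\cdot\max_i\|\hth_i(t)-\ths_i\|_2^{1/2}$ via the anti-concentration argument already used for Lemma~\ref{lem:hes} — would be invoked from the Appendix lemmas (\ref{lem:infma} etc.), and the remainder is then $o_P$ of the leading term after plugging in the rate from Theorem~\ref{thm:BT}.

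Second, I would analyze the leading term $-\bPhi^*_{t,11}\nabla\cL_t(\ths)$. The gradient decomposes over time as a sum $\nabla\cL_t(\ths)=\sum_{\ell=1}^{t}\bg_\ell$ where $\bg_\ell$ depends on $(\bX_\ell,\ba_\ell,y_{i_\ell j_\ell}(\ell))$; conditionally on $\cH_{\ell-1}$ and $(\bX_\ell,\ba_\ell)$, each $\bg_\ell$ has mean zero because $\EE[y^{(m)}_{ji}(\ell)\mid\cH_{\ell-1},\ba_\ell,\bX_\ell]$ equals the BTL probability, so $\{\bg_\ell\}$ is a martingale difference sequence with respect to the filtration $\cH_\ell$. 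After left-multiplying by the deterministic matrix $((\bPhi^*_{t,11})_{ii})^{-1/2}\be_i^\top\otimes\Ib_d$ (extracting the $i$-th block), I would verify the two hypotheses of the martingale CLT (e.g.\ the Lindeberg form): (i) the conditional-variance sum converges to $\Ib_d$, which follows because $\sum_\ell \Var(\bg_\ell\mid\cH_{\ell-1})$ is exactly (up to the constraint block) the Fisher information $\nabla^2\cL_t(\ths)$ in conditional expectation, and $(\bPhi^*_{t,11})_{ii}$ is built to be the corresponding inverse-information block — this needs the Hessian eigenvalue bounds (\ref{Hes}) to guarantee the normalization is well-conditioned and the ratio of the random curvature to its conditional mean is $1+o_P(1)$; and (ii) a conditional Lindeberg/Lyapunov bound, which is immediate since $\|\bg_\ell\|$ is bounded by a constant (as $\|\bX_\ell\|_2\le C$, $|y|\le 1$, and $\btheta$ ranges over a compact set) while the normalization scales like $(tM\lambda_{\min})^{-1/2}\to 0$.

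The main obstacle is step (i): controlling the difference between the \emph{random} conditional information $\sum_\ell\Var(\bg_\ell\mid\cH_{\ell-1})$ and the deterministic $\bPhi^*_{t,11}$-inverse used for normalization, given that the design points $\bX_\ell$ enter only through the data-dependent selection $i_\ell=\argmax_i\bX_\ell^\top\hth_i(\ell-1)$. This is precisely where the dependent-sampling difficulty concentrates: one must show that replacing $\hth(\ell-1)$ by $\ths$ inside the $\argmax$ changes the accumulated Hessian only by a lower-order amount, using Assumption~\ref{ass:dif} (the score-gap anti-concentration) to bound the measure of contexts where the two $\argmax$'s disagree, combined with the matrix martingale concentration of Lemma~\ref{app:lem:estil} to pass from the conditional-expectation Hessian to its realized value. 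I would handle this by conditioning on the high-probability event (\ref{Hesbd})–(\ref{Hes}) from the proof of Theorem~\ref{thm:BT}, on which $\nabla^2\cL_t(\ths)\asymp tM$ in the relevant subspace, so that $\bPhi^*_{t,11}$ and its diagonal blocks are well-conditioned of order $(tM)^{-1}$; the CLT is then applied on this event, and since its complement has vanishing probability the unconditional convergence in distribution follows. A minor additional point is that $\bPhi^*_{t,11}$ itself depends on $t$, so the statement is a triangular-array CLT — the Cramér–Wold device reduces the $\RR^d$-valued claim to scalar martingale CLTs for arbitrary linear combinations, and all the above bounds are uniform in the combination coefficients, so this causes no difficulty.
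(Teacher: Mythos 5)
Your overall route is the same as the paper's: a one-step expansion of the debiased estimator into a leading score term $-\bPhi^*_{t,11}\nabla\cL_t(\ths)$ plus remainders controlled by the inverse-matrix perturbation and the Taylor error (the paper's terms $I_1,I_2$, bounded via Lemmas~\ref{lem:infma} and \ref{lem:remtm}), followed by a martingale CLT with Cram\'er--Wold. The one substantive place where you diverge is your step (i), and it rests on a mischaracterization: $(\bPhi^*_{t,11})_{ii}$ is \emph{not} a deterministic matrix. It is the relevant block of the inverse of $\bPsi_t(\ths)$, i.e.\ of the \emph{realized} Hessian at the truth along the actually selected trajectory, so the theorem is a self-normalized CLT. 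Because of this, the conditional-variance condition holds \emph{exactly} by the algebraic identity $\bPhi^*_t\bPsi^*_t\bPhi^*_t=\bPhi^*_t$, which gives $\sum_{\ell}\Var(\cdot\given\cH_{\ell-1},\bX_\ell)=(\bPhi^*_{t,11})_k\nabla^2\cL_t(\ths)(\bPhi^*_{t,11})_k^\top=(\bPhi^*_{t,11})_{kk}$ with no stochastic approximation needed. The ``main obstacle'' you identify --- showing the accumulated random information concentrates around a deterministic limit via Assumption~\ref{ass:dif} and Lemma~\ref{app:lem:estil} --- simply does not arise in the paper's argument; it would only be needed if you wanted the stronger statement with a deterministic normalizer, and closing that version would require verifying that the relative concentration error is $o(1)$ at a rate compatible with the remainder bounds, which you do not carry out. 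The Hessian eigenvalue bounds enter only to show the normalizer is well-conditioned of order $n/(tM)$ (Lemma~\ref{lem:infma}) and to make the Lindeberg terms $O(\sqrt{n/(tM)})$, exactly as you describe.

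Two smaller points. First, your proposed bound $\|\hat\bPsi_t-\bPsi^*_t\|\lesssim tM\max_i\|\hth_i(t)-\ths_i\|_2^{1/2}$ imports the anti-concentration (square-root) exponent from Lemma~\ref{lem:hes}, but that exponent belongs to the $\argmax$-indicator perturbation in the accumulated design, not to the Hessian's dependence on $\btheta$ at a fixed design; the correct tool here is the elementary Lipschitz bound on the logistic variance weight, which yields the linear rate $\tilde c_1 tM\max_i\|\hth_i(t)-\ths_i\|_2\asymp\sqrt{tM\log T/(np)}$ used in Lemma~\ref{lem:remtm}. With the weaker square-root exponent the remainder $I_1$ is not obviously negligible relative to the main term of order $\sqrt{n/(tM)}$ without extra conditions. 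Second, the constraint block cancels in the expansion because $f_t$ is linear and $f_t(\ths)=0$, not because $f_t(\hth(t))=0$ (in the exploitation stage $\hth(t)$ is the unconstrained ridge MLE, so the latter need not hold exactly); this is immaterial but worth stating correctly.
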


\re{
We present the following corollary that establishes the validity of asymptotic calibration when the estimators $\hat{\btheta}$ are plugged in. This result provides a practical foundation for constructing valid inference procedures.
\begin{corollary}\label{cor:inf}
Under the conditions of Theorem~\ref{thm:BT}, as $t,n\rightarrow\infty$, we have
\begin{align*}
((\hat\bPhi_{t,11})_{ii})^{-1/2}(\hat{\btheta}_{i}^{d}(t)-\btheta_{i}^{*})\overset{d}\longrightarrow N(0,\Ib_{d})
\end{align*}
for $i\in[n]$, where $(\hat\bPhi_{t,11})_{ii}$ denotes the $i$-th $d\times d$ diagonal block of $\hat\bPhi_{t,11}$.
\end{corollary}
}


\section{Numerical Studies}\label{sec:simu}
We conduct extensive numerical experiments to assess the performance of the proposed algorithm. To the best of our knowledge, there is no existing reinforcement learning method tailored for the contextual BTL model, so we put forward several methods for comparison to validate the effectiveness of our approach. 

We consider four methods for comparison of our Rank Bandit (RB) algorithm. The first method, referred to as ERA (Explore-Always), maintains exploration at all times. Specifically, at each time step $t\in [T]$, we randomly choose an edge $(i_t,j_t)$ with probability $1/t^\alpha$; otherwise, we set $i_t=\argmax_{i\in[n]} \bX_{t}^\top\hat{\btheta}_{i}(t-1) $, and let $j_t$ be a random item connected to $i_t$ in $\cG$ sampled uniformly.
The second method, denoted as ETA (Exploit-Always), performs action $(i_t,j_t)$ for each time $t$, where $i_t=\argmax_{i\in[n]} \bX_{t}^\top\hat{\btheta}_{i}(t-1) $, and $j_t$ is a random item connected to $i_t$ sampled uniformly. This approach exclusively exploits the current estimation without incorporating exploration.
The third method applies the Upper Confidence Bound (UCB) strategy, denoted as BT-UCB1. In particular, we apply  Theorem~\ref{thm:inf} to obtain the confidence interval for UBC.  At each time $t$, we let $i_{t,\text{UCB}}=\argmax_{k\in[n]} \bX^\top\hat{\btheta}_k^{d}(t)+(\bX^\top (\hat{\bPhi}_{t,11})_{kk}\bX)^{1/2}\sqrt{2\log (nT)}$, and let $j_{t,\text{UCB}}$ be a random item connected to $i_{t,\text{UCB}}$ in $\cG$ sampled uniformly. The fourth method adds an exploration-mixing stage before UCB, denoted as BT-UCB2.
 Specifically, from $1$ to $T_0$, we randomly select an edge $(i_t,j_t)$  with probability $1/t^\alpha$; otherwise, we perform $(i_{t,\text{UCB}},j_{t,\text{UCB}})$. From $T_0+1$ to $T$, we consistently choose $(i_{t,\text{UCB}},j_{t,\text{UCB}})$.

In our simulation, we let $d=10$, $T=500$, $T_0=200$, $\alpha=1/4$ and $\bX\sim N(\bm{1}_d,\diag(0.5,\ldots,0.5))$. We then consider two settings. In Setting A, we set $n=5$, $M=20$ and $p=0.3$, with $\ths_{i,2i-1}=\ths_{i,2i}=1$ for $i\in[n]$, while all other positions of $\ths$ are set to 0. The initial values of $\hth_{i}(0)$ are sampled from $N(i,5)$. 
For Setting B, we let $n=10$, $M=50$ and $p=0.15$, with $(\ths_{1},\ldots,\ths_{10})=\Ib_{10}$. All elements of $\hth(0)$ are initialized using $N(0,5)$. In both settings, we set $np=1.5$, resulting in a sparse comparison graph. The initialization $\hth(0)$ is random and far from the true  $\ths$. We evaluate the empirical performances by the logarithm of regret, $\log(R(t))$, and the logarithm of estimation error, $\log(\max_{i\in[n]}\|\ths_i-\hth_i\|_2/\max_{i\in[n]}\|\ths_i\|_2)$, as shown in Figure~\ref{fig:reg}.
We present the averaged regrets and errors over 200 repetitions.

\begin{figure*}[h]
\centering
\begin{tabular}{cc}
\quad\:\;(a) &\quad\:\:\;\;(b) \\
\includegraphics[width=5.5cm]{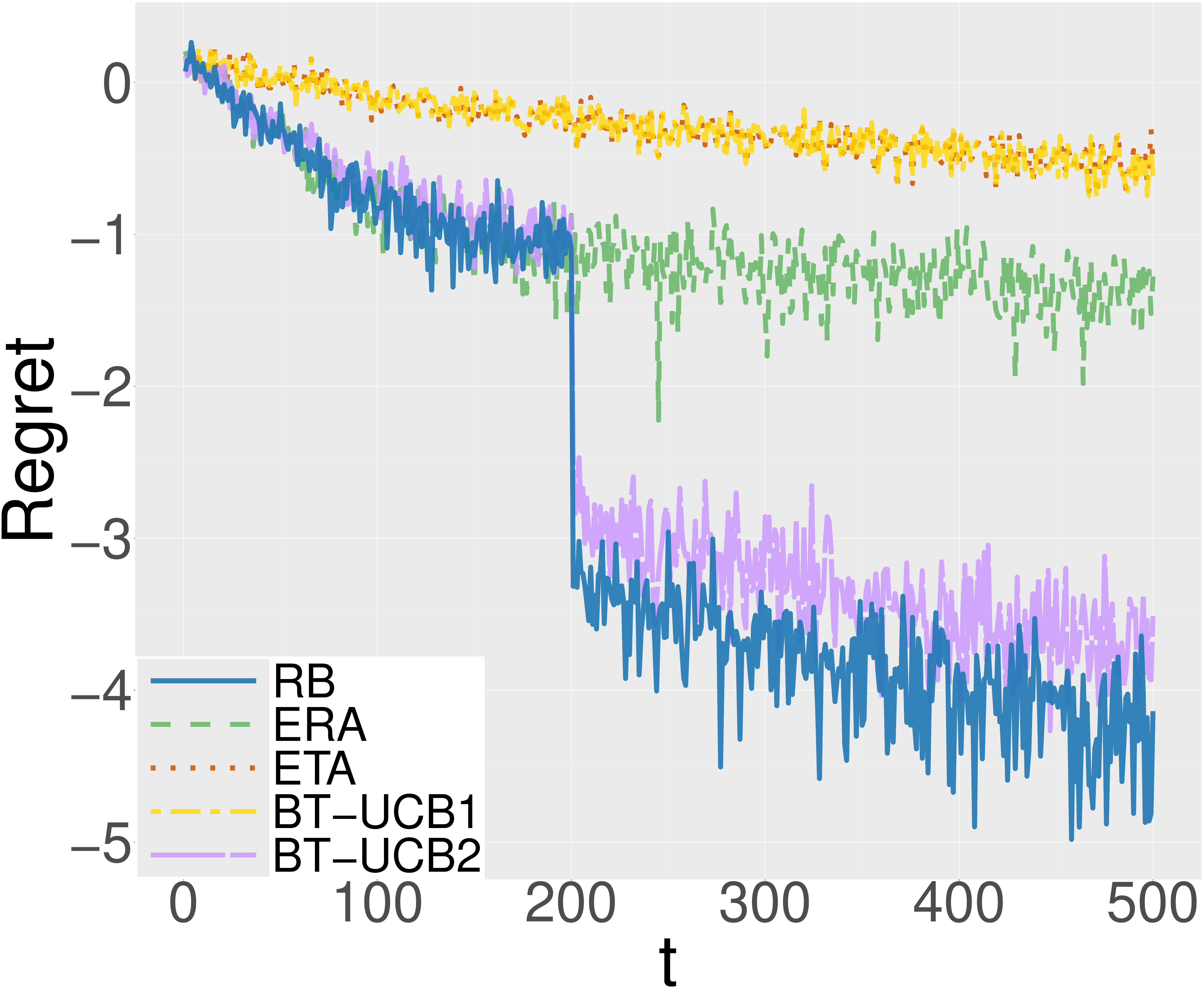}&
\includegraphics[width=5.5cm]{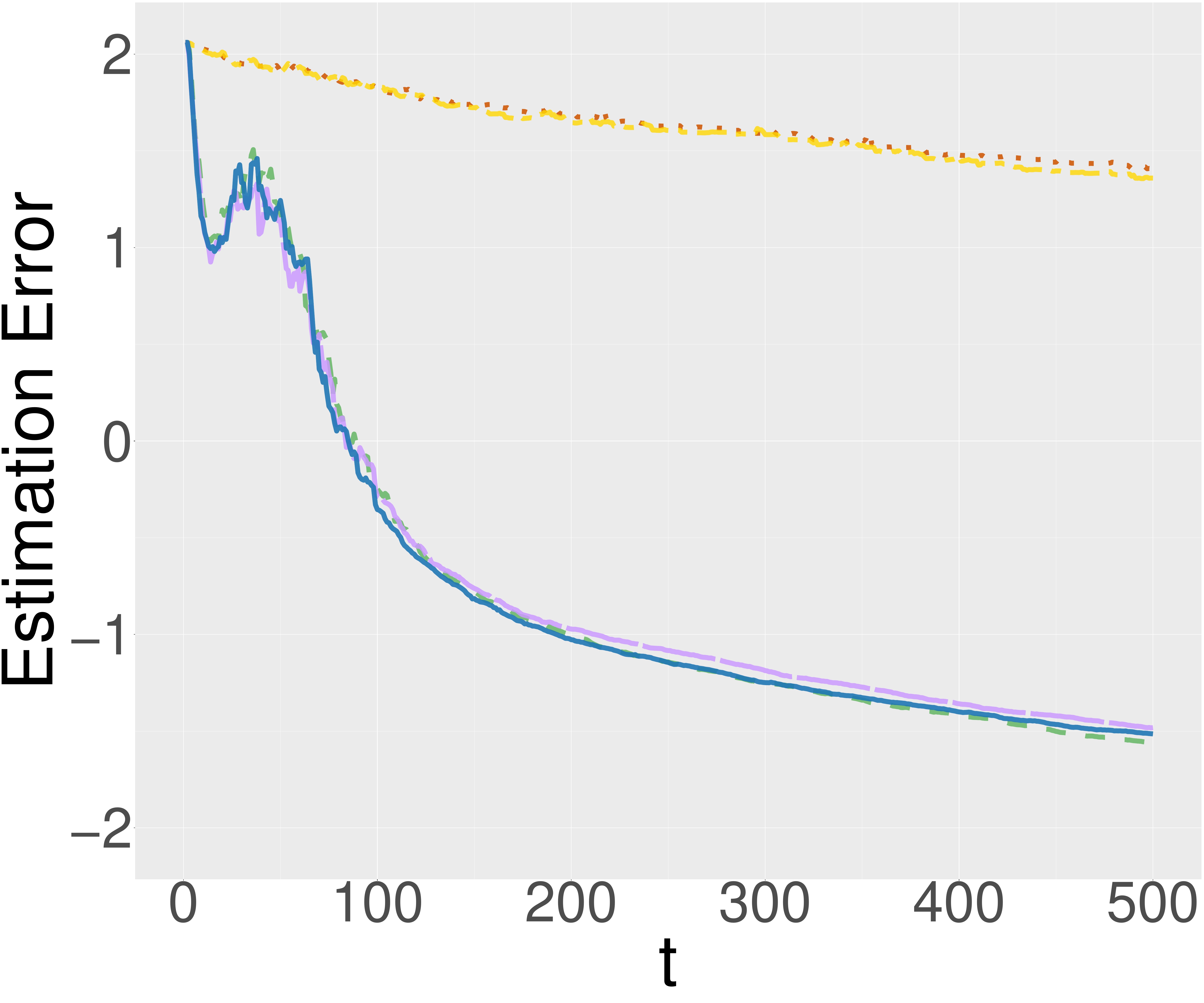}\\
\quad\:\;(c) &\quad\:\:\;\;(d) \\
\includegraphics[width=5.5cm]{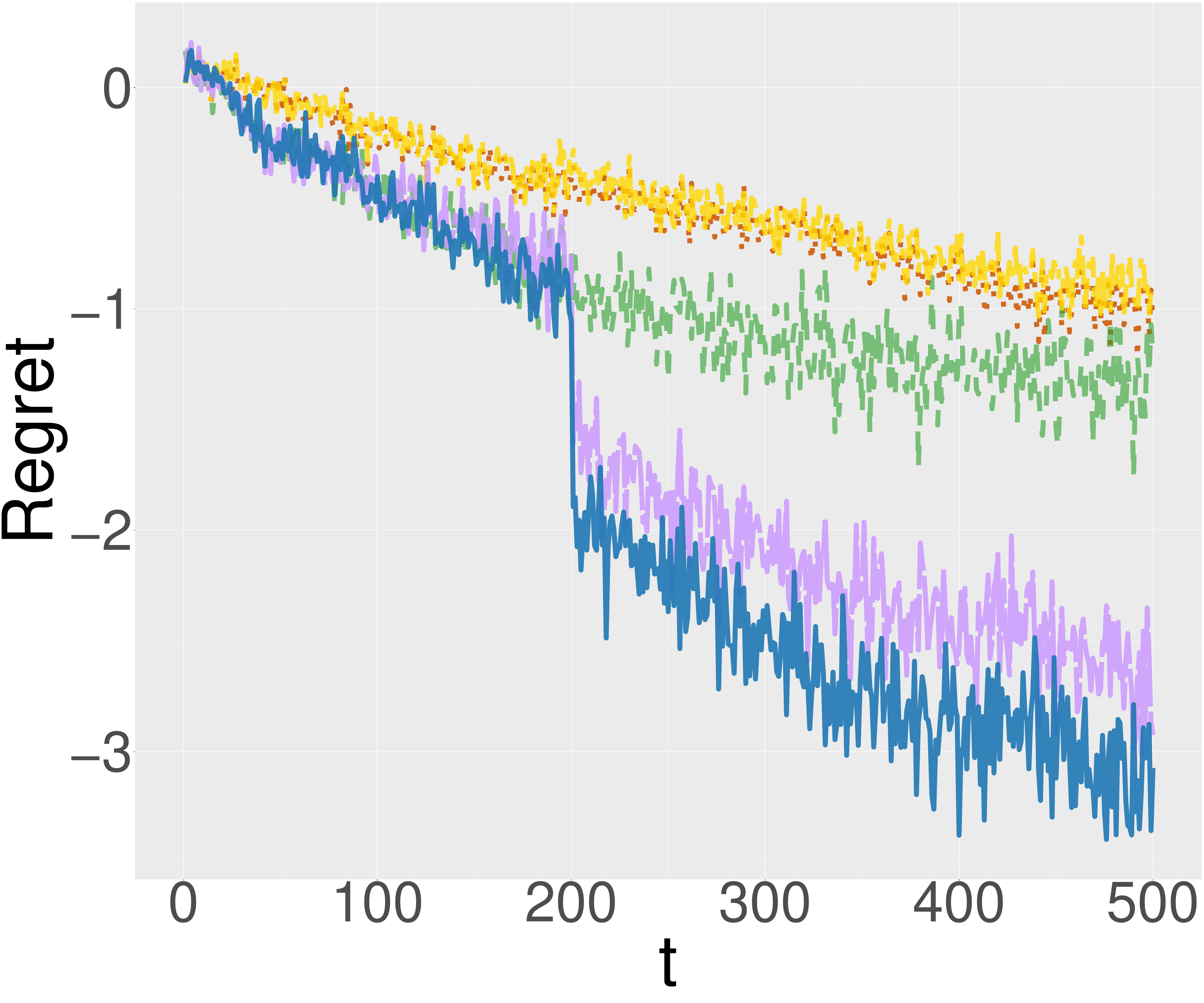}&
\includegraphics[width=5.5cm]{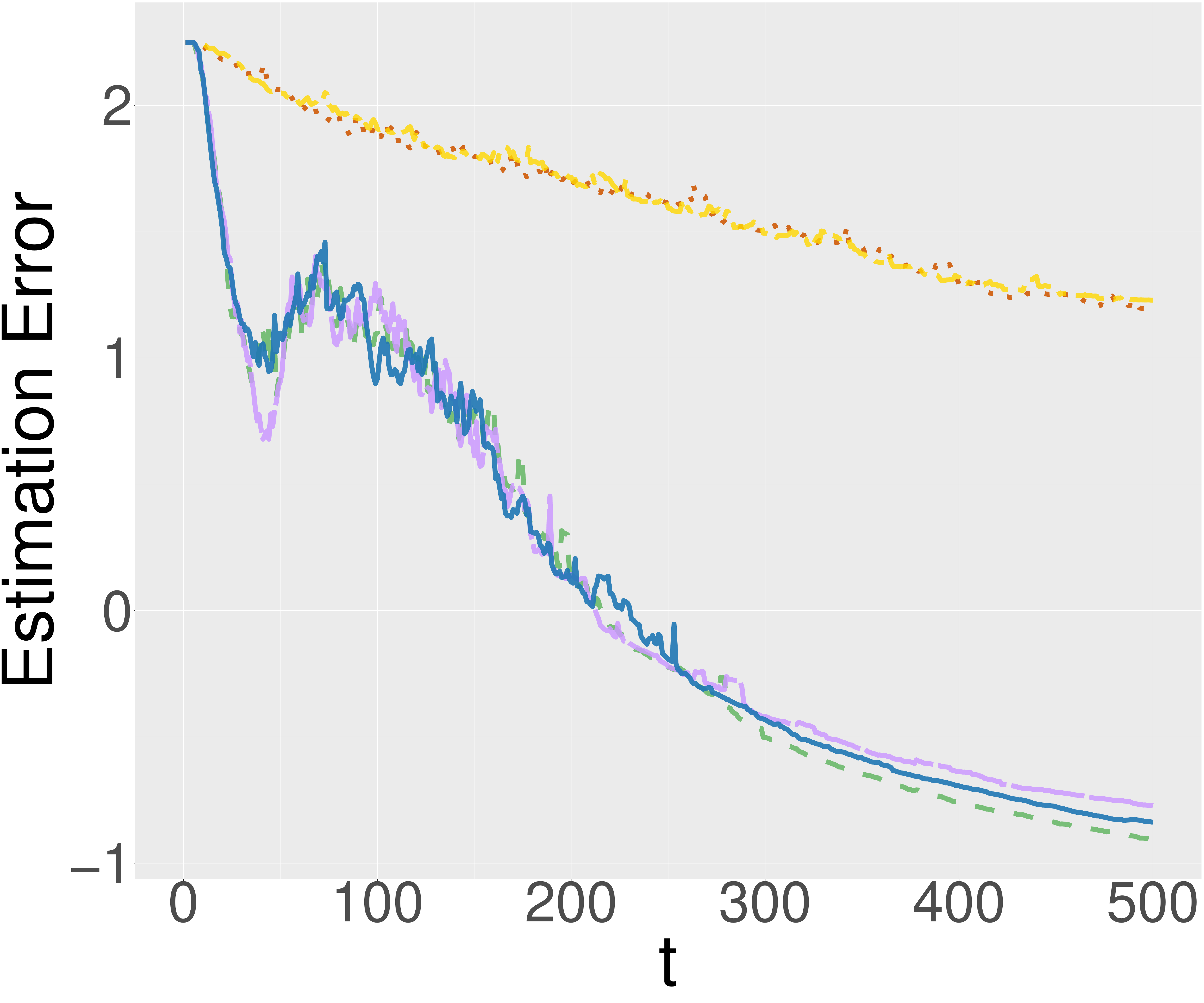}\\
\end{tabular}
\caption{(a) and (b) show the regret and estimation error for Setting A, while (c) and (d) show the regret and estimation error for Setting B.}
\label{fig:reg}
\end{figure*}

From Figure~\ref{fig:reg}, we observe that the proposed RB method  performs well in terms of both regret and estimation error. ERA achieves the lowest estimation error due to its uniform exploration throughout, which enables more precise estimation. However, this comes at the cost of significantly higher regret after $T_0$, as the strategy lacks sufficient exploitation. Meanwhile, without adequate exploration, ETA fails to accurately estimate item abilities, leading to poor performance in both regret and estimation error. We also observe that BT-UCB1 performs similarly to ETA. The unique characteristics of pairwise comparison data render the use of UCB estimation insufficient. As demonstrated in Theorem~\ref{thm:inf}, the confidence bound widths of different items are of the same order. Further, we observe that while BT-UCB2 incorporates exploration during the first stage, its performance still falls short of RB.

Next, we construct the confidence interval (CI) by plugging  the estimator $\hth(t)$ into $\bPhi_{t,11}^*$. According to Theorem~\ref{thm:inf}, the 95\% CI of $\btheta_{ik}^{*}$ is given by $[\hat{\btheta}_{ik}^{d}(t)+\sqrt{((\hat{\bPhi}_{t,11})_{ii})_{kk}} z_{0.025},\; \hat{\btheta}_{ik}^{d}(t)+\sqrt{((\hat{\bPhi}_{t,11})_{ii})_{kk}} z_{0.975}]$ for $i\in[n]$ and $k\in[d]$, where $z_\alpha$ is the $\alpha$-quantile of the standard normal distribution. To evaluate the performance, we provide the empirical coverage probabilities (CP) through 500 repetitions. 
We present the results in Figures~\ref{fig:CI-A} and \ref{fig:CI-B}. 
The black dots depict the average CP across dimensions, while the grey points indicate the minimum and maximum CP values across dimensions for each corresponding item. The dashed line represents the nominal value of~0.95. Initially, $\hat{\bPsi}_t$ is singular due to insufficient data, so we begin plotting from the points where $\hat{\bPsi}_t$ becomes invertible. From Figures~\ref{fig:CI-A} and \ref{fig:CI-B}, it can be observed that the empirical CPs converge to the nominal value of 0.95 as $t$ increases.
\begin{figure*}[h]
   \centering
\begin{tabular}{c}
\quad\quad Item 1\quad\quad\quad\quad\; Item 2\quad\quad\quad\quad\; Item 3\quad\quad\quad\quad\;\ Item 4\quad\quad\quad\quad\; Item 5 \\
\includegraphics[width=15cm]{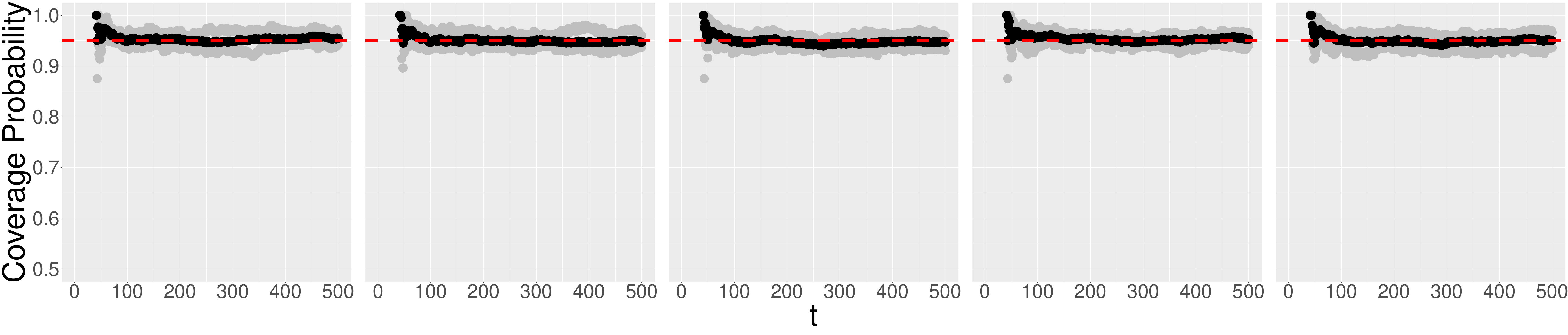}\\
\end{tabular}
    \caption{Coverage probability estimation under Setting A.}
    \label{fig:CI-A}
\end{figure*}

\begin{figure*}[h]
   \centering
\begin{tabular}{c}
\quad\quad Item 1\quad\quad\quad\quad\; Item 2\quad\quad\quad\quad\; Item 3\quad\quad\quad\quad\;\ Item 4\quad\quad\quad\quad\; Item 5 \\
\includegraphics[width=15cm]{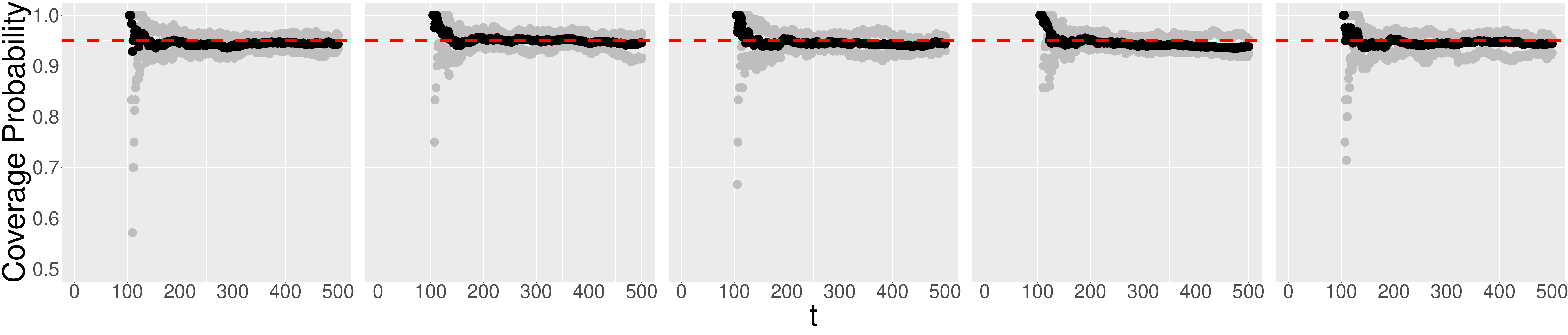}\\
\quad\quad Item 6\quad\quad\quad\quad\; Item 7\quad\quad\quad\quad\; Item 8\quad\quad\quad\quad\;\ Item 9\quad\quad\quad\quad\; Item 10\\
\includegraphics[width=15cm]{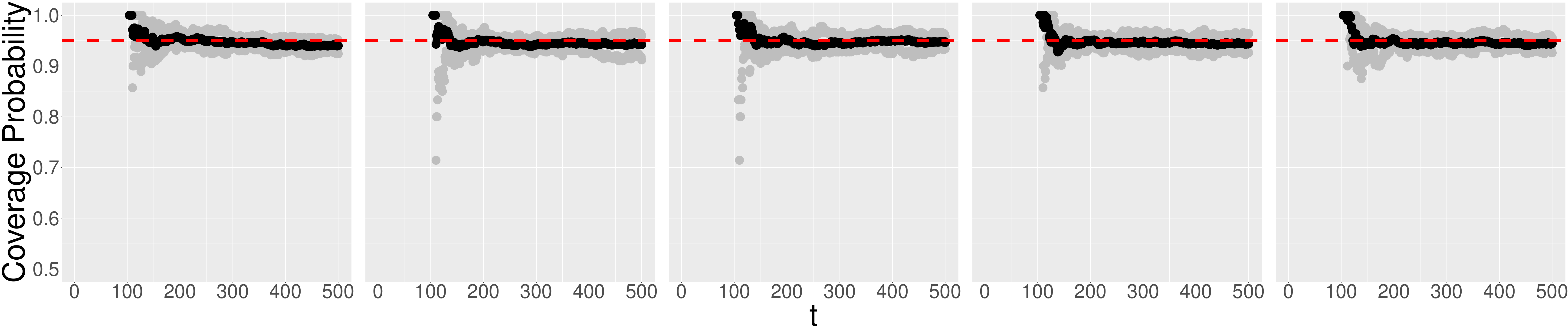}\\
\end{tabular}
    \caption{Coverage probability estimation under Setting B.}
    \label{fig:CI-B}
\end{figure*}


\section{Applications to Language Model Ranking}\label{sec:real}
We apply the method to rank LLMs on the Massive Multitask Language Understanding (MMLU) dataset \citep{hendrycks2021measuring}. We use the MMLU Anatomy dataset, which contains 100 questions. We take five language models into consideration: LLaMA-1 \citep{LLaMA-1}, LLaMA-2 \citep{touvron2023llama}, Alpaca \citep{alpaca}, GPT-3.5 and GPT-4o mini. Pairwise comparison results are generated using GPT-4.

Specifically, at each time  $t$, we randomly select a question as contextual information and apply our method to choose a pair of language models for comparison. The prompts are transformed into representation vectors using OpenAI’s text embedding model, text-embedding-3-small. We use principal component analysis to reduce the embeddings to 10-dimensional vectors, which are set as $\bX$. We set $T_0=3000$, $T=5000$, and initialize the elements of $\hth(0)$ using $N(0,5)$.
For evaluation purpose, we use all the available comparisons to obtain an estimation, which is set as~$\ths$. Figure~\ref{fig:real} displays the regret $R(t)$ and estimation error $\log(\max_{i\in[n]}\|\ths_i-\hth_i\|_2/\max_{i\in[n]}\|\ths_i\|_2)$, averaged over 50 repetitions.
From Figure~\ref{fig:real}, we observe that RB performs well in both regret and estimation errors. Although ERA has a smaller estimation error for large time steps, its lack of exploitation induces a large regret. Meanwhile, the other methods perform poorly in terms of both regret and estimation error.
\begin{figure*}[h]
\centering
\begin{tabular}{cc}
\quad\quad\;(a) &\quad\quad\:\:\;(b) \\
\includegraphics[width=5.5cm]{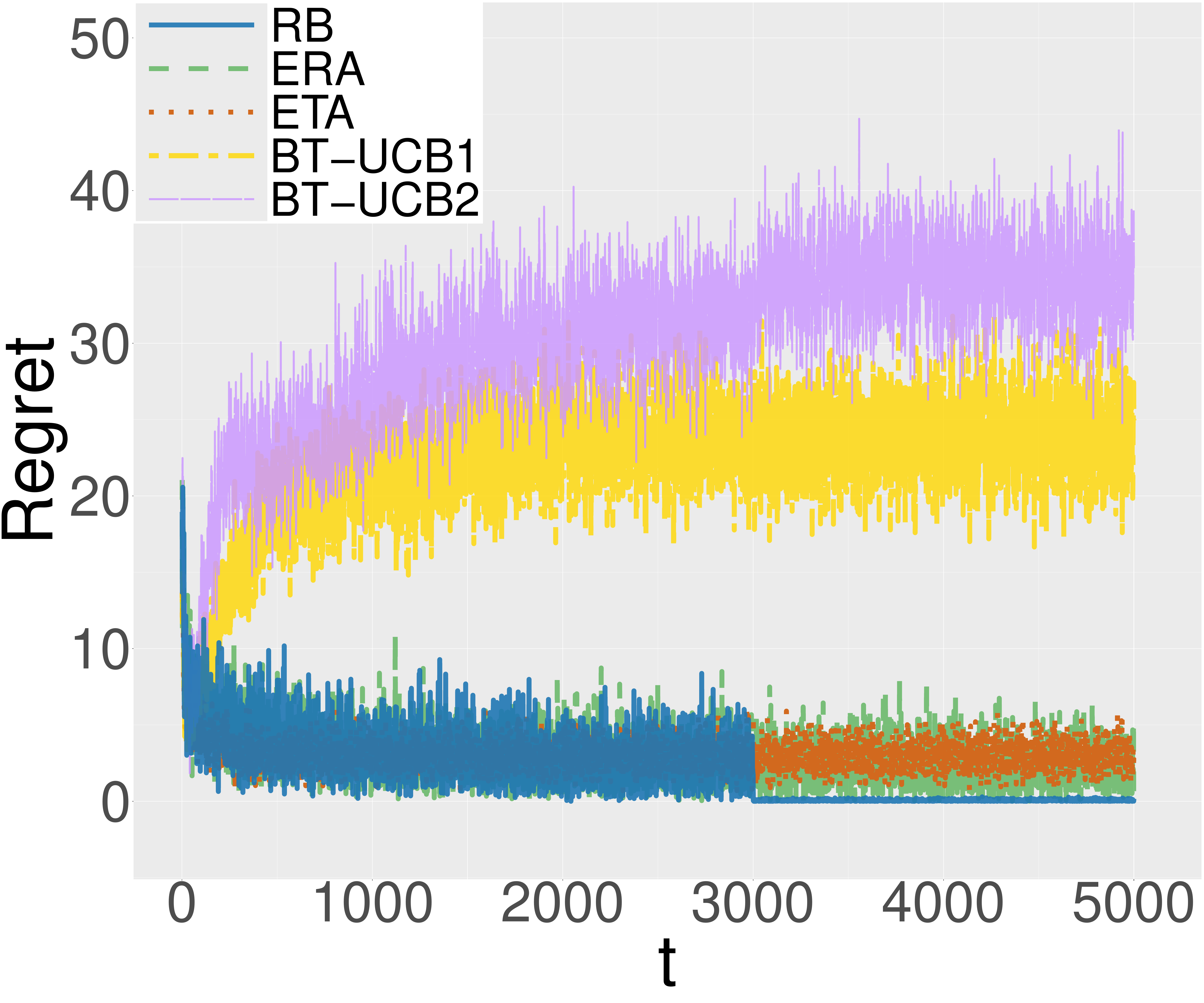}&
\includegraphics[width=5.5cm]{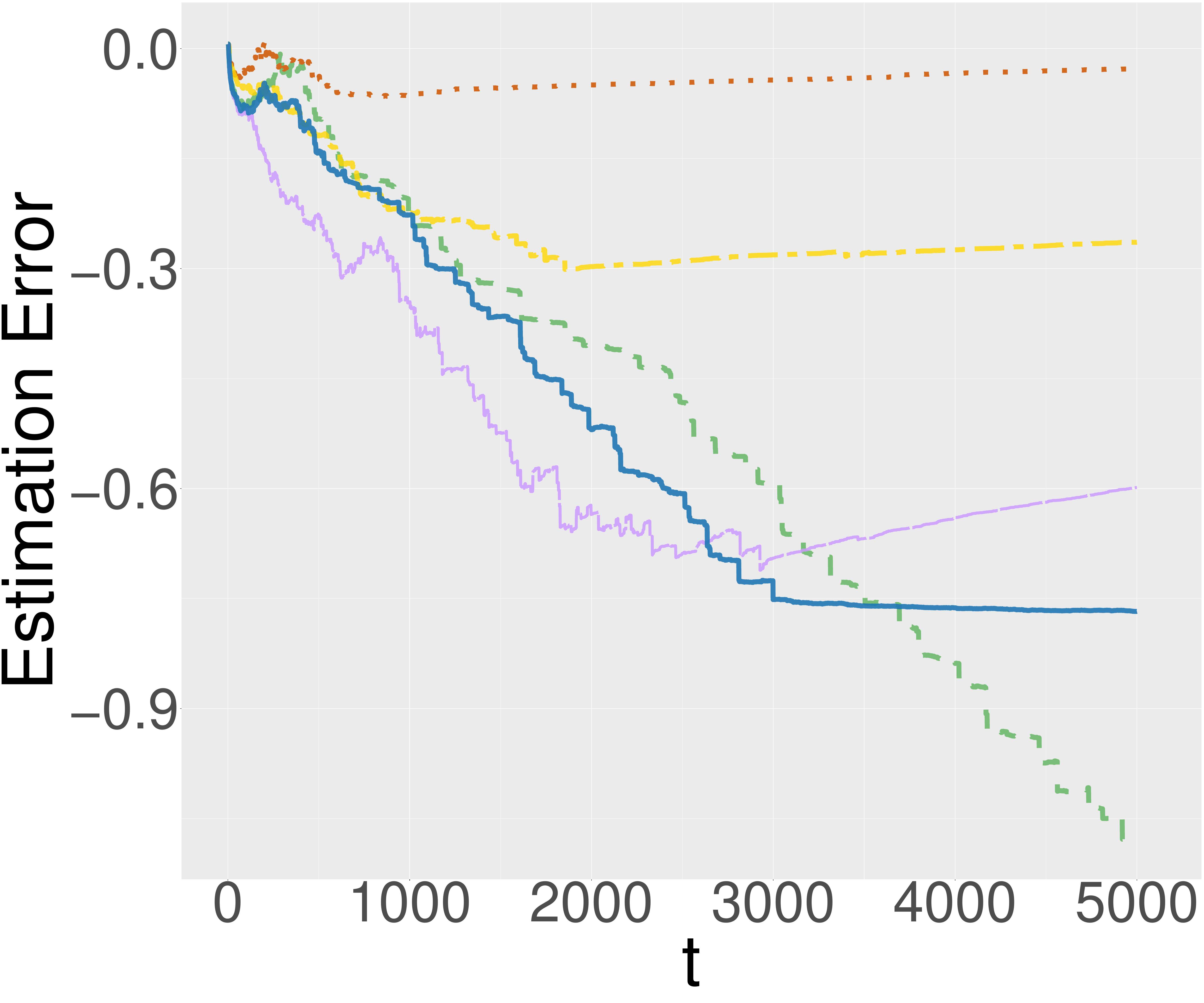}\\
\end{tabular}
\caption{(a) and (b) show the regret and estimation error for the LLM ranking.}
\label{fig:real}
\end{figure*}

Next, we construct ranking diagrams to intuitively illustrate the changes in LLM rankings. Specifically, at each time step $t$, we compare every pair of ranked LLMs for all $\bX$ based on $\hth(t)$. If for more than $95\%$ contextual variables, model $i$ precedes model $j$, we add an edge from $i$ to $j$. To make the graph concise, we remove the edge from $i$ to $k$ whenever there are edges from $i$ to $j$ and $j$ to $k$. Figure~\ref{fig:rank-diag} shows the changes in the ranking results over time. As more data is collected, we observe that model GPT-4o mini surpasses the others, while GPT-3.5 and LLaMA-2 overtake Alpaca and LLaMA-1, which is not apparent at the earlier time points.
\begin{figure}[h]
\centering
\begin{tabular}{cccc}
$t=200$ & $t=400$ &  $t=600$ & $t=800$ \\
\includegraphics[width=3.5cm]{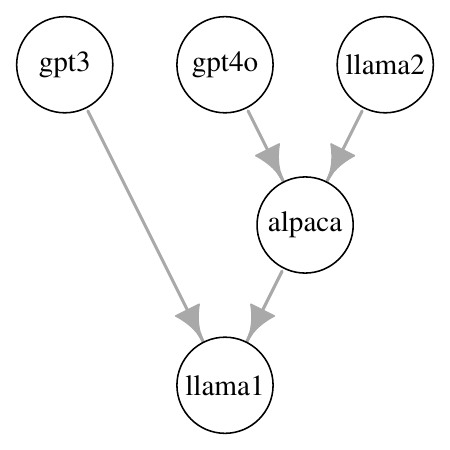}&
\includegraphics[width=3.5cm]{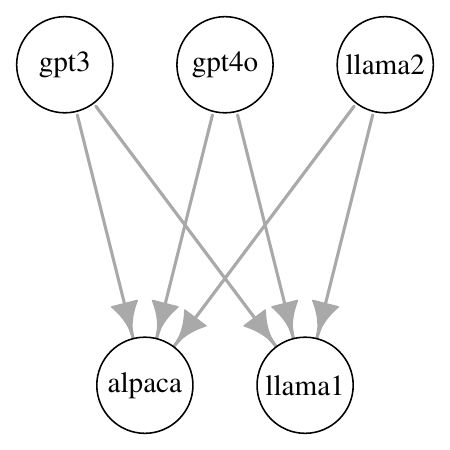}&
\includegraphics[width=3.5cm]{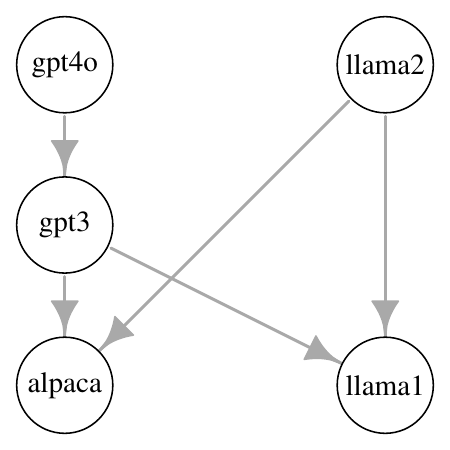}&
\includegraphics[width=3.5cm]{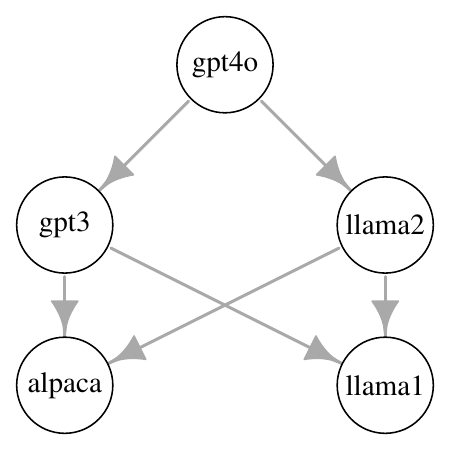}\\
$t=1000$ & $t=1200$ &  $t=1400$ & $t=1600$ \\
\includegraphics[width=3.5cm]{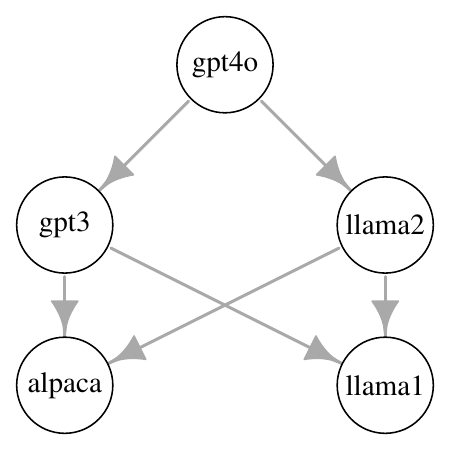}&
\includegraphics[width=3.5cm]{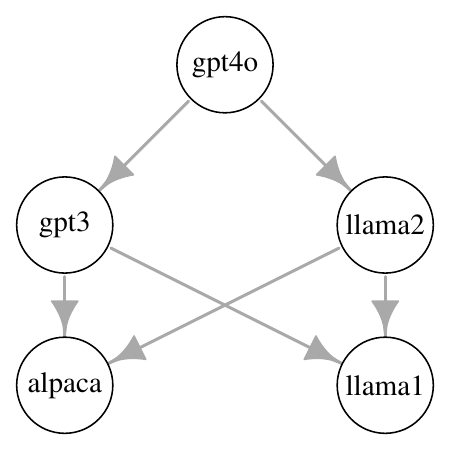}&
\includegraphics[width=3.5cm]{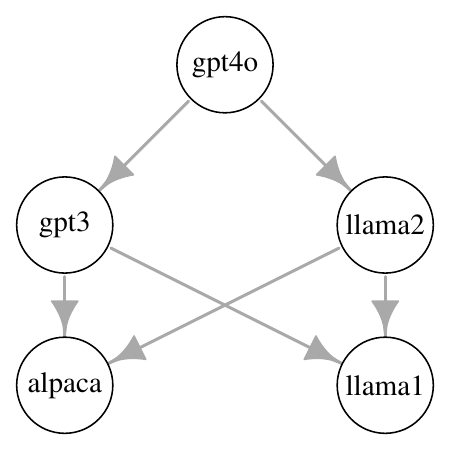}&
\includegraphics[width=3.5cm]{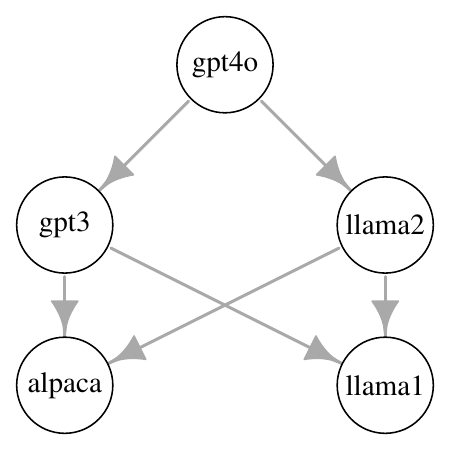}\\
\end{tabular}
\caption{Changes in the ranking of LLMs over time.}
\label{fig:rank-diag}
\end{figure}


\section{Conclusion}\label{sec:conclu}
To conclude, we propose a novel framework for online decision-making based on the contextual BTL model. Different from existing strategies, our approach introduces an innovative two-stage decision-making strategy tailored for pairwise comparison data. We derive sharp error bounds by developing a matrix concentration method tailored to address the challenges posed by the dependent Hessian matrix. Furthermore, we establish the regret bound and demonstrate the superior performance of our method in simulations. We apply our framework to rank LLMs, yielding insightful results. Our work provides a comprehensive solution for online decision-making based on human feedback, offering a powerful tool for ranking applications in diverse domains.

There remain numerous promising directions for future research. We illustrate a few possibilities below. Our proposed two-stage strategy for online decision-making is designed to handle data with dependent structures, which commonly arise in online data collection. Extending our inferential results to other models with similar dependency characteristics is a direction for further exploration. 
\re{For example, we could consider a setting where the agent observes the top choice among more than two candidates, or the Plackett-Luce model where the full ranking order is observed. In the context of online decision-making with time-varying contextual vectors and item-specific factors, we can deploy our two-stage framework to balance exploration and exploitation for regret minimization. However, the theoretical characterization of these results remains scenario-dependent and warrants further study.}
Besides, while this work focuses on the linear contextual BTL model, more complex contextual information may arise in real-world applications. Extending the methodology to accommodate nonlinear contextual models or nonparametric approaches is worth investigating.
Furthermore, building upon the inference results, further investigation into the properties of ranking can yield practical advancements.



\bibliographystyle{chicago}
\bibliography{ref}

@inproceedings{auer1995gambling,
  title={Gambling in a rigged casino: The adversarial multi-armed bandit problem},
  author={Auer, Peter and Cesa-Bianchi, Nicolo and Freund, Yoav and Schapire, Robert E},
  booktitle={Proceedings of IEEE 36th annual foundations of computer science},
  pages={322--331},
  year={1995},
  organization={IEEE}
}

@inproceedings{audibert2009minimax,
  title={Minimax policies for adversarial and stochastic bandits},
  author={Audibert, Jean-Yves and Bubeck, S{\'e}bastien},
  booktitle={COLT},
  pages={217--226},
  year={2009}
}

@article{DSL,
author = {Simchi-Levi, David and Zheng, Zeyu and Zhu, Feng},
title = {A Simple and Optimal Policy Design with Safety Against Heavy-Tailed Risk for Stochastic Bandits},
journal = {Management Science},
volume = {71},
number = {7},
pages = {6298-6318},
year = {2025},
doi = {10.1287/mnsc.2022.03512},
}

@article{simons1999asymptotics,
  title={Asymptotics when the number of parameters tends to infinity in the Bradley-Terry model for paired comparisons},
  author={Simons, Gordon and Yao, Yi-Ching},
  journal={The Annals of Statistics},
  volume={27},
  number={3},
  pages={1041--1060},
  year={1999},
  publisher={Institute of Mathematical Statistics}
}

@book{higham2008functions,
  title={Functions of matrices: theory and computation},
  author={Higham, Nicholas J},
  year={2008},
  publisher={SIAM}
}

@inproceedings{deshpande2018accurate,
  title={Accurate inference for adaptive linear models},
  author={Deshpande, Yash and Mackey, Lester and Syrgkanis, Vasilis and Taddy, Matt},
  booktitle={International Conference on Machine Learning},
  pages={1194--1203},
  year={2018},
  organization={PMLR}
}

@article{khamaru2021near,
  title={Near-optimal inference in adaptive linear regression},
  author={Khamaru, Koulik and Deshpande, Yash and Lattimore, Tor and Mackey, Lester and Wainwright, Martin J},
  journal={arXiv preprint arXiv:2107.02266},
  year={2021}
}

@article{chen2020dynamic,
  title={Dynamic assortment optimization with changing contextual information},
  author={Chen, Xi and Wang, Yining and Zhou, Yuan},
  journal={Journal of machine learning research},
  volume={21},
  number={216},
  pages={1--44},
  year={2020}
}

@article{oh2019thompson,
  title={Thompson sampling for multinomial logit contextual bandits},
  author={Oh, Min-hwan and Iyengar, Garud},
  journal={Advances in Neural Information Processing Systems},
  volume={32},
  year={2019}
}

@article{hu2022fast,
  title={Fast rates for contextual linear optimization},
  author={Hu, Yichun and Kallus, Nathan and Mao, Xiaojie},
  journal={Management Science},
  volume={68},
  number={6},
  pages={4236--4245},
  year={2022},
  publisher={INFORMS}
}

@inproceedings{hu2020smooth,
  title={Smooth contextual bandits: Bridging the parametric and non-differentiable regret regimes},
  author={Hu, Yichun and Kallus, Nathan and Mao, Xiaojie},
  booktitle={Conference on Learning Theory},
  pages={2007--2010},
  year={2020},
  organization={PMLR}
}

@article{zhong2024provable,
  title={Provable multi-party reinforcement learning with diverse human feedback},
  author={Zhong, Huiying and Deng, Zhun and Su, Weijie J and Wu, Zhiwei Steven and Zhang, Linjun},
  journal={arXiv preprint arXiv:2403.05006},
  year={2024}
}

@article{song2023reward,
  title={Reward collapse in aligning large language models},
  author={Song, Ziang and Cai, Tianle and Lee, Jason D and Su, Weijie J},
  journal={arXiv preprint arXiv:2305.17608},
  year={2023}
}

@article{xiao2024algorithmic,
  title={On the Algorithmic Bias of Aligning Large Language Models with {RLHF}: Preference Collapse and Matching Regularization},
  author={Xiao, Jiancong and Li, Ziniu and Xie, Xingyu and Getzen, Emily and Fang, Cong and Long, Qi and Su, Weijie J},
  journal={arXiv preprint arXiv:2405.16455},
  year={2024}
}

@article{han2020sequential,
  title={Sequential batch learning in finite-action linear contextual bandits},
  author={Han, Yanjun and Zhou, Zhengqing and Zhou, Zhengyuan and Blanchet, Jose and Glynn, Peter W and Ye, Yinyu},
  journal={arXiv preprint arXiv:2004.06321},
  year={2020}
}

@inproceedings{cossock2006subset,
  title={Subset ranking using regression},
  author={Cossock, David and Zhang, Tong},
  booktitle={Learning Theory: 19th Annual Conference on Learning Theory, COLT 2006, Pittsburgh, PA, USA, June 22-25, 2006. Proceedings 19},
  pages={605--619},
  year={2006},
  organization={Springer}
}

@inproceedings{dwork2001rank,
  title={Rank aggregation methods for the web},
  author={Dwork, Cynthia and Kumar, Ravi and Naor, Moni and Sivakumar, Dandapani},
  booktitle={Proceedings of the 10th international conference on World Wide Web},
  pages={613--622},
  year={2001}
}

@inproceedings{geyik2019fairness,
  title={Fairness-aware ranking in search \& recommendation systems with application to linkedin talent search},
  author={Geyik, Sahin Cem and Ambler, Stuart and Kenthapadi, Krishnaram},
  booktitle={Proceedings of the 25th acm sigkdd international conference on knowledge discovery \& data mining},
  pages={2221--2231},
  year={2019}
}

@article{LiMS,
author = {Li, Hanwei and Simchi-Levi, David and Wu, Michelle Xiao and Zhu, Weiming},
title = {Estimating and Exploiting the Impact of Photo Layout: A Structural Approach},
journal = {Management Science},
volume = {69},
number = {9},
pages = {5209-5233},
year = {2023},
doi = {10.1287/mnsc.2022.4616}
}

@inproceedings{he2018adversarial,
  title={Adversarial personalized ranking for recommendation},
  author={He, Xiangnan and He, Zhankui and Du, Xiaoyu and Chua, Tat-Seng},
  booktitle={The 41st International ACM SIGIR conference on research \& development in information retrieval},
  pages={355--364},
  year={2018}
}

@article{elo1967proposed,
  title={The proposed {USCF} rating system, its development, theory, and applications},
  author={Elo, Arpad E},
  journal={Chess life},
  volume={22},
  number={8},
  pages={242--247},
  year={1967}
}

@article{Aouad,
author = {Aouad, Ali and Farias, Vivek and Levi, Retsef and Segev, Danny},
title = {The Approximability of Assortment Optimization Under Ranking Preferences},
journal = {Operations Research},
volume = {66},
number = {6},
pages = {1661-1669},
year = {2018},
doi = {10.1287/opre.2018.1754}
}

@inproceedings{li2019nearly,
  title={Nearly minimax-optimal regret for linearly parameterized bandits},
  author={Li, Yingkai and Wang, Yining and Zhou, Yuan},
  booktitle={Conference on Learning Theory},
  pages={2173--2174},
  year={2019},
  organization={PMLR}
}

@article{sebastian2023privacy,
  title={Privacy and data protection in {ChatGPT} and other {AI} {Chatbots}: strategies for securing user information},
  author={Sebastian, Glorin},
  journal={International Journal of Security and Privacy in Pervasive Computing (IJSPPC)},
  volume={15},
  number={1},
  pages={1--14},
  year={2023},
  publisher={IGI Global}
}

@article{ahmed2024studying,
  title={Studying {LLM} Performance on Closed-and Open-source Data},
  author={Ahmed, Toufique and Bird, Christian and Devanbu, Premkumar and Chakraborty, Saikat},
  journal={arXiv preprint arXiv:2402.15100},
  year={2024}
}

@inproceedings{wang2023efficient,
  title={Efficient explorative key-term selection strategies for conversational contextual bandits},
  author={Wang, Zhiyong and Liu, Xutong and Li, Shuai and Lui, John CS},
  booktitle={Proceedings of the AAAI Conference on Artificial Intelligence},
  volume={37},
  number={8},
  pages={10288--10295},
  year={2023}
}

@article{negahban2012iterative,
  title={Iterative ranking from pair-wise comparisons},
  author={Negahban, Sahand and Oh, Sewoong and Shah, Devavrat},
  journal={Advances in neural information processing systems},
  volume={25},
  year={2012}
}

@inproceedings{li,
  title={A contextual-bandit approach to personalized news article recommendation},
  author={Li, Lihong and Chu, Wei and Langford, John and Schapire, Robert E},
  booktitle={Proceedings of the 19th international conference on World wide web},
  pages={661--670},
  year={2010}
}

@article{Agrawal,
author = {Agrawal, Shipra and Avadhanula, Vashist and Goyal, Vineet and Zeevi, Assaf},
title = {{MNL-Bandit}: A Dynamic Learning Approach to Assortment Selection},
journal = {Operations Research},
volume = {67},
number = {5},
pages = {1453-1485},
year = {2019},
doi = {10.1287/opre.2018.1832}}

@inproceedings{pmlr-v84-kallus18a,
  title={Policy evaluation and optimization with continuous treatments},
  author={Kallus, Nathan and Zhou, Angela},
  booktitle={International conference on artificial intelligence and statistics},
  pages={1243--1251},
  year={2018},
  organization={PMLR}
}

@article{mintz,
author = {Mintz, Yonatan and Aswani, Anil and Kaminsky, Philip and Flowers, Elena and Fukuoka, Yoshimi},
title = {Nonstationary Bandits with Habituation and Recovery Dynamics},
journal = {Operations Research},
volume = {68},
number = {5},
pages = {1493-1516},
year = {2020},
doi = {10.1287/opre.2019.1918}
}

@article{Tewari2017,
  title={From ads to interventions: Contextual bandits in mobile health},
  author={Tewari, Ambuj and Murphy, Susan A},
  journal={Mobile health: sensors, analytic methods, and applications},
  pages={495--517},
  year={2017},
  publisher={Springer}
}

@article{kannanbandit,
  title={A smoothed analysis of the greedy algorithm for the linear contextual bandit problem},
  author={Kannan, Sampath and Morgenstern, Jamie H and Roth, Aaron and Waggoner, Bo and Wu, Zhiwei Steven},
  journal={Advances in neural information processing systems},
  volume={31},
  year={2018}
}

@inproceedings{pmlr-v80-foster18a,
  title={Practical contextual bandits with regression oracles},
  author={Foster, Dylan and Agarwal, Alekh and Dud{\'\i}k, Miroslav and Luo, Haipeng and Schapire, Robert},
  booktitle={International Conference on Machine Learning},
  pages={1539--1548},
  year={2018},
  organization={PMLR}
}

@inproceedings{pmlr-v28-agrawal13,
  title={Thompson sampling for contextual bandits with linear payoffs},
  author={Agrawal, Shipra and Goyal, Navin},
  booktitle={International conference on machine learning},
  pages={127--135},
  year={2013},
  organization={PMLR}
}

@article{bastaniOR,
author = {Bastani, Hamsa and Bayati, Mohsen},
title = {Online Decision Making with High-Dimensional Covariates},
journal = {Operations Research},
volume = {68},
number = {1},
pages = {276-294},
year = {2020},
doi = {10.1287/opre.2019.1902}
}

@inproceedings{wu2024making,
title={Making {RL} with Preference-based Feedback Efficient via Randomization},
author={Runzhe Wu and Wen Sun},
booktitle={The Twelfth International Conference on Learning Representations},
year={2024},
url={https://openreview.net/forum?id=Pe2lo3QOvo}
}

@inproceedings{tiapkin2024demonstrationregularized,
title={Demonstration-Regularized {RL}},
author={Daniil Tiapkin and Denis Belomestny and Daniele Calandriello and Eric Moulines and Alexey Naumov and Pierre Perrault and Michal Valko and Pierre Menard},
booktitle={The Twelfth International Conference on Learning Representations},
year={2024},
url={https://openreview.net/forum?id=lF2aip4Scn}
}

@article{rlhfwang,
  title={Is {RLHF} more difficult than standard {RL}? a theoretical perspective},
  author={Wang, Yuanhao and Liu, Qinghua and Jin, Chi},
  journal={Advances in Neural Information Processing Systems},
  volume={36},
  pages={76006--76032},
  year={2023}
}

@inproceedings{pbrlchen,
  title={Human-in-the-loop: Provably efficient preference-based reinforcement learning with general function approximation},
  author={Chen, Xiaoyu and Zhong, Han and Yang, Zhuoran and Wang, Zhaoran and Wang, Liwei},
  booktitle={International Conference on Machine Learning},
  pages={3773--3793},
  year={2022},
  organization={PMLR}
}

@inproceedings{drlsaha,
  title={Dueling {RL}: Reinforcement learning with trajectory preferences},
  author={Saha, Aadirupa and Pacchiano, Aldo and Lee, Jonathan},
  booktitle={International Conference on Artificial Intelligence and Statistics},
  pages={6263--6289},
  year={2023},
  organization={PMLR}
}

@inproceedings{pbrlnov,
  title={Dueling posterior sampling for preference-based reinforcement learning},
  author={Novoseller, Ellen and Wei, Yibing and Sui, Yanan and Yue, Yisong and Burdick, Joel},
  booktitle={Conference on Uncertainty in Artificial Intelligence},
  pages={1029--1038},
  year={2020},
  organization={PMLR}
}

@misc{claude2024,
  author       = {Anthropic},
  title        = {Claude},
  howpublished = {\url{https://www.anthropic.com}},
  year         = {2023}
}

@article{ucbauer2002,
  title={Using confidence bounds for exploitation-exploration trade-offs},
  author={Auer, Peter},
  journal={Journal of Machine Learning Research},
  volume={3},
  number={Nov},
  pages={397--422},
  year={2002}
}

@article{chen21,
author = {Chen, Xi and Shi, Chao and Wang, Yining and Zhou, Yuan},
title = {Dynamic Assortment Planning Under Nested Logit Models},
journal = {Production and Operations Management},
volume = {30},
number = {1},
pages = {85-102},
keywords = {dynamic assortment optimization, nested logit models, regret analysis, upper confidence bound},
doi = {https://doi.org/10.1111/poms.13258},
url = {https://onlinelibrary.wiley.com/doi/abs/10.1111/poms.13258},
eprint = {https://onlinelibrary.wiley.com/doi/pdf/10.1111/poms.13258},
abstract = {We study a stylized dynamic assortment planning problem during a selling season of finite length T. At each time period, the seller offers an arriving customer an assortment of substitutable products and the customer makes the purchase among offered products according to a discrete choice model. The goal of the seller is to maximize the expected revenue, or equivalently, to minimize the worst-case expected regret. One key challenge is that utilities of products are unknown to the seller and need to be learned. Although the dynamic assortment planning problem has received increasing attention in revenue management, most existing work is based on the multinomial logit choice models (MNL). In this paper, we study the problem of dynamic assortment planning under a more general choice model—the nested logit model, which models hierarchical choice behavior and is “the most widely used member of the GEV (generalized extreme value) family” (Train 2009). By leveraging the revenue-ordered structure of the optimal assortment within each nest, we develop a novel upper confidence bound (UCB) policy with an aggregated estimation scheme. Our policy simultaneously learns customers’ choice behavior and makes dynamic decisions on assortments based on the current knowledge. It achieves the accumulated regret at the order of , where M is the number of nests and N is the number of products in each nest. We further provide a lower bound result of , which shows the near optimality of the upper bound when T is much larger than M and N. When the number of items per nest N is large, we further provide a discretization heuristic for better performance of our algorithm. Numerical results are presented to demonstrate the empirical performance of our proposed algorithms.},
year = {2021}
}

@article{MSbandit,
author = {Bastani, Hamsa and Bayati, Mohsen and Khosravi, Khashayar},
title = {Mostly Exploration-Free Algorithms for Contextual Bandits},
journal = {Management Science},
volume = {67},
number = {3},
pages = {1329-1349},
year = {2021},
doi = {10.1287/mnsc.2020.3605},
}

@inproceedings{
xiong2024iterative,
title={Iterative Preference Learning from Human Feedback: Bridging Theory and Practice for {RLHF} under {KL}-constraint},
author={Wei Xiong and Hanze Dong and Chenlu Ye and Ziqi Wang and Han Zhong and Heng Ji and Nan Jiang and Tong Zhang},
booktitle={Forty-first International Conference on Machine Learning},
year={2024},
url={https://openreview.net/forum?id=c1AKcA6ry1}
}

@article{bai2022traininghelpfulharmlessassistant,
      title={Training a Helpful and Harmless Assistant with Reinforcement Learning from Human Feedback}, 
      author={Yuntao Bai and others},
      year={2022},
      journal={arXiv preprint arXiv:2204.05862}
}

@inproceedings{Chiang,
  title={Chatbot {Arena}: An Open Platform for Evaluating {LLMs} by Human Preference},
  author={Chiang, Wei-Lin and Zheng, Lianmin and Sheng, Ying and Angelopoulos, Anastasios Nikolas and Li, Tianle and Li, Dacheng and Zhu, Banghua and Zhang, Hao and Jordan, Michael and Gonzalez, Joseph E and others},
  year={2024},
  booktitle={Forty-first International Conference on Machine Learning}
}

@article{NEURIPS2020_2f10c157,
  title={Reward-rational (implicit) choice: A unifying formalism for reward learning},
  author={Jeon, Hong Jun and Milli, Smitha and Dragan, Anca},
  journal={Advances in Neural Information Processing Systems},
  volume={33},
  pages={4415--4426},
  year={2020}
}

@article{rlhfsurvey,
  title = {A {{Survey}} of {{Reinforcement Learning}} from {{Human Feedback}}},
  author = {Kaufmann, Timo and Weng, Paul and Bengs, Viktor and H{\"u}llermeier, Eyke},
  year = {2023},
  journal = {arXiv preprint arXiv:2312.14925}
}

@inproceedings{pmlr-v22-kaufmann12,
  title={On Bayesian upper confidence bounds for bandit problems},
  author={Kaufmann, Emilie and Capp{\'e}, Olivier and Garivier, Aur{\'e}lien},
  booktitle={Artificial intelligence and statistics},
  pages={592--600},
  year={2012},
  organization={PMLR}
}

@inproceedings{ucb1,
  title={On upper-confidence bound policies for switching bandit problems},
  author={Garivier, Aur{\'e}lien and Moulines, Eric},
  booktitle={International conference on algorithmic learning theory},
  pages={174--188},
  year={2011},
  organization={Springer}
}

@article{AOS1534,
author = {Antoine Chambaz and Wenjing Zheng and Mark J. van der Laan},
title = {{Targeted sequential design for targeted learning inference of the optimal treatment rule and its mean reward}},
volume = {45},
journal = {The Annals of Statistics},
number = {6},
publisher = {Institute of Mathematical Statistics},
pages = {2537 -- 2564},
keywords = {Bandits, optimal treatment rule, Precision medicine, pseudo-regret, targeted minimum loss estimation (TMLE)},
year = {2017},
doi = {10.1214/16-AOS1534},
URL = {https://doi.org/10.1214/16-AOS1534}
}

@article{JMLR:v17:13-210,
  author  = {Wei Qian and Yuhong Yang},
  title   = {Kernel Estimation and Model Combination in A Bandit Problem with Covariates},
  journal = {Journal of Machine Learning Research},
  year    = {2016},
  volume  = {17},
  number  = {149},
  pages   = {1--37},
  url     = {http://jmlr.org/papers/v17/13-210.html}
}

@article{zhu2024principledreinforcementlearninghuman,
      title={Principled Reinforcement Learning with Human Feedback from Pairwise or $K$-wise Comparisons}, 
      author={Banghua Zhu and Jiantao Jiao and Michael I. Jordan},
      year={2024},
      journal={arXiv preprint arXiv:2301.11270}
}

@inproceedings{zhan2024provable,
title={Provable Reward-Agnostic Preference-Based Reinforcement Learning},
author={Wenhao Zhan and Masatoshi Uehara and Wen Sun and Jason D. Lee},
booktitle={The Twelfth International Conference on Learning Representations},
year={2024},
url={https://openreview.net/forum?id=yTBXeXdbMf}
}

@article{christiano2023deepreinforcementlearninghuman,
      title={Deep reinforcement learning from human preferences}, 
      author={Paul Christiano and Jan Leike and Tom B. Brown and Miljan Martic and Shane Legg and Dario Amodei},
      year={2023},
      journal={arXiv preprint arXiv:1706.03741}
}

@article{contextbandit21,
author = {Haoyu Chen and Wenbin Lu and Rui Song},
title = {Statistical Inference for Online Decision Making: In a Contextual Bandit Setting},
journal = {Journal of the American Statistical Association},
volume = {116},
number = {533},
pages = {240--255},
year = {2021},
publisher = {ASA Website},
doi = {10.1080/01621459.2020.1770098},
}

@article{pporl,
      title={Training language models to follow instructions with human feedback}, 
      author={Long Ouyang and Jeff Wu and Xu Jiang and Diogo Almeida and Carroll L. Wainwright and Pamela Mishkin and Chong Zhang and Sandhini Agarwal and Katarina Slama and Alex Ray and John Schulman and Jacob Hilton and Fraser Kelton and Luke Miller and Maddie Simens and Amanda Askell and Peter Welinder and Paul Christiano and Jan Leike and Ryan Lowe},
      year={2022},
      journal={arXiv preprint arXiv:2203.02155}
}

@article{dpo,
      title={Direct Preference Optimization: Your Language Model is Secretly a Reward Model}, 
      author={Rafael Rafailov and Archit Sharma and Eric Mitchell and Stefano Ermon and Christopher D. Manning and Chelsea Finn},
      year={2024},
      journal={arXiv preprint arXiv:2305.18290}
}

@article{openai2024gpt4technicalreport,
      title={{GPT-4} Technical Report}, 
      author={OpenAI and others},
      year={2024},
      journal={arXiv preprint arXiv:2303.08774}
}

@article{lagran23,
author = {Liu, Yue and Fang, Ethan X. and Lu, Junwei},
title = {Lagrangian Inference for Ranking Problems},
journal = {Operations Research},
volume = {71},
number = {1},
pages = {202-223},
year = {2023},
doi = {10.1287/opre.2022.2313}}

@article{fan2024spectralrankinginferencesbased,
      title={Spectral Ranking Inferences based on General Multiway Comparisons}, 
      author={Jianqing Fan and Zhipeng Lou and Weichen Wang and Mengxin Yu},
      year={2024},
      journal={arXiv preprint arXiv:2308.02918}
}

@article{FANmultiway,
author = {Jianqing Fan and Zhipeng Lou and Weichen Wang and Mengxin Yu},
title = {Ranking Inferences Based on the Top Choice of Multiway Comparisons},
journal = {Journal of the American Statistical Association},
volume = {0},
number = {0},
pages = {1--14},
year = {2024},
publisher = {ASA Website},
doi = {10.1080/01621459.2024.2316364}
}

@article{fan2024uncertaintyquantificationmleentity,
      title={Uncertainty Quantification of {MLE} for Entity Ranking with Covariates}, 
      author={Jianqing Fan and Jikai Hou and Mengxin Yu},
      year={2024},
      journal={arXiv preprint arXiv:2212.09961}
}

@article{chen2019spectral,
title = {Spectral method and regularized {MLE} are both optimal for top-{K} ranking},
author = {Yuxin Chen and Jianqing Fan and Cong Ma and Kaizheng Wang},
year = {2019},
volume = {47},
pages = {2204--2235},
journal = {Annals of Statistics},
number = {4},
}

@article{Negahban2017,
   author = {Sahand Negahban and Sewoong Oh and Devavrat Shah},
   issue = {1},
   journal = {Operations Research},
   keywords = {Markov chain,Random walk,Rank aggregation,Rank centrality},
   pages = {266-287},
   publisher = {INFORMS Inst.for Operations Res.and the Management Sciences},
   title = {Rank centrality: Ranking from pairwise comparisons},
   volume = {65},
   year = {2017},
}

@article{Gao2021,
	title={Uncertainty quantification in the {Bradley-Terry-Luce} model},
	author={Gao, Chao and Shen, Yandi and Zhang, Anderson Y},
	journal={Information and Inference: A Journal of the IMA},
	volume={12},
	number={2},
	pages={1073--1140},
	year={2023},
	publisher={Oxford University Press}
}

@article{kim2023preference,
  title={Preference transformer: Modeling human preferences using transformers for {RL}},
  author={Kim, Changyeon and Park, Jongjin and Shin, Jinwoo and Lee, Honglak and Abbeel, Pieter and Lee, Kimin},
  journal={arXiv preprint arXiv:2303.00957},
  year={2023}
}

@misc{alpaca,
  author = {Rohan Taori and Ishaan Gulrajani and Tianyi Zhang and Yann Dubois and Xuechen Li and Carlos Guestrin and Percy Liang and Tatsunori B. Hashimoto },
  title = {Stanford {Alpaca}: An Instruction-following {LLaMA} model},
  year = {2023},
  publisher = {GitHub},
  journal = {GitHub repository},
  howpublished = {\url{https://github.com/tatsu-lab/stanford_alpaca}},
}

@article{touvron2023llama,
      title={{Llama 2}: Open Foundation and Fine-Tuned Chat Models}, 
      author={Hugo Touvron and others},
      year={2023},
      journal={arXiv preprint arXiv:2307.09288}
}

@article{LLaMA-1,
      title={{LLaMA}: Open and Efficient Foundation Language Models}, 
      author={Hugo Touvron and Thibaut Lavril and Gautier Izacard and Xavier Martinet and Marie-Anne Lachaux and Timothée Lacroix and Baptiste Rozière and Naman Goyal and Eric Hambro and Faisal Azhar and Aurelien Rodriguez and Armand Joulin and Edouard Grave and Guillaume Lample},
      journal={arXiv preprint arXiv:2302.13971},
      year={2023}
}

@inproceedings{
hendrycks2021measuring,
title={Measuring Massive Multitask Language Understanding},
author={Dan Hendrycks and Collin Burns and Steven Basart and Andy Zou and Mantas Mazeika and Dawn Song and Jacob Steinhardt},
booktitle={International Conference on Learning Representations},
year={2021},
url={https://openreview.net/forum?id=d7KBjmI3GmQ}
}

@incollection{HALL198051,
title = {The Central Limit Theorem},
booktitle = {Martingale Limit Theory and its Application},
publisher = {Academic Press},
pages = {51-96},
year = {1980},
series = {Probability and Mathematical Statistics: A Series of Monographs and Textbooks},
issn = {00795607},
doi = {https://doi.org/10.1016/B978-0-12-319350-6.50009-8},
url = {https://www.sciencedirect.com/science/article/pii/B9780123193506500098},
author = {P. Hall and C.C. Heyde}
}

@article{chernozhukov2015comparison,
title={Comparison and anti-concentration bounds for maxima of Gaussian random vectors},
author={Chernozhukov, Victor and Chetverikov, Denis and Kato, Kengo},
journal={Probability Theory and Related Fields},
volume={162},
pages={47--70},
year={2015}
}

@article{Tropp2011FREEDMANSIF,
  title={FREEDMAN'S INEQUALITY FOR MATRIX MARTINGALES},
  author={Joel A. Tropp},
  journal={Electronic Communications in Probability},
  year={2011},
  volume={16},
  pages={262-270},
  url={https://api.semanticscholar.org/CorpusID:734635}
}

@article{Bub,
author = {Bubeck, S\'{e}bastien},
title = {Convex Optimization: Algorithms and Complexity},
year = {2015},
issue_date = {11 2015},
publisher = {Now Publishers Inc.},
address = {Hanover, MA, USA},
volume = {8},
number = {3–4},
issn = {1935-8237},
url = {https://doi.org/10.1561/2200000050},
doi = {10.1561/2200000050},
abstract = {This monograph presents the main complexity theorems in convex optimization and their corresponding algorithms. Starting from the fundamental theory of black-box optimization, the material progresses towards recent advances in structural optimization and stochastic optimization. Our presentation of black-box optimization, strongly influenced by the seminal book of Nesterov, includes the analysis of cutting plane methods, as well as accelerated gradient descent schemes. We also pay special attention to non-Euclidean settings relevant algorithms include Frank-Wolfe, mirror descent, and dual averaging and discuss their relevance in machine learning. We provide a gentle introduction to structural optimization with FISTA to optimize a sum of a smooth and a simple non-smooth term, saddle-point mirror prox Nemirovski's alternative to Nesterov's smoothing, and a concise description of interior point methods. In stochastic optimization we discuss stochastic gradient descent, mini-batches, random coordinate descent, and sublinear algorithms. We also briefly touch upon convex relaxation of combinatorial problems and the use of randomness to round solutions, as well as random walks based methods.},
journal = {Foundations and Trends in Machine Learning},
month = {nov},
pages = {231–357},
numpages = {127}
}

@article{tropp15,
url = {http://dx.doi.org/10.1561/2200000048},
year = {2015},
volume = {8},
journal = {Foundations and Trends in Machine Learning},
title = {An Introduction to Matrix Concentration Inequalities},
doi = {10.1561/2200000048},
issn = {1935-8237},
number = {1-2},
pages = {1-230},
author = {Joel A. Tropp}
}



\renewcommand{\thefigure}{A.\arabic{figure}}

%

\appendix 
\newpage

\begin{center}
\textit{\large Supplementary material to}
\end{center}
\begin{center}
{\Large Contextual Online Uncertainty-Aware Preference Learning for Human Feedback}
\vskip10pt
\end{center}

This document contains the supplementary material to the paper
``Contextual Online Uncertainty-Aware Preference Learning for Human Feedback".


\re{
\section{Additional Simulation Results}\label{app:sec:simu}
In this section, we present more simulation results. We consider the settings described in Section~\ref{sec:simu}, and for these experiments, we keep all other parameters fixed while varying $M$. The results are presented in Figures~\ref{fig:MA} and \ref{fig:MB}. From the simulations, we observe that as $M$ increases, both the regret and estimation error decrease. Furthermore, our method consistently outperforms the others in terms of regret across all values of $M$.

\begin{figure}[!htbp]
\centering
\foreach \M in {1, 10, 20, 30} { 
    $M=\M$ \\[10pt]
    \begin{tabular}{cc}
    \includegraphics[width=5.5cm]{figs/reg_setting_A_T0_200_T_500_M_\M.pdf} &
    \includegraphics[width=5.5cm]{figs/err_setting_A_T0_200_T_500_M_\M.pdf} \\
    \end{tabular}
    \newpage 
}
\caption{\re{Regret and estimation error for Setting A with varying comparison budgets $M$. The left column shows the regret, and the right column shows the estimation error for each $M$.}}
\label{fig:MA}
\end{figure}

\begin{figure}[!htbp]
\centering
\foreach \M in {20, 50, 80} { 
    $M=\M$ \\[10pt]
    \begin{tabular}{cc}
    \includegraphics[width=5.5cm]{figs/reg_setting_B_T0_200_T_500_M_\M.pdf} &
    \includegraphics[width=5.5cm]{figs/err_setting_B_T0_200_T_500_M_\M.pdf} \\
    \end{tabular}
    \newpage 
}
\caption{\re{Regret and estimation error for Setting B with varying comparison budgets $M$. The left column shows the regret, and the right column shows the estimation error for each $M$.}}
\label{fig:MB}
\end{figure}
}

\section{Proof of Theorem~\ref{thm:2rate}}\label{app:sec:rate}
We present the proof conditioning on  event $\cA$, as defined in Lemma~\ref{dgr}. To prove the theorem, we need the following lemmas, whose proofs are provided in Section~\ref{app:sec:lemerr1}.
\begin{lemma}\label{lem:gr}
Conditioning on  event $\cA$ and under Assumption~\ref{ass:Sigx}, we have $\|\nabla\cL_t(\btheta^*)\|_2\lesssim \sqrt{tM\log T}$ with probability at least $1-O(T^{-3})$.
\end{lemma}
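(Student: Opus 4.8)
The plan is to view $\nabla\cL_t(\ths)$ as the terminal value of a vector‑valued martingale with bounded increments and controlled predictable quadratic variation, and then apply a matrix Freedman inequality block by block over the $n$ items. First I would differentiate \eqref{lika}: for each time $\ell\le t$ with action $\ba_\ell=(i_\ell,j_\ell)$ and each comparison $m\in[M]$, the contribution to $\nabla\cL_t(\ths)$ is $\bxi_\ell^{(m)}:=\epsilon_\ell^{(m)}(\bc_{\ell i_\ell}-\bc_{\ell j_\ell})$, where $\bc_{\ell i}=\be_i\otimes\bX_\ell$ and $\epsilon_\ell^{(m)}=e^{\bX_\ell^\top(\ths_{i_\ell}-\ths_{j_\ell})}/(1+e^{\bX_\ell^\top(\ths_{i_\ell}-\ths_{j_\ell})})-y_{j_\ell i_\ell}^{(m)}(\ell)$. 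Since $\ba_\ell$ and $\bX_\ell$ are measurable with respect to the pre‑outcome $\sigma$‑field $\cF_\ell:=\sigma(\cH_{\ell-1},\bX_\ell,\ba_\ell)$, and $y_{j_\ell i_\ell}^{(m)}(\ell)$ is, conditionally, a Bernoulli variable whose mean is exactly the first term of $\epsilon_\ell^{(m)}$, we get $\EE[\epsilon_\ell^{(m)}\mid\cF_\ell,\epsilon_\ell^{(1)},\dots,\epsilon_\ell^{(m-1)}]=0$ and $|\epsilon_\ell^{(m)}|\le1$. Ordering the pairs $(\ell,m)$ lexicographically, $\{\bxi_\ell^{(m)}\}$ is therefore a martingale difference sequence and $\nabla\cL_t(\ths)=\sum_{\ell\le t}\sum_{m\le M}\bxi_\ell^{(m)}$.

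Next I would control each item block. Fix $i\in[n]$ and set $d_i=\#\{\ell\le t:i\in\ba_\ell\}$. The block $\nabla_{\btheta_i}\cL_t(\ths)=\sum_{\ell:i\in\ba_\ell}\sum_{m\le M}s_\ell\,\epsilon_\ell^{(m)}\bX_\ell$ (with $s_\ell=\pm1$ an $\cF_\ell$‑measurable sign) is a sum of $d_iM$ martingale differences in $\RR^d$, each of Euclidean norm at most $C$ by Assumption~\ref{ass:Sigx} and with conditional second‑moment matrix $\preceq C^2\Ib_d$. Applying the matrix Freedman inequality to the Hermitian dilation of these increments yields, for any $\delta>0$, $\|\nabla_{\btheta_i}\cL_t(\ths)\|_2\lesssim C\sqrt{d_iM\log(d/\delta)}+C\log(d/\delta)$ on an event of probability at least $1-\delta$. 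Taking $\delta\asymp T^{-3}/n$ and noting that under the hypotheses of Theorem~\ref{thm:2rate} one has $n^2p\lesssim T^{1-\alpha}$ (so $n$, and likewise $d$, are at most polynomial in $T$ and $\log n\lesssim\log T$), the logarithmic factor is $O(\log T)$ and the additive linear term is absorbed since $d_iM\ge1$; hence $\|\nabla_{\btheta_i}\cL_t(\ths)\|_2\lesssim\sqrt{d_iM\log T}$ with probability $\ge1-O(T^{-3}/n)$.

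Finally I would union‑bound over $i\in[n]$ and aggregate:
\[
\|\nabla\cL_t(\ths)\|_2^2=\sum_{i=1}^n\|\nabla_{\btheta_i}\cL_t(\ths)\|_2^2\lesssim M\log T\sum_{i=1}^n d_i=2tM\log T,
\]
because each time step places exactly two items in $\ba_\ell$, so $\sum_i d_i=2t$ deterministically; this gives $\|\nabla\cL_t(\ths)\|_2\lesssim\sqrt{tM\log T}$ on an event of probability $\ge1-O(T^{-3})$. The conditioning on $\cA$ from Lemma~\ref{dgr} is inherited from the surrounding argument but is not actually needed here — the only facts used are the uniform bounds $\|\bX\|_2\le C$, $|\epsilon_\ell^{(m)}|\le1$, and the deterministic identities $d_i\le t$ and $\sum_i d_i=2t$. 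The one genuine subtlety, and the reason naive i.i.d.\ concentration does not apply, is that the compared pairs $\ba_\ell$ are chosen adaptively from the history; the resolution is precisely that the score evaluated at the truth is a martingale, so a Freedman‑type bound with the boundedness above still delivers the $\sqrt{tM\log T}$ rate.
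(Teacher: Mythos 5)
Your overall strategy --- recognize $\nabla\cL_t(\ths)$ as the terminal value of a bounded-increment martingale (the adaptivity of $\ba_\ell$ being neutralized because the score at the true parameter has conditional mean zero given $\sigma(\cH_{\ell-1},\bX_\ell,\ba_\ell)$) and invoke a matrix Freedman inequality --- is exactly the paper's route: the omitted proof is declared to be a simplification of Lemma~\ref{gr}, which applies the rectangular matrix Freedman inequality (Lemma~\ref{FreedmanRec}) to the full $nd$-dimensional martingale $\sum_{\ell,m} z^{(m)}_{i_\ell j_\ell}(\ell)$ with $\sigma^2=O(tM)$. So the idea is right and the conclusion is right.

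There is, however, one step in your block-by-block variant that does not go through as written. You apply Freedman to the $i$-th block and claim $\|\nabla_{\btheta_i}\cL_t(\ths)\|_2\lesssim\sqrt{d_iM\log T}$ with the \emph{random} count $d_i=\#\{\ell\le t: i\in\ba_\ell\}$ appearing inside the bound. The Freedman inequality in the form you (and the paper) have available requires a \emph{deterministic} bound $\sigma^2$ on the predictable quadratic variation; for the block martingale the variation is $\frac{M}{4}\sum_{\ell\le t}\mathbf 1(i\in\ba_\ell)\,\EE[(\epsilon_\ell^{(m)})^2]\bX_\ell\bX_\ell^\top$, whose only deterministic bound is $O(C^2tM)$, not $O(C^2d_iM)$. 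Using the deterministic bound per block and then summing over $i$ gives $\sum_i\|\nabla_{\btheta_i}\|_2^2\lesssim ntM\log T$, which is off by a factor of $n$ --- so your aggregation genuinely needs the $d_i$-dependent per-block bound, and that requires an extra argument (e.g.\ a peeling over dyadic values of the variation together with the stopped form of Tropp's inequality, at the cost of an extra $\log t$ in the failure probability, which is harmless here). The cleaner repair is to drop the block decomposition entirely and apply the rectangular Freedman inequality once to the whole gradient: since exactly one pair is active at each round, the column and row variations are bounded \emph{deterministically} by $\frac{M}{4}\sum_{\ell\le t}2\|\bX_\ell\|_2^2\le \tfrac12 C^2tM$, and the $\sqrt{tM\log T}$ bound follows in one application --- this is precisely what the paper's Lemma~\ref{gr} does (there with the sharper event-$\cA_t^2$ constants that are not needed for this lemma). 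Your observation that event $\cA$ plays no role in this particular bound is correct.
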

\begin{lemma}\label{lem:pickind}
Conditioning on event $\cA$, let $M_t=\min_{(i,j)\in\cE}\sum_{s=1}^{t}\bm{1}\{\xi_s=1\text{ and }\ba_s=(i,j)\}$.
There exists a constant $c$ such that for $cn^2p\log(T)<t^{1-\alpha}\leq T_0$, and we have $M_t\asymp t^{1-\alpha}/(n^2p)$ with probability at least $1-O(T^{-3})$.
\end{lemma}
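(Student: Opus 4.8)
The plan is to exploit the clean independence structure of the exploration steps. During the $\epsilon$-greedy stage, at each time $s$ the algorithm draws an independent coin to explore (with probability $s^{-\alpha}$, recorded by the exploration indicator $\xi_s$), and upon exploring selects an edge uniformly at random from $\cE$; both randomizations are fresh and independent of the history $\cH_{s-1}$ and of all other time steps. Conditioning on event $\cA$, which fixes the realized comparison graph $\cG$ and in particular controls its edge count $|\cE|\asymp n^2 p$, the variables $\{\bm{1}\{\xi_s=1,\ba_s=(i,j)\}\}_{s=1}^t$ are therefore, for each fixed edge $(i,j)\in\cE$, independent Bernoulli random variables with success probability $q_s=s^{-\alpha}/|\cE|$. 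Writing $N_{ij}(t)=\sum_{s=1}^t\bm{1}\{\xi_s=1,\ba_s=(i,j)\}$, the mean is
\begin{align*}
\mu_t:=\EE[N_{ij}(t)]=\frac{1}{|\cE|}\sum_{s=1}^t s^{-\alpha}\asymp \frac{1}{n^2p}\cdot\frac{t^{1-\alpha}}{1-\alpha}\asymp\frac{t^{1-\alpha}}{n^2p},
\end{align*}
where the middle relation uses the integral comparison $\sum_{s=1}^t s^{-\alpha}\asymp t^{1-\alpha}/(1-\alpha)$ valid for the constant $\alpha\in(0,1/2)$, and the last relation uses $|\cE|\asymp n^2p$ under $\cA$.

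Next I would apply a multiplicative Chernoff bound to each $N_{ij}(t)$. Since it is a sum of independent Bernoullis with mean $\mu_t$, we have $\PP(N_{ij}(t)\notin[\mu_t/2,\,3\mu_t/2])\leq 2\exp(-c'\mu_t)$ for an absolute constant $c'>0$. A union bound over the at most $\binom{n}{2}\leq n^2$ edges yields
\begin{align*}
\PP\Big(\exists\,(i,j)\in\cE:\ N_{ij}(t)\notin[\mu_t/2,\,3\mu_t/2]\Big)\leq 2n^2\exp(-c'\mu_t).
\end{align*}
The hypothesis $t^{1-\alpha}>c\,n^2p\log T$ is precisely what forces the mean to dominate the log factor: it gives $\mu_t\gtrsim\log T$, and taking the constant $c$ in the hypothesis sufficiently large yields $c'\mu_t\geq 5\log T$. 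Absorbing the prefactor via $n\lesssim T$ (so $2n^2\leq 2T^2$, equivalently $\log n\lesssim\log T$), the failure probability is $O(T^{-3})$.

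On the complement of this event every edge satisfies $\mu_t/2\leq N_{ij}(t)\leq 3\mu_t/2$ simultaneously, so the minimum $M_t=\min_{(i,j)\in\cE}N_{ij}(t)$ obeys $\mu_t/2\leq M_t\leq 3\mu_t/2$, i.e. $M_t\asymp\mu_t\asymp t^{1-\alpha}/(n^2p)$, which is the claim. I expect the only delicate point to be the independence justification: one must verify that conditioning on $\cA$ does not entangle the exploration randomness, which holds because $\cA$ is measurable with respect to $\cG$ alone, while the exploration coins and uniform edge draws are independent of $\cG$ and of the observation history. The secondary bookkeeping task is tracking the constant $c$ through the Chernoff exponent so that the $n^2$ union-bound prefactor and the target $T^{-3}$ rate are both met under the stated lower bound on $t^{1-\alpha}$; the upper bound on $M_t$ costs nothing extra since controlling the upper tail of every $N_{ij}(t)$ simultaneously bounds the maximum, hence the minimum, from above.
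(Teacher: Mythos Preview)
Your proposal is correct and follows essentially the same route as the paper: recognize that, conditionally on the graph, each $N_{ij}(t)$ is a sum of independent Bernoulli$(s^{-\alpha}/|\cE|)$ variables with mean $\asymp t^{1-\alpha}/(n^2p)$, apply a concentration inequality (the paper invokes Bernstein, you use multiplicative Chernoff, which is equivalent here), and then union-bound over edges. Your write-up is in fact more explicit than the paper's about the union bound and the role of the hypothesis $t^{1-\alpha}>cn^2p\log T$; the only cosmetic point is that the union bound can be taken over the $|\cE|\asymp n^2p$ edges (which on $\cA$ is $\lesssim t^{1-\alpha}/\log T\leq T$ directly from the hypothesis) rather than over all $\binom{n}{2}$ pairs, which sidesteps the need to argue separately that $n\lesssim T$.
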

Recall that $\be_i$ is the canonical basis in $\RR^n$. Let $\Ab_{ij}=(\be_{i}-\be_{j})(\be_{i}-\be_{j})^\top$, and $\bar{L}_{\mathcal{G}}=\sum_{(i,j)\in \mathcal{E},i>j}\bar{\bC}_{ij}$, where $\bar{\bC}_{ij}=\Ab_{ij}\otimes \bSigma_1$, and $\bSigma_{1}$ is a $d$-dimensional symmetric matrix. The next lemma establishes the relationship between the eigenvalues of $\bar{L}_{\mathcal{G}}$ and $\bSigma_1$.
\begin{lemma}\label{eigmin} 
For any $r>0$, there exists a constant $c$ depending on $r$, such that if $p n\geq c\log n$, we have that  the  event
\begin{align*}
\cA_0(r)=\Big\{& \frac{1}{2}\lambda_{\min }(\bSigma_{1})pn\leq \lambda_{\min,\bot}(\bar{L}_{\mathcal{G}})\leq \|\bar{L}_{\mathcal{G}}\|_{2}\leq \frac{3}{2}\lambda_{\max}(\bSigma_{1})pn,\\
&\text{ for any }\bSigma_{1}\text{ such that }\lambda_{\max}(\bSigma_{1})/\lambda_{\min }(\bSigma_{1})\leq r\Big\}
\end{align*}
holds with probability larger than $1-O\left(n^{-10}\right)$.
\end{lemma}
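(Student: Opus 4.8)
\textbf{Proof proposal for Lemma~\ref{eigmin}.}

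The plan is to decompose $\bar{L}_{\mathcal{G}}$ through the Kronecker structure and reduce the problem to a spectral bound on the (scalar) Laplacian of the Erdős–Rényi graph. Write $L_{\mathcal{G}} = \sum_{(i,j)\in\mathcal{E},\, i>j} \Ab_{ij}$ for the unnormalized graph Laplacian of $\mathcal{G}$, so that $\bar{L}_{\mathcal{G}} = L_{\mathcal{G}} \otimes \bSigma_1$. The key algebraic fact is that for a Kronecker product $A\otimes B$ with $A$ symmetric PSD and $B$ symmetric, the eigenvalues are products $\lambda_k(A)\lambda_\ell(B)$; moreover $L_{\mathcal{G}}$ annihilates $\one_n$, so the eigenvectors of $\bar L_{\mathcal{G}}$ split into those in $(\one_n)^\perp \otimes \RR^d$ (which carry the "$\perp$" part of the spectrum) and those in $\mathrm{span}(\one_n)\otimes\RR^d$ (eigenvalue $0$). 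Hence $\lambda_{\min,\perp}(\bar L_{\mathcal{G}}) = \lambda_{\min,\perp}(L_{\mathcal{G}})\cdot \lambda_{\min}(\bSigma_1)$ when $\lambda_{\min}(\bSigma_1)>0$, and $\|\bar L_{\mathcal{G}}\|_2 = \|L_{\mathcal{G}}\|_2\cdot \lambda_{\max}(\bSigma_1)$. This is where the ratio condition $\lambda_{\max}(\bSigma_1)/\lambda_{\min}(\bSigma_1)\le r$ enters: it lets us pass from a two-sided bound on $\lambda_{\min,\perp}(L_{\mathcal{G}})$ and $\|L_{\mathcal{G}}\|_2$ (which will both be $\asymp pn$) to a two-sided bound on the corresponding quantities for $\bar L_{\mathcal{G}}$ with constants uniform over all admissible $\bSigma_1$.

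First I would record the Kronecker eigenvalue decomposition and the splitting of $\RR^{nd} = (\mathrm{span}(\one_n)\otimes\RR^d) \oplus ((\one_n)^\perp \otimes \RR^d)$ to justify the identities above; this is purely linear algebra. Then the substantive step is the concentration bound: for $\mathcal{G}\sim\mathcal{G}_{n,p}$ with $pn\ge c\log n$, with probability at least $1-O(n^{-10})$ one has
\begin{align*}
\tfrac{1}{2}\,pn \;\le\; \lambda_{\min,\perp}(L_{\mathcal{G}}) \;\le\; \|L_{\mathcal{G}}\|_2 \;\le\; \tfrac{3}{2}\,pn.
\end{align*}
I would obtain this from a matrix Bernstein / matrix Chernoff argument applied to $L_{\mathcal{G}} = \sum_{i>j} \eta_{ij}\Ab_{ij}$ with $\eta_{ij}\sim\mathrm{Bernoulli}(p)$ independent, restricted to the subspace $(\one_n)^\perp$. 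The mean is $\EE L_{\mathcal{G}} = p(nI_n - \one_n\one_n^\top)$, which on $(\one_n)^\perp$ acts as $pn\cdot I$; each summand $\eta_{ij}\Ab_{ij}$ has operator norm $\le 2$ and the variance proxy is $\sum_{i>j}\EE[\eta_{ij}^2]\|\Ab_{ij}\|_2 \Ab_{ij} \preceq 2p(n-1)I$ on $(\one_n)^\perp$; matrix Chernoff then gives a deviation of order $\sqrt{pn\log n}+\log n$, which is $o(pn)$ precisely when $pn\gtrsim\log n$ with a large enough constant $c=c(r)$. Choosing the constant so the relative error is below, say, $1/2$ after also absorbing the factor from the $\bSigma_1$ ratio $r$ yields the stated event $\cA_0(r)$ with the claimed probability; the exponent $10$ is arranged by taking the constant in the tail bound large.

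The main obstacle is making the bound \emph{uniform over all $\bSigma_1$} with $\lambda_{\max}(\bSigma_1)/\lambda_{\min}(\bSigma_1)\le r$ — but this is handled cleanly by the Kronecker factorization, since the randomness lives entirely in $L_{\mathcal{G}}$ and the $\bSigma_1$-dependence is a deterministic multiplicative factor bounded between $\lambda_{\min}(\bSigma_1)$ and $\lambda_{\max}(\bSigma_1)$; the ratio $r$ only affects the required size of $c$ in $pn\ge c\log n$, not the structure of the argument. A secondary technical point is correctly restricting the matrix concentration inequality to the subspace $(\one_n)^\perp$ (equivalently, working with $P L_{\mathcal{G}} P$ where $P$ is the projection onto $(\one_n)^\perp$), so that the zero eigenvalue of $L_{\mathcal{G}}$ does not corrupt the lower bound; this is standard but worth stating carefully. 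No step should require more than routine matrix-concentration bookkeeping once the reduction is in place.
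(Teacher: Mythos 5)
Your proposal is correct, but it takes a genuinely different route from the paper. You factor $\bar{L}_{\mathcal{G}} = L_{\mathcal{G}} \otimes \bSigma_1$ with $L_{\mathcal{G}} = \sum_{(i,j)\in\mathcal{E},\,i>j}\Ab_{ij}$ the scalar graph Laplacian, push all of the $\bSigma_1$-dependence into a deterministic multiplicative factor via the Kronecker eigenvalue structure on the invariant subspace $(\one_n)^\perp\otimes\RR^d$, and then apply matrix Chernoff only to the $n\times n$ random Laplacian. The paper instead applies matrix Chernoff directly to the $nd$-dimensional summands $\Ob_1\bar{\bC}_{ij}\Ob_1^\top\,\mathbf{1}((i,j)\in\mathcal{E})$ with $\bSigma_1$ baked in, using $\|X_{i,j}\|_2\le 2\lambda_{\max}(\bSigma_1)$ and $\lambda_{\min}(\EE\sum X_{i,j})=\lambda_{\min}(\bSigma_1)pn$; this is why its Chernoff exponent is $pn\lambda_{\min}(\bSigma_1)/(4\lambda_{\max}(\bSigma_1))$ and the threshold constant $c$ must depend on $r$. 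Your factorization buys two things the paper's argument does not: (i) the uniformity over all admissible $\bSigma_1$ is automatic, since the random object $L_{\mathcal{G}}$ does not involve $\bSigma_1$ (the paper's proof, as written, fixes one $\bSigma_1$ and would need an extra covering or monotonicity argument to make the event uniform); and (ii) the constant $c$ in $pn\ge c\log n$ need not depend on $r$ at all, because the bounds $\tfrac12 pn\le\lambda_{\min,\perp}(L_{\mathcal{G}})\le\|L_{\mathcal{G}}\|_2\le\tfrac32 pn$ multiply through exactly by $\lambda_{\min}(\bSigma_1)$ and $\lambda_{\max}(\bSigma_1)$ respectively --- your remark about ``absorbing the factor from the ratio $r$'' is unnecessary, as no absorption is needed. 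Two small points to state explicitly when writing this up: the identity $\lambda_{\min,\perp}(\bar L_{\mathcal{G}})=\lambda_{\min,\perp}(L_{\mathcal{G}})\lambda_{\min}(\bSigma_1)$ requires $\bSigma_1\succ 0$ (implicit in the lemma's intended use, and equally implicit in the paper's proof), and the lower bound degenerates to $0$ if $\mathcal{G}$ is disconnected, which is ruled out with high probability by $pn\ge c\log n$.
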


Now we begin the proof for Theorem~\ref{thm:2rate}. In the proof, since it is sufficient to consider $\cA_0(r)$ for $r=1$ in Lemma~\ref{eigmin}, so we omit the symbol $r$ and denote the event as  $\cA_0$. Also, we denote $\hth(t)$ as $\hth$  in the proof for simplicity.

\begin{proof}
Expanding $\cL_t(\hat{\btheta})$ at $\btheta^*$, we have
\begin{align}\label{expa}
\cL_t(\hat{\btheta})=\cL_t(\btheta^*)+(\hat{\btheta}-\btheta^*)^\top\nabla\cL_t(\btheta^*)
+\frac{1}{2}(\hat{\btheta}-\btheta^*)^\top\nabla^2\cL_t(\zeta)(\hat{\btheta}-\btheta^*),
\end{align}
where $\zeta$ is between $\hat{\btheta}$ and $\btheta^*$. Since $\cL_t(\hat{\btheta})\leq\cL_t(\btheta^*)$ by the definition of $\hat{\btheta}$ and (\ref{expa}), we have
\begin{align*}
\|\hat{\btheta}-\btheta^*\|_2 \|\nabla\cL_t(\btheta^*)\|_2 
&\geq -(\hat{\btheta}-\btheta^*)^\top \nabla\cL_t(\btheta^*)\nonumber \geq \frac{1}{2}(\hat{\btheta}-\btheta^*)^\top\nabla^2\cL_t(\zeta)(\hat{\btheta}-\btheta^*)\nonumber\\
&\geq \frac{1}{2}\lambda_{\min, \perp}(\nabla^2\cL_t(\zeta))\|\hat{\btheta}-\btheta^*\|_{2}^2,
\end{align*}
where $\lambda_{\min, \perp}(\Ab) =\min\{\lambda\given \btheta^\top\Ab\btheta\geq \lambda\|\btheta\|_2^2\text{ for all }\btheta\in\Theta\}$ for matrix $\Ab$. Therefore, we have
\begin{align}\label{2nm}
\|\hat{\btheta}-\btheta^*\|_{2}\leq \frac{2\|\nabla\cL_t(\btheta^*)\|_2 }{\lambda_{\min, \perp}(\nabla^2\cL_t(\zeta))}.
\end{align}
Then we analyze $\lambda_{\min, \perp}(\nabla^2\cL_t(\zeta))$.
Let $\cU_{t,ij}=\{s\in[t]:\xi_s=1,\ba_s=(i,j)\}$ and $\hat{\bSigma}(\cI)=|\cI|^{-1}\sum_{s\in\cI}\bX_s\bX_s^\top$.
For any $i,j\in[n]$, $i\neq j$, $\hat{\bSigma}(\cU_{t,ij})$ is a summation of independent random matrices. In addition, for any $s\in \cU_{t,ij}$, we have $\lambda_{\min}(\bX_s\bX_{s}^\top)\geq 0$. By Assumption~\ref{ass:Sigx}, there exists a constant $\bar{c}_0>0$, such that $\lambda_{\min}(\EE [|\cU_{t,ij}|\hat{\bSigma}(\cU_{t,ij})])\geq |\cU_{t,ij}|\bar{c}_0$. By  the Chernoff bound, we have
\begin{align*}
\PP\Big(\lambda_{\min}(\hat{\bSigma}(\cU_{t,ij}))\leq \frac{1}{2}\bar{c}_{0}\Big)\leq d(\frac{2}{e})^{\bar{c}_{0}|\cU_{t,ij}|/2}.
\end{align*}
Therefore, there exists $c>0$, if $|\cU_{t,ij}|\geq c\cdot\log T$, we have $\lambda_{\min}(\hat{\bSigma}(\cU_{t,ij}))> 1/2\bar{c}_{0}$ holds with probability $1-O(T^{-3})$.
Since  $\cT_{t,ij}=\{s\in[t]:\ba_s=(i,j)\}$, utilizing the decomposition
\begin{align*}
\hat{\bSigma}(\cT_{t,ij})=\frac{|\cU_{t,ij}|}{|\cT_{t,ij}|}\hat{\bSigma}(\cU_{t,ij})+\frac{|\cT_{t,ij}|-|\cU_{t,ij}|}{|\cT_{t,ij}|}\hat{\bSigma}(\cT_{t,ij}\setminus \cU_{t,ij}),
\end{align*}
we  obtain
\begin{align*}
\lambda_{\text{min}}(\sum_{s\in\cT_{t,ij}}\bX_s\bX_s^\top)\geq |\cT_{t,ij}|\lambda_{\text{min}}(\hat{\bSigma}(\cU_{t,ij}))>\frac{|\cU_{t,ij}|}{2}\bar{c}_{0}.
\end{align*}
Since $\sum_{\ell\in\cT_{t,ij}}\bX_\ell\bX_\ell^\top -\min_{(i,j)\in\cE}|\cU_{t,ij}|\bar{c}_{0}/2\Ib_{d}$ and $\Ab_{ij}$ are positive semidefinite for all $(i,j)\in\cE$, we  obtain that  
\begin{align}\label{pos}
\sum_{(i,j)\in\cE,i>j} \Big(\Ab_{ij}\otimes\Big(\sum_{\ell\in\cT_{t,ij}}\bX_\ell\bX_\ell^\top\Big)\Big) \succeq \sum_{(i,j)\in\cE,i>j} \Ab_{ij}\otimes \min_{(i,j)\in\cE}|\cU_{t,ij}|\bar{c}_{0}/2\Ib_{d}.
\end{align}
Note that the left side of (\ref{pos}) equals $\sum_{(i,j)\in \cE,i>j}\sum_{\ell\in\cT_{t,ij}}(\bc_{\ell i}-\bc_{\ell j})(\bc_{\ell i}-\bc_{\ell j})^\top$. Besides, utilizing Lemma~\ref{eigmin}, we have
\begin{align*}
\lambda_{\min, \perp}\Big(\sum_{(i,j)\in\cE,i>j} \Ab_{ij}\otimes \min_{(i,j)\in\cE}|\cU_{t,ij}|\bar{c}_{0}/2\Ib_{d}\Big)\geq \min_{(i,j)\in\cE}|\cU_{t,ij}|\bar{c}_{0}np/4.
\end{align*}
Utilizing the closed form of the Hessian matrix that
\begin{align*}
\nabla^{2} \cL_t(\btheta)=&M\sum_{(i,j)\in \mathcal{E},i>j;}\sum_{\ell\in\cT_{t,ij}}\frac{e^{\bX_{\ell }^\top \btheta_{i}} e^{\bX_{\ell }^\top \btheta_{j}}}{(e^{\bX_{\ell }^\top \btheta_{i}}+e^{\bX_{\ell }^\top \btheta_{j}})^{2}}(\bc_{\ell i}-\bc_{\ell j})(\bc_{\ell i}-\bc_{\ell j})^\top,
\end{align*}
we have $
\frac{e^{\bX_{\ell}^\top\btheta_{i} } e^{\bX_{\ell}^\top\btheta_{j} }}{(e^{\bX_{\ell}^\top\btheta_{i} }+e^{\bX_{\ell}^\top\btheta_{j} })^{2}}\geq \frac{1}{4}e^{-|\bX_{\ell}^\top\btheta_{i} -\bX_{\ell}^\top\btheta_{j} |}\gtrsim \frac{1}{4}e^{-\gamma}$ for $\btheta$ such that $\|\btheta_i\|_2\leq \gamma$ .
Therefore, we have $\lambda_{\min, \perp}\left(\nabla^2 \cL_t(\xi)\right) \gtrsim \min_{(i,j)\in\cE}|\cU_{t,ij}|\bar{c}_{0}npM$.
Using Lemma~\ref{lem:pickind}, we have $\min_{(i,j)\in\cE}|\cU_{t,ij}|\asymp t^{1-\alpha}/(n^2p)$.
Combing this with (\ref{2nm}), we have
\begin{align*}
\|\hat{\btheta}-\btheta^*\|_2\lesssim \frac{\sqrt{t\log T}}{np\sqrt{M} \min_{(i,j)\in\cE}|\cU_{t,ij}|}
\lesssim \sqrt{\frac{\log T}{M}}nt^{\alpha-1/2},
\end{align*}
which concludes the proof.
\end{proof}

\section{Proof of Theorem~\ref{thm:BT}}\label{app:pf:BT}
An essential component of the proof involves bounding the eigenvalues of the Hessian matrix, which we analyze in details in Section~\ref{sec:Hes}. Next, we establish the property of the gradient and complete the proof in Section~\ref{app:sec:err2}.
\subsection{Analysis of the Hessian Matrix}\label{sec:Hes}
Recall that the Hessian matrix is 
\begin{align*}
\nabla^{2} \cL_t(\btheta)=&\sum_{(i,j)\in \mathcal{E},i>j;}\sum_{\ell\in \cT_{t,ij}}M\frac{e^{\bX_{\ell }^\top \btheta_{i}} e^{\bX_{\ell }^\top \btheta_{j}}}{(e^{\bX_{\ell }^\top \btheta_{i}}+e^{\bX_{\ell }^\top \btheta_{j}})^{2}}(\bc_{\ell i}-\bc_{\ell j})(\bc_{\ell i}-\bc_{\ell j})^\top,
\end{align*}
where $\bc_{\ell i} = \be_{i}\otimes \bX_\ell \in\mathbb{R}^{nd}$. To facilitate further discussion, we introduce some additional notations. In the first stage, we set $\xi_t=0$ if we perform exploration; otherwise, we set $\xi_t=1$. Define $\bSigma_{ij}(t)=\sum_{\ell\in\cT_{t,ij}}\bX_\ell\bX_\ell^\top$, $\bSigma_{i}(t)=\sum_{\ell=1}^t\bm{1}(i_\ell=i)\bX_\ell\bX_\ell^\top$ and $\cS_t=\{\ell\in[t]:\xi_\ell=0\text{ or }\ell>T_0\}$.
For $\ell\in \cS_T$, by our algorithm, given the major item $i_\ell$, item $j_\ell$ follows a multinomial distribution, and $j_\ell$ is independent of $\bX_{\ell}$ and historical information $\cH_{\ell-1}$. To represent  action $\ba_\ell$ more clearly, we slightly abuse the notations. Let $\bb(i)$ denote the randomly selected item associated with the main item $i$. Note that the randomness of $\ba_\ell$ arises from the covariate $\bX_\ell$, the estimation from last step $\hat{\btheta}(\ell-1)$ and $\bb(i_\ell)$, where $i_\ell=\argmax_{i\in[n]}\bX_{\ell}^\top\hth_i(\ell-1)$. Thus, we represent $\ba_{\ell}=\ba(\hth(\ell-1),\bX_\ell,\bb(i_\ell))$ for $\ell\in\cS_T$.

Define $\bSigma_{\ell i}=
\bm{1}(i_\ell=i)\bX_\ell\bX_\ell^\top$, from which we  obtain $\bSigma_{i}(t)=\sum_{\ell=1}^{t}\bSigma_{\ell i}$. Let $\bSigma^{\sharp}_{i}(\btheta)=\EE_{\bX}[\bm{1}(\argmax_j \bX^\top\btheta_j=i)\bX\bX^\top]$, and note that $\bSigma^{\sharp}_{i}(\hat{\btheta}(\ell-1))=
\EE_{\bX}[\bm{1}(\argmax_j(\bX^\top\hth_j(\ell-1))=i)\bX\bX^\top]
=\EE_{\bX_\ell}\bSigma_{\ell i}$ for $\ell\in\cS_T$.
Similarly, define $\tilde{\bSigma}_{i}(t)=\sum_{\ell=1}^t\bm{1}(i_\ell=i)\bX_\ell^\top\bX_\ell$, \re{$\tilde \bSigma_{ij}(t)=\sum_{\ell\in\cT_{t,ij}}\bX_\ell^\top\bX_\ell$.}, $\tilde{\bSigma}_{\ell i}=\bm{1}(i_\ell=i)\bX_\ell^\top\bX_\ell$ and $\tilde{\bSigma}^{\sharp}_{i}(\btheta)=\EE_{\bX}[\bm{1}(\argmax_j \bX^\top\btheta_j=i)\bX^\top\bX]$. We begin by analyzing $\bSigma_{i}(t)$ and $\tilde{\bSigma}_{i}(t)$, and then analyze  $\bSigma_{ij}(t)$ and $\tilde \bSigma_{ij}(t)$. Finally,  leveraging the graph structure, we derive properties of $\nabla^{2} \cL_t(\btheta)$. The proofs of the following lemmas are provided in Section~\ref{app:sec:Hes}.

\begin{lemma}\label{lem:hes}
Let Assumptions~\ref{ass:Sigx} and \ref{ass:dif} hold. Conditioning on  event $\cA$, for $\bar{\btheta}$ such that $\max_{k\in[n]}\|\bar{\btheta}_{k}-\btheta^{*}_{k}\|_{2}= \epsilon_{0}$, we have $\|\bSigma^{\sharp}_{i}(\bar{\btheta})-\bSigma_{i}^*\|_2\lesssim \sqrt{\epsilon_{0}}$ and $|\tilde{\bSigma}^{\sharp}_{i}(\bar{\btheta})
-\tilde{\bSigma}_{i} |\lesssim \sqrt{\epsilon_{0}}$, where $i\in[n]$.
\end{lemma}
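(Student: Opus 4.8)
\textbf{Proof proposal for Lemma~\ref{lem:hes}.}

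The plan is to prove the two claimed bounds in parallel, since they have the same structure: both compare an expectation under the ``argmax with perturbed parameter'' event to an expectation under the ``argmax with true parameter'' event, and the only difference is whether we form $\bX\bX^\top$ (an $nd$-type outer product) or $\bX^\top\bX$ (a scalar). I will focus on the first bound $\|\bSigma^{\sharp}_{i}(\bar{\btheta})-\bSigma_{i}^*\|_2\lesssim \sqrt{\epsilon_0}$ and indicate how the scalar case is identical. First I would write the difference as an expectation of $\bX\bX^\top$ against the \emph{symmetric difference} of the two argmax events:
\[
\bSigma^{\sharp}_{i}(\bar{\btheta})-\bSigma_{i}^*=\EE_{\bX}\Big[\big(\ind(\argmax_j \bX^\top\bar\btheta_j=i)-\ind(\argmax_j \bX^\top\ths_j=i)\big)\,\bX\bX^\top\Big].
\]
Since $\|\bX\bX^\top\|_2=\|\bX\|_2^2\le C^2$ by Assumption~\ref{ass:Sigx}, taking spectral norms gives
\[
\|\bSigma^{\sharp}_{i}(\bar{\btheta})-\bSigma_{i}^*\|_2\le C^2\,\PP\big(\argmax_j \bX^\top\bar\btheta_j=i \text{ XOR } \argmax_j \bX^\top\ths_j=i\big),
\]
so the whole lemma reduces to showing that this probability of disagreement is $O(\sqrt{\epsilon_0})$.

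Next I would bound the disagreement probability. If $i$ is the argmax under one parameter but not the other, then there is some competitor $j\ne i$ whose relative score $\bX^\top(\ths_j-\ths_i)$ versus $\bX^\top(\bar\btheta_j-\bar\btheta_i)$ straddles zero, or more precisely the quantity $g(\btheta):=\max_{j\ne i}\bX^\top(\btheta_j-\btheta_i)$ changes sign between $\btheta=\ths$ and $\btheta=\bar\btheta$. By the Lipschitz property of the max and $\|\bX\|_2\le C$, we have $|g(\bar\btheta)-g(\ths)|\le 2C\max_k\|\bar\btheta_k-\ths_k\|_2=2C\epsilon_0$. Hence disagreement forces $|g(\ths)|\le 2C\epsilon_0$, i.e. $|\max_{j\ne i}\bX^\top(\ths_j-\ths_i)|\le 2C\epsilon_0$. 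Now Assumption~\ref{ass:dif} (in its strengthened anti-concentration form, $\sup_a\PP(|\max_{j\ne i}\bX^\top(\ths_j-\ths_i)-a|\le\delta)\le c\delta$, with $a=0$, $\delta=2C\epsilon_0$) bounds this by $2cC\epsilon_0$. This already gives an $O(\epsilon_0)$ bound — \emph{better} than the claimed $O(\sqrt{\epsilon_0})$ — provided we may invoke the anti-concentration bound with no extra logarithmic factor. The square-root rate in the statement presumably reflects the general (non-Gaussian) version where the anti-concentration bound carries a $\sqrt{\log(1/\delta)}$ factor, so that $\epsilon_0\cdot\sqrt{\log(1/\epsilon_0)}\lesssim\sqrt{\epsilon_0}$; alternatively it is a conservative statement that is enough for all later applications. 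I would therefore carry the argument through with whichever anti-concentration bound is actually available from the hypotheses and absorb logarithmic factors into the $\sqrt{\epsilon_0}$ on the right-hand side. The scalar case $|\tilde\bSigma^{\sharp}_i(\bar\btheta)-\tilde\bSigma_i|\lesssim\sqrt{\epsilon_0}$ is verbatim the same, replacing $\bX\bX^\top$ by $\bX^\top\bX=\|\bX\|_2^2\le C^2$ in the first display.

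The main obstacle — or rather the only point requiring care — is the reduction of the ``argmax changed'' event to a one-dimensional straddling event and the correct use of the conditioning on event $\cA$. One must be careful that $\bar\btheta$ here is an arbitrary deterministic point within $\epsilon_0$ of $\ths$ in the per-item $\ell_2$ sense (so the Lipschitz bound $|g(\bar\btheta)-g(\ths)|\le 2C\epsilon_0$ is uniform and deterministic given $\bX$), and that the expectation $\EE_\bX$ is over a fresh copy of $\bX$ independent of history, so that Assumption~\ref{ass:dif} applies directly to the law of $\max_{j\ne i}\bX^\top(\ths_j-\ths_i)$; the role of event $\cA$ is only to guarantee the graph/boundedness regularity under which $\|\bX\|_2\le C$ and the assumptions are in force. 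Everything else is the triangle inequality and the already-stated anti-concentration inequality, so I do not anticipate a genuine difficulty beyond bookkeeping the logarithmic factor against the $\sqrt{\epsilon_0}$ target.
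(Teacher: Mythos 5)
Your proposal is correct, and the core of the argument --- reducing the difference of the two matrices to the probability that the argmax disagrees between $\bar\btheta$ and $\ths$, then showing via the Lipschitz bound $|g(\bar\btheta)-g(\ths)|\le 2C\epsilon_0$ that disagreement forces $\max_{j\ne i}\bX^\top(\ths_j-\ths_i)$ into an $O(\epsilon_0)$-neighborhood of zero, and finally invoking Assumption~\ref{ass:dif} --- is exactly the paper's. The one place you genuinely diverge is the first step: the paper bounds the \emph{square} of the spectral norm by Cauchy--Schwarz, $\|\bSigma^\sharp_i(\bar\btheta)-\bSigma_i^*\|_2^2\le \EE[(\ind(\cdot)-\ind(\cdot))^2]\cdot\max_{\|\bv\|_2=1}\EE[(\bv^\top\bX\bX^\top\bv)^2]$, and the $\sqrt{\epsilon_0}$ in the statement comes precisely from taking the square root of the resulting $O(\epsilon_0)$ bound --- not from any logarithmic slack in the anti-concentration inequality, as you speculate. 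Your route instead pulls $\|\bX\bX^\top\|_2\le C^2$ out of the expectation directly and bounds the norm itself by $C^2\,\PP(\text{disagreement})\lesssim\epsilon_0$, which is sharper and implies the stated $\sqrt{\epsilon_0}$ bound (since the disagreement probability is at most $1$, so the left-hand side is always $\lesssim\min\{\epsilon_0,1\}\le\sqrt{\epsilon_0}$). So your version buys a tighter constant-free rate at no cost, while the paper's Cauchy--Schwarz version is what produces the square root it then carries through Lemma~\ref{lem:covT}; either suffices for all downstream uses. Your handling of the scalar case and of event $\cA$ (which plays no active role here) matches the paper, which simply omits the symmetric argument for $\tilde\bSigma$.
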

\begin{lemma}\label{app:lem:estil}
Let Assumption~\ref{ass:Sigx} hold, and let $c_{\bX}$ represent $\max\|\bX\|_2$. For each $t\geq T_0$ and $i\in[n]$, we have
\begin{align*}
\max\Big\{\Big\|\sum_{\ell\in\cS_t}\Big(\bSigma_{\ell i}-\bSigma^{\sharp}_{i}(\hat{\btheta}(\ell-1))\Big)\Big\|_2,
\Big|\sum_{\ell\in\cS_t}\Big(\tilde{\bSigma}_{\ell i}-\tilde{\bSigma}^{\sharp}_{i}(\hat{\btheta}(\ell-1))\Big)\Big|
\Big\}\leq 4c_{\bX}\sqrt{6|\cS_t|\log T}
\end{align*}
holds with probability $1-\tilde{C}_1T^{-3}$, where $\tilde{C}_1$ is a constant.
\end{lemma}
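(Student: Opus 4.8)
The plan is to condition on the exploration coins so that the random summation set $\cS_t$ becomes deterministic, and then recognize each sum as the operator norm (resp.\ absolute value) of a matrix (resp.\ scalar) martingale and apply a Freedman-type inequality; notably, no control on $\hth(\ell-1)-\ths$ is needed, so this is a purely probabilistic estimate. Concretely, I would condition on $\xi_1,\dots,\xi_{T_0}$, which freezes $\cS_t$. On $\cS_t$ the major item is $i_\ell=\argmax_j\bX_\ell^\top\hth_j(\ell-1)$, so by the identity $\bSigma^{\sharp}_i(\hth(\ell-1))=\EE_{\bX_\ell}\bSigma_{\ell i}$ recorded just above the lemma together with $\bX_\ell\perp\cH_{\ell-1}$ and the $\cH_{\ell-1}$-measurability of $\hth(\ell-1)$, the sequence $\bD_\ell:=\bm{1}(\ell\in\cS_t)\big(\bSigma_{\ell i}-\bSigma^{\sharp}_i(\hth(\ell-1))\big)$, $\ell\le t$, is a symmetric $d\times d$ matrix martingale difference sequence with respect to $\{\cH_\ell\}$, whose partial sum over $\ell\le t$ equals $\sum_{\ell\in\cS_t}\big(\bSigma_{\ell i}-\bSigma^{\sharp}_i(\hth(\ell-1))\big)$. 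The scalar sequence $\bm{1}(\ell\in\cS_t)\big(\tilde\bSigma_{\ell i}-\tilde\bSigma^{\sharp}_i(\hth(\ell-1))\big)$ is handled in exactly the same way.

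Next I would collect the two quantities Freedman's inequality requires. Assumption~\ref{ass:Sigx} gives $\|\bX_\ell\bX_\ell^\top\|_2=\|\bX_\ell\|_2^2\le c_{\bX}^2$, hence $\|\bSigma_{\ell i}\|_2\le c_{\bX}^2$ and $\|\bSigma^{\sharp}_i(\hth(\ell-1))\|_2\le c_{\bX}^2$, so the increments obey $\|\bD_\ell\|_2\le 2c_{\bX}^2$ almost surely. For the predictable quadratic variation, the pointwise identity $\bSigma_{\ell i}^2=\|\bX_\ell\|_2^2\,\bSigma_{\ell i}\preceq c_{\bX}^2\,\bSigma_{\ell i}$ together with the $\cH_{\ell-1}$-measurability of $\bSigma^{\sharp}_i(\hth(\ell-1))$ gives, for $\ell\in\cS_t$, $\EE[\bD_\ell^2\mid\cH_{\ell-1}]=\EE[\bSigma_{\ell i}^2\mid\cH_{\ell-1}]-\big(\bSigma^{\sharp}_i(\hth(\ell-1))\big)^2\preceq c_{\bX}^2\,\bSigma^{\sharp}_i(\hth(\ell-1))$, while $\bD_\ell^2=0$ for $\ell\notin\cS_t$; summing, $\big\|\sum_{\ell\le t}\EE[\bD_\ell^2\mid\cH_{\ell-1}]\big\|_2\le c_{\bX}^2\sum_{\ell\in\cS_t}\|\bSigma^{\sharp}_i(\hth(\ell-1))\|_2\le c_{\bX}^4\,|\cS_t|$. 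The scalar sequence yields the analogous bounds with the same constants.

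I would then apply the matrix Freedman inequality (Tropp) to $\pm\bD_\ell$ to capture the full operator norm: with variation bound $W=c_{\bX}^4|\cS_t|$ and increment bound $R=2c_{\bX}^2$, the probability that $\big\|\sum_{\ell\le t}\bD_\ell\big\|_2$ exceeds $s$ is at most $2d\exp\big(-s^2/(2W+2Rs/3)\big)$. It is convenient to split: if $|\cS_t|\le 24\log T$, the triangle inequality already gives $\big\|\sum_{\ell\in\cS_t}(\bSigma_{\ell i}-\bSigma^{\sharp}_i(\hth(\ell-1)))\big\|_2\le 2c_{\bX}^2|\cS_t|\le 4c_{\bX}^2\sqrt{6|\cS_t|\log T}$ deterministically; otherwise $|\cS_t|\gtrsim\log T$, so for $s$ of the stated order $c_{\bX}^2\sqrt{|\cS_t|\log T}$ the term $W$ dominates $Rs/3$ in the denominator, the exponent is $\lesssim-\log T$, and the bound holds with probability at least $1-2d\,T^{-c}$ for a large absolute constant $c$. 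The scalar sum is controlled identically via the scalar Bernstein inequality. All these bounds are uniform in the conditioning coins, so they survive integrating out $\{\xi_\ell\}$; since $d$ is of smaller order than any fixed power of $T$, a union bound over the matrix event and the scalar event produces the claimed probability $1-\tilde C_1T^{-3}$.

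The step requiring the most care is the construction of the martingale: the index set $\cS_t$ is itself random, being governed by the exploration coins, which is why one conditions on the coins to freeze it; and the martingale-difference property rests on the identity $\EE[\bSigma_{\ell i}\mid\cH_{\ell-1}]=\bSigma^{\sharp}_i(\hth(\ell-1))$ on exploitation rounds, which in turn uses that $\hth(\ell-1)$ is frozen given $\cH_{\ell-1}$ while the fresh context $\bX_\ell$ is independent of $\cH_{\ell-1}$. Once the martingale structure and the two Freedman inputs are in place, the remainder — the operator-norm and quadratic-variation bounds and the constant chasing — is routine.
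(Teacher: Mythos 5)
Your proof is correct and follows essentially the same route as the paper's: identify the summands as a matrix (resp.\ scalar) martingale difference sequence with respect to $\{\cH_\ell\}$, bound the increments and the predictable quadratic variation via $\|\bX\|_2\le c_{\bX}$, and invoke the matrix Freedman inequality (Lemma~\ref{FreedmanRec}). Your extra care --- conditioning on the exploration coins to freeze the random index set $\cS_t$, the sharper variance bound $\EE[\bD_\ell^2\mid\cH_{\ell-1}]\preceq c_{\bX}^2\,\bSigma^{\sharp}_i(\hth(\ell-1))$, and the explicit split on $|\cS_t|\lesssim\log T$ --- only tightens the same argument, which the paper applies directly to $\{X_\ell\}_{\ell\in\cS_t}$ without these refinements.
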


Let $\tilde{c}_0\leq \lambda_{\min}(\bSigma_{i}^*)\leq \tilde{\bSigma}_{i}\leq \tilde{c}_1$ for all $i\in[n]$. 
We show  the  event 
\begin{align*}
\cA_t^1=\Big\{s\tilde{c}_0/2\leq \lambda_{\min}(\bSigma_{i}(s))\leq \tilde{\bSigma}_{i}(s)\leq 3s\tilde{c}_1/2,\, T_0\leq s\leq t, i\in[n]\Big\}
\end{align*}
holds with high probability. Specifically, we decompose $\cA_t^1$ into the following two events
\begin{align*}
&\cA_t^{1,1}=\Big\{\lambda_{\min}(\sum_{\ell\in \cS_{s}\setminus [\tilde{T}_0]}\bSigma_{i}^*)/2
-\|\sum_{\ell\in \cS_{s}\setminus [\tilde{T}_0]}\Big(\bSigma_{\ell i}-\bSigma^{\sharp}_{i}(\hat{\btheta}(\ell-1))\Big)\|_2\geq 3s\tilde{c}_0/8,\, T_0\leq s\leq t, i\in[n]\Big\},\\
&\cA_t^{1,2}=\Big\{\lambda_{\min}(\sum_{\ell\in \cS_{s}\setminus [\tilde{T}_0]}\bSigma_{i}^*)/2
-\|\sum_{\ell\in \cS_{s}\setminus [\tilde{T}_0]}\Big(\bSigma^{\sharp}_{i}(\hat{\btheta}(\ell-1))-\bSigma_{i}^*\Big)\|_2\geq 3s\tilde{c}_0/8,\, T_0\leq s\leq t, i\in[n]\Big\}.
\end{align*}
Combining the results of Lemmas~\ref{lem:hes}, \ref{app:lem:estil}, and Theorem~\ref{thm:2rate}, we bound the eigenvalues of $\bSigma_{i}(t)$ and $\tilde{\bSigma}_{i}(t)$ in Lemma~\ref{lem:covT}. Since the historical pulling times of item pairs affect the convergence rate of the current estimation, which in turn influences the properties of the Hessian matrix, we present the following result in an inductive manner. The provision of the history estimation error is established in Corollary~\ref{cor}. Recall that we have $\tilde T_0=\max\{(n^2p)^{1/\alpha+1}(\log T)^{1/(1-\alpha)},\\ n^{6/(1-2\alpha)}(\log T)^{1/(1-2\alpha)}/(pM)\}$.
\begin{lemma}\label{lem:covT}
Let Assumption~\ref{ass:dif} and  event $\cA$ hold. Under the conditions of Theorem~\ref{thm:2rate}, there exist constants $c$ and $\tilde{C}$, such that for $T_0\geq c\tilde T_0$, we have $\cA_{T_0}^{1,1}\cap \cA_{T_0}^{1,2}$ holds with probability at least $1-\tilde{C}T_0/T^3$. In addition, $\cA_{s}^{1,1}\cap \cA_{s}^{1,2}$ holds for any $s\in [T_0+1,T]$ with probability at least  $1-\tilde{C}s/T^3$, provided that
$\max_{k\in[n]}\|\hat{\btheta}_k(t)-\ths_k\|_2
\lesssim \sqrt{\frac{n\log T}{ptM}}$ holds with probability at least $1-\tilde{C}_2 T^{-3}$ for $t\in[T_0,s-1]$, where $\tilde{C}_2$ is a constant.
\end{lemma}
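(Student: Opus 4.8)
The plan is to prove the claim about $\cA_{T_0}^{1,1}\cap\cA_{T_0}^{1,2}$ and the inductive step about $\cA_s^{1,1}\cap\cA_s^{1,2}$ by a uniform-in-$s$ union bound combined with the decomposition of $\bSigma_i(s)$ into the deterministic expectation part, the martingale deviation part, and the model-misspecification bias part driven by the historical estimation error. First I would restrict attention to the index set $\cS_s\setminus[\tilde T_0]$, i.e., the exploitation rounds after the burn-in period, so that Theorem~\ref{thm:2rate} (applied at time $\ell-1\geq\tilde T_0$) guarantees $\max_{k\in[n]}\|\hth_k(\ell-1)-\ths_k\|_2\lesssim \sqrt{\log T/M}\, n(\ell-1)^{\alpha-1/2}$, which for $\ell\geq\tilde T_0$ is smaller than any prescribed small constant $\epsilon_0$ because of the defining lower bound in \eqref{tT0}: the term $n^{6/(1-2\alpha)}(\log T)^{1/(1-2\alpha)}/(pM)$ forces $\sqrt{\log T/M}\,n\tilde T_0^{\alpha-1/2}\lesssim \sqrt{p/n}\leq$ const. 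Then $|\cS_s\setminus[\tilde T_0]|\geq s-\tilde T_0-(\text{exploration rounds})\gtrsim s$ once $T_0\geq c\tilde T_0$, since the number of exploration rounds up to $T_0$ is $\sum_{t\le T_0}t^{-\alpha}\asymp T_0^{1-\alpha}=o(T_0)$.

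The core estimate is the triangle-inequality split, for each fixed $i$ and each $s$,
\begin{align*}
\Big\|\bSigma_i(s)-\sum_{\ell\in\cS_s\setminus[\tilde T_0]}\bSigma_i^*\Big\|_2
\leq \underbrace{\Big\|\sum_{\ell\in\cS_s\setminus[\tilde T_0]}\big(\bSigma_{\ell i}-\bSigma_i^\sharp(\hth(\ell-1))\big)\Big\|_2}_{\text{martingale}}
+\underbrace{\Big\|\sum_{\ell\in\cS_s\setminus[\tilde T_0]}\big(\bSigma_i^\sharp(\hth(\ell-1))-\bSigma_i^*\big)\Big\|_2}_{\text{bias}}
+ (\text{contribution of }[\tilde T_0]),
\end{align*}
with the analogous inequality for $\tilde\bSigma_i(s)$. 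The martingale term is controlled by Lemma~\ref{app:lem:estil}: it is $\leq 4c_{\bX}\sqrt{6|\cS_s|\log T}\lesssim\sqrt{s\log T}$ on an event of probability $1-\tilde C_1 T^{-3}$ for each $(i,s)$. The bias term is handled by Lemma~\ref{lem:hes}: each summand is $\lesssim \sqrt{\epsilon_0^{(\ell)}}$ with $\epsilon_0^{(\ell)}=\max_k\|\hth_k(\ell-1)-\ths_k\|_2$, and since by Theorem~\ref{thm:2rate} / the induction hypothesis $\epsilon_0^{(\ell)}\lesssim n^2(\ell-1)^{2\alpha-1}\log T/M$ in the first stage and $\lesssim n\log T/(p(\ell-1)M)$ in the second, summing gives $\sum_\ell\sqrt{\epsilon_0^{(\ell)}}=o(s)$ — again the lower bound \eqref{tT0} on $\tilde T_0$ is exactly what makes this $o(s\tilde c_0)$ rather than $O(s)$. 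The contribution of the first $\tilde T_0$ indices is trivially $\leq \tilde T_0 c_{\bX}^2=o(s)$ once $s\geq T_0\geq c\tilde T_0$. Putting these together, $\|\bSigma_i(s)-\sum_{\ell}\bSigma_i^*\|_2\leq s\tilde c_0/8$ with high probability, and since $\sum_{\ell\in\cS_s\setminus[\tilde T_0]}\bSigma_i^*$ has minimum eigenvalue $\geq |\cS_s\setminus[\tilde T_0]|\tilde c_0\geq 3s\tilde c_0/4$ and spectral norm $\leq s\tilde c_1$, we obtain $\lambda_{\min}(\bSigma_i(s))\geq s\tilde c_0/2$ and $\tilde\bSigma_i(s)\leq 3s\tilde c_1/2$; this is precisely $\cA_s^{1,1}\cap\cA_s^{1,2}$.

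For the probability bookkeeping, I would take a union bound over $i\in[n]$ and over $s\in[T_0+1,T]$ (for the inductive part) or $s\le T_0$ (for the base part), each event failing with probability $O(T^{-3})$, plus the event $\cA$ and the event from Theorem~\ref{thm:2rate}/the induction hypothesis that fail with probability $O(\tilde C_2 T^{-3})$; this yields the stated $1-\tilde C T_0/T^3$ and $1-\tilde C s/T^3$ bounds. The main obstacle — and the step deserving the most care — is showing the bias term is genuinely negligible at the \emph{smallest} admissible time $\ell\approx\tilde T_0$: there $\epsilon_0^{(\ell)}$ is only as small as a constant (not vanishing), so one must verify that this constant can be made smaller than the tolerance $\tilde c_0/(8c)$ dictated by Lemma~\ref{lem:hes}'s implied constant, which is why $T_0\geq c\tilde T_0$ is required with $c$ sufficiently large rather than merely $T_0\geq\tilde T_0$; the careful tracking of which constant depends on which, and the self-consistency of the induction (the bound proved at time $s$ is the same bound assumed at times $\le s-1$), is the delicate part.
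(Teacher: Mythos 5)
Your proposal follows essentially the same route as the paper's proof: the same three-way decomposition of $\bSigma_i(s)$ into the expected part, the martingale deviation controlled by Lemma~\ref{app:lem:estil}, and the bias $\sum_\ell\|\bSigma_i^\sharp(\hth(\ell-1))-\bSigma_i^*\|_2$ controlled by Lemma~\ref{lem:hes} together with the stage-one rate from Theorem~\ref{thm:2rate} (respectively the assumed stage-two rate), plus the same burn-in/exploration bookkeeping and union bound. The only cosmetic difference is that you bound the full sum directly at each $s$ while the paper advances one increment at a time from $\cA_{s-1}^{1,1}\cap\cA_{s-1}^{1,2}$ to $\cA_{s}^{1,1}\cap\cA_{s}^{1,2}$; your identification of the delicate point (the bias at $\ell\approx\tilde T_0$ being only constant-small, forcing $T_0\geq c\tilde T_0$ with $c$ large) matches the role of the constraints on $\tilde T_0$ in the paper's argument.
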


\begin{lemma}\label{lem:Sigij}
Let Assumption~\ref{ass:Sigx} hold. For $t\geq T_0$, conditioning on  event $\cA\cap \cA_t^1$, we have
\begin{align*}
\cA_t^2=\{t\tilde{c}_0/(2np)\leq \lambda_{\min}(\bSigma_{ij}(t))\leq \tilde{\bSigma}_{ij}(t)\leq 13t\tilde{c}_1/(2np),\, (i,j)\in\cE\}
\end{align*} holds with probability at least  $1-O(T^{-3})$.
\end{lemma}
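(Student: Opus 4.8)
\textbf{Proof proposal for Lemma~\ref{lem:Sigij}.}

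The plan is to reduce the control of $\bSigma_{ij}(t)=\sum_{\ell\in\cT_{t,ij}}\bX_\ell\bX_\ell^\top$ to the already-established control of the single-item matrices $\bSigma_i(t)=\sum_{\ell=1}^t\bm{1}(i_\ell=i)\bX_\ell\bX_\ell^\top$ on the event $\cA_t^1$, by exploiting that, for every time step $\ell$ in the exploitation phase, the minor item $j_\ell$ is chosen uniformly at random among the neighbors of the major item $i_\ell$ in the ER graph $\cG$, independently of $\bX_\ell$ and the history $\cH_{\ell-1}$. First I would write $\cT_{t,ij}=\{\ell:\ba_\ell=(i,j)\}$ and split according to which of $i,j$ was the major item; it suffices to lower/upper bound $\sum_{\ell\in[t]}\bm{1}(i_\ell=i)\bm{1}(j_\ell=j)\bX_\ell\bX_\ell^\top$ for a fixed edge $(i,j)\in\cE$, since adding the symmetric term $(i_\ell=j,j_\ell=i)$ only helps the lower bound and the upper bound follows by summing. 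Conditioning on $i_\ell=i$, the event $\{j_\ell=j\}$ has probability $1/d_i$ where $d_i=|\{k:(i,k)\in\cE\}|$ is the degree of $i$; on event $\cA$ (the graph regularity event from Lemma~\ref{dgr}) we have $d_i\asymp np$ for all $i$, so each selected-for-comparison step at major item $i$ routes to $j$ with probability $\asymp 1/(np)$.

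The key steps, in order, are: (i) Decompose $\bSigma_{\ell i}\bm{1}(j_\ell=j)$ into its conditional mean $\frac{1}{d_i}\bSigma_{\ell i}$ (conditioning on $\cH_{\ell-1}$ and $\bX_\ell$, using independence of $\bb(i_\ell)$) plus a martingale difference; summing over $\ell$, the mean part equals $\frac{1}{d_i}\sum_{\ell\in[t]}\bm{1}(i_\ell=i)\bX_\ell\bX_\ell^\top=\frac{1}{d_i}\bSigma_i(t)$, which on $\cA_t^1$ lies between $\frac{t\tilde c_0}{2d_i}\Ib_d$ and $\frac{3t\tilde c_1}{2d_i}\Ib_d$. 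With $d_i\asymp np$ this already gives the claimed order $t/(np)$ for the ``signal''; the constants $1/2$ and $13/2$ will absorb the graph-degree fluctuation and the martingale error. (ii) Control the martingale remainder $\sum_{\ell\le t}(\bm{1}(j_\ell=j)-\tfrac1{d_i})\bm{1}(i_\ell=i)\bX_\ell\bX_\ell^\top$ by the matrix Freedman / matrix Bernstein inequality for martingale difference sequences: each increment has spectral norm $\le c_{\bX}^2=O(1)$ and the predictable quadratic variation is bounded by $\sum_\ell \tfrac{1}{d_i}\bm{1}(i_\ell=i)\|\bX_\ell\|^2\bX_\ell\bX_\ell^\top \preceq \tfrac{c_{\bX}^2}{d_i}\bSigma_i(t)$, which on $\cA_t^1$ is $O(t/(np))\Ib_d$. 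The matrix Freedman bound then yields a deviation of order $\sqrt{(t/(np))\log(dT)}+\log(dT)$, which is $o(t/(np))$ precisely because $t\ge T_0\ge c\tilde T_0$ is large enough (this is where the burn-in lower bound on $T_0$ enters). (iii) Take a union bound over all $|\cE|\le n^2$ edges and, if a uniform-in-$s$ statement is wanted, a further union over $s\in[T_0,t]$; since the error probabilities are polynomially small in $T$ and $t\le T$, the total failure probability stays $O(T^{-3})$. (iv) For $\tilde\bSigma_{ij}(t)=\sum_{\ell\in\cT_{t,ij}}\bX_\ell^\top\bX_\ell=\sum\|\bX_\ell\|_2^2$, run the identical argument with scalar Freedman (Bernstein) in place of matrix Freedman, using $\tilde\bSigma_i(t)\le 3t\tilde c_1/2$ on $\cA_t^1$; this gives the upper bound $13t\tilde c_1/(2np)$ with room to spare.

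The main obstacle I expect is step (ii): getting the martingale concentration \emph{conditional} on $\cA_t^1$ rather than unconditionally, because $\cA_t^1$ is itself defined through the whole trajectory up to time $t$ and is not $\cH_{\ell-1}$-measurable, so one cannot naively apply a stopping-time-free Freedman bound ``on the event $\cA_t^1$''. The clean fix is to define a stopped process: let $\tau$ be the first time $s$ at which $\cA_s^1$ fails (or $t$ if it never fails), run the matrix Freedman inequality for the stopped martingale $\sum_{\ell\le s\wedge\tau}(\cdots)$ whose predictable quadratic variation is bounded \emph{deterministically} by $O(t/(np))\Ib_d$ by construction, and then note that on $\cA_t^1$ the stopped and unstopped sums coincide. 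A secondary, more bookkeeping-level obstacle is that the degrees $d_i$ vary across $i$ and the random ER degrees must be replaced by their typical value $np$; this is handled by working on event $\cA$ from Lemma~\ref{dgr}, where $\max_i|d_i-(n-1)p|$ is controlled, and choosing the numerical constants $1/2$ and $13/2$ loosely enough to swallow both the $\pm$ degree fluctuation and the Freedman error. Everything else is routine: the algebraic reduction, the quadratic-variation bound, and the union bounds are all standard once the stopped-martingale device is in place.
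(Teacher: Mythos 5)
Your proposal is correct and follows essentially the same route as the paper's proof: decompose $\bSigma_{ij}(t)$ by major item, center $\bm{1}(\bb_\ell(i)=j)$ at its conditional mean $1/d_i$ to get $\bSigma_{ij}(t)\approx \frac{1}{d_i}\bSigma_i(t)+\frac{1}{d_j}\bSigma_j(t)$, bound the martingale remainder via matrix Freedman with quadratic variation controlled by $\cA_t^1$ and degrees by $\cA$, and conclude using $t\gg n^2p\log(nT)$. Your stopped-process remark addresses a measurability subtlety that the paper's write-up passes over silently, but it does not change the argument.
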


Let $\max_{1\leq i< j\leq n}\|\btheta^{*}_{i}-\btheta^{*}_{j}\|_{2}= c_{\btheta}$. Set $\kappa=e^{2c_{\bX}c_{\btheta}}$. By Assumptions~\ref{ass:Sigx} and \ref{assmp:kappa}, there exists constant $C$, such that $\kappa\leq C$. By Lemmas~\ref{lem:Sigij} and \ref{eigmin}, we obtain the following bound on the eigenvalues of the Hessian. 
\begin{lemma}\label{Hessmin}
Under Assumptions~\ref{ass:Sigx} and \ref{assmp:kappa}, conditioning on  event $\cA\cap \cA_0\cap \cA_t^2$, for $\btheta\in\Theta$ such that $\max_{i\in [n]}\|\btheta_{i}-\btheta^{*}_{i}\|_{2}\leq \epsilon_{1}$, we have
\begin{align*}
\lambda_{\min, \perp}\left(\nabla^2 \cL_t(\btheta)\right) \geq \frac{\tilde{c}_0tM}{8\kappa e^{2\epsilon_1c_{\bX}}},
\end{align*}
and for all $\btheta\in\Theta$, we have
\begin{align*}
\lambda_{\max }\left(\nabla^2 \mathcal{L}_{t}(\btheta)\right) \leq 3\tilde{c}_1tM.
\end{align*}
\end{lemma}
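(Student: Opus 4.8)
The plan is to bound the eigenvalues of $\nabla^2\cL_t(\btheta)$ by separating the Bernoulli-weight factor
$w_{\ell ij}(\btheta) := e^{\bX_\ell^\top\btheta_i}e^{\bX_\ell^\top\btheta_j}/(e^{\bX_\ell^\top\btheta_i}+e^{\bX_\ell^\top\btheta_j})^2$
from the rank-one outer products $(\bc_{\ell i}-\bc_{\ell j})(\bc_{\ell i}-\bc_{\ell j})^\top$, and then identifying the remaining matrix with a Kronecker-structured graph Laplacian so that Lemmas~\ref{eigmin}, \ref{lem:Sigij}, and \ref{lem:covT} (encoded in $\cA_t^1$, $\cA_t^2$, $\cA_0$) can be applied. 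First I would record the elementary two-sided bound on $w_{\ell ij}$: since the logistic weight $x\mapsto e^x/(1+e^x)^2$ is at most $1/4$ and at least $\tfrac14 e^{-|x|}$ with $x=\bX_\ell^\top(\btheta_i-\btheta_j)$, and since $|\bX_\ell^\top(\btheta_i-\btheta_j)|\le c_{\bX}(\|\btheta_i-\ths_i\|_2+\|\ths_i-\ths_j\|_2+\|\btheta_j-\ths_j\|_2)\le c_{\bX}(2\epsilon_1+c_\btheta)$ on the prescribed neighborhood, we get
$\tfrac{1}{4\kappa e^{2\epsilon_1 c_{\bX}}}\le w_{\ell ij}(\btheta)\le \tfrac14$ for all $\ell,i,j$, using $\kappa=e^{2c_{\bX}c_\btheta}$.

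For the upper bound, dropping the weights down to $1/4$ gives $\nabla^2\cL_t(\btheta)\preceq \tfrac{M}{4}\sum_{(i,j)\in\cE,i>j}\sum_{\ell\in\cT_{t,ij}}(\bc_{\ell i}-\bc_{\ell j})(\bc_{\ell i}-\bc_{\ell j})^\top$. I would then observe that this sum telescopes: $\sum_{(i,j)\in\cE,i>j}\sum_{\ell\in\cT_{t,ij}}(\bc_{\ell i}-\bc_{\ell j})(\bc_{\ell i}-\bc_{\ell j})^\top \preceq 2\sum_{(i,j)}\sum_\ell(\bc_{\ell i}\bc_{\ell i}^\top+\bc_{\ell j}\bc_{\ell j}^\top) = 2\sum_{i\in[n]}\sum_{j:(i,j)\in\cE}\sum_{\ell\in\cT_{t,ij}}\be_i\be_i^\top\otimes(\bX_\ell\bX_\ell^\top)$, and each diagonal block is $\sum_{j:(i,j)\in\cE}\bSigma_{ij}(t)$, which on $\cA_t^2$ is $\preceq 13t\tilde c_1/(2np)\cdot(\text{deg}_i)\,\Ib_d$; since the degree is $O(np)$ on $\cA_0$, each block has operator norm $\lesssim t\tilde c_1$, giving $\lambda_{\max}(\nabla^2\cL_t(\btheta))\le 3\tilde c_1 tM$ after tracking the constants. (Alternatively, and more directly, bound $\lambda_{\max}$ by the maximum over $i$ of the block-sum and invoke the degree bound from $\cA_0$.)

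For the lower bound, restrict to $\btheta$ in the span $\Theta$ and a fixed $\bm v\in\Theta$. Using the weight lower bound, $\bm v^\top\nabla^2\cL_t(\btheta)\bm v \ge \tfrac{M}{4\kappa e^{2\epsilon_1 c_{\bX}}}\,\bm v^\top\big(\sum_{(i,j)\in\cE,i>j}\Ab_{ij}\otimes\bSigma_{ij}(t)\big)\bm v$; here I use that $\sum_\ell\in\cT_{t,ij}(\bc_{\ell i}-\bc_{\ell j})(\bc_{\ell i}-\bc_{\ell j})^\top = \Ab_{ij}\otimes\bSigma_{ij}(t)$ in the appropriate coordinate identification. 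On $\cA_t^2$ we have $\bSigma_{ij}(t)\succeq \tfrac{t\tilde c_0}{2np}\Ib_d$ for every edge, so this matrix dominates $\tfrac{t\tilde c_0}{2np}\sum_{(i,j)\in\cE,i>j}\Ab_{ij}\otimes\Ib_d = \tfrac{t\tilde c_0}{2np}(L_\cG\otimes\Ib_d)$ where $L_\cG$ is the graph Laplacian; Lemma~\ref{eigmin} (with $\bSigma_1=\Ib_d$, $r=1$, event $\cA_0$) gives $\lambda_{\min,\perp}(L_\cG\otimes\Ib_d)\ge pn/2$. Combining, $\lambda_{\min,\perp}(\nabla^2\cL_t(\btheta)) \ge \tfrac{M}{4\kappa e^{2\epsilon_1 c_{\bX}}}\cdot\tfrac{t\tilde c_0}{2np}\cdot\tfrac{pn}{2} = \tfrac{\tilde c_0 tM}{16\kappa e^{2\epsilon_1 c_{\bX}}}$, which is the claimed bound up to the absolute constant (I would re-examine whether the $8$ versus $16$ comes from a sharper telescoping in $\cA_t^2$).

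The main obstacle is the bookkeeping to make the Kronecker/Laplacian identification rigorous: one must be careful that $\sum_{(i,j)\in\cE}\Ab_{ij}\otimes\bSigma_{ij}(t)$ is compared to $\bar L_\cG$ in the sense of Lemma~\ref{eigmin} \emph{coordinate-wise per edge} (the $\bSigma_{ij}(t)$ differ across edges, so one cannot directly pull out a single $\bSigma_1$), which is why the argument first lower-bounds each $\bSigma_{ij}(t)$ uniformly by a scalar multiple of $\Ib_d$ before invoking the Laplacian spectral gap. The restriction to $\Theta$ (the identifiability subspace) and the use of $\lambda_{\min,\perp}$ rather than $\lambda_{\min}$ must be propagated consistently, since $\nabla^2\cL_t$ is genuinely singular on the all-ones direction $\bm 1_n\otimes\RR^d$. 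Everything else is the constant-chasing already sketched, and all probabilistic content is absorbed into conditioning on $\cA\cap\cA_0\cap\cA_t^2$, whose probabilities were controlled in the preceding lemmas.
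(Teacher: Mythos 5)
Your proposal follows essentially the same route as the paper: bound the logistic weights uniformly in $[\tfrac{1}{4\kappa e^{2\epsilon_1 c_{\bX}}},\tfrac14]$, rewrite the unweighted sum as $\sum_{(i,j)\in\cE}\Ab_{ij}\otimes\bSigma_{ij}(t)$, replace each $\bSigma_{ij}(t)$ by a scalar multiple of $\Ib_d$ via $\cA_t^2$, and invoke Lemma~\ref{eigmin} on $\cA_0$ for the Laplacian spectral gap (the paper uses this same Laplacian bound for the upper bound as well, which recovers the constant $3\tilde c_1 tM$ more tightly than your telescoping step). The $8$-versus-$16$ question you flag is not a gap in your argument: tracing Lemma~\ref{eigmin} through the paper's own display \eqref{lammin} also yields $\tilde c_0 tM/4$ rather than the stated $\tilde c_0 tM/2$, so the discrepancy is a constant-bookkeeping issue in the paper, not a missing idea on your part.
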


\subsection{Proof of the Convergence Rate}\label{app:sec:err2}
We denote   $\hth(t)$ as $\hth$  in the proof for simplicity. We first present the properties of the gradient. Then we derive the convergence rate leveraging a gradient descent sequence, which serves as an intermediary between $\hth$ and $\ths$. The proofs of the following lemmas are provided in Section~\ref{app:sec:ratepf}.

\begin{lemma}\label{gr}
Under Assumption~\ref{ass:Sigx}, conditioning on $\cA\cap \cA_0\cap\cA_t^2$, there exists a constant $c_1$  that the following event
\begin{align*}
\cA_t^3=\left\{\|\nabla\mathcal{L}_{\lambda,t}(\btheta^{*})\|_{2}\leq c_1\sqrt{\tilde{c}_1ntM\log T}\right\}
\end{align*}
holds with probability at least $1-O(T^{-3})$, where  $\lambda\leq  c_{\lambda}\sqrt{\tilde{c}_1tM\log T}$.
\end{lemma}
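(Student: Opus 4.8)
The plan is to bound $\|\nabla\cL_{\lambda,t}(\ths)\|_2$ by controlling the two pieces $\nabla\cL_t(\ths)$ and $\lambda_t\ths$ separately, and then combining via the triangle inequality. Since $\nabla\cL_{\lambda,t}(\btheta)=\nabla\cL_t(\btheta)+\lambda_t\btheta$, we have $\|\nabla\cL_{\lambda,t}(\ths)\|_2\leq \|\nabla\cL_t(\ths)\|_2+\lambda_t\|\ths\|_2$. The second term is immediately controlled: Assumption~\ref{assmp:kappa} (together with the identifiability constraint $\sum_i\ths_i=\bm 0$) gives $\|\ths\|_2\lesssim\sqrt n$, so $\lambda_t\|\ths\|_2\lesssim \sqrt{tM\log T/n}\cdot\sqrt n=\sqrt{tM\log T}$, which is of the stated order (indeed smaller than $\sqrt{ntM\log T}$). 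So the work is entirely in bounding $\|\nabla\cL_t(\ths)\|_2$, and here I would reuse the mechanism of Lemma~\ref{lem:gr} from the $\epsilon$-greedy analysis, now adapted to the full trajectory and conditioning on the Hessian/covariate events.

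The key steps for $\nabla\cL_t(\ths)$: write out the gradient explicitly as
\begin{align*}
\nabla\cL_t(\ths)=\sum_{(i,j)\in\cE,i>j}\sum_{\ell\in\cT_{t,ij}}\sum_{m=1}^M\Big(\frac{e^{\bX_\ell^\top\ths_i}}{e^{\bX_\ell^\top\ths_i}+e^{\bX_\ell^\top\ths_j}}-y_{ji}^{(m)}(\ell)\Big)(\bc_{\ell i}-\bc_{\ell j}),
\end{align*}
and observe that each summand, conditionally on $\cH_{\ell-1}$ and $\bX_\ell$ (which fixes the action $\ba_\ell$ and the Bernoulli parameter), is a bounded, mean-zero increment of a vector-valued martingale indexed by the ordered pairs $(\ell,m)$. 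The magnitude of each increment is at most $2\|\bc_{\ell i}-\bc_{\ell j}\|_2\le 4c_{\bX}$, and the total number of increments is $tM$. A matrix/vector Freedman- or Bernstein-type martingale concentration inequality (the same tool invoked in the proof sketch of Theorem~\ref{thm:BT} and used in Lemma~\ref{app:lem:estil}) then yields $\|\nabla\cL_t(\ths)\|_2\lesssim\sqrt{tM\log(ndT)}$ with probability $1-O(T^{-3})$ — but I would be more careful about the dimension factor. Because the increments live in an $nd$-dimensional space but each individual increment is supported on only two of the $n$ blocks, the predictable quadratic variation in the $i$-th block is of order $M\sum_{\ell\le t}\bm 1(i\in\ba_\ell)\bX_\ell\bX_\ell^\top$, and on the event $\cA_t^2$ from Lemma~\ref{lem:Sigij} this is $O(tM/(np))$ per block; summing over blocks and applying a union bound over $n$ coordinates gives $\|\nabla\cL_t(\ths)\|_2\lesssim\sqrt{n\cdot (tM/(np))\cdot\log T}=\sqrt{(tM/p)\log T}$. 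This already matches the claimed $\sqrt{\tilde c_1 ntM\log T}$ up to the $1/p$ versus $n$ tradeoff, and under $np>C\log n$ the crude bound $\sqrt{n tM\log T}$ certainly dominates, so the stated (deliberately loose) bound follows. I would verify that the conditioning on $\cA\cap\cA_0\cap\cA_t^2$ is exactly what is needed to invoke the per-block variance bounds, and that the $c_1$ in the statement absorbs the $c_{\bX}$ and union-bound constants.

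The main obstacle I anticipate is handling the \emph{dependence} correctly in the martingale argument: the action $\ba_\ell$ (hence which blocks the $\ell$-th increment touches) depends on $\hth(\ell-1)$, which depends on all previous observations, so one cannot treat $\cT_{t,ij}$ as a fixed index set and must genuinely filter by $\cH_{\ell-1}$. The increments are still a martingale difference sequence with respect to the natural filtration because the BTL probability is the true conditional mean of $y_{ji}^{(m)}(\ell)$ given $(\cH_{\ell-1},\ba_\ell,\bX_\ell)$ — this is exactly the conditional-independence structure posited in Section~\ref{sec:frm} — so the expectation is zero and the concentration tool applies; the care is just in specifying the filtration so that $\bc_{\ell i}-\bc_{\ell j}$ is predictable at the time the Bernoulli is revealed. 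A secondary, minor point is the regularization term: I should confirm that $\lambda_t\asymp\sqrt{tM\log T/n}$ together with $\|\ths\|_2\lesssim\sqrt n$ indeed keeps $\lambda_t\|\ths\|_2$ strictly within the target order, which it does. Everything else is a routine application of the concentration machinery already developed for the earlier lemmas.
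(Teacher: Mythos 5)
Your overall strategy is the same as the paper's: split off the ridge term by the triangle inequality (using $\|\ths\|_2\lesssim\sqrt n$ and the size of $\lambda$), write the data part of the gradient as a martingale difference sequence with respect to the filtration generated by $(\cH_{\ell-1},\bX_\ell)$, and apply a matrix Freedman inequality with the predictable quadratic variation controlled on the conditioning events. The paper does exactly this: it bounds the row-variation by $\sum_{(i,j)\in\cE}\sum_{\ell\in\cT_{t,ij}}M\tilde{\bSigma}_{\ell ij}\lesssim n\tilde c_1 tM$ (about $n^2p$ edges times the per-edge bound $t\tilde c_1 M/(np)$ from $\cA_t^2$), bounds the column-variation by $\|L_{\cG}\|_2\lesssim\tilde c_1 tM$ via Lemma~\ref{Hessmin}, and takes $\sigma^2\asymp\tilde c_1 ntM$ in Lemma~\ref{FreedmanRec}.

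However, your variance bookkeeping contains a concrete slip that, as written, does not deliver the claimed bound. You assert that the quadratic variation of the $i$-th block is $O(tM/(np))$ on the event $\cA_t^2$, but $\cA_t^2$ gives $\tilde{\bSigma}_{ij}(t)\lesssim t\tilde c_1/(np)$ per \emph{edge}; the $i$-th block aggregates over all $\asymp np$ neighbors of $i$, so the correct per-block variation is $\asymp np\cdot t\tilde c_1 M/(np)=t\tilde c_1 M$, and summing over the $n$ blocks recovers $\tilde c_1 ntM$ and hence the stated rate. With your figure of $tM/(np)$ per block you instead arrive at $\sqrt{tM\log T/p}$, and your fallback --- that this is dominated by the ``crude bound'' $\sqrt{ntM\log T}$ --- treats $\tilde c_1$ as an absolute constant. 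In this paper $\tilde c_1\asymp 1/n$ (see Assumption~\ref{ass:Hesx} and the remark before Corollary~\ref{cor}), so the target $\sqrt{\tilde c_1 ntM\log T}$ is of order $\sqrt{tM\log T}$, and $\sqrt{tM\log T/p}$ exceeds it by a factor $p^{-1/2}$, which is not negligible for sparse graphs with $p\asymp\log n/n$; the $\tilde c_1$ factor is also used in an essential way downstream in Lemma~\ref{t2}. Once the per-block variance is corrected (or the total row-variation is bounded globally as the paper does), your argument closes and coincides with the paper's proof; the martingale filtration discussion and the treatment of the regularization term are both correct.
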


Let $\eta=1/(\lambda+10\tilde{c}_{1}tM)$ and $\tilde{T}=t$. Define $\{\btheta^{s}\}_{s=0,1,\ldots,\tilde{T}}$ as the gradient descent sequence, where $\btheta^0=\btheta^*$ and $\btheta^{s}=\btheta^{s-1}-\eta \nabla\mathcal{L}_{\lambda,t}\left(\btheta^{s-1}\right)$ for $s\in[\tilde{T}]$.
Using the properties of gradient descent, we obtain the following bound for $\|\btheta^{\tilde{T}}-\hat{\btheta}\|_\infty$.
\begin{lemma}\label{thetaT}
Conditioning on $\cA\cap \cA_0\cap\cA_t^2\cap\cA_t^3$, with $\lambda=  c_{\lambda}\sqrt{\tilde{c}_1tM\log T}$, we have
\begin{align}\label{e1}
\left\|\btheta^{\tilde{T}}-\hat{\btheta}\right\|_\infty \leq \left\|\btheta^{\tilde{T}}-\hat{\btheta}\right\|_2 \lesssim \sqrt{\frac{n\log T}{ptM}}.
\end{align}
\end{lemma}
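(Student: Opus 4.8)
\textbf{Proof proposal for Lemma~\ref{thetaT}.}

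The plan is to exploit the fact that $\{\btheta^{s}\}$ is a gradient descent sequence on the $\lambda$-strongly convex, $L$-smooth objective $\cL_{\lambda,t}$ (strong convexity from Lemma~\ref{Hessmin} together with the ridge term, smoothness $L \lesssim \tilde c_1 tM$ from Lemma~\ref{Hessmin}), so it converges linearly to the unique minimizer $\hth$. Concretely, I would first establish that the iterates stay in the neighborhood $\{\max_i\|\btheta_i - \ths_i\|_2 \le \epsilon_1\}$ on which the lower Hessian bound $\lambda_{\min,\perp}(\nabla^2\cL_t) \ge \tilde c_0 tM/(8\kappa e^{2\epsilon_1 c_{\bX}})$ holds; since $\btheta^0 = \ths$ and each step contracts toward $\hth$, it is enough to know $\|\hth - \ths\|_2$ (and hence $\|\btheta^s - \ths\|_2$) is small, which is exactly the quantity the outer induction supplies — but to avoid circularity here I would instead invoke Lemma~\ref{gr} to get $\|\nabla\cL_{\lambda,t}(\ths)\|_2 \lesssim \sqrt{\tilde c_1 n tM \log T}$ and combine it with strong convexity restricted to $\Theta$ to bound $\|\hth - \ths\|_2 \lesssim \sqrt{n\log T/(ptM)}$ directly (this uses $\lambda \asymp \sqrt{tM\log T/n}$, so the ridge bias is also of this order), confirming the neighborhood is not escaped.

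Next, with the step size $\eta = 1/(\lambda + 10\tilde c_1 tM)$ chosen below $1/L$, the standard strongly-convex GD contraction gives
\begin{align*}
\|\btheta^{s} - \hth\|_2^2 \le (1 - \eta \mu)\,\|\btheta^{s-1} - \hth\|_2^2,
\end{align*}
where $\mu = \tilde c_0 tM/(8\kappa e^{2\epsilon_1 c_{\bX}}) \asymp tM$ is the restricted strong convexity constant, so $\eta\mu \asymp 1$ and hence $1-\eta\mu$ is bounded away from $1$. Iterating from $s=0$ to $\tilde T = t$ yields $\|\btheta^{\tilde T} - \hth\|_2 \le (1-\eta\mu)^{t/2}\|\ths - \hth\|_2$. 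Since $(1-\eta\mu)^{t/2}$ decays exponentially in $t$ while $t \gtrsim \tilde T_0$ is at least polynomially large, this factor is $\le T^{-10}$ (say), which is far smaller than the target rate $\sqrt{n\log T/(ptM)}$; combined with the bound on $\|\ths - \hth\|_2$ from the previous paragraph, the right-hand side is $o(\sqrt{n\log T/(ptM)})$, a fortiori $\lesssim \sqrt{n\log T/(ptM)}$. The norm inequality $\|\cdot\|_\infty \le \|\cdot\|_2$ then gives \eqref{e1}.

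The main obstacle is making the "iterates stay in the good neighborhood" argument genuinely non-circular: the restricted strong convexity constant $\mu$ in Lemma~\ref{Hessmin} is only valid once $\max_i\|\btheta - \ths_i\|_2 \le \epsilon_1$, and a priori one only knows this at $s=0$. The clean way around it is a short induction on $s$: assuming $\|\btheta^{s-1} - \ths\|_2 \le \epsilon_1$, the descent step and the bound $\|\hth - \ths\|_2 \ll \epsilon_1$ force $\|\btheta^{s} - \ths\|_2 \le \|\btheta^s - \hth\|_2 + \|\hth - \ths\|_2 \le \|\btheta^{s-1} - \hth\|_2 + \|\hth-\ths\|_2 \le 2\epsilon_1$ — so one should take the neighborhood in Lemma~\ref{Hessmin} with radius $2\epsilon_1$ and $\epsilon_1$ itself chosen as a fixed small constant, which is legitimate because $\|\hth - \ths\|_2 \to 0$. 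A secondary technical point is verifying $\eta < 1/L$ so that the contraction constant is genuinely in $(0,1)$; this follows from $L \le 3\tilde c_1 tM < \lambda + 10\tilde c_1 tM = 1/\eta$ by Lemma~\ref{Hessmin}, so it is immediate.
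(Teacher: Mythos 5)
Your route and the paper's are genuinely different, and yours has a gap at exactly the point you flag as the ``main obstacle.'' To run gradient descent with the restricted strong convexity constant $\mu\asymp \tilde c_0 tM$ from Lemma~\ref{Hessmin}, you must keep the iterates (and the segment to $\hth$) inside the neighborhood $\max_i\|\btheta_i-\ths_i\|_2\le\epsilon_1$, and your proposed entry ticket is a ``direct'' bound $\|\hth-\ths\|_2\lesssim\sqrt{n\log T/(ptM)}$ from Lemma~\ref{gr} plus ``strong convexity restricted to $\Theta$.'' But which strong convexity? If you mean the curvature of Lemma~\ref{Hessmin}, the intermediate point in the Taylor expansion between $\ths$ and $\hth$ must already lie in the neighborhood --- this is the circularity you were trying to avoid, and it is not resolved by asserting ``$\|\hth-\ths\|_2\to 0$,'' which at this stage of the argument is unproven (it is essentially the conclusion of Theorem~\ref{thm:BT}, obtained only after combining this lemma with Lemma~\ref{induc}). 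If instead you mean the $\lambda$-strong convexity supplied by the ridge term, the bound you get is $\|\hth-\ths\|_2\le 2\|\nabla\cL_{\lambda,t}(\ths)\|_2/\lambda\lesssim\sqrt{n}$, which is not $o(1)$ and does not place $\hth$ in any fixed small ball. A localization/first-exit argument could patch this (and the condition $T_0\gtrsim n^6\log T/(pM)$ does make $n\sqrt{\log T/(tM)}=o(1)$), but you have not given it; note also that even then the rate would be $n\sqrt{\log T/(tM)}$, not the $\sqrt{n\log T/(ptM)}$ you claim.

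The paper sidesteps all of this: its proof of Lemma~\ref{thetaT} never invokes the restricted strong convexity of the likelihood. It uses only the \emph{global} $\lambda$-strong convexity and $(\lambda+2\tilde c_1 tM)$-smoothness of $\cL_{\lambda,t}$ coming from the ridge penalty, which hold on all of $\RR^{nd}$, so no neighborhood containment is needed. This gives the weak per-step contraction $\rho=e^{-\lambda/(\lambda+2\tilde c_1 tM)}$, together with the crude initialization bound $\|\btheta^0-\hth\|_2=\|\hth-\ths\|_2\le 2\|\nabla\cL_{\lambda,t}(\ths)\|_2/\lambda\lesssim\sqrt n/c_\lambda$ (again using only the trivial lower bound $\lambda_{\min}(\nabla^2\cL_{\lambda,t})\ge\lambda$). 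Over $\tilde T=t$ iterations the contraction compounds to $\exp(-\Omega(\sqrt{nt\log T/M}))$, which is super-polynomially small and swallows the $\sqrt n$ factor, yielding \eqref{e1}. So the obstacle you identify is real for your approach but illusory for the lemma itself; the fix is not to fight for the sharp curvature constant but to accept the weak ridge-induced one, which is all that is needed here.
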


Next, we analyze $\max_{i\in[n]}\|\btheta^{\tilde{T}}_{i}-\btheta^*_{i}\|_{2}$. We construct a sequence $\left\{\btheta^{s,(q)}\right\}_{s=0,1,\ldots,\tilde{T}}$ for $q\in [n]$ as follows. Let $\btheta^{0,(q)}=\btheta^0=\btheta^*$ and
\begin{align*}
\btheta^{s,(q)}=\btheta^{s-1,(q)}-\eta \nabla\mathcal{L}_{\lambda,t}^{(q)}\left(\btheta^{s-1,(q)}\right), s\in[\tilde{T}],
\end{align*} 
where
\begin{align*}
\mathcal{L}_{\lambda,t}^{(q)}(\btheta) = &\sum_{(i,j)\in\cE,i>j;\atop i,j\neq q}\sum_{\ell\in\cT_{t,ij};}\sum_{m=1}^{M}\Big\{-y_{j, i}^{(m)}(\ell)\left(\btheta_i-\btheta_j\right)^{\top}\bX_{\ell}+\log \left(1+e^{(\btheta_i-\btheta_j)^{\top}\bX_{\ell}}\right)\Big\} \nonumber\\
& +\sum_{j:(j,q)\in\cE;}\sum_{\ell\in\cT_{t,jq}}M\Big\{-\frac{e^{\bX_{\ell}^\top\btheta_j^{*}}}{e^{\bX_{\ell}^\top\btheta_j^{*}}+e^{\bX_{\ell}^\top\btheta_q^{*}}}\left(\btheta_j-\btheta_q\right)^{\top}\bX_{\ell}+\log \left(1+e^{(\btheta_j-\btheta_q)^{\top}\bX_{\ell}}\right)\Big\}+\frac{1}{2} \lambda\|\btheta\|_2^2 .
\end{align*}
We then provide a lemma concerning the distance between the sequence $\left\{\btheta^{s,(q)}\right\}$ and $\ths$.
\begin{lemma}\label{induc}
Under Assumptions~\ref{ass:Sigx}-\ref{assmp:kappa}, conditioning on event $\cA\cap \cA_0\cap\cA_t^2\cap\cA_t^3$, for $s\in \{0,1,\ldots,\tilde{T}\}$, the following inequalities hold with probability at least $1-O(T^{-4})$ that
\begin{equation}
\begin{aligned}\label{d}
\|\btheta^{s}-\btheta^{*}\|_{2} & \leq \frac{C_1 \kappa}{\tilde{c}_0} \sqrt{\frac{n\tilde{c}_1\log T}{tM}}\\
\max_{q\in [n]}\|\btheta_{q}^{s,(q)}-\btheta_{q}^{*}\|_{2}
& \leq \frac{C_{2}\kappa^2}{\tilde{c}_0}\sqrt{\frac{\tilde{c}_1\log T}{ ptM}}\\
\max_{q\in[n]}\|\btheta^{s,(q)}-\btheta^{s}\|_{2}
&\leq
\frac{C_{3}\kappa}{\tilde{c}_0}\sqrt{\frac{\tilde{c}_1\log T}{tM}}\\
\max_{i\in [n]}\|\btheta_{i}^{s}-\btheta_{i}^{*}\|_{2}
&\leq
\frac{C_{4}\kappa^2}{\tilde{c}_0}
\sqrt{\frac{\tilde{c}_1\log T}{ptM}},
\end{aligned}
\end{equation}
where $C_1,\ldots,C_4$ are constants.
\end{lemma}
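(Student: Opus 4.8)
The plan is to establish all four inequalities \emph{simultaneously} by induction on the iteration index $s$, since each bound feeds into the others at the previous step. The base case $s=0$ is immediate: $\btheta^0=\btheta^{0,(q)}=\ths$ makes every left-hand side of \eqref{d} vanish. For the inductive step I would assume \eqref{d} at $s-1$ and propagate it through the two recursions $\btheta^s=\btheta^{s-1}-\eta\nabla\cL_{\lambda,t}(\btheta^{s-1})$ and $\btheta^{s,(q)}=\btheta^{s-1,(q)}-\eta\nabla\cL_{\lambda,t}^{(q)}(\btheta^{s-1,(q)})$, working throughout on the conditioning event $\cA\cap\cA_0\cap\cA_t^2\cap\cA_t^3$.

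For the first inequality in \eqref{d} I would linearize, writing $\nabla\cL_{\lambda,t}(\btheta^{s-1})=\nabla\cL_{\lambda,t}(\ths)+\nabla^2\cL_{\lambda,t}(\zeta)(\btheta^{s-1}-\ths)$ so that $\btheta^s-\ths=(\Ib-\eta\nabla^2\cL_{\lambda,t}(\zeta))(\btheta^{s-1}-\ths)-\eta\nabla\cL_{\lambda,t}(\ths)$. The inductive hypothesis keeps $\zeta$ in the neighborhood $\max_i\|\btheta_i-\ths_i\|_2\le\epsilon_1$ where Lemma~\ref{Hessmin} applies, so with $\eta=1/(\lambda+10\tilde c_1tM)$ the map $\Ib-\eta\nabla^2\cL_{\lambda,t}(\zeta)$ is nonexpansive and contracts on $\Theta$ at rate $1-\eta\lambda_{\min,\perp}\lesssim 1-c\,\eta\tilde c_0tM/\kappa$. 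Iterating and summing the geometric series bounds $\|\btheta^s-\ths\|_2$ by $\|\nabla\cL_{\lambda,t}(\ths)\|_2/\lambda_{\min,\perp}$; substituting the gradient bound of order $\sqrt{n\tilde c_1tM\log T}$ from $\cA_t^3$ (Lemma~\ref{gr}) against the Hessian lower bound of order $tM/\kappa$ yields the stated $\kappa$-weighted rate.

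The decisive structural input for the remaining bounds is that in $\cL_{\lambda,t}^{(q)}$ every comparison touching item $q$ has its random outcome replaced by the true BTL probability. Differentiating the $q$-block at the truth, the score-function term and the logistic term cancel exactly, so $\nabla_q\cL_{\lambda,t}^{(q)}(\ths)=\lambda\ths_q$ carries \emph{no} sampling noise. Consequently the $q$-block error $\btheta_q^{s,(q)}-\ths_q$ is driven only by the regularization and by off-diagonal coupling from other blocks into the $q$-block; isolating that coupling through the third inequality at step $s-1$ and applying the same contraction produces the sharpened per-item rate $\sqrt{\tilde c_1\log T/(ptM)}$ of the second inequality, without the extra $\sqrt n$. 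For the third inequality I would subtract the two recursions: $\nabla(\cL_{\lambda,t}-\cL_{\lambda,t}^{(q)})$ is supported on the comparisons involving $q$ and equals a sum over those of the centered noise $\{y_{j,q}^{(m)}(\ell)-p_{jq}(\ell)\}\bX_\ell$; the adaptively generated sum is controlled at order $\sqrt{tM\log T}$ by the matrix-martingale concentration of Lemma~\ref{app:lem:estil}, and the contraction converts this into the claimed $\ell_2$ separation between $\btheta^{s,(q)}$ and $\btheta^s$. The fourth inequality then follows from $\|\btheta_q^s-\ths_q\|_2\le\|\btheta^s-\btheta^{s,(q)}\|_2+\|\btheta_q^{s,(q)}-\ths_q\|_2$, combining the third and second bounds.

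The main obstacle is the adaptivity of the sampling: because each action $\ba_\ell$ depends on the past estimator, the leave-one-out sequence is \emph{not} literally independent of the $q$-comparisons as it would be for i.i.d. data, so the usual decoupling argument cannot be invoked verbatim. I would circumvent this by conditioning on $\cA\cap\cA_0\cap\cA_t^2\cap\cA_t^3$, which fixes the graph, the selection counts $\cT_{t,ij}$, and the Hessian geometry, and then treating the residual comparison outcomes as a martingale-difference sequence so that Lemma~\ref{app:lem:estil}-type concentration controls each noise term uniformly. The delicate bookkeeping is to verify that the propagated bounds remain strong enough to re-enter the neighborhood $\max_i\|\btheta_i-\ths_i\|_2\le\epsilon_1$ required by Lemma~\ref{Hessmin} at step $s$, thereby closing the induction, and that the union bound over $s\le\tilde T\le T$ and $q\in[n]$ preserves the overall $1-O(T^{-4})$ probability, which it does because each concentration step fails only with super-polynomially small probability.
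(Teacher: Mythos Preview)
Your proposal is correct and follows essentially the same route as the paper: simultaneous induction on all four bounds, with the first bound driven by the contraction from Lemma~\ref{Hessmin} against the gradient size in $\cA_t^3$, the second bound exploiting that the $q$-block of $\nabla\cL_{\lambda,t}^{(q)}(\ths)$ carries no sampling noise and is coupled to the other blocks only through the first and third bounds, the third bound handled by martingale concentration on the centered outcomes of the $q$-comparisons, and the fourth obtained by the triangle inequality you wrote. One small correction: the concentration you need for $\bv_2=\eta(\nabla\cL_{\lambda,t}-\nabla\cL_{\lambda,t}^{(q)})(\btheta^{s,(q)})$ is the rectangular matrix Freedman inequality (Lemma~\ref{FreedmanRec}) applied to the martingale-difference sequence of centered comparison outcomes, not Lemma~\ref{app:lem:estil}, which concerns the covariate matrices $\bSigma_{\ell i}$; the paper carries this out separately for the blocks $\bu_j$ with $(j,q)\in\cE$ and for $\bu_q$ to get the $\sqrt{\tilde c_1 tM\log T}$ rate you quote.
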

Then, we take the union bound over $\tilde{T}$ iterations. We have
\begin{align}\label{e2}
\max_{q\in [n]}\|\btheta_{q}^{\tilde{T}}-\btheta_{q}^{*}\|_{2}
\leq \frac{C_{4}\kappa^2}{\tilde{c}_0}\sqrt{\frac{\tilde{c}_1\log T}{ ptM}}
\end{align}
with probability at least $1-O(T^{-3})$. We have $\tilde{c}_0\asymp\tilde{c}_1\asymp 1/n$ under Assumptions~\ref{ass:Sigx} and \ref{ass:Hesx}. Combing (\ref{e1}) and (\ref{e2}), we  obtain
\begin{align*}
\max_{q\in [n]}\|\hat{\btheta}_{q}-\btheta_{q}^{*}\|_{2}  \leq\max_{q\in [n]}\|\btheta_{q}^{\tilde{T}}-\btheta_{q}^{*}\|_{2}+\max_{q\in [n]}\|\btheta_{q}^{\tilde{T}}-\hat{\btheta}_{q}\|_{2}\lesssim \sqrt{\frac{n\log T}{ptM}},
\end{align*}
with probability at least $1-O(T^{-3})$.
Combining Lemmas~\ref{thetaT} and \ref{induc}, we have the following corollary.
\begin{corollary}\label{cor}
Under Assumptions~\ref{ass:Sigx}--\ref{assmp:kappa}, conditioning on  event $\cA\cap \cA_0\cap\cA_t^2\cap\cA_t^3$, we have that
\begin{align*}
\max_{i\in [n]}\|\hat{\btheta}_{i}(t)-\btheta_{i}^{*}\|_{2} \lesssim \sqrt{\frac{n\log T}{ptM}}
\end{align*}
holds with probability at least $1-\tilde{C}_2 T^{-3}$, where $\tilde{C}_2$ is a constant.
\end{corollary}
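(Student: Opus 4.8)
The corollary is a synthesis of the two bounds already assembled for the gradient-descent surrogate sequence $\{\btheta^{s}\}_{s=0}^{\tilde{T}}$ (with $\tilde{T}=t$), so the plan is simply to use its terminal iterate $\btheta^{\tilde{T}}$ as a bridge between the truth $\ths$ and the penalized MLE $\hth(t)$, then add up the two legs by a triangle inequality.

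First I would invoke Lemma~\ref{thetaT}: on the event $\cA\cap\cA_0\cap\cA_t^2\cap\cA_t^3$ with $\lambda\asymp\sqrt{\tilde{c}_1 tM\log T}$ it gives, deterministically given those four events, $\max_{i\in[n]}\|\btheta_i^{\tilde{T}}-\hth_i(t)\|_2\le\|\btheta^{\tilde{T}}-\hth(t)\|_2\lesssim\sqrt{n\log T/(ptM)}$. This is the leg powered by the geometric contraction of gradient descent toward the minimizer of the strongly convex penalized loss $\cL_{\lambda,t}$ — strong convexity and smoothness coming from the Hessian eigenvalue bounds of Lemma~\ref{Hessmin}, and the gradient control from $\cA_t^3$ in Lemma~\ref{gr}.

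Next I would take the per-iteration estimate of Lemma~\ref{induc}, $\max_{i\in[n]}\|\btheta_i^{s}-\ths_i\|_2\lesssim(\kappa^2/\tilde{c}_0)\sqrt{\tilde{c}_1\log T/(ptM)}$, which holds for each fixed $s$ with probability $1-O(T^{-4})$, and take a union bound over the $\tilde{T}=t\le T$ iterations so that it holds at the endpoint $s=\tilde{T}$ with probability $1-O(T^{-3})$. Under Assumptions~\ref{ass:Sigx}--\ref{ass:Hesx} the eigenvalue scales satisfy $\tilde{c}_0\asymp\tilde{c}_1\asymp 1/n$, and Assumptions~\ref{ass:Sigx} and \ref{assmp:kappa} give $\kappa=O(1)$, so the right-hand side collapses to $\sqrt{n\log T/(ptM)}$. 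The triangle inequality then finishes the argument,
\begin{align*}
\max_{i\in[n]}\|\hth_i(t)-\ths_i\|_2\le\max_{i\in[n]}\|\btheta_i^{\tilde{T}}-\ths_i\|_2+\max_{i\in[n]}\|\btheta_i^{\tilde{T}}-\hth_i(t)\|_2\lesssim\sqrt{\frac{n\log T}{ptM}},
\end{align*}
on $\cA\cap\cA_0\cap\cA_t^2\cap\cA_t^3$, the only probabilistic slack being the $O(T^{-3})$ from the union bound above; absorbing constants yields the claimed $1-\tilde{C}_2T^{-3}$.

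The assembly above is bookkeeping, so the real content lies in the ingredients it consumes, and the step I expect to be the genuine obstacle is Lemma~\ref{induc}: carrying the leave-one-out decoupling between $\btheta^{s}$ and each auxiliary sequence $\btheta^{s,(q)}$ through every gradient-descent iteration, so that each item's $\ell_2$ error keeps the sharp $n$-dependence rather than the much larger bound one would get by controlling only the aggregate $\ell_2$ error block by block. That induction rests on the Hessian eigenvalue control of Lemma~\ref{Hessmin} — hence on the dependent-sample matrix-martingale analysis of Section~\ref{sec:Hes} and the anti-concentration bound of Lemma~\ref{lem:hes} — together with a matrix-martingale bound on $\nabla\cL_{\lambda,t}(\ths)$. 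For the corollary itself all of these are already in hand, and the only care required is aligning the failure probabilities and the $\tilde{c}_0,\tilde{c}_1,\kappa$ dependencies across the two legs before invoking the triangle inequality.
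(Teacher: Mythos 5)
Your proposal is correct and mirrors the paper's own argument exactly: it bridges through the terminal gradient-descent iterate $\btheta^{\tilde{T}}$, combining Lemma~\ref{thetaT} for $\|\btheta^{\tilde{T}}-\hth(t)\|_2$ with the union-bounded Lemma~\ref{induc} for $\max_i\|\btheta^{\tilde{T}}_i-\ths_i\|_2$, using $\tilde{c}_0\asymp\tilde{c}_1\asymp 1/n$ and the triangle inequality. No discrepancies to report.
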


Note that event $\cA\cap \cA_0\cap\cA_t^2\cap\cA_t^3$ holds with probability $1-O(\max\{tT^{-3},n^{-10}\})$ by Lemma~\ref{lem:covT}, which concludes the proof.

\section{Proof of Theorem~\ref{thm:reg}}
\begin{proof}
Recall that the cumulative regret $R(T)=\frac{1}{T}\sum_{t=1}^{T}\Big(\bX_{t}^\top\btheta_{i_t^*}^{*}-\bX_{t}^\top\btheta_{i_t}^{*}\Big).$
We bound the regret in the two stages separately.
For each exploitation step $t$, we bound the regret due to  mischoice by
\begin{align*}
\bX_t^\top \btheta_{i_t^*}^{*}-\bX_t^\top \btheta_{i_t}^{*} & =  
\bX_t^\top \btheta_{i_t^*}^{*}-\bX_t^{\top} \hth_{i_t^*}(t-1)+\bX_t^{\top} \hth_{i_t^*}(t-1)-\bX_t^{\top} \hth_{i_t}(t-1)+\bX_t^{\top} \hth_{i_t}(t-1)-\bX_t^\top \btheta_{i_t}^{*}\nonumber\\
& \leq 2\max_{i\in[n]}\|\hth_i(t-1)-\ths_i\|_2\|\bX_t\|_2,
\end{align*}
where the last inequality holds by the fact that $\bX_t^\top \hth_{i^*}(t-1)-\bX_t^\top \hth_{i}(t-1)\leq 0$.


Combining the bounds  in Theorems~\ref{thm:2rate} and \ref{thm:BT}, we  obtain
\begin{align}\label{re2}
\sum_{t=1}^{T}\Big(\bX_{t}^\top \btheta_{i_t^*}^{*}-\bX_{t}^\top \btheta_{i_t}^{*}\Big)
\lesssim&\Big(n^2p\log T\Big)^{1/(1-\alpha)}
+\sum_{t=1}^{T_0}\bm{1}(\xi_t=1)
+\sum_{t=1}^{T_0}\sqrt{\frac{\log T}{M}}nt^{\alpha-1/2}
+\sum_{t=T_0+1}^{T}
\sqrt{\frac{n\log T}{ptM}}\nonumber\\
\lesssim&\Big(n^2p\log T\Big)^{1/(1-\alpha)}
+T_0^{1-\alpha}
+ \sqrt{\frac{\log T}{M}}nT_0^{\alpha+1/2}
+\sqrt{\frac{nT\log T}{pM}},
\end{align}
where the last inequality holds by the Bernstein's inequality. Note that Theorems~\ref{thm:2rate} and \ref{thm:BT} are conditioning on the same event $\cA\cap\cA_0$, as detailed in Lemmas~\ref{eigmin} and \ref{dgr}. By applying the union bound over $T$ steps, we have (\ref{re2}) holds with probability at least $1-O(\max\{T^{-1},n^{-10}\})$. Thus, we bound the regret by 
\begin{align*}
R(T)
\lesssim&\frac{1}{T}\Big(n^2p\log T\Big)^{1/(1-\alpha)}
+\frac{T_0^{1-\alpha}}{T}
+ \sqrt{\frac{\log T}{M}}\frac{nT_0^{\alpha+1/2}}{T}
+\sqrt{\frac{n\log T}{pMT}},
\end{align*}
which concludes the proof.
\end{proof}

\section{Proof of Theorem~\ref{thm:inf}}\label{app:sec:inf}
We establish the asymptotic distribution of $\hth^{d}(t)$ is this section. Recall that we have 
\begin{align*}
\bPsi_t(\hth(t))\binom{\hth^{d}(t)-\hth(t)}{\mu_t}=\binom{-\nabla \cL_t(\hth(t))}{-f_t(\hth(t))}
\end{align*}
from \eqref{deb}. We first provide some lemmas, and we provide their proofs in Section~\ref{app:sec:infr}. 
\begin{lemma}\label{lem:infma}
Under the conditions of Theorem~\ref{thm:BT}, with probability at least $1-O(\max\{T^{-2},n^{-10}\})$, the $(n+1)d\times (n+1)d$ matrix $\bPsi_t(\btheta)$ is invertable for $\btheta \in \{\ths,\hth(t)\}$ and its inverse is of the form $\bPhi_t(\btheta)=\left(\begin{array}{cc}
\bPhi_{t,11}(\btheta) & \frac{1}{ntM}\Jb \\
\frac{1}{ntM}\Jb^{\top} & \bm{0}
\end{array}\right)$, where $\Jb=(\Ib_{d},\ldots,\Ib_{d})^\top$. Further, letting $(\bPhi_{t,11}(\btheta))_{jk}$ represent the $d$-dimensional block of $\bPhi_{t,11}(\btheta)$ at $(j,k)$ position, we have 
$\lambda_{\min}((\bPhi_{t,11}(\btheta))_{kk})\asymp \|(\bPhi_{t,11}(\btheta))_{kk}\|_{2}\asymp\frac{n}{tM}$ for all $k\in[n]$ with probability at least $1-O(\max\{T^{-2},n^{-10}\})$. 
\end{lemma}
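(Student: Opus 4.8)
The plan is to treat $\bPsi_t(\btheta)$ as a symmetric saddle-point (KKT) matrix $\left(\begin{smallmatrix}\Hb & \Jb_0\\ \Jb_0^\top & \bm{0}\end{smallmatrix}\right)$ with $\Hb=\nabla^2\cL_t(\btheta)$ and $\Jb_0=\nabla f_t(\btheta)=tM\,\Jb$, and to reduce every claim to the two-sided spectral bound on $\Hb$ supplied by Lemma~\ref{Hessmin}. The first step is to pin down $\ker(\Hb)$. Since every weight in the Hessian is strictly positive and $\bc_{\ell i}-\bc_{\ell j}=(\be_i-\be_j)\otimes\bX_\ell$, a vector $\bv=(\bv_1^\top,\dots,\bv_n^\top)^\top$ lies in $\ker(\Hb)$ iff $(\bv_i-\bv_j)^\top\bX_\ell=0$ for every sampled edge and observation; connectivity of the comparison graph together with the $\bX_\ell$ spanning $\RR^d$ forces $\bv_i\equiv\bv$, so $\ker(\Hb)=\mathrm{range}(\Jb)=\{\one\otimes\bv:\bv\in\RR^d\}$. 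Equivalently, $\lambda_{\min,\perp}(\Hb)>0$ from Lemma~\ref{Hessmin} states exactly that $\Hb$ is positive definite on $\Theta=\ker(\Jb^\top)$, i.e. $\ker(\Hb)\cap\ker(\Jb_0^\top)=\{\bm{0}\}$.

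Invertibility then follows from the standard saddle-point criterion: a symmetric $\left(\begin{smallmatrix}\Hb & \Jb_0\\ \Jb_0^\top & \bm{0}\end{smallmatrix}\right)$ is nonsingular whenever $\Jb_0$ has full column rank and $\Hb$ is positive definite on $\ker(\Jb_0^\top)$. Here $\Jb^\top\Jb=n\Ib_d$ gives full column rank, and the definiteness is the content of Lemma~\ref{Hessmin}, applied at $\epsilon_1=0$ for $\btheta=\ths$ and at $\epsilon_1\asymp\sqrt{n\log T/(ptM)}$ for $\btheta=\hth(t)$ (using the rate of Theorem~\ref{thm:BT} and Corollary~\ref{cor} to guarantee $\epsilon_1\to0$); the requisite events hold with probability $1-O(\max\{T^{-2},n^{-10}\})$ by Lemma~\ref{lem:covT}. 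To exhibit the closed form I would propose $\bPhi_{t,12}=\tfrac{1}{ntM}\Jb$, $\bPhi_{t,21}=\tfrac{1}{ntM}\Jb^\top$, $\bPhi_{t,22}=\bm{0}$, and let $\bPhi_{t,11}$ be the reduced inverse of $\Hb$ on $\Theta$ (so $\Hb\,\bPhi_{t,11}=\Pb_\Theta$ and $\mathrm{range}(\bPhi_{t,11})\subseteq\Theta$), then verify $\bPsi_t\bPhi_t=\Ib$ block by block. The four identities reduce to $\Hb\Jb=\bm{0}$, $\Jb^\top\Hb^{+}=\bm{0}$, $\tfrac{1}{n}\Jb\Jb^\top=\Ib-\Pb_\Theta$ (the projection onto $\ker\Hb$), and $\tfrac{1}{n}\Jb^\top\Jb=\Ib_d$, all immediate from the null-space description.

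For the block eigenvalue scaling, the key observation is that $\bPhi_{t,11}^*$ is the Moore--Penrose-type inverse of $\Hb=\nabla^2\cL_t(\ths)$ restricted to $\Theta$, and by Lemma~\ref{Hessmin} together with $\tilde{c}_0\asymp\tilde{c}_1\asymp 1/n$ its nonzero eigenvalues all lie in the band $[\,c\,n/(tM),\,C\,n/(tM)\,]$. To pass from the full matrix to a diagonal $d\times d$ block I would write, for a unit vector $\bu\in\RR^d$, $\bu^\top(\bPhi_{t,11}^*)_{kk}\bu=(\be_k\otimes\bu)^\top\bPhi_{t,11}^*(\be_k\otimes\bu)=\bw^\top\Hb^{+}\bw$ with $\bw=\Pb_\Theta(\be_k\otimes\bu)=(\be_k-\tfrac{1}{n}\one)\otimes\bu\in\Theta$, using $\bPhi_{t,11}^*=\Pb_\Theta\bPhi_{t,11}^*\Pb_\Theta$. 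Since $\bw\in\Theta=\mathrm{range}(\Hb)$ and $\|\bw\|_2^2=1-1/n\asymp 1$, the two-sided bound $\lambda_{\max}(\Hb)^{-1}\|\bw\|_2^2\le\bw^\top\Hb^{+}\bw\le\lambda_{\min,\perp}(\Hb)^{-1}\|\bw\|_2^2$ yields $\lambda_{\min}((\bPhi_{t,11}^*)_{kk})\gtrsim n/(tM)$ and $\|(\bPhi_{t,11}^*)_{kk}\|_2\lesssim n/(tM)$ uniformly in $k$, which is the claim.

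The main obstacle I anticipate is exactly this last step: a principal submatrix of a positive semidefinite matrix need not inherit a lower bound on its minimum eigenvalue, so the naive ``submatrix norm $\le$ matrix norm'' inequality only delivers the upper bound. The quadratic-form rewriting through $\Hb^{+}$ and the projection $\Pb_\Theta$ is what rescues the lower bound, and it hinges on the two eigenvalue bounds of Lemma~\ref{Hessmin} being of the \emph{same} order $tM/n$. The delicate point is verifying that the projected vector $\bw$ stays in $\mathrm{range}(\Hb)$ so that $\bw^\top\Hb^{+}\bw$ is controlled from below by $\|\bw\|_2^2/\lambda_{\max}(\Hb)$; this is precisely guaranteed by the null-space identification established in the first step.
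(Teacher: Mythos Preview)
Your proposal is correct and tracks the paper's argument closely in substance: both rest on the null-space identification $\ker(\nabla^2\cL_t)=\mathrm{range}(\Jb)$, the two-sided Hessian bound of Lemma~\ref{Hessmin}, and the observation that $\be_k\otimes\bu$ has squared norm $1-1/n$ after projecting out that null space. The packaging differs. The paper establishes invertibility by explicitly diagonalizing $\bPsi_t$---its eigenvalues are $\lambda_1,\ldots,\lambda_{(n-1)d}$ inherited from the Hessian together with $\pm\sqrt{n}\,tM$ coming from the $\Jb$-blocks---and then solves the four block equations $\bPsi_t\bPhi_t=\Ib$ to obtain the off-diagonal $\tfrac{1}{ntM}\Jb$ and $\bPhi_{t,22}=\bm{0}$. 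For the diagonal-block bounds it expands $\be_k\otimes\bu$ in the eigenbasis of $\bPhi_{t,11}$ and computes $\sum_{j\le(n-1)d}\alpha_j^2=1-1/n$ directly. Your route via the KKT invertibility criterion and the pseudoinverse identification $\bPhi_{t,11}=\Hb^{+}$ is the same mathematics in coordinate-free language; the projection formula $\bw=\Pb_\Theta(\be_k\otimes\bu)=(\be_k-\tfrac{1}{n}\one)\otimes\bu$ with $\|\bw\|_2^2=1-1/n$ is exactly the paper's eigenbasis computation, and the two-sided bound $\lambda_{\max}(\Hb)^{-1}\|\bw\|_2^2\le\bw^\top\Hb^{+}\bw\le\lambda_{\min,\perp}(\Hb)^{-1}\|\bw\|_2^2$ is what the paper gets from $\sum_j\alpha_j^2/\lambda_j$. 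Your presentation is somewhat more compact; the paper's explicit eigendecomposition has the side benefit of giving $\|\bPhi_t\|_2\lesssim 1/(\tilde c_0 tM)$ directly, which it reuses in Lemma~\ref{lem:remtm}.
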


By the above lemma, we have 
\begingroup
\allowdisplaybreaks
\begin{align}\label{expan}
& \binom{\hat{\btheta}^{d}(t)-\ths}{\mu_t}=\binom{\hat{\btheta}^{d}(t)-\hth(t)}{\mu_t}+\binom{\hth(t)-\ths}{0} \nonumber\\
 &=\left(\begin{array}{cc}
\nabla^2 \cL_t(\hth(t)) & tM\Jb \\
tM\Jb^{\top} & \bm{0}
\end{array}\right)^{-1}\binom{-\nabla \cL_t(\hth(t))}{-f_t(\hth(t))}+\left(\begin{array}{cc}
\nabla^2 \cL_t(\hth(t)) & tM\Jb \\
tM\Jb^{\top} & \bm{0}
\end{array}\right)^{-1}\binom{\nabla^2 \cL_t(\hth(t))(\hth(t)-\ths)}{\nabla f_t(\hth(t))^{\top}(\hth(t)-\ths)}\nonumber\\
&=  \underbrace{\left(\left(\begin{array}{cc}
\nabla^2 \cL_t(\hth(t)) & tM\Jb \\
tM\Jb^{\top} & \bm{0}
\end{array}\right)^{-1}-\left(\begin{array}{cc}
\nabla^2 \cL_t(\ths) &tM\Jb \\
tM\Jb^{\top} & \bm{0}
\end{array}\right)^{-1}\right)\binom{-\nabla \cL_t(\hth(t))+\nabla^2 \cL_t(\hth(t))(\hth(t)-\ths)}{0}}_{I_1} \nonumber\\
&\quad +\underbrace{\left(\begin{array}{cc}
\nabla^2 \cL_t(\ths) & tM\Jb \\
tM\Jb^{\top} & \bm{0}
\end{array}\right)^{-1}\binom{\nabla \cL_t(\ths)-\nabla \cL_t(\hth(t))+\nabla^2 \cL_t(\hth(t))\left(\hth(t)-\ths\right)}{0}}_{I_2} \nonumber\\
& \quad +\left(\begin{array}{cc}
\bPhi_{t,11}^* & \frac{1}{ntM}\Jb \\
\frac{1}{ntM}\Jb^{\top} & \bm{0}
\end{array}\right)\binom{-\nabla \cL_t(\ths)}{0}.
\end{align}
\endgroup
The next two lemmas establish the central limit theorem for the main term and the rate for the remainder term, respectively.
\begin{lemma}\label{lem:clt}
Under the conditions of Theorem~\ref{thm:BT}, as $n,t\rightarrow\infty$, we have
\begin{align*}
((\bPhi_{t,11}^*)_{kk})^{-1/2} (\bPhi_{t,11}^*\nabla\cL_t(\ths))_{k}\overset{d}\longrightarrow N(0,\Ib_{d}),
\end{align*}
where $(\bPhi_{t,11}^*\nabla\cL_t(\ths))_{k}$ represents the $(kd+1)$-th to $k(d+1)$-th rows of $\bPhi_{t,11}^*\nabla\cL_t(\ths)$.
\end{lemma}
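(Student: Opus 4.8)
The plan is to establish the central limit theorem for the normalized $k$-th block of $\bPhi_{t,11}^*\nabla\cL_t(\ths)$ by writing it as a sum of a martingale difference sequence over the time index $\ell\in[t]$ and applying a martingale central limit theorem. First I would compute $\nabla\cL_t(\ths)$ explicitly: differentiating \eqref{lik} gives
\begin{align*}
\nabla\cL_t(\ths)=\sum_{(i,j)\in\cE,i>j}\sum_{\ell\in\cT_{t,ij}}\sum_{m=1}^M\Big(\frac{e^{\bX_\ell^\top\ths_i}}{e^{\bX_\ell^\top\ths_i}+e^{\bX_\ell^\top\ths_j}}-y_{ji}^{(m)}(\ell)\Big)(\bc_{\ell i}-\bc_{\ell j}),
\end{align*}
so each summand is centered conditionally on $\cH_{\ell-1}$, $\bX_\ell$, and the action $\ba_\ell$, because $\EE[y_{ji}^{(m)}(\ell)\given \ba_\ell,\bX_\ell]=e^{\bX_\ell^\top\ths_j}/(e^{\bX_\ell^\top\ths_i}+e^{\bX_\ell^\top\ths_j})$. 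Grouping by $\ell$ and then by $m$, I obtain a martingale difference array $\{\bxi_{\ell m}\}$ with respect to the filtration generated by $\cH_{\ell-1}$, $\bX_\ell$, $\ba_\ell$, and $y^{(1)}_{i_\ell j_\ell}(\ell),\ldots,y^{(m-1)}_{i_\ell j_\ell}(\ell)$; left-multiplying by the (deterministic given $\ths$) matrix $\bPhi_{t,11}^*$ and extracting the $k$-th block preserves the martingale structure, so $(\bPhi_{t,11}^*\nabla\cL_t(\ths))_k=\sum_{\ell,m}(\bPhi_{t,11}^*\bxi_{\ell m})_k$ is a sum of martingale differences in $\RR^d$.

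Next I would verify the two hypotheses of the multivariate martingale CLT (e.g.\ the Cram\'er--Wold device combined with a scalar martingale CLT such as in \citet{hall1980martingale}): (i) the conditional covariances sum to the identity after normalization, and (ii) a conditional Lindeberg/Lyapunov condition holds. For (i), the conditional variance of a single increment $\bxi_{\ell m}$ given the past is $w_\ell(\bX_\ell)(\bc_{\ell i_\ell}-\bc_{\ell j_\ell})(\bc_{\ell i_\ell}-\bc_{\ell j_\ell})^\top$ with $w_\ell\in[\tfrac14\kappa^{-1},\tfrac14]$, and summing over $m\in[M]$ and $\ell\in[t]$ gives exactly (up to the BTL weights, which I absorb) a matrix comparable to $\nabla^2\cL_t(\ths)$. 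Conditioning on the events $\cA\cap\cA_0\cap\cA_t^1\cap\cA_t^2$ from Lemmas~\ref{lem:covT}, \ref{lem:Sigij}, and using the Hessian two-sided bound from Lemma~\ref{Hessmin}, the aggregate conditional covariance of $(\bPhi_{t,11}^*\nabla\cL_t(\ths))_k$ equals $(\bPhi_{t,11}^*\,\mathrm{Var}(\nabla\cL_t(\ths)\given\text{actions})\,\bPhi_{t,11}^*)_{kk}$; I would show this equals $(\bPhi_{t,11}^*)_{kk}+o_P((\bPhi_{t,11}^*)_{kk})$ by using the optimality-condition identity relating $\bPhi_{t,11}^*$ to the inverse of the Hessian-block and the fact that the information matrix and the Hessian coincide at $\ths$ (the BTL model is its own canonical exponential family in the pair differences), so after the normalization $((\bPhi_{t,11}^*)_{kk})^{-1/2}$ the limiting covariance is $\Ib_d$. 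For (ii), each $\|\bxi_{\ell m}\|_2\lesssim\|\bc_{\ell i}-\bc_{\ell j}\|_2\lesssim c_{\bX}$ is uniformly bounded, while $\|(\bPhi_{t,11}^*)_{kk}^{1/2}\|$ and hence the normalization scale is of order $\sqrt{n/(tM)}\to 0$ by Lemma~\ref{lem:infma}; since there are $\Theta(tM/n)$ effective nonzero increments contributing to block $k$ (item $k$ is the major item a $\Theta(1/n)$ fraction of the time, and then a $\Theta(1/(np))$ fraction of pairs touch it — I would track this count via $\cA_t^1,\cA_t^2$), each normalized increment is $O_P(\sqrt{n/(tM)})=o_P(1)$ and the Lyapunov ratio $\sum\EE\|\text{normalized }\bxi_{\ell m}\|_2^3$ is bounded by $\max\|\text{normalized }\bxi\|_2$ times the sum of normalized conditional variances, which tends to $0$.

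The main obstacle I anticipate is step (i): showing that the sandwiched conditional covariance $(\bPhi_{t,11}^*\,\bV_t\,\bPhi_{t,11}^*)_{kk}$, where $\bV_t$ is the conditional covariance of $\nabla\cL_t(\ths)$, is asymptotically equal to $(\bPhi_{t,11}^*)_{kk}$. The subtlety is that $\bV_t$ is \emph{not} exactly $\nabla^2\cL_t(\ths)$ — it is $\nabla^2\cL_t$ with each Bernoulli-variance weight $p_{ij}(1-p_{ij})$ in place of the identical logistic weight, but since the logistic link's variance function \emph{is} $p(1-p)$, these actually coincide, so $\bV_t=\nabla^2\cL_t(\ths)$ exactly; the real work is that $\bPhi_{t,11}^*$ is the $(1,1)$-block of the inverse of the \emph{bordered} Hessian $\bPsi_t^*$ (bordered by the constraint gradient $\nabla f_t$), not the inverse of $\nabla^2\cL_t$ itself, so $\bPhi_{t,11}^*\nabla^2\cL_t(\ths)\bPhi_{t,11}^*\neq\bPhi_{t,11}^*$ in general. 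I would resolve this using the block-inverse formula for $\bPsi_t^*$: writing $\bPhi_{t,11}^*=(\bH-\bH\Jb(\Jb^\top\bH\Jb)^{-1}\Jb^\top\bH)\big/\ldots$ — more precisely, $\bPhi_{t,11}^*$ is the Moore--Penrose-type inverse of $\nabla^2\cL_t(\ths)$ restricted to the subspace $\{\sum_i\btheta_i=0\}$ orthogonal to $\mathrm{range}(\Jb)$ — and observing that on this subspace $\nabla\cL_t(\ths)$ lives (its components sum to zero block-wise by the structure $\bc_{\ell i}-\bc_{\ell j}$), so effectively $\bPhi_{t,11}^*\bV_t\bPhi_{t,11}^*=\bPhi_{t,11}^*$ holds when restricted appropriately and the $kk$-block identity follows. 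Handling this projection carefully, together with the uniform control of the conditional-covariance fluctuations around their expectation (again via the matrix martingale concentration of Lemma~\ref{app:lem:estil}-type arguments applied now to the weighted second-moment array), is where most of the technical care goes; the Lindeberg step and the Cram\'er--Wold reduction are routine by comparison.
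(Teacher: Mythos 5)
Your proposal is correct and follows essentially the same route as the paper: both write $(\bPhi_{t,11}^*\nabla\cL_t(\ths))_k$ as a sum of martingale differences indexed by $(\ell,m)$, identify the aggregated conditional covariance with $(\bPhi_{t,11}^*\nabla^2\cL_t(\ths)\bPhi_{t,11}^*)_{kk}=(\bPhi_{t,11}^*)_{kk}$ via the (bordered-)inverse identity $\bPhi_t^*\bPsi_t^*\bPhi_t^*=\bPhi_t^*$, and conclude by Cram\'er--Wold plus a conditional Lindeberg check using $\|(\bPhi_{t,11}^*)_{kj}\|_2\lesssim n/(tM)$ and $\lambda_{\min}((\bPhi_{t,11}^*)_{kk})\gtrsim n/(tM)$ from Lemma~\ref{lem:infma}. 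The "obstacle" you flag about the bordered Hessian is resolved exactly as you anticipate (the paper gets the identity exactly, not just up to $o_P$, from the defining equation of the inverse), so no genuinely different machinery is involved.
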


\begin{lemma}\label{lem:remtm}
Under the conditions of Theorem~\ref{thm:BT}, with probability at least $1-O(\max\{T^{-2},n^{-10}\})$, we have
\begin{equation*}
\begin{gathered}
\left\|\nabla \cL_t(\ths)\right\|_{\infty} \lesssim \sqrt{ptM\log T},\\
\left\|\nabla \cL_t(\hth(t))-\nabla \cL_t(\ths)-\nabla^2 \cL_t(\ths)\left(\hth(t)-\ths\right)\right\|_{\infty} \lesssim \frac{\log T}{p},\\
\left\|\nabla^2 \cL_t(\hth(t))-\nabla^2 \cL_t(\ths)\right\|_{\infty} \lesssim  \sqrt{\frac{tM\log T}{np}},\\
\left\|\left(\begin{array}{cc}
\nabla^2 \cL_t(\hth(t)) & \nabla f_t(\hth(t)) \\
\nabla f_t^{\top}(\hth(t)) & \bm{0}
\end{array}\right)^{-1}-\left(\begin{array}{cc}
\nabla^2 \cL_t(\ths) & \nabla f_t(\ths) \\
\nabla f_t^{\top}(\ths) & \bm{0}
\end{array}\right)^{-1}\right\|_2 \lesssim \sqrt{\frac{n^3\log T}{pt^3M^3}}.
\end{gathered}
\end{equation*}
\end{lemma}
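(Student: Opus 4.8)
The plan is to prove the four displays one at a time, all on (a sub-event of) $\cA\cap\cA_0\cap\cA_t^2\cap\cA_t^3$ from Lemmas~\ref{dgr}, \ref{eigmin}, \ref{lem:Sigij} and~\ref{gr}, intersected with the events of Lemma~\ref{lem:covT} and the uniform rate $\epsilon_t:=\max_{q\in[n]}\|\hth_q(t)-\ths_q\|_2\lesssim\sqrt{n\log T/(ptM)}$ of Corollary~\ref{cor}; the union of the exceptional events has probability $O(\max\{T^{-2},n^{-10}\})$. Four facts are used throughout: (i) $\sigma(x):=e^x/(1+e^x)$ has bounded derivatives up to third order, and $\sigma'(\bX_\ell^\top(\ths_i-\ths_j))\geq c'>0$ by Assumptions~\ref{ass:Sigx} and~\ref{assmp:kappa}; (ii) $\|\bX_\ell\|_2\lesssim 1$; (iii) on the above events the number $N_q:=M\sum_{j:(q,j)\in\cE}|\cT_{t,qj}|$ of comparison outcomes involving item $q$ is $\asymp tM/n$ (this restates $\bSigma_q(t),\tilde{\bSigma}_q(t)\asymp t/n$ in $\cA_t^1$ of Lemma~\ref{lem:covT} together with $\tilde{c}_0\asymp\tilde{c}_1\asymp 1/n$), while $\lambda_{\max}(\nabla^2\cL_t(\btheta))\lesssim tM/n$ and $\lambda_{\min,\perp}(\nabla^2\cL_t(\btheta))\gtrsim tM/n$ for $\btheta\in\Theta$ near $\ths$ by Lemma~\ref{Hessmin}; (iv) $\nabla f_t\equiv tM\Jb$ is independent of $\btheta$, so $\bPsi_t(\hth(t))$ and $\bPsi_t(\ths)$ differ only in the $nd\times nd$ Hessian block.

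\emph{First display.} Writing $\nabla\cL_t(\ths)=\sum_{\ell=1}^t\sum_{m=1}^M\big(\sigma(\bX_\ell^\top(\ths_{i_\ell}-\ths_{j_\ell}))-y_\ell^{(m)}\big)(\bc_{\ell i_\ell}-\bc_{\ell j_\ell})$ (up to the comparison's sign convention, $y_\ell^{(m)}$ being the $m$-th preference at step $\ell$), each summand, conditionally on $\cH_{\ell-1},\ba_\ell,\bX_\ell$ and the earlier replicates, has mean zero and conditional covariance $\preceq\tfrac14(\bc_{\ell i_\ell}-\bc_{\ell j_\ell})(\bc_{\ell i_\ell}-\bc_{\ell j_\ell})^\top$, so ordering the pairs $(\ell,m)$ lexicographically makes $\nabla\cL_t(\ths)$ a martingale sum with increments of norm $\lesssim 1$. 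In the $q$-block only the $N_q$ increments that touch item $q$ survive, and the $(q,q)$-block of their predictable quadratic variation is $\preceq\tfrac14 M\bSigma_q(t)+(\text{minor-item terms})\lesssim (tM/n)\Ib_d$ on the events of Lemma~\ref{lem:covT}; the matrix Freedman inequality then gives $\|(\nabla\cL_t(\ths))_q\|_\infty\lesssim\sqrt{(tM/n)\log T}$ with the stated probability after a union bound over the $n$ blocks (or the $nd$ coordinates), and since $np\gtrsim\log n\ge 1$ this is $\le\sqrt{ptM\log T}$.

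\emph{Second and third displays.} These are deterministic on the above events. A Taylor expansion gives, for the $q$-block, $\nabla\cL_t(\hth(t))-\nabla\cL_t(\ths)-\nabla^2\cL_t(\ths)(\hth(t)-\ths)=\tfrac12 M\sum_{j:(q,j)\in\cE}\sum_{\ell\in\cT_{t,qj}}\sigma''(\zeta_\ell)\big[\bX_\ell^\top((\hth_q-\ths_q)-(\hth_j-\ths_j))\big]^2(\pm\bX_\ell)$, bounded entrywise by $N_q\epsilon_t^2\lesssim(tM/n)(n\log T/(ptM))=\log T/p$; similarly $\nabla^2\cL_t(\hth(t))-\nabla^2\cL_t(\ths)=M\sum_{(i,j),\ell}\one(\ba_\ell=(i,j))\big(\sigma'(\bX_\ell^\top(\hth_i-\hth_j))-\sigma'(\bX_\ell^\top(\ths_i-\ths_j))\big)(\bc_{\ell i}-\bc_{\ell j})(\bc_{\ell i}-\bc_{\ell j})^\top$, whose diagonal $d\times d$ blocks are of size $\lesssim N_q\epsilon_t\lesssim\sqrt{tM\log T/(np)}$ (off-diagonal blocks are smaller, each pair being sampled $\asymp t/(n^2p)$ times), giving the third display in the entrywise norm.

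\emph{Fourth display, and the main obstacle.} Use the resolvent identity $\bPsi_t(\hth(t))^{-1}-\bPsi_t(\ths)^{-1}=-\bPsi_t(\hth(t))^{-1}\big(\nabla^2\cL_t(\hth(t))-\nabla^2\cL_t(\ths)\big)\bPsi_t(\ths)^{-1}$ (the $\nabla f_t$ blocks cancel). By Lemma~\ref{lem:infma} — whose $(1,1)$-block $\bPhi^*_{t,11}$ inverts $\nabla^2\cL_t$ on $\Theta$ and annihilates $\one\otimes\RR^d$, with off-diagonal block $\tfrac{1}{ntM}\Jb$ of norm $\lesssim(\sqrt{n}\, tM)^{-1}$ — together with Lemma~\ref{Hessmin}, one gets $\|\bPsi_t(\ths)^{-1}\|_2\asymp\|\bPsi_t(\hth(t))^{-1}\|_2\asymp n/(tM)$. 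Letting $H_t:=M\sum_{(i,j),\ell}\one(\ba_\ell=(i,j))(\bc_{\ell i}-\bc_{\ell j})(\bc_{\ell i}-\bc_{\ell j})^\top$, the bound $\sigma'(\ths\text{-arg})\ge c'$ gives $c'H_t\preceq\nabla^2\cL_t(\ths)$, hence $\|H_t\|_2\lesssim tM/n$, so $\|\nabla^2\cL_t(\hth(t))-\nabla^2\cL_t(\ths)\|_2\le(\sup|\sigma''|)\epsilon_t\|H_t\|_2\lesssim\epsilon_t tM/n\asymp\sqrt{tM\log T/(np)}$; multiplying the three spectral bounds yields $\|\bPsi_t(\hth(t))^{-1}-\bPsi_t(\ths)^{-1}\|_2\lesssim(n/(tM))^2\sqrt{tM\log T/(np)}=\sqrt{n^3\log T/(pt^3M^3)}$. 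The main obstacle is the first display: the index sets $\cT_{t,ij}$ and the counts $N_q$ are selected adaptively by the $\argmax$ rule, so the sharp $\sqrt{p}$ factor cannot be obtained from a fixed-design Bernstein bound — one must first restrict to the events of Lemma~\ref{lem:covT} to pin $N_q\asymp tM/n$ (equivalently, stop at the first time one of them fails) and only then run Freedman on the resulting martingale; propagating this exact count $N_q\asymp tM/n$ into the Taylor remainders is likewise what makes the remaining displays come out with the stated powers of $n$ and $p$.
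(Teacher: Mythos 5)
Your proposal is correct and follows essentially the same route as the paper: the first display via a matrix-Freedman/martingale bound on the per-item blocks (as in Lemma~\ref{gr}, restricted to the pair-count events of Lemmas~\ref{lem:covT} and~\ref{lem:Sigij}), the middle two via Taylor expansion combined with the uniform rate $\max_i\|\hth_i(t)-\ths_i\|_2\lesssim\sqrt{n\log T/(ptM)}$ and the block Hessian magnitude $\tilde c_1 tM\asymp tM/n$, and the last via the inverse-difference identity together with $\|\bPsi_t^{-1}\|_2\lesssim n/(tM)$ from Lemma~\ref{lem:infma}. The only cosmetic difference is that you use a second-order Taylor remainder where the paper applies a mean-value bound to the difference of $\sigma'$ terms; the rates and logic are identical.
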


By Lemma~\ref{lem:remtm}, we obtain the bound of the first two terms in (\ref{expan}).  
For $I_2$, we have
\begin{align}\label{I2}
\|I_2\|_{\infty}\leq &\left\|\left(\begin{array}{cc}
\nabla^2 \cL_t(\ths) & tM\Jb \\
tM\Jb^{\top} & \bm{0}
\end{array}\right)^{-1}\right\|_\infty
\left\|\nabla \cL_t(\ths)-\nabla \cL_t(\hth(t))+\nabla^2 \cL_t(\hth(t))\left(\hth(t)-\ths\right)\right\|_\infty\nonumber\\
\lesssim &\sqrt{n}\frac{n}{tM}\frac{\log T}{p}\asymp n^{3/2}\frac{\log T}{ptM}
\end{align}
holds with probability at least $1-O(\max\{T^{-2},n^{-10}\})$, where the second inequality holds as 
\begin{align*}
&\left\|\nabla \cL_t(\ths)-\nabla \cL_t(\hth(t))+\nabla^2 \cL_t(\hth(t))\left(\hth(t)-\ths\right)\right\|_\infty\nonumber\\
&\leq \left\|\nabla \cL_t(\ths)-\nabla \cL_t(\hth(t))+\nabla^2 \cL_t(\ths(t))\left(\hth(t)-\ths\right)\right\|_\infty \\
&\quad +\left\|\left(\nabla^2 \cL_t(\ths(t))-\nabla^2 \cL_t(\hth(t))\right)\left(\hth(t)-\ths\right)\right\|_\infty\nonumber\\
&\lesssim \frac{\log T}{p}+\sqrt{\frac{tM\log T}{np}} \sqrt{\frac{n\log T}{ptM}}\lesssim \frac{\log T}{p}.
\end{align*}
Similarly, we bound $I_1$ by
\begin{align}\label{I1}
\|I_1\|_{\infty}
\leq&\left\|\left(\begin{array}{cc}
\nabla^2 \cL_t(\hth(t)) & tM\Jb \\
tM\Jb^{\top} & \bm{0}
\end{array}\right)^{-1}-\left(\begin{array}{cc}
\nabla^2 \cL_t(\ths) &tM\Jb \\
tM\Jb^{\top} & \bm{0}
\end{array}\right)^{-1}\right\|_\infty \nonumber \\
& \quad \cdot
\left\|-\nabla \cL_t(\hth(t))+\nabla^2 \cL_t(\hth(t))\left(\hth(t)-\ths\right)\right\|_\infty\nonumber\\
\lesssim&\sqrt{n}\left\|\left(\begin{array}{cc}
\nabla^2 \cL_t(\hth(t)) & tM\Jb \\
tM\Jb^{\top} & \bm{0}
\end{array}\right)^{-1}-\left(\begin{array}{cc}
\nabla^2 \cL_t(\ths) &tM\Jb \\
tM\Jb^{\top} & \bm{0}
\end{array}\right)^{-1}\right\|_2  \nonumber \\
&\quad \cdot \left\|-\nabla \cL_t(\hth(t))+\nabla^2 \cL_t(\hth(t))\left(\hth(t)-\ths\right)\right\|_\infty\nonumber\\
\lesssim&\sqrt{n}\sqrt{\frac{n^3\log T}{pt^3M^3}} \Big(\sqrt{ptM\log T}+\frac{\log T}{p}\Big)\asymp \frac{n^2\log T}{tM}+n^2(\frac{\log T}{ptM})^{3/2}
\end{align}
with probability $1-O(\max\{T^{-2},n^{-10}\})$,
where the last inequality holds since
 \begin{align*}
& \left\|-\nabla \cL_t(\hth(t))+\nabla^2 \cL_t(\hth(t))\left(\hth(t)-\ths\right)\right\|_\infty\nonumber\\
 &\ \leq \left\|\nabla \cL_t(\ths)\right\|_\infty
+\left\|\nabla \cL_t(\ths)-\nabla \cL_t(\hth(t))+\nabla^2 \cL_t(\ths(t))\left(\hth(t)-\ths\right)\right\|_\infty\nonumber\\
&\ \quad +\left\|\left(\nabla^2 \cL_t(\ths(t))-\nabla^2 \cL_t(\hth(t))\right)\left(\hth(t)-\ths\right)\right\|_\infty\nonumber\\
& \ \lesssim \sqrt{ptM\log T}+\frac{\log T}{p}.
\end{align*}
Combining (\ref{expan}), Lemma~\ref{lem:clt}, (\ref{I1}) and (\ref{I2}),
when $\frac{n^3\log^2 T}{tM}+\frac{n^2\log^2 T}{p^2tM}=o(1)$, we have
\begin{align*}
(\bPhi_{t,11}^*)_{kk}^{-1/2}(\hat{\btheta}_{k}^{d}-\btheta_{k}^{*})\overset{d}\longrightarrow N(0,\Ib_{d}),
\end{align*}
which concludes the proof.

\re{
\section{Proof of Corollary~\ref{cor:inf}}\label{app:sec:espl}
\begin{proof}
First, note that we can obtain
\begin{align}
&\Big\|((\hat\bPhi_{t,11})_{ii})^{-1/2}((\bPhi_{t,11}^*)_{ii})^{1/2}-\II_d\Big\|_2\nonumber\\
&=\Big\|((\hat\bPhi_{t,11})_{ii})^{-1/2}\Big(((\bPhi_{t,11}^*)_{ii})^{1/2}-((\hat\bPhi_{t,11})_{ii})^{1/2}\Big)\Big\|_2\nonumber\\
&\leq \Big\|((\hat\bPhi_{t,11})_{ii})^{-1/2}\Big\|_2\cdot \Big\|((\bPhi_{t,11}^*)_{ii})^{1/2}-((\hat\bPhi_{t,11})_{ii})^{1/2}\Big\|_2,\label{eq:ppI}
\end{align}
where we have $\|((\hat\bPhi_{t,11})_{ii})^{-1/2}\|_2\lesssim(\frac{tM}{n})^{1/2}$ using Lemma~\ref{lem:infma}.

In addition, we can deduce that 
\begin{align}
&\Big\|((\bPhi_{t,11}^*)_{ii})^{1/2}-((\hat\bPhi_{t,11})_{ii})^{1/2}\Big\|_2\nonumber\\
&\leq \frac{1}{\lambda_{\min}((\bPhi_{t,11}^*)_{ii})^{1/2}+\lambda_{\min}((\hat\bPhi_{t,11})_{ii})^{1/2}}\Big\|(\bPhi_{t,11}^*)_{ii}-(\hat\bPhi_{t,11})_{ii}\Big\|_2\label{eq:1o2}
\end{align}
using Theorem~6.2 in \citet{higham2008functions}.
Note that Lemma~\ref{lem:remtm} yields that
\begin{align*}
&\Big\|(\bPhi_{t,11}^*)_{ii}-(\hat\bPhi_{t,11})_{ii}\Big\|_2\leq \Big\|\bPhi^*_t-\hat{\bPhi}_t\Big\|_2 \lesssim \sqrt{\frac{n^3\log T}{pt^3M^3}}.
\end{align*}
Therefore, utilizing Lemma~\ref{lem:infma}, we have
\begin{align*}
\eqref{eq:1o2} \lesssim \sqrt{\frac{tM}{n}}\cdot \sqrt{\frac{n^3\log T}{pt^3M^3}}\lesssim \sqrt{\frac{n^2\log T}{pt^2M^2}}.
\end{align*}
By combining \eqref{eq:ppI}, we obtain that
\begin{align*}
\|((\hat\bPhi_{t,11})_{ii})^{-1/2}((\bPhi_{t,11}^*)_{ii})^{1/2}-\II_d\|_2\lesssim \sqrt{\frac{tM}{n}} \cdot \sqrt{\frac{n^2\log T}{pt^2M^2}} \leq \sqrt{\frac{n\log T}{ptM}}.
\end{align*}

Therefore, we have $((\hat\bPhi_{t,11})_{ii})^{-1/2}((\bPhi_{t,11}^*)_{ii})^{1/2}\rightarrow \II_d$ in probability. We conclude the corollary by Slutsky's theorem.
\end{proof}
}

\section{Proof of Lemmas in Section~\ref{app:sec:rate}}\label{app:sec:lemerr1}

\subsection{Proof of Lemma~\ref{lem:gr}}
It is a simple version of Lemma~\ref{gr}, and the proof can be deduced by simple modification of the proof of Lemma~\ref{gr}, so we omit it.

\subsection{Proof of Lemma~\ref{lem:pickind}}
For $(i,j)\in\cE$, let $M_{t,ij}=\sum_{s=1}^{t}\bm{1}\{\xi_s=1\text{ and }\ba_s=(i,j)\}=:\sum_{s=1}^{t}\eta_{s,ij}$, where $\eta_{s,ij}$ obeys $\text{Bernoulli}(1/(s^\alpha n_{\text{edge}}))$ and $n_{\text{edge}}$ is the number of edges for the given $\cG$. Note that $\{\eta_{s,ij}, s\in[T]\}$ are independent random variables. By applying Bernstein's inequality,
we have $M_{t,ij}\asymp t^{1-\alpha}/n_{\text{edge}}\asymp t^{1-\alpha}/(n^2p)$ with probability $1-O(T^{-3})$ when $t^{1-\alpha}\gtrsim n^2p\log T$.

\subsection{Proof of Lemma \ref{eigmin}}
Let $c=\lambda_{\min }(\bSigma_{1})$ and $C=\lambda_{\max}(\bSigma_{1})$.
First, we show that
for $\bSigma=\sum_{i,j\in[n],i> j}\bar{\bC}_{ij}$,
we can obtain $cn= \lambda_{\min,\bot}(\bSigma)\leq \|\bSigma\|_{2}\leq Cn$.
We have $\bSigma=\Ab\otimes \bSigma_{1}$, where
\begin{align*}
\Ab = \begin{bmatrix}
	n-1 & -1 & \cdots & -1 \\
	-1 & n-1 & \cdots & -1 \\
	\vdots & \vdots & \ddots & \vdots \\
	-1 & -1 & \cdots & n-1
\end{bmatrix}.
\end{align*}
Notice that $\Ab$ is a real symmetric matrix with eigenvalues: $n$ ($n-1$ multiplicity) and 0. Hence, $\|\bSigma\|_{2}=\|\Ab\|_{2} \|\bSigma_{1}\|_{2}\leq Cn$.

Let $\Rb$ be an $(n-1)\times n$ matrix satisfying $\Rb\Rb^{\top}=\Ib_{n-1}$ and $\Rb\bm{1}_{n}=\bm{0}$, where $\bm{1}_{n}$ is the vector of ones. Let $\Ob:=\Rb\otimes \Ib_{d}$ be an $(n-1)d\times nd$ matrix such that $\Ob\Ob^{\top}=\Ib_{(n-1)d}$ and the row space of $\Ob$ is $\Theta$. Then we have \begin{align*}
\lambda_{\min,\perp}(\bSigma)=&\lambda_{\min}(\Ob\bSigma \Ob^{\top})
=\lambda_{\min}\Big((\Rb\otimes \Ib_{d})(\Ab\otimes \bSigma_{1})(\Rb^{\top}\otimes \Ib_{d})\Big)
=\lambda_{\min} \Big((\Rb \Ab\Rb^\top)\otimes(\bSigma_{1})\Big).
\end{align*}
The minimum eigenvalue of $\Rb \Ab\Rb^{\top}$ is $n$ since the smallest eigenvalue of $\Ab$ has eigenvector $\bm{1}_{n}$. Therefore, $\lambda_{\min,\perp}(\bSigma)= cn$.

We now turn to $\bar{L}_{\mathcal{G}}$.
Let $\Ob_{1}$ be any $(n-1)d \times nd$ matrix with orthonormal rows such that $\Ob_{1}\Ob_{1}^{\top}=\Ib_{(n-1)d}$ and the row space of $\Ob_{1}$ is $\Theta$. Then, it follows that
\begin{align*}\lambda_{\min , \perp}\left(\bar{L}_{\mathcal{G}}\right)=\lambda_{\min }\left(\Ob_{1} \bar{L}_{\mathcal{G}} \Ob_{1}^{\top}\right)
\end{align*}
Let $
X_{i, j}=\Ob_{1}\bar{\bC}_{ij} \Ob_{1}^{\top} \mathbf{1}((i, j) \in \mathcal{E})
$ for $i>j$. Then we have $\Ob_{1} \bar{L}_{\mathcal{G}} \Ob_{1}^{\top}=\sum_{i>j} X_{i, j}$, where $X_{i, j} \succeq \bm{0}$ and $\left\|X_{i, j}\right\| =2\left\|\bSigma_{1}\right\|_2\leq 2C.$
Further, we can obtain
\begin{align*}
& \lambda_{\min }\Big(\mathbb{E} \sum_{i>j} X_{i, j}\Big)=\lambda_{\min }\left(p \Ob_{1} \bSigma \Ob_{1}^{\top}\right)=p \lambda_{\min , \perp}(\bSigma) = c p n .
\end{align*}
Using Theorem 5.1.1 in \cite{tropp15}, we have
\begin{align*}
& \mathbb{P}\Big(\lambda_{\min }\Big(\sum_{i>j} X_{i, j}\Big) \leq \frac{1}{2} \lambda_{\min }\Big(\mathbb{E} \sum_{i>j} X_{i, j}\Big)\Big) \leq(n-1)d \cdot (\frac{2}{e})^{c p n / (4C)}.
\end{align*}
As a result, if $p n>c \log n$ for $c\geq \frac{44C}{c(1-\log 2)}$, we have
\begin{align*}
\mathbb{P}\Big(\frac{1}{2} c p n \leq \lambda_{\min }\Big(\sum_{i>j} X_{i, j}\Big) \Big) \geq 1-O(n^{-10}) .
\end{align*}

Similarly, let $Y_{i, j}=\bar{\bC}_{ij} \mathbf{1}((i, j) \in \mathcal{E})$ for $i>j$. We can obtain $\bar{L}_{\mathcal{G}}=\sum_{i>j} Y_{i, j}$, where $Y_{i, j} \succeq \bm{0}$ and $\left\|Y_{i, j}\right\| =2\left\|\bSigma_{1}\right\|_2 \leq 2C$.
We have the following bound of eigenvalues:
\begin{align*}
\lambda_{\max }\Big(\mathbb{E} \sum_{i>j} Y_{i, j}\Big)=p \lambda_{\max }(\bSigma) \leq p n C.
\end{align*}
This further gives
\begin{align*}
\mathbb{P}\Big(\lambda_{\max }\Big(\sum_{i>j} Y_{i, j}\Big) \geq \frac{3}{2} \lambda_{\max }\Big(\mathbb{E} \sum_{i>j} Y_{i, j}\Big)\Big) \leq nd\cdot (\frac{8e}{27})^{p n/4 } .
\end{align*}
Therefore, if $p n>c \log n$ for $c\geq \frac{44}{\log 27-\log 8e}$, we have
\begin{align*}\mathbb{P}\Big(\lambda_{\max }\Big(\sum_{i>j} Y_{i, j}\Big) \leq \frac{3}{2} p n\Big) \geq 1-O(n^{-10}) .
\end{align*}

\section{Proof of Lemmas in Section~\ref{sec:Hes}}\label{app:sec:Hes}
\subsection{Proof of Lemma~\ref{lem:hes}}
Recall the definitions $\bSigma^{\sharp}_{i}(\btheta)=\EE_{\bX}[\bm{1}(\argmax_j \bX^\top\btheta_j=i)\bX\bX^\top]$ and $\bSigma_{i}^*=\EE[\bm{1}(\argmax_j(\bX^\top\btheta_j^{*})=i)\bX\bX^\top]$.
We have
\begin{align}\label{Sigor}
&\|\bSigma^{\sharp}_{i}(\bar{\btheta})-\bSigma_{i}^*\|_2^2\nonumber\\
&=\Big\|\EE_{\bX}\Big[\Big(\bm{1}(\argmax_j \bX^\top  \bar{\btheta}_j=i)-\bm{1}(\argmax_j \bX^\top\btheta_j^{*}=i)\Big)\bX\bX^{\top}\Big]\Big\|_2^2\nonumber\\
&= \Big(\max_{\|\bv\|_2=1} \EE_{\bX}\Big[\Big(\bm{1}(\argmax_j \bX^\top \bar{\btheta}_j=i)-\bm{1}(\argmax_j \bX^\top\btheta_j^{*}=i)\Big)\bv^{\top}\bX\bX^{\top}\bv\Big]\Big)^2\nonumber\\
&\leq  \EE_{\bX}\Big[\Big(\bm{1}(\argmax_j \bX^\top \bar{\btheta}_j=i)-\bm{1}(\argmax_j \bX^\top\btheta_j^{*}=i)\Big)^2\Big]
\max_{\|\bv\|_2=1} \EE_{\bX}\Big[(\bv^{\top}\bX\bX^{\top}\bv)^2\Big].
\end{align}
The first expectation term concerns the appearance probability of item $i$, which can be decomposed as 
\begin{align}\label{Edif}
&\EE_{\bX}\Big[\Big(\bm{1}(\argmax_j \bX^\top \bar{\btheta}_j=i)-\bm{1}(\argmax_j \bX^\top\btheta_j^{*}=i)\Big)^2\Big]\nonumber\\
&=\PP\Big(\argmax_j \bX^\top \bar{\btheta}_j=i\Big)+\PP\Big(\argmax_j \bX^\top\btheta_j^{*}=i\Big)
-2\PP\Big(\argmax_j \bX^\top \bar{\btheta}_j=i\text{ and }\argmax_j \bX^\top\btheta_j^{*}=i\Big).
\end{align}
Notice that
\begin{align}\label{ii}
&\PP(\argmax_{k\in[n]}\bX^{\top}\bar{\btheta}_{ k}=i)-\PP(\argmax_{k\in[n]}\bX^{\top}\bar{\btheta}_{ k}=i,\argmax_{k\in[n]}\bX^\top\btheta_{ k}^{*}=i)\nonumber\\
&= \PP(\argmax_{k\in[n]}\bX^{\top}\bar{\btheta}_{ k}=i,\argmax_{k\in[n]}\bX^\top\btheta_{ k}^{*}\neq i)\nonumber\\
&= \PP(\bX^{\top}\bar{\btheta}_{i}-\max_{j^\prime\in[n]\setminus\{i\}}\bX^{\top}\bar{\btheta}_{j^\prime}>0,\, \bX^\top\btheta^{*}_{i}-\max_{j^\prime\in[n]\setminus\{i\}}\bX^\top\btheta^{*}_{j^\prime}<-2c_{\bX}\epsilon_0)\nonumber\\
&\quad+\PP(\bX^{\top}\bar{\btheta}_{i}-\max_{j^\prime\in[n]\setminus\{i\}}\bX^{\top}\bar{\btheta}_{j^\prime}>0,\, 0>\bX^\top\btheta^{*}_{i}-\max_{j^\prime\in[n]\setminus\{i\}}\bX^\top\btheta^{*}_{j^\prime}\geq -2c_{\bX}\epsilon_0).
\end{align}
Since $|\bX^{\top}\bar{\btheta}_{i}-\max_{j^\prime}\bX^{\top}\bar{\btheta}_{j^\prime}-\bX^\top\btheta^{*}_{i}+\max_{j^\prime}\bX^\top\btheta^{*}_{j^\prime}|
\leq 2\max_{j}|(\bar{\btheta}_{j}-\btheta^{*}_{j})^\top\bX|\leq 2c_{\bX}\epsilon_{0},$
the first term of (\ref{ii}) is $0$. The second term
\begin{align*}
&\PP(\bX^{\top}\bar{\btheta}_{i}-\max_{j^\prime\in[n]\setminus\{i\}}\bX^{\top}\bar{\btheta}_{j^\prime}>0,\, 0>\bX^\top\btheta^{*}_{i}-\max_{j^\prime\in[n]\setminus\{i\}}\bX^\top\btheta^{*}_{j^\prime}\geq -2c_{\bX}\epsilon_0)\nonumber\\
&\leq \PP(0<\max_{j^\prime\in[n]\setminus\{i\}}\bX^\top\btheta^{*}_{j^\prime}-\bX^\top\btheta^{*}_{i}\leq 2c_{\bX}\epsilon_0)\leq 2c_{\bX}C\epsilon_{0}
\end{align*}
using Assumption~\ref{ass:dif}. 
Therefore, we have
\begin{align*}
&\PP(\argmax_{k\in[n]}\bX^{\top}\bar{\btheta}_{ k}=i)-\PP(\argmax_{k\in[n]}\bX^{\top}\bar{\btheta}_{ k}=i,\argmax_{k\in[n]}\bX^\top\btheta_{ k}^{*}=i)\leq 2c_{\bX}C\epsilon_{0}.
\end{align*}
Similarly, we can obtain
\begin{align*}
&\PP(\argmax_{k\in[n]}\bX^\top\btheta_{ k}^{*}=i)-\PP(\argmax_{k\in[n]}\bX^\top\btheta_{ k}^{*}=i,\argmax_{k\in[n]}\bX^{\top}\bar{\btheta}_{ k}=i)\leq 2c_{\bX}C\epsilon_{0}.
\end{align*}

Combining the above results, we have
\begin{align*}
\|\bSigma^{\sharp}_{i}-\bSigma_{i}^*\|_2^2\leq & 4Cc_{\bX}\epsilon_{0}\lesssim \epsilon_0.
\end{align*}
Proof regarding $|\tilde{\bSigma}^{\sharp}_{i}(\bar{\btheta})
-\tilde{\bSigma}_{i}|$ is similar, so we omit it.

\subsection{Proof of Lemma~\ref{app:lem:estil}}
Let $X_\ell=\bSigma_{\ell i}-\bSigma^{\sharp}_{i}(\hat{\btheta}(\ell-1))$ for $\ell\in\cS_t$. Note that $\EE\|X_\ell\|_2\leq2c_{\bX}$ and $\EE[X_\ell \given \cH_{\ell-1}]=0$ for $\ell\in\cS_t$. Hence, $\{X_\ell\}_{\ell\in\cS_t}$ is a martingale difference sequence. We can obtain $\lambda_{\max}(X_\ell)\leq 2c_{\bX}$ and $\|\sum_{\ell\in\cS_t}\EE[X_\ell^2\given \cH_{\ell-1}]\|_2\leq 4|\cS_t|c_{\bX}^2$. Applying Lemma~\ref{FreedmanRec}, we have 
\begin{align*}
\|\sum_{\ell\in\cS_t}(\bSigma_{\ell i}-\bSigma^{\sharp}_{i}(\hat{\btheta}(\ell-1)))\|_2\leq 4c_{\bX}\sqrt{6|\cS_t|\log T}
\end{align*}
holds with probability $1-O(T^{-3})$.
Proof regarding $\Big|\sum_{\ell\in\cS_t}\Big(\tilde{\bSigma}_{\ell i}-\tilde{\bSigma}^{\sharp}_{i}(\hat{\btheta}(\ell-1))\Big)\Big|$ is similar, so we omit it.

\subsection{Proof of Lemma~\ref{lem:covT}}
We consider $\lambda_{\min}(\bSigma_{i}(t))$ first. Let $t_0$ be the $(cn^2p\log T)^{1/(1-\alpha)}$ in Theorem~\ref{thm:2rate}. We have the following decomposition of $\bSigma_{i}(t)$:
\begin{align}\label{al}
\bSigma_{i}(t)
&=\sum_{\ell\in [t]\setminus\cS_{t}}\bSigma_{\ell i}+\sum_{\ell\in \cS_{t}\cap [t_0]}\bSigma_{\ell i}+\sum_{\ell\in \cS_{t}\setminus [t_0]}\bSigma_{\ell i}.
\end{align}
We can further decompose the last term as follows:
\begin{align}\label{de}
\sum_{\ell\in \cS_{t}\setminus [t_0]}\bSigma_{\ell i}
=\sum_{\ell\in \cS_{t}\setminus [t_0]}\bSigma_{i}^*
+\sum_{\ell\in \cS_{t}\setminus [t_0]}\Big(\bSigma_{\ell i}-\bSigma^{\sharp}_{i}(\hat{\btheta}(\ell-1))\Big)
+\sum_{\ell\in \cS_{t}\setminus [t_0]}\Big(\bSigma^{\sharp}_{i}(\hat{\btheta}(\ell-1))-\bSigma_{i}^*\Big).
\end{align}

We first prove $\cA_{T_0}^{1,1}\cap \cA_{T_0}^{1,2}$ holds with high probability. Applying Lemma~\ref{app:lem:estil}, with probability $1-\tilde{C}_1T^{-3}$, we have
\begin{align}\label{de1}
&\Big\|\sum_{\ell\in \cS_{T_0}}\Big(\bSigma_{\ell i}-\bSigma^{\sharp}_{i}(\hat{\btheta}(\ell-1))\Big)\Big\|_2
\leq
4c_{\bX}\sqrt{6|\cS_{T_0}|\log T}.
\end{align}
Note that $|\cS_{T_0}|\asymp T_0$. Therefore, there exists a constant $c$, for $T_0\geq cn^2\log T$, we have $\cA_{T_0}^{1,1}$ holds.
Then we consider $\cA_{T_0}^{1,2}$. Applying Lemma~\ref{lem:hes}, we have the following bound:
\begin{align}\label{Sig2}
&\|\sum_{\ell\in \cS_{T_0}\setminus [t_0]}\Big(\bSigma^{\sharp}_{i}(\hat{\btheta}(\ell-1))-\bSigma_{i}^*\Big)\|_2
\lesssim\sum_{\ell\in \cS_{T_0}\setminus [t_0]}\max_{k\in[n]}\|\hat{\btheta}_k(\ell-1)-\ths_k\|_2^{1/2}\nonumber\\
 &
\lesssim\sum_{\ell\in \cS_{T_0}\setminus [t_0]}(\sqrt{\frac{\log T}{M}}n\ell^{\alpha-\frac{1}{2}})^{\frac{1}{2}}\asymp (\frac{\log T}{M})^{\frac{1}{4}}T_0^{\frac{\alpha}{2}+\frac{3}{4}}n^{\frac{1}{2}}
\end{align}
with probability $1-O(T_0/T^3)$.
There exists a constant $c$, such that if $T_0\geq cn^{6/(1-2\alpha)}(\frac{\log T}{M})^{1/(1-2\alpha)}$, we have $\cA_{T_0}^{1,2}$ holds with probability $1-\tilde{C}_3T_0/T^3$, where $\tilde{C}_3$ is a constant.

Conditioning on $\cA_{s-1}^{1,1}\cap \cA_{s-1}^{1,2}$, we now consider $\cA_{s}^{1,1}$ and $\cA_{s}^{1,2}$.
As for $\cA_{s}^{1,1}$, we still utilize Lemma~\ref{app:lem:estil}, which yields
\begin{align*}
&\Big\|\sum_{\ell\in \cS_{s}}\Big(\bSigma_{\ell i}-\bSigma^{\sharp}_{i}(\hat{\btheta}(\ell-1))\Big)\Big\|_2
\leq
4c_{\bX}\sqrt{2|\cS_{s}|\log (nT)}.
\end{align*}
Therefore, $\cA_{s}^{1,1}$ holds with probability $1-\tilde C_3/T^{3}$.
For $\cA_{s}^{1,2}$, we have 
\begin{align*}
\|\Big(\bSigma^{\sharp}_{i}(\hat{\btheta}(s-1))-\bSigma_{i}^*\Big)\|_2&\leq \max_{k\in[n]}\|\hat{\btheta}_k(s-1)-\ths_k\|_2^{1/2}
\lesssim (\frac{n\log T}{ptM})^{1/4}
\end{align*}
with probability $1-\tilde C_2/T^3$.
Combining the properties on event $\cA_{s-1}^{1,1}\cap \cA_{s-1}^{1,2}$, there exists $c$, such that if $T_0\geq cn^{5}\log T/(pM)$, we have
\begin{align*}
&\lambda_{\min}(\sum_{\ell\in \cS_{s}\setminus [t_0]}\bSigma_{ij}^*)/2
-\|\sum_{\ell\in \cS_{s}\setminus [t_0]}\Big(\bSigma^{\sharp}_{ij}(\hat{\btheta}(\ell-1))-\bSigma_{ij}^*\Big)\|_2\\
&\geq  \lambda_{\min}(\sum_{\ell\in \cS_{s-1}\setminus [t_0]}\bSigma_{ij}^*)/2
-\|\sum_{\ell\in \cS_{s-1}\setminus [t_0]}\Big(\bSigma^{\sharp}_{ij}(\hat{\btheta}(\ell-1))-\bSigma_{ij}^*\Big)\|_2 + \frac{\tilde{c}_0}{2}- \|\Big(\bSigma^{\sharp}_{ij}(\hat{\btheta}(s-1))-\bSigma_{ij}^*\Big)\|_2
\nonumber\\
&\geq \frac{3s\tilde{c}_0}{8}.
\end{align*}
Hence, we have $\cA_{s}^{1,1}\cap \cA_{s}^{1,2}$ holds with probability $1-(\tilde C_2+\tilde C_3)/T^3$ conditioning on $\cA_{s-1}^{1,1}\cap \cA_{s-1}^{1,2}$. Therefore, the event $\cA_{s}^{1,1}\cap \cA_{s}^{1,2}$ holds with probability $1-(\tilde{C}_3T_0+\tilde{C}_2(s-T_0)+\tilde{C}_1(s-T_0))/T^3$.

We can decompose $\tilde{\bSigma}_{ij}(T_0)$ in a similar way as in (\ref{al}) and (\ref{de}). There exists $c$, when $T_0\geq c\max\{(n^2p)^{1/\alpha}, (n^2p)^{\frac{2-\alpha}{1-\alpha}}(\log T)^{\frac{1}{1-\alpha}}\}$, we have $T_0\gtrsim n^2p(T_0^{1-\alpha}+t_0)$. Therefore, for $t\geq T_0$, we can obtain \begin{align}\label{ft}
\sum_{\ell\in [t]\setminus\cS_{t}}\tilde{\bSigma}_{\ell ij}+\sum_{\ell\in \cS_{t}\cap [t_0]}\tilde{\bSigma}_{\ell ij}\leq t\tilde{c}_1/4.
\end{align}
 Following the same procedure as proving $\cA_t^{1,1}$ and $\cA_t^{1,2}$, we have
\begin{align}
&|\sum_{\ell\in \cS_{t}\setminus [t_0]}\Big(\tilde{\bSigma}_{\ell ij}-\tilde{\bSigma}^{\sharp}_{ij}(\hat{\btheta}(\ell-1))\Big)|\leq t\tilde{c}_1/8,\label{lam3}\\
&|\sum_{\ell\in \cS_{t}\setminus [t_0]}\Big(\tilde{\bSigma}^{\sharp}_{ij}(\hat{\btheta}(\ell-1))-\tilde{\bSigma}_{ij}\Big)|\leq t\tilde{c}_1/8.\label{lam4}
\end{align}
We conclude the proof combining (\ref{ft}), (\ref{lam3}) and (\ref{lam4}).

\subsection{Proof of Lemma~\ref{lem:Sigij}}
Let $\bb_\ell(i)$ represent the sample of $\bb(i)$ at time $\ell$. Note that $\bb_\ell(i)$ is independent of $\cH_{\ell-1}$ and $\bx_\ell$. We have 
\begin{align*}
\bSigma_{ij}(t)=&\sum_{\ell\in[t]}\bm{1}(\ba_{\ell}=(i,j))\bX_{\ell}\bX_{\ell}^\top=\sum_{\ell\in[t]}\left(\bm{1}(i_\ell=i)\bm{1}(\bb_\ell(i)=j)+\bm{1}(i_\ell=j)\bm{1}(\bb_\ell(j)=i)\right)\bX_{\ell}\bX_{\ell}^\top.
\end{align*}
We let $d_i$ represent the degree of node $i$ in a given $\cG$ and define the following random matrix
\begin{align*}
Z_\ell =\bm{1}(i_\ell=i)(\bm{1}(\bb_\ell(i)=j)-\frac{1}{d_i})\bX_\ell\bX_\ell^\top.
\end{align*}
Note that $\EE[Z_\ell \given \cH_{\ell-1},\bX_{\ell}]=\bm{1}(i_\ell=i)\EE[\bm{1}(\bb_\ell(i)=j)-\frac{1}{d_i} \given \cH_{\ell-1},\bX_{\ell}]=0$ and $\|Z_\ell\|_2\leq c_{\bX}$, so $Z_\ell$ is a martingale difference sequence. Further, we can obtain
\begin{align*}
\|\sum_{\ell=1}^{t} \EE[Z_\ell Z_\ell^\top \given \cH_{\ell-1},\bX_{\ell}]\|_2=&\|\sum_{\ell=1}^{t} \EE[\bm{1}(i_\ell=i)(\bm{1}(\bb_\ell(i)=j)-\frac{1}{d_i})^2\bX_\ell\bX_\ell^\top\bX_\ell\bX_\ell^\top \given \cH_{\ell-1},\bX_{\ell}]\|_2\nonumber\\
=&\|\sum_{\ell=1}^{t}\bm{1}(i_\ell=i)\bX_\ell\bX_\ell^\top\bX_\ell\bX_\ell^\top \EE[(\bm{1}(\bb_\ell(i)=j)-\frac{1}{d_i})^2 \given \cH_{\ell-1},\bX_{\ell}]\|_2\nonumber\\
\leq &\frac{3t\tilde{c}_1}{2}c_{\bX}\frac{1}{d_i}.
\end{align*}
Using Lemma~\ref{FreedmanRec}, with probability $1-O(n^{-2}T^{-3})$, we have 
\begin{align*}
\|\sum_{\ell=1}^{t}Z_\ell\|_2
\leq 3\sqrt{\frac{2\tilde{c}_1 c_{\bX}t\log (nT)}{d_i}}+4c_{\bX}\log (nT).
\end{align*}
Therefore, with probability $1-O(n^{-2}T^{-3})$, we have
\begin{align*}
\|\bSigma_{ij}(t)-\frac{1}{d_i}\bSigma_{i}(t)-\frac{1}{d_j}\bSigma_{j}(t)\|_2\leq 2(3\sqrt{\frac{2\tilde{c}_1 c_{\bX}t\log (nT)}{d_i}}+4c_{\bX}\log (nT)).
\end{align*}

Since $t\gg n^2p\log (nT)$, we have $\lambda_{\min}(\bSigma_{ij}(t))\geq t\tilde{c}_0/(2np)$. Similarly, we can obtain $\tilde{\bSigma}_{ij}(t)\leq 13t\tilde{c}_1/(2np)$.

\subsection{Proof of Lemma \ref{Hessmin}}
Recall that we have the Hessian as follows:
\begin{align*}
\nabla^{2} \cL_t(\btheta)=&\sum_{(i,j)\in \mathcal{E},i>j}M\sum_{\ell\in \cT_{t,ij}}\frac{e^{\bX_{\ell }^\top \btheta_{i}} e^{\bX_{\ell }^\top \btheta_{j}}}{(e^{\bX_{\ell }^\top \btheta_{i}}+e^{\bX_{\ell }^\top \btheta_{j}})^{2}}(\bc_{\ell i}-\bc_{\ell j})(\bc_{\ell i}-\bc_{\ell j})^\top.
\end{align*}
We first consider $L_{\mathcal{G}}=M\sum_{(i,j)\in \mathcal{E},i>j}\sum_{\ell\in\cT_{t,ij}}(\bc_{\ell i}-\bc_{\ell j})(\bc_{\ell i}-\bc_{\ell j})^\top.$
We have
\begin{align*}
L_{\mathcal{G}}=&M\sum_{(i,j)\in \mathcal{E},i>j;}\sum_{\ell=1}^{t}\bm{1}(\ba_\ell=(i,j))\left((\be_i-\be_j)\otimes\bX_\ell\right)\left((\be_i-\be_j)\otimes\bX_\ell\right)^\top\nonumber\\
=&M\sum_{(i,j)\in \mathcal{E},i>j;}\sum_{\ell=1}^{t}\bm{1}(\ba_\ell=(i,j))\left((\be_i-\be_j)(\be_i-\be_j)^\top\right)\otimes(\bX_{\ell}\bX_{\ell}^\top)\nonumber\\
=&M\sum_{(i,j)\in\cE,i>j} \Big(\Ab_{ij}\otimes(\sum_{\ell\in\cT_{t,ij}}\bX_\ell\bX_\ell^\top)\Big)
\end{align*} 
Conditioning on event $\cA_t^2$, the matrix $\sum_{\ell\in\cT_{t,ij}}\bX_\ell\bX_\ell^\top -t\tilde{c}_{0}/(2np)\Ib_{d}$ is positive semidefinite for all $(i,j)\in\cE$. Therefore, the matrix $\sum_{(i,j)\in\cE,i>j} (\Ab_{ij}\otimes(\sum_{\ell\in\cT_{t,ij}}\bX_\ell\bX_\ell^\top -t\tilde{c}_{0}/(2np)\Ib_{d}))$ is positive semidefinite. We can obtain
\begin{align}\label{lammin}
\lambda_{\min, \perp}(L_{\cG})&=M\lambda_{\min, \perp}\Big(\sum_{(i,j)\in\cE,i>j} (\Ab_{ij}\otimes(\sum_{\ell\in\cT_{t,ij}}\bX_\ell\bX_\ell^\top))\Big) \nonumber\\
&\geq M\lambda_{\min, \perp}\Big(\sum_{(i,j)\in\cE,i>j} \Ab_{ij}\otimes t\tilde{c}_{0}/(2np)\Ib_{d}\Big)\nonumber\\
&\geq \tilde{c}_{0}tM/2,
\end{align}
where the last inequality utilizes the property of $\cA_0$.

Since $\|\bX\|_2\leq c_{\bX}$ and $\|\ths_i-\ths_j\|_2\leq c_{\btheta}$,
we have
\begin{align*}
|\bX_{\ell}^{\top}\btheta_{ i}-\bX_{\ell}^{\top}\btheta_{j}|
\leq& |\bX_{\ell}^\top\btheta_{ i}^{*}-\bX_{\ell}^\top\btheta_{j}^{*}|
+|\bX_{\ell}^\top\btheta_{ i}^{*}-\bX_{\ell}^{\top}\btheta_{i}|
+|\bX_{\ell}^\top\btheta_{ j}^{*}-\bX_{\ell}^{\top}\btheta_{j}|\nonumber\\
\leq& \|\ths_i-\ths_j\|_2\|\bX_{\ell}\|_2+2\max_{i\in[n]}\|\btheta_{i}-\btheta_{i}^{*}\|_{2}\|\bX_\ell\|_2\nonumber\\
\leq &2(c_{\btheta}+\epsilon_{1})c_{\bX}.
\end{align*}
Further, we can obtain $$
\frac{e^{\bX_{\ell}^\top  \btheta_{i}} e^{\bX_{\ell}^\top  \btheta_{j}}}{(e^{\bX_{\ell}^\top  \btheta_{i}}+e^{\bX_{\ell}^\top  \btheta_{j}})^{2}}\geq \frac{1}{4}e^{-|\bX_{\ell}^{\top}\btheta_{ i}-\bX_{\ell}^{\top}\btheta_{j}|}\geq \frac{1}{4}e^{-2(c_{\btheta}+\epsilon_{1})c_{\bX}}
\geq \frac{1}{4\kappa e^{2\epsilon_1c_{\bX}}}.$$
Combing (\ref{lammin}), we have $\lambda_{\min, \perp}\left(\nabla^2 \mathcal{L}(\btheta)\right) \geq \tilde{c}_0tM/(8\kappa e^{2\epsilon_1c_{\bX}}).$
Similarly, we have $\|L_{\cG}\|_2\leq 39\tilde{c}_1tM/4$, so $\lambda_{\max }\left(\nabla^2 \mathcal{L}_{\lambda,t}(\btheta)\right) \leq \lambda+\frac{1}{4}\|L_{\mathcal{G}}\|_{2}\leq \lambda+3\tilde{c}_1tM.$

\section{Proof of Lemmas in Section~\ref{app:sec:err2}}\label{app:sec:ratepf}
\subsection{Proof of Lemma~\ref{gr}}
Utilizing the definition of $\mathcal{L}_{\lambda,t}(\btheta)$, we have the closed form of its gradient,
\begin{align*}
\nabla \mathcal{L}_{\lambda,t}(\btheta^*)=&\sum_{(i,j)\in \mathcal{E},i>j;}\sum_{\ell\in \cT_{t,ij};}\sum_{m=1}^{M}\{-y_{ji}^{(m)}(\ell)+\frac{e^{\bX_{\ell}^\top  \btheta_{i}}}{e^{\bX_{\ell}^\top  \btheta_{i}}+e^{\bX_{\ell}^\top  \btheta_{j}}}\}(\bc_{\ell i}-\bc_{\ell j})
+\lambda \btheta^*\nonumber\\
=&\sum_{(i,j)\in \mathcal{E},i>j;}\sum_{\ell\in \cT_{t,ij};}\sum_{m=1}^{M}z_{ij}^{(m)}(\ell)+\lambda \btheta^*.
\end{align*}
For the given $\cG$, let $(\bar{i}_k,\bar{j}_k)$ represent the $k$-th pair in $\cE$. Consider the sequence\\ $\bZ=\{\bZ_{111},\ldots,\bZ_{11M},\bZ_{121}, \ldots,\bZ_{12M},\ldots, \bZ_{21M},\ldots,\bZ_{21M},\bZ_{221},\ldots,\bZ_{22M},\ldots,\bZ_{t11},\ldots,\\\bZ_{t1M},\bZ_{t21}\ldots,\bZ_{t2M},\ldots\}$, where $\bZ_{\ell km}=\bm{1}(\ba_\ell=(\bar{i}_k,\bar{j}_k))z_{\bar{i}_k,\bar{j}_k}^{(m)}(\ell)$ is a random vector. Notice that $\EE[\bZ_{\ell km}\given \cH_{\ell-1}, \bX_{\ell}]=0$ and $\bZ$ is a martingale difference sequence. The dimension of the elements in $\bZ$ is $1\times nd$.
Let $
\bW_{\text{col}}=\sum_{k}\sum_{\ell=1}^{t}\sum_{m=1}^{M}\EE(\bZ_{\ell km}\bZ_{\ell km}^\top\given \cH_{\ell-1}, \bX_{\ell})$
and $
\bW_{\text{row}}=\sum_{k}\sum_{\ell=1}^{t}\sum_{m=1}^{M}\EE(\bZ_{\ell km}^\top\bZ_{\ell km}\given \cH_{\ell-1}, \bX_{\ell}).$
Since $\mathbb{E}[z_{ij}^{(m)\top}(\ell)z_{ij}^{(m)}(\ell)\given \bX_{\ell}]\leq 2\bX_{\ell}^\top\bX_{\ell}$, we have
\begin{align*}
\bW_{\text{row}}=&\sum_{(i,j)\in\cE,i>j;}\sum_{\ell=1}^{t}\sum_{m=1}^{M}\EE\left(\bm{1}(\ba_\ell=(i,j))z_{ij}^{(m)\top}(\ell)z_{ij}^{(m)}(\ell)\given \cH_{\ell-1}, \bX_{\ell}\right)\nonumber\\
= &\sum_{(i,j)\in\cE,i>j;}\sum_{\ell=1}^{t}\sum_{m=1}^{M}\bm{1}(\ba_\ell=(i,j))\mathbb{E}[z_{ij}^{(m)\top}(\ell)z_{ij}^{(m)}(\ell)\given \bX_{\ell}]\nonumber\\
\leq & \sum_{(i,j)\in\cE,i>j;}\sum_{\ell\in \cT_{t,ij}} M\bm{\tilde{\bSigma}_{\ell ij}}\nonumber\\
\leq &13nt\tilde{c}_1M/2.
\end{align*}
where the last inequality relies on the property of event $\cA_t^2$.
Besides, notice that
\begin{align*}
\mathbb{E}[z_{ij}(\ell)z_{ij}(\ell)^{\top}\given \bX_{\ell}]=&\mathbb{E}\Big[\Big(
-y_{ji}(\ell)+\frac{e^{\bX_{\ell}^\top\btheta^{*}_{i}}}{e^{\bX_{\ell}^\top\btheta^{*}_{i}}+e^{\bX_{\ell}^\top\btheta^{*}_{j}}}
\Big)^2(\bc_{\ell i}-\bc_{\ell j})(\bc_{\ell i}-\bc_{\ell j})^{\top}\given \bX_{\ell}\Big]\nonumber\\
\preceq & (\bc_{\ell i}-\bc_{\ell j})(\bc_{\ell i}-\bc_{\ell j})^{\top},
\end{align*}
Combining Lemma~\ref{Hessmin}, we can obtain
\begin{align*}
\|\bW_{\text{col}}\|_{2}&=\Big\|\sum_{(i,j)\in\cE,i>j;}\sum_{\ell=1}^{t}\sum_{m=1}^{M}\EE\left(\bm{1}(\ba_\ell=(i,j)) z_{ij}^{(m)}(\ell)z_{ij}(\ell)^{(m)\top}\given \cH_{\ell-1}, \bX_{\ell}\right)\Big\|_{2}\nonumber\\
&=\Big\|\sum_{(i,j)\in\cE,i>j;}\sum_{\ell=1}^{t}\sum_{m=1}^{M}\bm{1}(\ba_\ell=(i,j))\mathbb{E}[z_{ij}^{(m)}(\ell)z_{ij}(\ell)^{(m)\top}\given \bX_{\ell}]\Big\|_{2}\nonumber\\
&=\Big\|\sum_{(i,j)\in\cE,i>j;}\sum_{\ell\in\cT_{t,ij}}M\bm{1}(\ba_\ell=(i,j))(\bc_{\ell i}-\bc_{\ell j})(\bc_{\ell i}-\bc_{\ell j})^{\top}\Big\|_{2}\nonumber\\
&=\|L_{\cG}\|_2\leq 2\tilde{c}_1tM.
\end{align*}

Set $\sigma^{2}=\max\{\bW_{\text{row}},\|\bW_{\text{col}}\|_{2}\}=2\tilde{c}_1ntM.$
Using Lemma~\ref{FreedmanRec}, we have 
\begin{align}\label{mar}
&\left\|\nabla \mathcal{L}_{\lambda}(\btheta^*)-\lambda\btheta^*\right\|
\leq\Big\|\sum_{(i,j)\in \mathcal{E},i>j}\sum_{\ell\in\cT_{t,ij}}z_{ij}(\ell)\Big\|_2
\lesssim \sqrt{\tilde{c}_1ntM\log T}
\end{align}
holds with probability $1-O(T^{-3})$.
Further, with probability $1-O(T^{-3})$, we have 
\begin{align*}
\left\|\nabla \cL_{\lambda,t}\left(\btheta^* \right)\right\|_2 \leq \sqrt{\tilde{c}_1ntM\log T}+\lambda\left\|\btheta^*\right\|_2
\lesssim\sqrt{\tilde{c}_1ntM\log T}.
\end{align*}

\subsection{Proof of Lemma \ref{thetaT}}
First, utilizing Lemma~\ref{Hessmin}, we can obtain
\begin{align}\label{thetas}
\left\|\btheta^s-\hat{\btheta}\right\|_2 \leq \rho^{s/2}\left\|\btheta^0-\hat{\btheta}\right\|_2,
\end{align}
where $\rho=e^{-\lambda/(\lambda+2\tilde{c}_1tM)}$.
Actually, note that $\cL_{\lambda,t}\left(\btheta\right)$ is $(\lambda+2\tilde{c}_1tM)$-smooth and $\lambda $-strong convex by definition.
Applying the convergence result of Theorem 3.10 in \cite{Bub} yields (\ref{thetas}).

Then we claim that with $\lambda=c_{\lambda}\sqrt{\tilde{c}_1tM\log T}$, we have
\begin{align*}
\left\|\btheta^0-\hat{\btheta}\right\|_2=\left\|\hat{\btheta}-\btheta^*\right\|_2 \lesssim \frac{1}{c_{\lambda}}\sqrt{n}.
\end{align*}
In fact, expanding $\cL_{\lambda,t}(\hth)$ at $\btheta^*$, we have
\begin{align*}
\cL_{\lambda,t}\left(\btheta^* \right) \geq \cL_{\lambda,t}(\hat{\btheta} )= & \cL_{\lambda,t}\left(\btheta^* \right)+\left\langle\nabla \cL_{\lambda,t}\left(\btheta^* \right), \hat{\btheta}-\btheta^*\right\rangle +\frac{1}{2}\left(\hat{\btheta}-\btheta^*\right)^{\top} \nabla^2 \cL_{\lambda,t}(\tilde{\btheta} )\left(\hat{\btheta}-\btheta^*\right)
\end{align*}
where $\tilde{\btheta}$ is between $\hat{\btheta}$ and $\btheta^*$. This together with the Cauchy-Schwartz inequality gives
\begin{align*}
\frac{1}{2}\left(\hat{\btheta}-\btheta^*\right)^{\top} \nabla^2 \cL_{\lambda,t}(\tilde{\btheta} )\left(\hat{\btheta}-\btheta^*\right) & \leq-\left\langle\nabla \cL_{\lambda,t}\left(\btheta^* \right), \hat{\btheta}-\btheta^*\right\rangle  \leq\left\|\nabla \cL_{\lambda,t}\left(\btheta^* \right)\right\|_2\left\|\hat{\btheta}-\btheta^*\right\|_2.
\end{align*}
The above inequality yields
\begin{align*}
\left\|\hat{\btheta}-\btheta^*\right\|_2 \leq \frac{2\left\|\nabla \cL_{\lambda,t}\left(\btheta^* \right)\right\|_2}{\lambda_{\min }\left(\nabla^2 \cL_{\lambda,t}(\tilde{\btheta} )\right)}.
\end{align*}
From the trivial lower bound $\lambda_{\min }\left(\nabla^2 \cL_{\lambda,t}(\tilde{\btheta} )\right) \geq \lambda$, the preceding inequality gives
\begin{align}\label{thetahs}
\left\|\hat{\btheta}-\btheta^*\right\|_2 &\leq \frac{2\left\|\nabla \cL_{\lambda,t}\left(\btheta^* \right)\right\|_2}{\lambda}\leq \frac{2c_{1}}{c_{\lambda}}\sqrt{n}.
\end{align}

Combining (\ref{thetas}) and (\ref{thetahs}), we have 
\begin{align}\label{thetaTtilde}
\quad \|\btheta^{\tilde{T}}  -\hat{\btheta} \|_2 \leq \rho^{\tilde{T}/2}\left\|\btheta^0-\hat{\btheta}\right\|_2 \leq \rho^{\tilde{T}/2} \frac{2c_{1}}{c_{\lambda}}\sqrt{n}=\exp\left(-\frac{\tilde{T}\lambda}{2\lambda+4\tilde{c}_1tM}\right) \frac{2c_{1}}{c_{\lambda}}\sqrt{n}.
\end{align}
Since $\lambda= c_{\lambda}\sqrt{\tilde{c}_1tM\log T}\lesssim \tilde{c}_1tM\log T$, we can obtain
\begin{align*}
(\ref{thetaTtilde})&<\exp\left(-\frac{\tilde{T}\lambda}{8\tilde{c}_1tM\log T}\right) \frac{2c_{1}}{c_{\lambda}}\sqrt{n} \nonumber= \frac{2c_{1}}{c_{\lambda}}\sqrt{n} \exp \left(-\frac{c_{\lambda}\tilde{T}}{8} \sqrt{\frac{1}{\tilde{c}_1tM\log T}}\right) \lesssim \sqrt{\frac{n\log T}{ptM}}.
\end{align*}

\subsection{Proof of Lemma~\ref{induc}}
Notice that (\ref{d}) holds automatically for $s=0$. We then present the proof in an inductive manner.
\begin{lemma}\label{t2}
Conditioning on event $\cA\cap \cA_0\cap\cA_t^2\cap\cA_t^3$, suppose (\ref{d}) holds for the $s$-th iteration. For $C_1\geq 10c_{1}$, we have 
\begin{align*}
\|\btheta^{s+1}-\btheta^{*}\|_{2}\leq 
\frac{C_1 \kappa}{\tilde{c}_0} \sqrt{\frac{n\tilde{c}_1 \log T}{tM}}.
\end{align*}
\end{lemma}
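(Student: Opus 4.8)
The goal is a one-step induction: assuming all four bounds in \eqref{d} hold at iteration $s$, show the first bound ($\|\btheta^{s+1}-\btheta^*\|_2$) holds at $s+1$. The strategy is the standard gradient-descent-contraction-plus-noise decomposition. I would write $\btheta^{s+1}-\btheta^* = \btheta^s - \eta\nabla\cL_{\lambda,t}(\btheta^s) - \btheta^*$ and insert $\nabla\cL_{\lambda,t}(\btheta^*)$:
\begin{align*}
\btheta^{s+1}-\btheta^* = \big(\btheta^s-\btheta^*\big) - \eta\big(\nabla\cL_{\lambda,t}(\btheta^s)-\nabla\cL_{\lambda,t}(\btheta^*)\big) - \eta\nabla\cL_{\lambda,t}(\btheta^*).
\end{align*}
By the mean value theorem, $\nabla\cL_{\lambda,t}(\btheta^s)-\nabla\cL_{\lambda,t}(\btheta^*) = H(\btheta^s-\btheta^*)$ where $H = \int_0^1\nabla^2\cL_{\lambda,t}(\btheta^*+u(\btheta^s-\btheta^*))\,du$, so the first two terms become $(\Ib-\eta H)(\btheta^s-\btheta^*)$. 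The contraction argument then rests on controlling $\|\Ib-\eta H\|$ restricted to $\Theta$ (and noting $\btheta^s-\btheta^*\in\Theta$, or rather its relevant projection).

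The key inputs are: (i) the eigenvalue bounds on the Hessian from Lemma~\ref{Hessmin} — since the induction hypothesis gives $\max_i\|\btheta^s_i-\btheta^*_i\|_2 \leq (C_4\kappa^2/\tilde c_0)\sqrt{\tilde c_1\log T/(ptM)}$, which is $o(1)$ under the stated scaling, the intermediate points on the segment from $\btheta^*$ to $\btheta^s$ all satisfy the $\epsilon_1$-neighborhood hypothesis of Lemma~\ref{Hessmin}, so $\lambda_{\min,\perp}(\nabla^2\cL_{\lambda,t})\gtrsim \tilde c_0 tM/(8\kappa e^{2\epsilon_1 c_{\bX}})$ and $\lambda_{\max}\lesssim \lambda + \tilde c_1 tM$; with $\eta = 1/(\lambda+10\tilde c_1 tM)$ this yields $\|(\Ib-\eta H)v\|_2 \leq (1-c\,\tilde c_0/(\kappa\tilde c_1))\|v\|_2$ for $v\in\Theta$, a genuine contraction with a constant factor bounded away from $1$; and (ii) the gradient bound from Lemma~\ref{gr} (event $\cA_t^3$), $\|\nabla\cL_{\lambda,t}(\btheta^*)\|_2 \leq c_1\sqrt{\tilde c_1 n tM\log T}$, so $\eta\|\nabla\cL_{\lambda,t}(\btheta^*)\|_2 \lesssim c_1\sqrt{\tilde c_1 n\log T/(tM)}$ up to the $\kappa,\tilde c_1,\tilde c_0$ factors. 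Geometric-series summation of the noise term then gives a steady-state bound of the form $(c_1/(\text{contraction gap}))\sqrt{\tilde c_1 n\log T/(tM)}$, and I would choose $C_1 \geq 10 c_1$ (absorbing the $\kappa/\tilde c_0$ prefactor as in the lemma statement) to close the induction.

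The main obstacle is the restriction to the subspace $\Theta$: the Hessian has a null direction (the all-equal shift), so the contraction only holds for the component of $\btheta^s-\btheta^*$ in $\Theta$. One must verify that the gradient descent iterates stay in $\Theta$ — which follows because $\btheta^0=\btheta^*\in\Theta$ and $\nabla\cL_{\lambda,t}(\btheta)$ has each block-sum equal to $\lambda\sum_i\btheta_i$, so the update preserves $\sum_i\btheta_i=\zero$ when $\btheta^0$ already satisfies it; hence $\btheta^s-\btheta^*\in\Theta$ throughout and $\lambda_{\min,\perp}$ is the relevant quantity. A secondary technical point is confirming that the whole segment $\{\btheta^*+u(\btheta^s-\btheta^*):u\in[0,1]\}$ lies in the $\epsilon_1$-ball required by Lemma~\ref{Hessmin} with $\epsilon_1$ small enough that $e^{2\epsilon_1 c_{\bX}}$ is bounded by an absolute constant; this is immediate from the induction hypothesis and the asymptotic regime, but should be stated. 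Everything else is routine: substitute the two quantitative bounds, sum the geometric series, and pick $C_1$.
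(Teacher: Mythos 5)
Your proposal is correct and follows essentially the same route as the paper's proof: the same decomposition $\btheta^{s+1}-\btheta^*=(\Ib_{nd}-\eta\int_0^1\nabla^2\cL_{\lambda,t}(\btheta(\tau))\,d\tau)(\btheta^s-\btheta^*)-\eta\nabla\cL_{\lambda,t}(\btheta^*)$, the contraction factor $1-\eta\tilde c_0 tM/(10\kappa)$ from Lemma~\ref{Hessmin} restricted to $\Theta$, the gradient bound from event $\cA_t^3$, and the choice $C_1\geq 10c_1$ to close the one-step induction. Your two side remarks (that the iterates remain in $\Theta$ so $\lambda_{\min,\perp}$ applies, and that the segment to $\btheta^s$ stays in the $\epsilon_1$-neighborhood required by Lemma~\ref{Hessmin}) are exactly the points the paper handles, the first explicitly and the second via the induction hypothesis.
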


\begin{proof}[Proof of Lemma \ref{t2}]
We decompose $\btheta^{s+1}-\btheta^*$ as follows:
\begin{align*}
\btheta^{s+1}-\btheta^* & =\btheta^s-\eta \nabla \cL_{\lambda,t}\left(\btheta^s\right)-\btheta^* \\
& =\btheta^s-\eta \nabla \cL_{\lambda,t}\left(\btheta^s\right)-\left[\btheta^*-\eta \nabla \cL_{\lambda,t}\left(\btheta^*\right)\right]-\eta \nabla \cL_{\lambda,t}\left(\btheta^*\right) \\
& =\left\{\Ib_{nd}-\eta \int_0^1 \nabla^2 \cL_{\lambda,t}(\btheta(\tau)) \mathrm{d} \tau\right\}\left(\btheta^s-\btheta^*\right)-\eta \nabla \cL_{\lambda,t}\left(\btheta^*\right)\\
& =: \left\{\Ib_{nd}-\eta A \right\}\left(\btheta^s-\btheta^*\right)-\eta \nabla \cL_{\lambda,t}\left(\btheta^*\right),
\end{align*}
where we denote $\btheta(\tau)=\btheta^*+\tau\left(\btheta^s-\btheta^*\right)$. 
So we can derive that
\begin{align}\label{tp1}
\left\|\btheta^{s+1}-\btheta^*\right\|_2 \leq\left\|\left(\Ib_{nd}-\eta A\right)\left(\btheta^s-\btheta^*\right)\right\|_2+\eta\left\|\nabla \cL_{\lambda,t}\left(\btheta^*\right)\right\|_2 .
\end{align}
Notice that the iteration step can ensure $\{\btheta^{s}\}_{s=0,1,\ldots,\tilde{T}}\subset\Theta$. Utilizing the definition of $\lambda_{\min , \perp}$, the first term on the right hand side of (\ref{tp1}) is controlled by
\begin{align*}
\left\|\left(\Ib_{nd}-\eta A\right)\left(\btheta^s-\btheta^*\right)\right\|_2 \leq \max \left\{\left|1-\eta \lambda_{\min , \perp}(A)\right|,\left|1-\eta \lambda_{\max }(A)\right|\right\}\left\|\btheta^s-\btheta^*\right\|_2 .
\end{align*}
From Lemma~\ref{Hessmin}, we have
\begin{align*}
\frac{\tilde{c}_0tM}{10\kappa}+\lambda 
\leq \lambda_{\min, \perp}\left(\nabla^2 \cL_{\lambda,t}(\btheta)\right) 
\leq \lambda_{\max}\left(\nabla^2 \cL_{\lambda,t}(\btheta)\right)
\leq 3\tilde{c}_1tM+\lambda.
\end{align*}
Since $\eta\lambda_{\max}(A)\leq 1$, we reach
\begin{align}\label{IA}
\left\|\left(\Ib_{nd}-\eta A\right)\left(\btheta^s-\btheta^*\right)\right\|_2 \leq\left(1-\frac{\eta \tilde{c}_0tM}{10 \kappa} \right)\left\|\btheta^s-\btheta^*\right\|_2.
\end{align}
Hence, we conclude that
\begin{align*}
\left\|\btheta^{s+1}-\btheta^*\right\|_2 & \leq\left(1-\frac{\eta \tilde{c}_0tM}{10 \kappa} \right)\left\|\btheta^s-\btheta^*\right\|_2+\eta\left\|\nabla \cL_{\lambda,t}\left(\btheta^*\right)\right\|_2 \\
& \leq\left(1-\frac{\eta \tilde{c}_0tM}{10 \kappa} \right) \frac{C_1 \kappa}{\tilde{c}_0} \sqrt{\frac{\tilde{c}_1\log T}{ptM}}
+\eta c_{1} \sqrt{  \tilde{c}_1ntM\log T}\\
& \leq \frac{C_1 \kappa}{\tilde{c}_0} \sqrt{\frac{\tilde{c}_1 n\log T}{tM}}
\end{align*}
for constant $C_1\geq 10c_{1}$. 
\end{proof}

\begin{lemma}\label{looTr}
Conditioning on event $\cA\cap\cA_t^2$, suppose (\ref{d}) holds for the $s$-th iteration. We have 
\begin{align*}
\max_{q\in [n]}\|\btheta_{q}^{s+1,(q)}-\btheta_{q}^{*}\|_{2}\leq
\frac{C_{2}\kappa}{\tilde{c}_0} 
\sqrt{\frac{\tilde{c}_1\log T}{ptM}}
\end{align*}
for $C_{2}>c_{\lambda}+\frac{5}{2}\sqrt{\frac{3}{2}}\frac{\tilde{c}_1}{\tilde{c}_0}(C_{1}+C_{3}).$
\end{lemma}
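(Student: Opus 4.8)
The plan is to isolate the $q$-th coordinate block of the leave-one-out gradient-descent recursion and to exploit that the substitution defining $\cL^{(q)}_{\lambda,t}$ removes all randomness attached to item $q$. First I will record two structural facts. Since $\cL^{(q)}_{\lambda,t}$ only replaces $\sum_{m=1}^{M}y_{j,i}^{(m)}(\ell)$ by its conditional mean $M e^{\bX_\ell^\top\ths_j}/(e^{\bX_\ell^\top\ths_j}+e^{\bX_\ell^\top\ths_q})$, i.e. it changes the coefficient of a term that is linear in $\btheta$ but leaves every $\log(1+e^{(\btheta_j-\btheta_q)^\top\bX_\ell})$ term untouched, one has $\nabla^2\cL^{(q)}_{\lambda,t}(\btheta)=\nabla^2\cL_{\lambda,t}(\btheta)$ for all $\btheta$; and because $e^{\bX_\ell^\top\ths_j}/(e^{\bX_\ell^\top\ths_j}+e^{\bX_\ell^\top\ths_q})$ is exactly the logistic link evaluated at the truth, the stochastic part of the $q$-th block of the gradient vanishes, giving $\nabla_{\btheta_q}\cL^{(q)}_{\lambda,t}(\ths)=\lambda\ths_q$. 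Writing $A_s^{(q)}=\int_0^1\nabla^2\cL^{(q)}_{\lambda,t}\big(\ths+\tau(\btheta^{s,(q)}-\ths)\big)\,\mathrm{d}\tau$ and taking the $q$-th block of the identity $\btheta^{s+1,(q)}-\ths=(\Ib_{nd}-\eta A_s^{(q)})(\btheta^{s,(q)}-\ths)-\eta\nabla\cL^{(q)}_{\lambda,t}(\ths)$ gives
\begin{align*}
\btheta^{s+1,(q)}_q-\ths_q=(\Ib_d-\eta B_s^{(q)})(\btheta^{s,(q)}_q-\ths_q)+\eta v_s-\eta\lambda\ths_q,
\end{align*}
where $B_s^{(q)}=[A_s^{(q)}]_{qq}=\lambda\Ib_d+\sum_{k:(k,q)\in\cE}\sum_{\ell\in\cT_{t,kq}}M\bar w_{kq}(\ell)\bX_\ell\bX_\ell^\top$ with path-averaged BTL weights $\bar w_{kq}(\ell)\in(0,\tfrac14]$, and $v_s=-\sum_{k:(k,q)\in\cE}[A_s^{(q)}]_{qk}(\btheta^{s,(q)}_k-\ths_k)$ collects the coupling of block $q$ to its graph neighbors.

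I will then bound the three pieces on $\cA\cap\cA_t^2$. For $B_s^{(q)}$: the ER-degree concentration in $\cA$ gives $d_q\asymp np$; Lemma~\ref{lem:Sigij} gives $\tfrac{t\tilde{c}_0}{2np}\Ib_d\preceq\bSigma_{kq}(t)$ and $\|\bSigma_{kq}(t)\|_2\le\tfrac{13t\tilde{c}_1}{2np}$ for each edge; and the weight lower bound $\tfrac14\kappa^{-1}e^{-2\epsilon_1 c_{\bX}}$ from the proof of Lemma~\ref{Hessmin}, with $\epsilon_1=\max_i\|\btheta^{s,(q)}_i-\ths_i\|_2=o(1)$ by \eqref{d}, applies along the path. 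Summing over the $\asymp np$ neighbors yields $\lambda_{\min}(B_s^{(q)})\gtrsim tM\tilde{c}_0/\kappa$ and $\|B_s^{(q)}\|_2\le\lambda+10\tilde{c}_1 tM$, hence $\eta\|B_s^{(q)}\|_2\le1$ and $\|\Ib_d-\eta B_s^{(q)}\|_2\le1-\rho_0$ with $\rho_0\asymp\tilde{c}_0/(\kappa\tilde{c}_1)$, which is bounded away from zero by Assumption~\ref{ass:Hesx}. For $v_s$ I will split $\btheta^{s,(q)}_k-\ths_k=(\btheta^{s,(q)}_k-\btheta^s_k)+(\btheta^s_k-\ths_k)$: the first part is absorbed by $d_q\max_k\|[A_s^{(q)}]_{qk}\|_2\|\btheta^{s,(q)}-\btheta^s\|_2\lesssim tM\tilde{c}_1\|\btheta^{s,(q)}-\btheta^s\|_2$ together with the third line of \eqref{d} (using $p\le1$ to compare rates), contributing $\lesssim\tfrac{C_3\kappa}{\tilde{c}_0}\sqrt{\tilde{c}_1\log T/(ptM)}$ after multiplying by $\eta$; the second part is the delicate one, handled by Cauchy--Schwarz over the $d_q\asymp np$ neighbors,
\begin{align*}
\Big\|\sum_{k:(k,q)\in\cE}[A_s^{(q)}]_{qk}(\btheta^s_k-\ths_k)\Big\|_2\le\Big(\sum_{k}\big\|[A_s^{(q)}]_{qk}\big\|_2\Big)^{1/2}\Big(\sum_{k}(\btheta^s_k-\ths_k)^\top\big(-[A_s^{(q)}]_{qk}\big)(\btheta^s_k-\ths_k)\Big)^{1/2}\lesssim\sqrt{np}\cdot\tfrac{tM\tilde{c}_1}{np}\,\|\btheta^s-\ths\|_2,
\end{align*}
so that inserting the first line of \eqref{d}, $\|\btheta^s-\ths\|_2\lesssim\tfrac{C_1\kappa}{\tilde{c}_0}\sqrt{\tilde{c}_1 n\log T/(tM)}$, the $\sqrt n$ from the $\ell_2$ bound cancels against the $\sqrt{np}$ from the neighbor count, leaving $\eta$ times this at most $\lesssim\tfrac{C_1\kappa}{\tilde{c}_0}\sqrt{\tilde{c}_1\log T/(ptM)}$. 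Finally $\eta\lambda\|\ths_q\|_2\le\eta\lambda c_{\btheta}\lesssim c_\lambda\sqrt{\tilde{c}_1\log T/(tM)}\le c_\lambda\sqrt{\tilde{c}_1\log T/(ptM)}$.

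Feeding these three bounds into the displayed recursion yields a contraction by $1-\rho_0$ plus an additive remainder of order $\big(\tfrac{\tilde{c}_1}{\tilde{c}_0}(C_1+C_3)+c_\lambda\big)\tfrac{\kappa}{\tilde{c}_0}\sqrt{\tilde{c}_1\log T/(ptM)}$. Since $\btheta^{0,(q)}=\ths$, either unrolling the recursion (the geometric sum $\sum_{r\ge0}(1-\rho_0)^r\le\rho_0^{-1}$ exactly cancels the $1/\rho_0$ coming from $\lambda_{\min}(B_s^{(q)})\asymp tM\tilde{c}_0/\kappa$), or, following the pattern of Lemma~\ref{t2}, substituting the second line of \eqref{d} for $\|\btheta^{s,(q)}_q-\ths_q\|_2$ on the right-hand side and solving, gives $\max_{q\in[n]}\|\btheta^{s+1,(q)}_q-\ths_q\|_2\le\tfrac{C_2\kappa}{\tilde{c}_0}\sqrt{\tilde{c}_1\log T/(ptM)}$ as soon as $C_2>c_\lambda+\tfrac52\sqrt{\tfrac32}\tfrac{\tilde{c}_1}{\tilde{c}_0}(C_1+C_3)$, where tracking the explicit constants $\tfrac{13}{2}$ and $\tfrac32$ from Lemma~\ref{lem:Sigij} through the operator-norm and Cauchy--Schwarz steps produces the stated numerical factor. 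The step I expect to be the main obstacle is precisely the second part of $v_s$: the obvious bound $\|\sum_k[A_s^{(q)}]_{qk}(\btheta^s_k-\ths_k)\|_2\le\|A_s^{(q)}\|_2\|\btheta^s-\ths\|_2$ is off by a factor $\sqrt{np}$ and would only deliver the $\ell_2$ rate $\sqrt{n\log T/(tM)}$ rather than the per-item rate $\sqrt{\log T/(ptM)}$; recovering the correct $1/p$ dependence is exactly what forces the graph-structured Cauchy--Schwarz, and hence why the lemma is stated conditionally on $\cA\cap\cA_t^2$.
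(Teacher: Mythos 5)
Your proposal is correct and follows essentially the same route as the paper's proof of Lemma~\ref{looTr}: you isolate the $q$-th block of the leave-one-out recursion, obtain the contraction factor $1-\eta\lambda-\eta\tilde{c}_0tM/(10\kappa)$ from the diagonal block via the logistic-weight lower bound together with $\cA$ and $\cA_t^2$, control the off-diagonal coupling by splitting $\btheta^{s,(q)}-\ths$ into $(\btheta^{s,(q)}-\btheta^s)+(\btheta^s-\ths)$ and applying Cauchy--Schwarz over the $\asymp np$ neighbors so that the $\sqrt{n}$ in the $\ell_2$ rate cancels, and absorb the $\eta\lambda\ths_q$ term. The only differences from the paper are presentational (you use the integrated Hessian identity and your observation that $\nabla_{\btheta_q}\cL^{(q)}_{\lambda,t}(\ths)=\lambda\ths_q$, where the paper expands the logistic differences with a mean-value point and applies Cauchy--Schwarz before the triangle split), and you correctly identify the graph-structured Cauchy--Schwarz as the step that recovers the per-item rate $\sqrt{\log T/(ptM)}$.
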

\begin{proof}[Proof of Lemma \ref{looTr}]
From the definition of sequence $\{\btheta^{s,(q)}\}$, the $d$-dimensional vector
\begin{align}\label{loo}
&\btheta_q^{s+1,(q)}-\btheta_q^*\nonumber\\&=\btheta_q^{s,(q)}-\eta\left[\nabla \cL_{\lambda,t}^{(q)}\left(\btheta^{s,(q)}\right)\right]_q-\btheta_q^*\nonumber\\
&=\btheta_q^{s,(q)}-\btheta_q^*-\eta \lambda \btheta_q^{s,(q)}-\eta\Bigg[\sum_{j:(j,q)\in\cE;}\sum_{\ell\in\cT_{t,jq}}M\Big\{\frac{e^{\bX_{\ell}^\top\btheta_j^{*}}}{e^{\bX_{\ell}^\top\btheta_j^{*}}+e^{\bX_{\ell}^\top\btheta_q^{*}}}-\frac{e^{\bX_{\ell}^\top\btheta_j^{s,(q)}}}{e^{\bX_{\ell}^\top\btheta_j^{s,(q)}}+e^{\bX_{\ell}^\top\btheta_q^{s,(q)}}}\Big\}\bX_{\ell}\Bigg]\nonumber\\
&=\btheta_q^{s,(q)}-\btheta_q^*-\eta \lambda \btheta_q^{s,(q)}\nonumber\\
&\quad -\eta M\Bigg[\sum_{j:(j,q)\in\cE;}\sum_{\ell\in\cT_{t,jq}}\left\{-\frac{e^{c_{j\ell}}}{\left(1+e^{c_{j\ell}}\right)^2}\left[\bX_{\ell}^\top\btheta_q^{*}-\bX_{\ell}^\top\btheta_j^{*}-\left(\bX_{\ell}^\top\btheta_q^{s,(q)}-\bX_{\ell}^\top\btheta_j^{s,(q)}\right)\right]\bX_{\ell}\right\}\Bigg],
\end{align}
where $\left[\nabla \cL_{\lambda,t}^{(q)}\left(\btheta^{s,(q)}\right)\right]_q$ represents the rows corresponding to item $q$ and $c_{j\ell}$ is some real number lying between $\bX_{\ell}^\top\btheta_q^{*}-\bX_{\ell}^\top\btheta_j^{*}$ and $\bX_{\ell}^\top\btheta_q^{s,(q)}-\bX_{\ell}^\top\btheta_j^{s,(q)}$. 
The last term in (\ref{loo}) is equal to
\begin{align*}
&\eta M\sum_{j:(j,q)\in\cE;}\sum_{\ell\in\cT_{t,jq}}\left\{\frac{e^{c_{j\ell}}}{\left(1+e^{c_{j\ell}}\right)^2}\bX_{\ell} \bX_{\ell}^\top(\btheta_q^{s,(q)}-\btheta_q^{*})\right\}
+\eta M\sum_{j:(j,q)\in\cE;}\sum_{\ell\in\cT_{t,jq}}\left\{-\frac{e^{c_{j\ell}}}{\left(1+e^{c_{j\ell}}\right)^2}\bX_{\ell} \bX_{\ell}^\top(\btheta_j^{s,(q)}-\btheta_j^{*})\right\}.
\end{align*}
Therefore, we have
\begin{align*}
(\ref{loo})=&\Big\{(1-\eta\lambda)\Ib_{d}-\eta M\sum_{j:(j,q)\in\cE;}\sum_{\ell\in\cT_{t,jq}}\frac{e^{c_{j\ell}}}{\left(1+e^{c_{j\ell}}\right)^2}\bX_{\ell} \bX_{\ell}^\top\Big\}(\btheta_{q}^{s,(q)}-\btheta_{q}^{*})-\eta\lambda \btheta_{q}^{*}\nonumber\\
&+\eta M\sum_{j:(j,q)\in\cE;}\sum_{\ell\in\cT_{t,jq}}\left\{-\frac{e^{c_{j\ell}}}{\left(1+e^{c_{j\ell}}\right)^2}\bX_{\ell} \bX_{\ell}^\top(\btheta_j^{s,(q)}-\btheta_j^{*})\right\}.
\end{align*}
This yields the following result:
\begin{align}\label{looa}
\Big\|\btheta_q^{s+1,(q)}-\btheta_q^*\Big\|_{2}&\leq \Big\|(1-\eta\lambda)\Ib_{d}-\eta M\sum_{j:(j,q)\in\cE;}\sum_{\ell\in\cT_{t,jq}}\frac{e^{c_{j\ell}}}{\left(1+e^{c_{j\ell}}\right)^2}\bX_{\ell} \bX_{\ell}^\top\Big\|_{2}\left\|\btheta_{q}^{s,(q)}-\btheta_{q}^{*}\right\|_{2}\nonumber\\
&\quad +\eta M \Big\|\sum_{j:(j,q)\in\cE;}\sum_{\ell\in\cT_{t,jq}}\Big\{-\frac{e^{c_{j\ell}}}{\left(1+e^{c_{j\ell}}\right)^2}\bX_{\ell} \bX_{\ell}^\top(\btheta_j^{s,(q)}-\btheta_j^{*})\Big\}\Big\|_{2}+\eta\lambda \|\btheta_{q}^{*}\|_{2}.
\end{align}
Since $\frac{e^{c_{i\ell}}}{(1+e^{c_{i\ell}})^2}\leq \frac{1}{4}$, the second term of (\ref{looa}) is bounded by
\begin{align}\label{etaM}
&\frac{\eta}{4} M \sum_{j:(j,q)\in\cE}\Big\|\sum_{\ell\in\cT_{t,jq}}\bX_{\ell} \bX_{\ell}^\top\Big\|_{2}\left\|\btheta_j^{s,(q)}-\btheta_j^{*}\right\|_{2}
\leq \frac{\eta\tilde{c}_1tM}{4np} \sum_{j\in[n]\setminus\{q\}}A_{qj}\left\|\btheta_j^{s,(q)}-\btheta_j^{*}\right\|_{2},
\end{align}
where $A_{qj}=\bm{1}((q,j)\in\cE)$.
Applying Cauchy-Schwartz inequality, we have
\begingroup
\allowdisplaybreaks
\begin{align}\label{C13}
(\ref{etaM})\leq & \frac{\eta\tilde{c}_1t M}{4np}\sqrt{\sum_{j\in[n]\setminus\{q\}}A_{qj}}\|\btheta^{s,(q)}-\btheta^{*}\|_{2}\nonumber\\
\leq &\frac{\eta\tilde{c}_1tM }{4np}\sqrt{\sum_{j\in[n]\setminus\{q\}}A_{qj}}(\|\btheta^{s,(q)}-\btheta^{s}\|_{2}+\|\btheta^{s}-\btheta^{*}\|_{2})\nonumber\\
\leq &\frac{\eta\tilde{c}_1tM }{4\tilde{c}_0}(C_{1}+C_{3})\kappa\sqrt{\sum_{j\in[n]\setminus\{q\}}A_{qj}}\sqrt{\frac{\tilde{c}_1\log T}{np^2tM}}.
\end{align}
\endgroup

For the coefficient of the first term in (\ref{looa}), we have
\begin{align*}
\max_{i,\ell}|c_{i\ell}|
&\leq \max_{i:i\neq q} |\bX_{\ell}^\top\btheta^{*}_{q}-\bX_{\ell}^\top\btheta_{i}^{*}|+\max_{i:i\neq q}|\bX_{\ell}^\top\btheta_{q}^{*}-\bX_{\ell}^\top\btheta_{i}^{*}-(\bX_{\ell}^\top\btheta_{q}^{s,(q)}-\bX_{\ell}^\top\btheta_{i}^{s,(q)})|\nonumber\\
&\leq \max_{i:i\neq q} |\bX_{\ell}^\top\btheta^{*}_{q}-\bX_{\ell}^\top\btheta_{i}^{*}|
+\max_{i:i\neq q}\left(|(\btheta_{q}^{*}-\btheta_{q}^{s,(q)})^{\top}\bX_{\ell}|+|(\btheta_{i}^{*}-\btheta_{i}^{s,(q)})^{\top}\bX_{\ell}|\right).
\end{align*}
Utilizing the assumption on step $s$, we can obtain
\begin{align*}
\max_{i\in [n]}\|\btheta_{i}^{s,(q)}-\btheta_{i}^{*}\|_{2}&\leq \max_{i\in [n]}(\|\btheta_{i}^{*}-\btheta_{i}^{s}\|_{2}+\|\btheta_{i}^{s}-\btheta_{i}^{s,(q)}\|_{2})\nonumber\\
&\leq \max_{i\in [n]}\|\btheta_{i}^{*}-\btheta_{i}^{s}\|_{2}+\max_{q\in [n]}\|\btheta^{s}-\btheta^{s,(q)}\|_{2}\nonumber\\
&\leq \frac{(C_{3}+C_{4})\kappa}{\tilde{c}_0}
\sqrt{\frac{\tilde{c}_1\log T}{ptM}}.
\end{align*}
In addition, we have $\max_{i,l}|c_{i\ell}|\leq 2c_{\btheta}c_{\bX}+2\epsilon c_{\bX}$, as long as $\epsilon\geq \frac{(C_{3}+C_{4})\kappa}{\tilde{c}_0}
\sqrt{\frac{\tilde{c}_1\log T}{ptM}}$. Then we have
\begin{align*}
\frac{e^{c_{i\ell}}}{\left(1+e^{c_{i\ell}}\right)^2}=\frac{e^{-\left|c_{i\ell}\right|}}{\left(1+e^{-\left|c_{i\ell}\right|}\right)^2} \geq \frac{e^{-\left|c_{i\ell}\right|}}{4} \geq \frac{1}{5 \kappa}
\end{align*}
for small enough $\epsilon$.
Therefore, we can obtain \begin{align*}
&\lambda_{\min}\Big( M\sum_{j\in[n]\setminus\{q\}}A_{qj}\sum_{\ell\in\cT_{t,jq}}\frac{e^{c_{j\ell}}}{\left(1+e^{c_{j\ell}}\right)^2}\bX_{\ell} \bX_{\ell}^\top\Big) \geq  \frac{\tilde{c}_0tM\sum_{j\in[n]\setminus\{q\}}A_{qj}}{5\kappa np }\geq \frac{\tilde{c}_0tM}{10\kappa}.
\end{align*}
In addition, we have
$$\Big\|\eta M\sum_{j\in[n]\setminus\{q\}}A_{qj}\sum_{\ell\in\cT_{t,jq}}\frac{e^{c_{j\ell}}}{\left(1+e^{c_{j\ell}}\right)^2}\bX_{\ell} \bX_{\ell}^\top\Big\|_{2}\leq 10\eta\tilde{c}_{1} tM.$$ 
Since $1-\eta\lambda-10\eta\tilde{c}_{1} tM>0$, we can obtain 
\begin{align}\label{fst}
&\Big\|(1-\eta\lambda)\Ib_{d}-\eta M\Big[\sum_{j\in[n]\setminus\{q\}}A_{qj}\sum_{\ell\in\cT_{t,jq}}\frac{e^{c_{j\ell}}}{\left(1+e^{c_{j\ell}}\right)^2}\bX_{\ell} \bX_{\ell}^\top\Big]\Big\|_{2}\leq 1-\eta\lambda-\frac{\eta \tilde{c}_0tM}{10\kappa}.
\end{align}

Combing (\ref{etaM}), (\ref{C13}) and (\ref{fst}), we have
\begin{align*}
&\|\btheta_q^{s+1,(q)}-\btheta_q^*\|_{2}\nonumber\\
&\leq \Big(1-\eta\lambda-\frac{\eta \tilde{c}_0tM}{10\kappa}\Big)
\frac{C_{2}\kappa^2}{\tilde{c}_0}
\sqrt{\frac{\tilde{c}_1\log T}{ptM}}
+\frac{\eta\tilde{c}_1 tM }{4\tilde{c}_0}\sqrt{\frac{3np}{2}}(C_{1}+C_{3})\kappa\sqrt{\frac{\tilde{c}_1\log T}{np^2tM}}+\eta c_{\lambda} \sqrt{\tilde{c}_1tM\log T}\nonumber\\
&\leq  \frac{C_{2}\kappa^2}{\tilde{c}_0}\sqrt{\frac{\tilde{c}_1\log T}{ptM}}
\end{align*}
for $C_{2}>c_{\lambda}+\frac{5}{2}\sqrt{\frac{3}{2}}\frac{\tilde{c}_1}{\tilde{c}_0}(C_{1}+C_{3}).$
\end{proof}

\begin{lemma}\label{ttr}
Conditioning on event $\cA\cap \cA_0\cap\cA_t^2$, suppose (\ref{d}) hold for the $s$-th iteration. For $C_3\geq 20\sqrt{186}$, we have
\begin{align*}
\max_{q\in[n]}\|\btheta^{s+1,(q)}-\btheta^{s+1}\|_{2}\leq \frac{C_{3}\kappa}{\tilde{c}_0} \sqrt{\frac{\tilde{c}_1\log T}{tM}}
\end{align*}
with probability $1-O(T^{-4})$.
\end{lemma}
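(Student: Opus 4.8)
\textbf{Proof proposal for Lemma~\ref{ttr}.} The plan is to mirror the "leave-one-out" analysis already used for $\btheta^{s,(q)}$ and $\btheta^{s}$ individually, but now tracking the \emph{difference} of the two gradient-descent recursions. Write out
\begin{align*}
\btheta^{s+1,(q)}-\btheta^{s+1}
=\big(\btheta^{s,(q)}-\btheta^{s}\big)
-\eta\Big(\nabla\cL_{\lambda,t}^{(q)}(\btheta^{s,(q)})-\nabla\cL_{\lambda,t}(\btheta^{s})\Big).
\end{align*}
Add and subtract $\nabla\cL_{\lambda,t}^{(q)}(\btheta^{s})$ so that the increment splits into a \emph{contraction term}, $\big(\Ib_{nd}-\eta\int_0^1\nabla^2\cL_{\lambda,t}^{(q)}(\btheta^{s}+\tau(\btheta^{s,(q)}-\btheta^{s}))\,\mathrm{d}\tau\big)(\btheta^{s,(q)}-\btheta^{s})$, plus a \emph{perturbation term} $-\eta\big(\nabla\cL_{\lambda,t}^{(q)}(\btheta^{s})-\nabla\cL_{\lambda,t}(\btheta^{s})\big)$. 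The first step is to apply Lemma~\ref{Hessmin} (noting that $\nabla^2\cL_{\lambda,t}^{(q)}$ has the same structure as $\nabla^2\cL_{\lambda,t}$, just with the $(j,q)$-edge weights replaced by the population logistic weights, so the eigenvalue bounds still hold on $\Theta$ because the inductive hypothesis keeps all iterates within the required neighborhood of $\ths$) to get the contraction factor $1-\eta\lambda-\eta\tilde c_0 tM/(10\kappa)<1$ on the first term.

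The second step is to control the perturbation term. By construction $\cL_{\lambda,t}^{(q)}$ and $\cL_{\lambda,t}$ differ only in the blocks involving item $q$: the empirical preference labels $y_{jq}^{(m)}(\ell)$ on edges $(j,q)$ are replaced by their conditional means $e^{\bX_\ell^\top\ths_j}/(e^{\bX_\ell^\top\ths_j}+e^{\bX_\ell^\top\ths_q})$. Hence $\nabla\cL_{\lambda,t}^{(q)}(\btheta^{s})-\nabla\cL_{\lambda,t}(\btheta^{s})$ is a sum, over $j$ with $(j,q)\in\cE$, $\ell\in\cT_{t,jq}$, and $m\in[M]$, of the centered vectors $z_{jq}^{(m)}(\ell)=\big(e^{\bX_\ell^\top\ths_j}/(e^{\bX_\ell^\top\ths_j}+e^{\bX_\ell^\top\ths_q})-y_{jq}^{(m)}(\ell)\big)(\bc_{\ell j}-\bc_{\ell q})$, supported only on the $j$ and $q$ blocks. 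These form a martingale difference sequence with respect to $\{\cH_{\ell-1},\bX_\ell\}$, exactly as in the proof of Lemma~\ref{gr}. Using event $\cA_t^2$ to bound $\sum_{j:(j,q)\in\cE}|\cT_{t,jq}|\lesssim t\tilde c_1/(np)\cdot np = t\tilde c_1$ worth of terms with predictable quadratic variation $\lesssim t\tilde c_1 M$, the matrix Freedman inequality (Lemma~\ref{FreedmanRec}) gives $\|\nabla\cL_{\lambda,t}^{(q)}(\btheta^{s})-\nabla\cL_{\lambda,t}(\btheta^{s})\|_2\lesssim\sqrt{\tilde c_1 tM\log T}$ with probability $1-O(n^{-1}T^{-4})$; a union bound over $q\in[n]$ gives the stated $1-O(T^{-4})$. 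Note this bound is uniform over all $q$ since the degrees are all $\Theta(np)$ on event $\cA$.

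The third step is to close the recursion. Combining the two pieces,
\begin{align*}
\max_q\|\btheta^{s+1,(q)}-\btheta^{s+1}\|_2
\leq\Big(1-\frac{\eta\tilde c_0 tM}{10\kappa}\Big)\max_q\|\btheta^{s,(q)}-\btheta^{s}\|_2
+C\eta\sqrt{\tilde c_1 tM\log T},
\end{align*}
and since $\eta=1/(\lambda+10\tilde c_1 tM)$, the geometric series sums to $C\eta\sqrt{\tilde c_1 tM\log T}\cdot 10\kappa/(\eta\tilde c_0 tM)=10C\kappa\sqrt{\tilde c_1\log T}/(\tilde c_0\sqrt{tM})$, which is of the claimed order $\tfrac{C_3\kappa}{\tilde c_0}\sqrt{\tilde c_1\log T/(tM)}$ provided $C_3$ is taken large enough (the constant $20\sqrt{186}$ coming from tracking the Freedman constants and the $1-\eta\lambda$ slack). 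The inductive hypothesis at step $s$ is needed to guarantee $\btheta^{s},\btheta^{s,(q)}\in\Theta$ lie in the neighborhood where the Hessian lower bound of Lemma~\ref{Hessmin} applies with the constant $\kappa$; this is where the $\epsilon_1$-neighborhood condition enters.

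\textbf{Main obstacle.} The delicate point is the perturbation term's concentration: one must verify that replacing labels by conditional means on the $q$-incident edges yields a genuine martingale difference sequence adapted to the \emph{same} filtration used elsewhere, and that the predictable quadratic variation is controlled \emph{uniformly in $q$} using only the already-established event $\cA_t^2$ (rather than a fresh high-probability event for each $q$, which would cost an extra $\log n$). Getting the union bound over $q$ to land at $O(T^{-4})$ — so that after the subsequent union bound over the $\tilde T=t\le T$ iterations one still has $O(T^{-3})$ — requires being careful that the Freedman tail is taken at level $\log(nT)$ but absorbed into $\log T$ under the standing assumption $\log n\lesssim \log T$.
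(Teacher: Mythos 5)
Your proposal is correct and follows essentially the same route as the paper's proof: the same split of the one-step recursion into a contraction term controlled via Lemma~\ref{Hessmin} and a perturbation term $\eta(\nabla\cL_{\lambda,t}-\nabla\cL_{\lambda,t}^{(q)})$ that is a martingale difference sum over the $q$-incident edges, bounded by matrix Freedman (Lemma~\ref{FreedmanRec}) at level $\sqrt{\tilde c_1 tM\log T}$, then closed by the induction with $C_3\geq 20\sqrt{186}$. The only cosmetic differences are that you center the Taylor expansion at $\btheta^s$ rather than $\btheta^{s,(q)}$ and treat the perturbation as a single martingale rather than block-by-block; since $\cL_{\lambda,t}-\cL_{\lambda,t}^{(q)}$ is linear in $\btheta$ (so the perturbation gradient is evaluation-point independent and the two Hessians coincide), these variants are equivalent.
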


\begin{proof}[Proof of Lemma \ref{ttr}]
For any $1 \leq q \leq n$, the following expression holds:
\begin{align*}
\btheta^{s+1}-\btheta^{s+1,(q)}= & \btheta^s-\eta \nabla \cL_{\lambda,t}\left(\btheta^s\right)-\left[\btheta^{s,(q)}-\eta \nabla \cL_{\lambda,t}^{(q)}\left(\btheta^{s,(q)}\right)\right] \\
= & \btheta^s-\eta \nabla \cL_{\lambda,t}\left(\btheta^s\right)-\left[\btheta^{s,(q)}-\eta \nabla \cL_{\lambda,t}\left(\btheta^{s,(q)}\right)\right] -\eta\left(\nabla \cL_{\lambda,t}\left(\btheta^{s,(q)}\right)-\nabla \cL_{\lambda,t}^{(q)}\left(\btheta^{s,(q)}\right)\right) \\
= & \underbrace{\left(\Ib_{nd}-\eta \int_0^1 \nabla^2 \cL_{\lambda,t}(\btheta(\tau)) \mathrm{d} \tau\right)\left(\btheta^s-\btheta^{s,(q)}\right)}_{:=\boldsymbol{v}_1}-\underbrace{\eta\left(\nabla \cL_{\lambda,t}\left(\btheta^{s,(q)}\right)-\nabla \cL_{\lambda,t}^{(q)}\left(\btheta^{s,(q)}\right)\right)}_{:=\boldsymbol{v}_2},
\end{align*}
where we denote $\btheta(\tau)=\btheta^{s,(q)}+\tau\left(\btheta^s-\btheta^{s,(q)}\right)$. In what follows, we control $\boldsymbol{v}_1$ and $\boldsymbol{v}_2$ separately.
Regarding the term $\boldsymbol{v}_1$, repeating the same argument as (\ref{IA}) yields
\begin{align*}
\left\|\boldsymbol{v}_1\right\|_2 =\Big\|\Big( \Ib_{nd}-\eta \int_{0}^{1}\nabla^{2} \mathcal{L} (\btheta(\tau))\Big) d \tau\,\left(\btheta^{s}-\btheta^{s,(q)}\right)\Big\|_{2}
\leq\left(1-\frac{\eta \tilde{c}_0tM}{10 \kappa}\right)\left\|\btheta^s-\btheta^{s,(q)}\right\|_2.
\end{align*}
When considering $\boldsymbol{v}_2$, utilizing the definition, we have
\begin{align*}
\frac{1}{\eta} \boldsymbol{v}_2 =&\sum_{j\in[n]\setminus\{q\}}A_{qj}\sum_{\ell\in\cT_{t,jq}}\sum_{m=1}^{M}\left(-y_{q, j}^{(m)}(\ell)  +y_{q, j}^*(\ell)\right)\left(\boldsymbol{c}_{\ell j}-\boldsymbol{c}_{\ell q}\right).
\end{align*}
Let $\frac{1}{\eta} \boldsymbol{v}_{2}=\boldsymbol{u}=(\bu_{1}^{\top},\bu_{2}^{\top},\ldots,\bu_{n}^{\top})$, where $\bu_j\in\mathbb{R}_{d}$. Then we have
\begin{align*}
\bu_j= \begin{cases}\sum_{\ell\in\cT_{t,jq}}\sum_{m=1}^{M}\left(-y_{q, j}^{(m)}(\ell)  +y_{q, j}^*(\ell)\right)\bX_{\ell}, & \text { if }(j, q) \in \mathcal{E} ; \\ 
\sum_{s:(s, q) \in \mathcal{E}}\sum_{\ell\in \cT_{t,sq}}\sum_{m=1}^{M}\left(-y_{q, s}^{(m)}(\ell)  +y_{q, s}^*(\ell)\right)\bX_{\ell}, & \text { if } j=q ; \\ 0, & \text { otherwise. }\end{cases}
\end{align*}
We then construct martingale sequences to bound $\bu$. For the given $\cG$, let $\bV_{\ell m}=\bm{1}(\ba_\ell=(j,q))\\\left(-y_{q, j}^{(m)}(\ell)  +y_{q, j}^*(\ell)\right)\bX_{\ell}$. Notice that $\EE[\bV_{\ell m}\given \cH_{\ell-1},\bX_{\ell}]=0$ and $\|\bV_{\ell m}\|_{2}\leq c_{\bX}$. Therefore, the sequence $\{\bV_{1 1},\bV_{1 2},\ldots,\bV_{1M},\ldots,\bV_{t 1},\bV_{t 2},\ldots,\bV_{t M}\}$ is a martingale difference sequence.
Define predictable quadratic variation processes $\bV_{\text{col}}:=\sum_{\ell=1}^{t}\sum_{m=1}^{M}\EE[\bV_{\ell m}\bV_{\ell m}^{\top}\given \cH_{\ell-1},\bX_{\ell}]$ and $\bV_{\text{row}}:=\sum_{\ell=1}^{t}\sum_{m=1}^{M}\EE[\bV_{\ell m}^{\top}\bV_{\ell m}\given \cH_{\ell-1},\bX_{\ell}]$. We can obtain the following inequalities based on $\cA_t^2$:
\begin{align*}
\|\bV_{\text{col}}\|_{2}=&\|\sum_{\ell=1}^{t}\sum_{m=1}^{M}\bm{1}(\ba_\ell=(j,q))\EE[(-y_{q, j}^{(m)}(\ell)  +y_{q, j}^*(\ell))^2\given \bX_{\ell}]\bX_{\ell}\bX_{\ell}^{\top}\|_{2}
\leq \frac{13}{2}\tilde{c}_1 tM/np,\nonumber\\
|\bV_{\text{row}}|=&|\sum_{\ell=1}^{t}\sum_{m=1}^{M}\bm{1}(\ba_\ell=(j,q))\EE[(-y_{q, j}^{(m)}(\ell)  +y_{q, j}^*(\ell))^2\given \bX_{\ell}]\bX_{\ell}^{\top}\bX_{\ell}|
\leq \frac{13}{2}\tilde{c}_1tM/np.
\end{align*}
Using Lemma~\ref{FreedmanRec}, for any $j$ such that $(j,q)\in\cE$, we have 
\begin{align}\label{uj}
\|\bu_{j}\|_{2}\leq \Big\|\sum_{\ell=1}^{t}\sum_{m=1}^{M}\bV_{\ell m}\Big\|_{2}\leq 2\sqrt{\frac{13\tilde{c}_1tM\log T}{np}}+\frac{8}{3}c_{\bX}\log T
\end{align}
with probability $1-O(T^{-4})$.

For the given graph $\cG$, let $\tilde{j}_{k}$ represent the $k$-th item such that $(\tilde{j}_{k},q)\in\cE$. Similarly, set $\tilde{\bV}_{\ell km}=\bm{1}(\ba_{\ell}=(\tilde{j}_{k},q))\left(-y_{q, \tilde{j}_{k}}^{(m)}(\ell)  +y_{q, \tilde{j}_{k}}^*(\ell)\right)\bX_{\ell}$. We have $\EE[\tilde{\bV}_{\ell km}\given \cH_{\ell-1},\bX_{\ell}]=0$ and $\|\tilde{\bV}_{\ell km}\|_{2}\leq c_{\bX}$. Therefore, the sequence $\{\tilde{\bV}_{111},\tilde{\bV}_{112},\ldots,\tilde{\bV}_{11M},\tilde{\bV}_{121},\ldots,\tilde{\bV}_{211},\ldots,\tilde{\bV}_{\ell 11},\tilde{\bV}_{\ell 12},\ldots\}$ is a martingale difference sequence. We have the predictable quadratic variation processes
\begin{align*}
|\tilde{\bV}_{\text{row}}|=&|\sum_{k}\sum_{\ell=1}^{t}\sum_{m=1}^{M}\EE[\tilde{\bV}_{\ell km}^{\top}\tilde{\bV}_{\ell km}\given \cH_{\ell-1},\bX_{\ell}]|\leq  10\tilde{c}_1 tM \text{ and}\nonumber\\
\|\tilde{\bV}_{\text{col}}\|_{2}
=&\|\sum_{k}\sum_{\ell=1}^{t}\sum_{m=1}^{M}\EE[\tilde{\bV}_{\ell km}\tilde{\bV}_{\ell km}^{\top}\given \cH_{\ell-1},\bX_{\ell}]\|_{2} \nonumber\\
=&\|\sum_{k}\sum_{\ell=1}^{t}\sum_{m=1}^{M}\bm{1}(\ba_{\ell}=(\tilde{j}_{k},q))\EE[(-y_{q, \tilde{j}_{k}}^{(m)}(\ell)  +y_{q, \tilde{j}_{k}}^*(\ell))^2 \given \bX_{\ell}]\bX_\ell\bX_\ell^\top\|_{2} \nonumber\\
\leq & 10\tilde{c}_1 tM.
\end{align*}
Applying Lemma~\ref{FreedmanRec} yields
\begin{align}\label{uq}
\|\bu_{q}\|_{2}\leq \|\sum_{k}\sum_{\ell=1}^{t}\sum_{m=1}^{M}\tilde{\bV}_{\ell km}\|_{2}\leq 8\sqrt{5\tilde{c}_1tM\log T}+\frac{8}{3}c_{\bX}\log T
\end{align}
with probability $1-O(T^{-4})$.
Combing (\ref{uj}) and (\ref{uq}) results in
\begin{align*}
\left\|\boldsymbol{u}\right\|_2 ^2=&\sum_{j:(j,q)\in\cE}\|\bu_{j}\|_{2}^{2}+\|\bu_{q}\|_{2}^{2}\leq 744\tilde{c}_1tM\log T
\end{align*}
holds with probability $1-O(T^{-4})$. 
Putting the above results together, we see that
\begin{align*}
\left\|\btheta^{s+1}-\btheta^{s+1,(q)}\right\|_2 \leq & \|\bv_{1}\|_{2}+\eta\|\bu\|_{2}
\leq  \left(1-\frac{\eta\tilde{c}_0tM}{10 \kappa} \right) \frac{C_3 \kappa}{\tilde{c}_0} \sqrt{\frac{\tilde{c}_1\log T}{tM}} +\eta \sqrt{744\tilde{c}_1tM\log T}\nonumber\\
\leq & \frac{C_{3}\kappa}{\tilde{c}_0}\sqrt{\frac{\tilde{c}_1\log T}{tM}},
\end{align*}
as long as $C_3\geq 20\sqrt{186}$.
\end{proof}

\begin{lemma}\label{str}
Conditioning on event $\cA\cap \cA_0\cap\cA_t^2$, suppose (\ref{d}) holds for the $s$-th iteration. We have 
\begin{align*}
\max_{q\in [n]}\|\btheta_{q}^{s+1}-\btheta_{q}^{*}\|_{2}\leq
\frac{C_{4}\kappa^2}{\tilde{c}_0}
\sqrt{\frac{\tilde{c}_1\log T}{ptM}}
\end{align*}
with probability $1-O(T^{-4})$ for $C_{4}\geq C_{2}+C_{3}$.
\end{lemma}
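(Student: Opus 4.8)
The plan is to derive the $(s+1)$-th step bound directly from the two leave-one-out estimates just established, via a single triangle inequality through the surrogate iterate $\btheta^{s+1,(q)}$. For a fixed $q\in[n]$, write
\[
\|\btheta_q^{s+1}-\btheta_q^{*}\|_2 \;\le\; \|\btheta_q^{s+1}-\btheta_q^{s+1,(q)}\|_2 \;+\; \|\btheta_q^{s+1,(q)}-\btheta_q^{*}\|_2,
\]
and control each term by an already-proved lemma at the $(s+1)$-th iteration (the hypotheses of those lemmas are exactly the $s$-th step bounds in \eqref{d}, which we are assuming).

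For the first term, note that $\btheta_q^{s+1}-\btheta_q^{s+1,(q)}$ is a $d$-dimensional block of $\btheta^{s+1}-\btheta^{s+1,(q)}$, so Lemma~\ref{ttr} gives
\[
\|\btheta_q^{s+1}-\btheta_q^{s+1,(q)}\|_2 \;\le\; \|\btheta^{s+1}-\btheta^{s+1,(q)}\|_2 \;\le\; \frac{C_3\kappa}{\tilde{c}_0}\sqrt{\frac{\tilde{c}_1\log T}{tM}} \;\le\; \frac{C_3\kappa^2}{\tilde{c}_0}\sqrt{\frac{\tilde{c}_1\log T}{ptM}},
\]
where the final inequality only uses $p\le 1$ and $\kappa=e^{2c_{\bX}c_{\btheta}}\ge 1$. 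For the second term, Lemma~\ref{looTr}, in the form consistent with the second line of \eqref{d}, yields
\[
\|\btheta_q^{s+1,(q)}-\btheta_q^{*}\|_2 \;\le\; \frac{C_2\kappa^2}{\tilde{c}_0}\sqrt{\frac{\tilde{c}_1\log T}{ptM}}.
\]
Summing the two displays and maximizing over $q\in[n]$ gives
\[
\max_{q\in[n]}\|\btheta_q^{s+1}-\btheta_q^{*}\|_2 \;\le\; \frac{(C_2+C_3)\kappa^2}{\tilde{c}_0}\sqrt{\frac{\tilde{c}_1\log T}{ptM}} \;\le\; \frac{C_4\kappa^2}{\tilde{c}_0}\sqrt{\frac{\tilde{c}_1\log T}{ptM}}
\]
for any $C_4\ge C_2+C_3$, which is the claim.

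For the probability bookkeeping, Lemma~\ref{looTr} holds deterministically on $\cA\cap\cA_t^2$ given the $s$-th step hypothesis, while Lemma~\ref{ttr} holds with probability $1-O(T^{-4})$ on $\cA\cap\cA_0\cap\cA_t^2$; intersecting these two events yields the stated $1-O(T^{-4})$ guarantee. There is no genuine obstacle in this lemma: all of the work has been front-loaded into Lemmas~\ref{looTr} and~\ref{ttr} (the leave-one-out decoupling, and the Freedman-type martingale bound on the gradient-mismatch vector $\boldsymbol{v}_2$). The only points to watch are (i) absorbing the $\sqrt{1/(tM)}$ rate from Lemma~\ref{ttr} into the target $\sqrt{1/(ptM)}$ rate via $p\le 1$, and (ii) checking that the gradient-descent update keeps $\btheta^{s+1,(q)}\in\Theta$, so the surrogate iterates are well-defined and the restricted-eigenvalue bounds of Lemma~\ref{Hessmin} apply along the interpolation path used inside Lemma~\ref{ttr}.
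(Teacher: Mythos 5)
Your proposal is correct and is essentially identical to the paper's proof: both bound $\|\btheta_q^{s+1}-\btheta_q^{*}\|_2$ by a triangle inequality through the leave-one-out iterate $\btheta_q^{s+1,(q)}$, invoking Lemma~\ref{ttr} for the proximity term and Lemma~\ref{looTr} for the surrogate's accuracy, then take $C_4\geq C_2+C_3$. Your explicit remarks on absorbing the $\sqrt{1/(tM)}$ rate via $p\le 1$ and $\kappa\ge 1$, and on the probability bookkeeping, are details the paper leaves implicit but do not change the argument.
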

\begin{proof}[Proof of Lemma \ref{str}]
Utilizing Lemmas~\ref{looTr} and \ref{ttr}, we have
\begin{align*}
\max_{q\in [n]}\|\btheta_{q}^{s+1}-\btheta_{q}^{*}\|_{2}&\leq \max_{q\in [n]}\|\btheta_{q}^{s+1}-\btheta_{q}^{s+1,(q)}\|_{2}+\max_{q\in [n]}\|\btheta_{q}^{s+1,(q)}-\btheta_{q}^{*}\|_{2}\leq \frac{C_{4}\kappa^2}{\tilde{c}_0}\sqrt{\frac{\tilde{c}_1\log T}{ptM}}
\end{align*}
with the condition that $C_4 \geq C_3+C_2$.
\end{proof}

\section{Proof of Lemmas in Section~\ref{app:sec:inf}}\label{app:sec:infr}
\subsection{Proof of Lemma~\ref{lem:infma}}
In this proof, we omit $\btheta$ in $\bPhi_t(\btheta)$ and $\bPhi_{t,11}(\btheta)$ for notation simplicity.
First, we show that $\bPsi_t(\btheta)$ is invertible. 
Utilizing Lemma~\ref{Hessmin}, we have 
the non-zero eigenvalues of $\nabla^2\cL_t(\btheta)$ satisfy 
$\tilde{c}_1tM\gtrsim \lambda_1\geq \ldots\geq \lambda_{(n-1)d}\gtrsim \tilde{c}_0tM$ with probability $1-O(\max\{T^{-2},n^{-10}\})$. 
Let $\bv_1,\ldots,\bv_{(n-1)d}$ be the corresponding eigenvectors with $\|\bv_j\|_2=1$, $j\in[(n-1)d]$. 
Then we have 
\begin{align}\label{eig}
\nabla^2\cL_t(\btheta)\bv_j=\lambda_j \bv_j.
\end{align}
From the closed form of $\nabla^2\cL_t(\btheta)$, we have that the remaining eigenvalues are $\lambda_{(n-1)d+1}=\ldots=\lambda_{nd}=0$, with each column of $\Jb$ being an eigenvector.
By noticing that $\bc_{\ell i}^\top \Jb=\bX_{\ell}\Ib_{d}$ for all $i\in[n]$, and combining (\ref{eig}), we have 
\begin{align*}
\lambda_j \Jb^\top \bv_j=\Jb^\top \nabla^2\cL_t(\btheta)\bv_j=0.
\end{align*}
Therefore, for $j=1,\ldots,(n-1)d$, we can obtain
\begin{align}\label{eigSig}
\left(\begin{array}{cc}
\nabla^2 \cL_t(\btheta) & \nabla f_t(\btheta) \\
\nabla f_t(\btheta)^{\top} & \bm{0}
\end{array}\right)
\left(\begin{array}{c}
\bv_j\\
\bm{0}
\end{array}\right)=
\lambda_j \left(\begin{array}{c}
\bv_j\\
\bm{0}
\end{array}\right),
\end{align}
which implies that $\lambda_1, \ldots, \lambda_{(n-1)d}$ are the eigenvalues of $\bPsi_t$, and $(\bv_{j}^\top,\bm{0}_{d}^\top)^\top$ are the corresponding eigenvectors. Besides, noticing that $f_t(\btheta)=tM\sum_{i\in[n]}\btheta_{ i}=tM(\Ib_{d},\ldots,\Ib_{d})(\btheta_1,\ldots,\btheta_{n})^\top$, we have $\nabla f_t(\btheta)=tM\Big(\Ib_{d},\ldots,\Ib_{d}\Big)^\top=tM \Jb$.
Using $\Jb^\top \Jb=n\Ib_{d}$, we can obtain
\begin{align*}
&\left(\begin{array}{cc}
\nabla^2 \cL_t(\btheta) & \nabla f_t(\btheta) \\
\nabla f_t(\btheta)^{\top} & \bm{0}
\end{array}\right)
\left(\begin{array}{c}
\frac{1}{\sqrt{n}}\Jb\\
\Ib_{d}
\end{array}\right)=
\sqrt{n}tM\left(\begin{array}{c}
\frac{1}{\sqrt{n}}\Jb\\
\Ib_{d}
\end{array}\right)\text{ and}\nonumber\\
&\left(\begin{array}{cc}
\nabla^2 \cL_t(\btheta) & \nabla f_t(\btheta) \\
\nabla f_t(\btheta)^{\top} & \bm{0}
\end{array}\right)
\left(\begin{array}{c}
\frac{1}{\sqrt{n}}\Jb\\
-\Ib_{d}
\end{array}\right)=
-\sqrt{n}tM\left(\begin{array}{c}
\frac{1}{\sqrt{n}}\Jb\\
-\Ib_{d}
\end{array}\right).
\end{align*}
Therefore, $\sqrt{n}tM$ and $-\sqrt{n}tM$ are eigenvalues, with the corresponding eigenvectors given by the columns of $(\frac{1}{\sqrt{n}}\Jb^\top,\Ib_{d})^\top$ and $(\frac{1}{\sqrt{n}}\Jb^\top,-\Ib_{d})^\top$ respectively. Cobining above results, the eigenvalues of $\bPsi_t$ are $\sqrt{n}tM$ ($d$ multiplication), $\lambda_1$, $\ldots$, $\lambda_{(n-1)d}$, $-\sqrt{n}tM$ ($d$ multiplication) in decreasing order. Hence, $\bPsi_t(\btheta)$ is invertible and $\|\bPhi_t\|_{2}\lesssim \frac{1}{\tilde{c}_0tM}$.

Then we present the specific form of $\bPhi_t$.
Let $
\bPhi_t=\left(\begin{array}{cc}
\bPhi_{t,11}&\bPhi_{t,12}\\
\bPhi_{t,21}&\bPhi_{t,22}
\end{array}\right)$ with the same partition as $\bPsi_t(\btheta)$.
We have 
\begin{equation}
\begin{gathered}\label{eq}
\bPhi_{t,11} \nabla^2 \cL_t(\btheta)+ \bPhi_{t,12}\Big(\nabla f_t(\btheta)\Big)^{\top}=\Ib_{nd} \\
\bPhi_{t,11} \nabla f_t(\btheta)=\bm{0} \\
\bPhi_{t,12}^{\top} \nabla^2 \cL_t(\btheta)+\bPhi_{t,22}
\Big(\nabla f_t(\btheta)\Big)^{\top}=\bm{0} \\
\bPhi_{t,12}^\top
\nabla f_t(\btheta)=\Ib_{d}
\end{gathered}
\end{equation}
Right multiplying the fist equation of (\ref{eq}) by $\Jb$, we can obtain $\bPhi_{t,12}=\frac{1}{ntM}\Jb$ by noticing that $\nabla^2\cL_t(\btheta)\Jb=\bm{0}$ and $\Jb^\top \Jb=n\Ib_{d}$. Further, we have $\bPhi_{t,22}=\bm{0}$.
Then we focus on $\bPhi_{t,11}$.
From (\ref{eigSig}),
we have
\begin{align}\label{Siginv}
\frac{1}{\lambda_j }
\left(\begin{array}{c}
\bv_j\\
\bm{0}
\end{array}\right)=\left(\begin{array}{cc}
\bPhi_{t,11}&\frac{1}{ntM}\Jb\\
\frac{1}{ntM}\Jb^\top&\bm{0}
\end{array}\right)
\left(\begin{array}{c}
\bv_j\\
\bm{0}
\end{array}\right).
\end{align}
By combining the second equation in (\ref{eq}),
we can conclude that the eigenvalues of $\bPhi_{t,11}$ are $\frac{1}{\lambda_{(n-1)d}},\ldots,\frac{1}{\lambda_{1}},0(d\text{ multiplication})$. The corresponding eigenvectors are $\bv_{(n-1)d},\ldots,\bv_1$ and $\bw_1,\ldots,\bw_d$, where $\bw_i$ is the $i$-th column of $\Jb$.
Therefore, we have
\begin{align}\label{kk}
\max_{k\in[n]}\|(\bPhi_{t,11})_{kk}\|_{2}\leq\|\bPhi_{t,11}\|_2\lesssim\frac{n}{tM}.
\end{align} 
Let $\Eb_k=\be_k\otimes \bu \in\RR^{nd}$, where $\bu\in \RR^d$ satisfying $\|\bu\|_2=1$. We can represent $\Eb_k$ as follows:
\begin{align}\label{Ek}
\Eb_k=\alpha_1\bv_1+\ldots+\alpha_{(n-1)d}\bv_{(n-1)d}+\alpha_{(n-1)d+1}\bw_1+\ldots+\alpha_{nd}\bw_d.
\end{align}
Left multiplying (\ref{Ek})  by $\Jb^\top$, we have $\bu=(n\alpha_{(n-1)d+1}, \ldots, n\alpha_{nd})^\top$
using (\ref{Siginv}). Hence, we can obtain $\alpha_{(n-1)d+i}=\bu_{i}/n$ for $i\in[n]$.
Besides, we have
\begin{align}\label{E_k}
1=\|\Eb_k\|_2&=\alpha_1^2\bv_1^\top \bv_1 +\ldots+\alpha_{(n-1)d}^2\bv_{(n-1)d}^\top \bv_{(n-1)d} +\alpha_{(n-1)d+1}^2\bw_1^\top \bw_1+\ldots+\alpha_{nd}^2\bw_d^\top \bw_d\nonumber\\
&=\alpha_1^2+\ldots+\alpha_{(n-1)d}^2+\frac{\bv_{1}^2}{n}+\ldots+\frac{\bv_{n}^2}{n}\nonumber\\
&=\alpha_1^2+\ldots+\alpha_{(n-1)d}^2+\frac{1}{n}.
\end{align}
We can bound the minimum eigenvalue of $(\bPhi_{t,11})_{kk}$ as follows:
\begin{align*}
\lambda_{\min}((\bPhi_{t,11})_{kk})=\min_{\Eb_k} \Eb_k^\top \bPhi_{t,11} \Eb_k=\frac{1}{\lambda_1}\alpha_1^2+\ldots+\frac{1}{\lambda_{(n-1)d}}\alpha_{(n-1)d}^2\geq \frac{1}{\lambda_1}\left(\alpha_1^2+\ldots+\alpha_{(n-1)d}^2\right)
\end{align*}
Combing (\ref{E_k}), we can conclude
\begin{align}\label{kkmin}
\lambda_{\min}((\bPhi_{t,11})_{kk})\gtrsim \frac{1}{\tilde{c}_1tM}\asymp \frac{n}{tM}.
\end{align}

By noticing that 
\begin{align*}
(\Eb_j-\Eb_k)^\top \bPhi_{t,11}(\Eb_j-\Eb_k)=\bu^\top (\bPhi_{t,11})_{jj}\bu +\bu^\top (\bPhi_{t,11})_{kk}\bu -2\bu^\top (\bPhi_{t,11})_{kj}\bu,
\end{align*}
we also have
\begin{align}\label{kj}
\|(\bPhi_{t,11})_{kj}\|_2=& \max_{\bu}\bu^\top (\bPhi_{t,11})_{kj}\bu
\lesssim \|\bPhi_{t,11}\|_2+\|(\bPhi_{t,11})_{jj}\|_2+\|(\bPhi_{t,11})_{kk}\|_2\lesssim \frac{n}{tM}.
\end{align}

\subsection{Proof of Lemma~\ref{lem:clt}}
Define $z_{ijm}^{(\ell)}=(\bPhi_{t,11}^*)_{k}\bm{1}(\ba_\ell=(i,j))\left\{-y_{ji}^{(m)}(\ell)+\frac{e^{\bX_{\ell}^\top\btheta_{i}^{*} }}{e^{\bX_{\ell}^\top\btheta_{i}^{*} }+e^{\btheta_{j}^{*\top} \bX_{\ell}}}\right\}(\bc_{\ell i}-\bc_{\ell j})$, and then we have
\begin{align*}
(\bPhi_{t,11}^*\nabla\cL_t(\ths))_{k}=\sum_{(i,j)\in\cE,i>j;}\sum_{\ell\in[t];}\sum_{m=1}^{M} z_{ijm}^{(\ell)}.
\end{align*}
Notice that $\EE(z_{ijm}^{\ell}\given \cH_{\ell-1},\bX_{\ell})=0$ and $(\bPhi_{t,11}^*\nabla\cL_t(\ths))_{k}$ is a martingale sequence. We first calculate the conditional variance as follows:
\begin{align*}
&\sum_{(i,j)\in\cE,i>j;}\sum_{\ell\in[t];}\sum_{m=1}^{M} \EE(z_{ijm}^{(\ell)2}\given \cH_{\ell-1},\bX_{\ell})\nonumber\\
&=\sum_{(i,j)\in\cE,i>j;}\sum_{\ell\in[t];}\sum_{m=1}^{M}\bm{1}(\ba_\ell=(i,j))(\bPhi_{t,11}^{*})_k \frac{e^{\bX_{\ell }^\top\btheta_{i}^{*} } e^{\bX_{\ell }^\top\btheta_{j}^{*} }}{(e^{\bX_{\ell }^\top\btheta_{i}^{*} }+e^{\bX_{\ell }^\top\btheta_{j}^{*} })^{2}}(\bc_{\ell i}-\bc_{\ell j})(\bc_{\ell i}-\bc_{\ell j})^\top(\bPhi_{t,11}^*)_k^\top\nonumber\\
&=(\bPhi_{t,11}^{*})_k\nabla^2\cL_t(\btheta^*)(\bPhi_{t,11}^*)_k^\top\nonumber\\
&=(\bPhi_{t,11}^{*})_{kk}.
\end{align*}
The last equality is derived as follows. By using the definition of $\bPhi_{t,11}^*$, we have
\begin{align*}
\left(\begin{array}{cc}
\bPhi_{t,11}^*&\frac{1}{nt}\Jb\\
\frac{1}{nt}\Jb^\top&\bm{0}
\end{array}\right) 
\left(\begin{array}{cc}
\nabla^2 \cL_t(\ths) & \nabla f(\ths) \\
\nabla f(\ths)^{\top} & \bm{0}
\end{array}\right)
\left(\begin{array}{cc}
\bPhi_{t,11}^*&\frac{1}{nt}\Jb\\
\frac{1}{nt}\Jb^\top&\bm{0}
\end{array}\right)=\left(\begin{array}{cc}
\bPhi_{t,11}^*&\frac{1}{nt}\Jb\\
\frac{1}{nt}\Jb^\top&\bm{0}
\end{array}\right),
\end{align*}
which yields $(\bPhi_{t,11}^{*})_k\nabla^2\cL_t(\btheta^*)(\bPhi_{t,11}^*)_k^\top=(\bPhi_{t,11}^{*})_{kk}$.

Next, we check the conditional Lindeburg condition. Using Cramer-Wold device, letting $\bu\in\RR^d$, we define $\tilde{z}_{ijm}^{(\ell)}=\bu^\top ((\bPhi_{t,11}^*)_{kk})^{-1/2}z_{ijm}^{(\ell)}$. Then we have 
\begin{align*}
&\sum_{(i,j)\in\cE,i>j;}\sum_{\ell\in[t];}\sum_{m=1}^{M} \EE(\tilde{z}_{ijm}^{(\ell)2}\given \cH_{\ell-1},\bX_{\ell})
=\bu^\top ((\bPhi_{t,11}^*)_{kk})^{-1/2} (\bPhi_{t,11}^{*})_{kk} ((\bPhi_{t,11}^*)_{kk})^{-1/2} \bu=\|\bu\|_2.
\end{align*}
In addition, using (\ref{kkmin}) and (\ref{kj}), we have the upper bound of $\tilde{z}_{ijm}^{(\ell)2}$ with high probability:
\begin{align*}
\tilde{z}_{ijm}^{(\ell)2}\leq \|(\bc_{\ell i}-\bc_{\ell j})^\top(\bPhi_{t,11}^*)_k^\top  (\bPhi_{t,11}^*)_{kk}^{-1/2} \bu\|_2
\lesssim  \|(\bPhi_{t,11}^*)_{kj}\|_2 \|(\Theta^*_{11})_{kk}^{-1/2}\|_2 \|\bX\|_2 \|\bu\|_2
\lesssim \sqrt{\frac{n}{tM}}.
\end{align*}
Therefore, by using the dominated convergence theorem, for $\epsilon>0$, we have
\begin{align}\label{linde}
&\lim_{t\rightarrow\infty}\sum_{(i,j)\in \cE,i>j;}\sum_{\ell\in[t];}\sum_{m=1}^{M}\EE[\tilde{z}_{ijm}^{(\ell)2}I(|\tilde{z}_{ijm}^{(\ell)}|>\epsilon)\given \cH_{\ell-1},\bX_{\ell}]\rightarrow0.
\end{align}
Applying the martingale central limit theorem, Corollary~3.1 in \citet{HALL198051}, we can obtain 
\begin{align*}
((\bPhi_{t,11}^*)_{kk})^{-1/2} (\bPhi_{t,11}^*\nabla\cL_t(\ths))_{k}\overset{d}\longrightarrow N(0,\Ib_{d}).
\end{align*}

\subsection{Proof of Lemma~\ref{lem:remtm}}
Similar to the proof of Lemma~\ref{gr}, we can obtain $\left\|\nabla \cL_t(\ths)\right\|_{\infty}\lesssim \sqrt{ptM\log T}$.

For the second inequality, first note that 
\begin{align*}
\nabla \cL_t(\hth(t))-\nabla \cL_t(\ths)
=& \sum_{(i,j)\in \mathcal{E},i>j;}\sum_{\ell\in \cT_{t,ij}}M\{\frac{e^{\bX_{\ell}^\top\hth_{i}(t) }}{e^{\bX_{\ell}^\top\hth_{i}(t) }+e^{\bX_{\ell}^\top\hth_{j}(t) }}-\frac{e^{\bX_{\ell}^\top\tht^{*}_{i} }}{e^{\bX_{\ell}^\top\tht^{*}_{i} }+e^{\bX_{\ell}^\top\tht^{*}_{j} }}\}(\bc_{\ell i}-\bc_{\ell j})\nonumber\\
= & \sum_{(i,j)\in \mathcal{E},i>j;}\sum_{\ell\in \cT_{t,ij}}
-M\frac{e^{c_{ji\ell}}}{(1+e^{c_{ji\ell}})^2}(\hth_j(t)-\hth_i(t)-\ths_j+\ths_i)^\top \bX_{\ell}(\bc_{\ell i}-\bc_{\ell j})\nonumber\\
=& \sum_{(i,j)\in \mathcal{E},i>j;}\sum_{\ell\in \cT_{t,ij}}
M\frac{e^{c_{ji\ell}}}{(1+e^{c_{ji\ell}})^2}(\bc_{\ell i}-\bc_{\ell j})(\bc_{\ell i}-\bc_{\ell j})^\top (\hth(t)-\ths),
\end{align*}
where $c_{ji\ell}$ is between $(\hth_j(t)-\hth_i(t))^\top\bX_{\ell}$ and $(\ths_j-\ths_i)^\top\bX_{\ell}$.
By utilizing the result
\begin{align*}
\Big|\frac{e^{c_{ji\ell}}}{(1+e^{c_{ji\ell}})^2}-\frac{e^{(\ths_j-\ths_i)^\top \bX_{\ell}}}{(1+e^{(\ths_j-\ths_i)^\top \bX_{\ell}})^2}\Big|\lesssim \max_{i\in[n]}\|\hth_i(t)-\ths_i\|_2 \|\bX_{\ell}\|_2,
\end{align*}
we can bound the expansion of the gradient as follows:
\begin{align*}
&\left\|\nabla \cL_t(\hth(t))-\nabla \cL_t(\ths)-\nabla^2 \cL_t(\ths)\left(\hth(t)-\ths\right)\right\|_{\infty}\nonumber\\
&\leq  M\Big\|\sum_{(i,j)\in \mathcal{E},i>j;}\sum_{\ell\in \cT_{t,ij}}
\Big(\frac{e^{c_{ji\ell}}}{(1+e^{c_{ji\ell}})^2}-\frac{e^{(\ths_j-\ths_i)^\top \bX_{\ell}}}{(1+e^{(\ths_j-\ths_i)^\top \bX_{\ell}})^2}\Big)(\bc_{\ell i}-\bc_{\ell j})(\bc_{\ell i}-\bc_{\ell j})^\top\Big\|_{\infty} \|\hth(t)-\ths\|_{\infty}\nonumber\\
&\lesssim \tilde{c}_1tM\max_{i\in[n]}\|\hth_i(t)-\ths_i\|_2^2\asymp \frac{\log T}{p}.
\end{align*}

We then prove the third inequality in Lemma~\ref{lem:remtm}. Using the closed form of the Hessian matrix, we have
\begin{align*}
&\left\|\nabla^2 \cL_t(\hth(t))-\nabla^2 \cL_t(\ths)\right\|_{\infty}\nonumber\\
&=\Big\|\sum_{(i,j)\in \mathcal{E},i>j;}\sum_{\ell\in \cT_{t,ij}}\{\frac{e^{\bX_{\ell}^\top\hth_{i}(t) } e^{\bX_{\ell}^\top\hth_{j}(t) }}{(e^{\bX_{\ell}^\top\hth_{i}(t) }+e^{\bX_{\ell}^\top\hth_{j}(t) })^{2}}
-\frac{e^{\bX_{\ell}^\top\tht^{*}_{i} } e^{\bX_{\ell}^\top\tht^{*}_{j} }}{(e^{\bX_{\ell}^\top\tht^{*}_{i} }+e^{\bX_{\ell}^\top\tht^{*}_{j} })^{2}}\}(\bc_{\ell i}-\bc_{\ell j})(\bc_{\ell i}-\bc_{\ell j})^\top\Big\|_{\infty} \nonumber\\
&\lesssim\tilde{c}_1tM \max_{i\in[n]}\|\hth_i(t)-\ths_i\|_2\asymp \sqrt{\frac{tM\log T}{np}}.
\end{align*}

For the last inequality in Lemma~\ref{lem:remtm}, using Lemma~\ref{lem:infma}, we can obtain
\begin{align*}
 \left\|\hat\bPsi_t^{-1}-\bPsi_t^{*-1}\right\|_2 
\leq &\left\|\hat\bPsi_t^{-1}\right\|_2 
\left\|\left(\begin{array}{cc}
\nabla^2 \cL_t(\hth(t)) & \nabla f_t(\hth(t)) \\
\nabla f_t^{\top}(\hth(t)) & \bm{0}
\end{array}\right)-\left(\begin{array}{cc}
\nabla^2 \cL_t(\ths) & \nabla f_t(\ths) \\
\nabla f_t^{\top}(\ths) & \bm{0}
\end{array}\right)\right\|_2 \left\|
\bPsi_t^{*-1}\right\|_2\nonumber\\
\lesssim &(\frac{n}{tM})^2 \left\|\nabla^2 \cL_t(\hth(t))-\nabla^2 \cL_t(\ths)\right\|_{2}\nonumber\\
\lesssim &\sqrt{\frac{n^3\log T}{pt^3M^3}}.
\end{align*}

\section{Ancillary Lemmas}
We provide the degree property of the ER graph and matrix Freedman inequalities in this section for self-completeness. These results are used throughout the proof.
\begin{lemma}\label{dgr}
Assume $\mathcal{G}_{n, p}$ is an ER graph. Let $d_i$ denote the degree of node $i$, with $d_{\min }=\min _{1 \leq i \leq n} d_i$ and $d_{\max }=\max _{1 \leq i \leq n} d_i$. There exists a constant $c$, such that if $p \geq \frac{c \log n}{n}$, then the following event
$$
\cA=\left\{\frac{n p}{2} \leq d_{\min } \leq d_{\max } \leq \frac{3 n p}{2}\right\}
$$
holds with probability $1-O\left(n^{-10}\right)$.
\end{lemma}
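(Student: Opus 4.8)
The plan is to treat each vertex degree as a binomial sum, apply a multiplicative Chernoff bound, and finish with a union bound over the $n$ vertices. First I would fix a vertex $i\in[n]$ and note that, by the definition of the Erdős–Rényi model $\mathcal{G}_{n,p}$, the degree $d_i=\sum_{j\in[n]\setminus\{i\}}\mathbf{1}((i,j)\in\cE)$ is a sum of $n-1$ independent $\mathrm{Bernoulli}(p)$ variables, so $d_i\sim\mathrm{Binomial}(n-1,p)$ with $\EE[d_i]=(n-1)p$. The whole argument is then a concentration estimate for this binomial around its mean, combined with the elementary observation that $(n-1)p$ and $np$ differ by a negligible factor once $n$ is moderately large.

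The second step is to invoke the standard multiplicative Chernoff bounds: for $\delta\in(0,1)$ and $\mu=(n-1)p$,
\begin{align*}
\PP\big(d_i\le(1-\delta)\mu\big)\le\exp\big(-\delta^2\mu/2\big),\qquad
\PP\big(d_i\ge(1+\delta)\mu\big)\le\exp\big(-\delta^2\mu/3\big).
\end{align*}
I would take $\delta=1/4$. Since $p\ge c\log n/n$ forces $\mu=(n-1)p\ge(c/2)\log n$ for $n\ge2$, each right-hand side is at most $\exp\big(-(c/96)\log n\big)=n^{-c/96}$, so $\PP\big(|d_i-\mu|>\mu/4\big)\le 2n^{-c/96}$. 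A union bound over the $n$ vertices gives that the event $\mathcal{B}=\{|d_i-(n-1)p|\le(n-1)p/4\text{ for all }i\in[n]\}$ fails with probability at most $2n^{1-c/96}$, which is $O(n^{-10})$ provided the constant $c$ is chosen large enough (for instance $c\ge 96\cdot 11$).

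The last step is to convert $\mathcal{B}$ into the stated event $\cA$. On $\mathcal{B}$, every $i$ satisfies $d_i\ge(3/4)(n-1)p$ and $d_i\le(5/4)(n-1)p$; since $n\ge3$ yields $(3/4)(n-1)\ge n/2$ and since $(5/4)(n-1)\le(5/4)n\le(3/2)n$ always, we obtain $np/2\le d_{\min}\le d_{\max}\le 3np/2$, i.e.\ exactly $\cA$. There is no genuine obstacle here — this is a textbook concentration computation; the only care required is in bookkeeping the constant $c$ so that the per-vertex exponential tail, after the union bound over $n$ vertices, still beats $n^{-10}$, and in absorbing the harmless gap between the natural mean $(n-1)p$ and the target thresholds stated in terms of $np$.
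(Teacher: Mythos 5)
Your proposal is correct and follows exactly the route the paper intends: the paper's own proof is a one-line remark that this is ``a basic result of the ER graph using Chernoff bound,'' and your argument is precisely that — a multiplicative Chernoff bound on each $\mathrm{Binomial}(n-1,p)$ degree followed by a union bound over the $n$ vertices, with correct bookkeeping of the constant $c$ and of the harmless gap between $(n-1)p$ and $np$.
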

\begin{proof}[Proof of Lemma~\ref{dgr}]
A basic result of the ER graph using Chernoff bound.
\end{proof}

We have the following lemma regarding the deviation behavior of the matrix martingale. The proofs can be found in \cite{Tropp2011FREEDMANSIF}.

\begin{lemma}[Rectangular Matrix Freedman's Inequality]\label{FreedmanRec}
Consider a matrix martingale $\left\{\bY_k: k=0,1,\ldots\right\}$ whose values are matrices with dimension $d_1 \times d_2$. Let $\left\{\bX_k: k=1,2,\ldots\right\}$ be the martingale difference sequence. Assume that the difference sequence is uniformly bounded:
$$
\left\|\bX_k\right\|_2 \leq R \quad \text { almost surely } \quad \text { for } k=1,2, \ldots.
$$
Define two predictable quadratic variation processes for this martingale:
\begin{align*}
& \bW_{\mathrm{col}, k}:=\sum_{j=1}^k \mathbb{E}\left(\boldsymbol{X}_j \boldsymbol{X}_j^\top\given \cF_{j-1}\right) \quad \text { and } \\
& \bW_{\text {row}, k}:=\sum_{j=1}^k \mathbb{E}\left(\bX_j^\top \bX_j\given \cF_{j-1}\right) \quad \text { for } k=1,2,\ldots .
\end{align*}
If $\max \left\{\left\|\boldsymbol{W}_{\text {col}, k}\right\|_2,\left\|\boldsymbol{W}_{\text {row}, k}\right\|_2\right\} \leq \sigma^2$ holds for $k=1,2,\ldots$, then the event
\begin{align*}
\left\{\forall k \geq 0, \, \left\|\boldsymbol{Y}_k\right\|_2\leq \sigma\sqrt{2\log(1/\delta)}+\frac{2}{3}R\log(1/\delta)\right\}
\end{align*}
holds with probability $1-(d_1+d_2)\delta$.
\end{lemma}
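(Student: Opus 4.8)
The plan is to follow the matrix-martingale Laplace transform method of \cite{Tropp2011FREEDMANSIF}, first reducing the rectangular statement to a Hermitian one by dilation and then running a trace-exponential supermartingale argument. First I would pass to the Hermitian (self-adjoint) dilation $\cD(\bX)=\left(\begin{smallmatrix}\bm 0 & \bX\\ \bX^\top & \bm 0\end{smallmatrix}\right)$, a symmetric $(d_1+d_2)\times(d_1+d_2)$ matrix satisfying $\|\cD(\bX)\|_2=\|\bX\|_2$ (its nonzero eigenvalues are $\pm$ the singular values of $\bX$) and $\cD(\bX)^2=\left(\begin{smallmatrix}\bX\bX^\top & \bm 0\\ \bm 0 & \bX^\top\bX\end{smallmatrix}\right)$. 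Since $\cD$ is linear, $\cD(\bY_k)=\sum_{j\le k}\cD(\bX_j)$ is a Hermitian matrix martingale whose differences obey $\lambda_{\max}(\cD(\bX_j))\le R$ and whose predictable quadratic variation $\bV_k:=\sum_{j\le k}\EE[\cD(\bX_j)^2\given\cF_{j-1}]=\mathrm{blockdiag}(\bW_{\mathrm{col},k},\bW_{\mathrm{row},k})$ has operator norm $\max\{\|\bW_{\mathrm{col},k}\|_2,\|\bW_{\mathrm{row},k}\|_2\}\le\sigma^2$. Because $\|\bY_k\|_2=\lambda_{\max}(\cD(\bY_k))$, it suffices to bound $\lambda_{\max}$ of a Hermitian martingale of dimension $d:=d_1+d_2$.

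For the Hermitian bound I would fix $\theta>0$, set $\bS_k=\cD(\bY_k)$, and show that $G_k:=\tr\exp(\theta\bS_k-g(\theta)\bV_k)$ is a supermartingale, where $g(\theta)=(e^{\theta R}-\theta R-1)/R^2$. This rests on the conditional matrix cumulant bound: for a conditionally mean-zero Hermitian increment with $\lambda_{\max}(\cD(\bX_j))\le R$, one has $\EE[e^{\theta \cD(\bX_j)}\given\cF_{j-1}]\preceq\exp(g(\theta)\,\EE[\cD(\bX_j)^2\given\cF_{j-1}])$. The step I expect to be the main obstacle is that $e^{A+B}\ne e^Ae^B$ for noncommuting matrices, so the last increment cannot be peeled off multiplicatively inside the trace exponential. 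The standard resolution is Lieb's concavity theorem: the map $H\mapsto\tr\exp(H+\log(\cdot))$ is concave, so conditional Jensen gives $\EE[G_k\given\cF_{k-1}]\le\tr\exp(\theta\bS_{k-1}-g(\theta)\bV_k+\log\EE[e^{\theta \cD(\bX_k)}\given\cF_{k-1}])$; substituting the cumulant bound and using $\bV_k=\bV_{k-1}+\EE[\cD(\bX_k)^2\given\cF_{k-1}]$ collapses the right-hand side to $G_{k-1}$. Iterating yields $\EE[G_k]\le\tr\exp(\bm 0)=d$.

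Finally I would convert this into a tail bound and invert it. Since $\bV_k\preceq\sigma^2\Ib$ for every $k$ by hypothesis, $\theta\bS_k-g(\theta)\bV_k\succeq\theta\bS_k-g(\theta)\sigma^2\Ib$, whence $G_k\ge\exp(\theta\lambda_{\max}(\bS_k)-g(\theta)\sigma^2)$. To get the uniform-in-$k$ conclusion I would introduce the stopping time $\tau=\inf\{k:\lambda_{\max}(\bS_k)\ge t\}$; the stopped process $G_{k\wedge\tau}$ is a nonnegative supermartingale of mean at most $d$, so Fatou gives $\exp(\theta t-g(\theta)\sigma^2)\,\PP(\tau<\infty)\le d$, i.e. $\PP(\exists k:\lambda_{\max}(\bS_k)\ge t)\le d\exp(-\theta t+g(\theta)\sigma^2)$. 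Using the elementary estimate $g(\theta)\le\theta^2/\bigl(2(1-R\theta/3)\bigr)$ for $0<\theta<3/R$ and optimizing over $\theta$ produces the Bernstein form $d\exp\bigl(-\tfrac{t^2/2}{\sigma^2+Rt/3}\bigr)$. Setting $t=\sigma\sqrt{2\log(1/\delta)}+\tfrac23 R\log(1/\delta)$ and verifying by direct substitution (with $u=\log(1/\delta)$, one checks $t^2/2-u(\sigma^2+Rt/3)=\tfrac13\sigma R u\sqrt{2u}\ge0$) shows the tail is at most $d\delta$. Since the spectrum of $\cD(\bY_k)$ is symmetric, $\lambda_{\max}(\cD(\bY_k))=\|\bY_k\|_2$, so the same argument controls the operator norm and delivers the claimed probability $1-(d_1+d_2)\delta$.
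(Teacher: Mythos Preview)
Your proposal is correct and is precisely the argument in \cite{Tropp2011FREEDMANSIF}: Hermitian dilation to reduce to the self-adjoint case, the Lieb-concavity-based trace-exponential supermartingale, a stopping-time argument for the uniform-in-$k$ statement, and the Bernstein-form inversion. The paper itself does not give a self-contained proof of this lemma; it simply refers the reader to \cite{Tropp2011FREEDMANSIF}, so your write-up is exactly what that citation unpacks to.
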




\end{document}